\DeclarePairedDelimiterX\Basics[1](){ #1}
\newtheorem{theorem}{Theorem}[section]
\newtheorem{corollary}[theorem]{Corollary}
\newtheorem{lemma}[theorem]{Lemma}
\newtheorem{remark}[theorem]{Remark}
\newtheorem{definition}[theorem]{Definition}
\newtheorem{proposition}[theorem]{Proposition}
\numberwithin{equation}{section}
\newtheorem{assumption}{Assumption}[section]
\newcommand{\innermid}{\nonscript\;\delimsize\vert\nonscript\;}
\newcommand{\activatebar}{%
  \begingroup\lccode`\~=`\|
  \lowercase{\endgroup\let~}\innermid 
  \mathcode`|=\string"8000
}
\DeclareMathOperator*{\argmin}{\arg\min}
\newlist{steps}{enumerate}{1}
\setlist[steps, 1]{label = Step \arabic*:}
\newcommand{\interior}[1]{%
  {\kern0pt#1}^{\mathrm{o}}%
}
\newcommand{\NN}{\mathcal{N}}
\newcommand{\bCh}{\overline{C}_h} 
\newcommand{\wtCh}{\widetilde{C}_h} 
\newcommand{\hCh}{\widehat{C}_h} 
\mathchardef\mhyphen="2D
\newlist{subassumption}{enumerate}{1}
\setlist[subassumption,1]{label=\mbox{(\alph*)},ref=\theassumption\mbox{(\alph*)}}
\begin{document}

\setlength{\abovedisplayskip}{4pt plus 1pt minus 1pt}
\setlength{\belowdisplayskip}{4pt plus 1pt minus 1pt}
\setlength{\abovedisplayshortskip}{2pt plus 1pt minus 1pt}
\setlength{\belowdisplayshortskip}{3pt plus 1pt minus 1pt}
\setlength{\jot}{1.5pt}

\title{Adaptive Partitioning and Learning for Stochastic Control of Diffusion Processes}
\author{Hanqing Jin \thanks{Mathematical Institute, University of Oxford. \textbf{Email:} jinh@maths.ox.ac.uk and yanzhao.yang@merton.ox.ac.uk .} \and  Renyuan Xu \thanks{Department of Management Science and Engineering, Stanford University. R.X. is supported in part by the NSF CAREER Award DMS-2524465 and a gift fund from Point72. \textbf{Email:} renyuanxu@stanford.edu} \and Yanzhao Yang \footnotemark[1]}


\date{\today}
\maketitle

\begin{abstract}
    We study reinforcement learning for controlled diffusion processes with unbounded continuous state spaces, bounded continuous actions, and polynomially growing rewards—settings that arise naturally in finance, economics, and operations research. To overcome the challenges of continuous and high-dimensional domains, we introduce a model-based algorithm that adaptively partitions the joint state–action space. The algorithm maintains estimators of drift, volatility, and rewards within each partition, refining the discretization whenever estimation bias exceeds statistical confidence. This adaptive scheme balances exploration and approximation, enabling efficient learning in unbounded domains. Our analysis establishes regret bounds that depend on the problem horizon, state dimension, reward growth order, and a newly defined notion of zooming dimension tailored to unbounded diffusion processes. The bounds recover existing results for bounded settings as a special case, while extending theoretical guarantees to a broader class of diffusion-type problems. Finally, we validate the effectiveness of our approach through numerical experiments, including applications to high-dimensional problems such as multi-asset mean-variance portfolio selection. 
\end{abstract}

\section{Introduction}

Data-driven decision-making has emerged as a foundational paradigm in modern scientific and engineering disciplines, enabling systems to adapt and optimize behavior in complex, uncertain environments by learning from empirical evidence. In particular, reinforcement learning (RL)  formalizes {\it sequential} decision-making under uncertainty as a mathematical framework involving agents interacting with unknown environments to maximize long-term cumulative reward. Applications range from robotics \citep{kober2013reinforcement,zhao2020sim} and autonomous systems \citep{kiran2021deep,shalev2016safe} to finance \citep{hambly2023recent} and healthcare \citep{yu2021reinforcement}, especially in settings where traditional model-based methods may fail due to restrictive structural assumptions or limited flexibility.

The literature on RL theory has progressed through a structured hierarchy of assumptions on state–action spaces, beginning with finite (tabular) settings and extending toward infinite states/actions or continuous domains. Earlier seminal works focused on tabular MDPs with finite state-action spaces, where convergence and sample efficiency of model-free algorithms, such as Q-learning, are studied under exact representations \citep{auer2008near,dayan1992q,jaakkola1993convergence,kakade2003sample}. These settings allow for strong performance guarantees using regret and PAC frameworks \citep{azar2017minimax,dann2017unifying}. As attention shifted to large or continuous state spaces,  linear function approximation has been introduced, preserving tractability while enabling generalization \citep{tsitsiklis1996analysis,bertsekas1996neuro,lazaric2012finite}. These frameworks often retain finite action spaces and require bounded features or realizability assumptions. More recent work explores continuous or unbounded state spaces using either nonparametric techniques (e.g., nearest-neighbor methods \citep{jin2020provably}) or neural network approximations \citep{fan2020theoretical,fu2020single,wang2019neural}, though theoretical guarantees remain limited in the latter (in terms of the choice of network architectures). Finite action spaces remain the standard setting in theoretical RL studies, largely due to the combinatorial challenges posed by continuous action spaces, namely the interrelated difficulties in optimization, exploration, and representation. Only a few exceptions exist, such as studies focusing on problems with special structure (e.g., linear-quadratic regulators \citep{fazel2018global,hambly2021policy,guo2023fast}) or those exploring discretization-based nonparametric methods, which include both uniform partitioning \citep{bayraktar2023approximate,kara2023q} and adaptive partitioning approaches \citep{dong2019provably,pazis2013pac, sinclair2023adaptive}.  This progression of the theoretical RL literature reflects a {\it trade-off} between tractability and expressive power: tabular and linearly parameterized settings are more tractable for analysis, whereas generic continuous state–action spaces, though more general and practically important, remain less understood and less theoretically developed due to their complexity.

Many critical decision-making problems in finance, economics, and operations research involve unbounded, continuous state spaces, as well as continuous (often high-dimensional) action spaces, and unbounded reward functions. A central class of such problems arises in portfolio optimization, where agents take continuous actions by dynamically adjusting their wealth allocations across risky assets in response to evolving market conditions. These problems typically involve an unbounded and continuous state space, representing asset prices and wealth levels, and often feature unbounded reward (utility functions) subject to suitable growth conditions  \citep{black1992global,zhou2000continuous,he2015dynamic}. Optimal execution and intraday trading problems are often formulated within a continuous state-action framework, as traders must balance market impact, adverse selection risk, and order flow dynamics in a tractable manner \citep{almgren2001optimal,cartea2015algorithmic}. In dynamic hedging, particularly in incomplete markets or under stochastic volatility, agents must continuously adjust their positions to manage risk exposure \citep{carr2001pricing,duffie1997hedging}. Credit risk and asset-liability management problems faced by banks, insurers, and pension funds also fall within this framework, as they involve dynamic decision-making under uncertainty, often with evolving and potentially unbounded risk profiles \citep{tektas2005asset}. At a broader scale, macro-financial decisions (such as sovereign debt issuance and monetary policy under uncertainty) rely on models with unbounded, continuous spaces to capture long-term dynamics and structural risks \citep{blommestein2012interactions,du2020sovereign}. Beyond finance and economics, structurally identical problems arise in operations research more broadly: heavy-traffic control of queueing and service networks yields controlled diffusions on $\mathbb{R}^{d}$ with scheduling or routing actions \citep{arapostathis2016ergodic,anugu2025ergodic}; large-scale epidemic, inventory, and energy management problems admit similar diffusion formulations with unbounded states and polynomially growing costs.   Despite their importance, such settings remain less understood in the RL literature, particularly regarding algorithmic development and theoretical guarantees.

Motivated by these challenges, this work seeks to address the following open question:
\begin{itemize}
\item[$\quad$] {\it Can we design an adaptive partition scheme tailored to (unknown) high-dimensional diffusion processes and simultaneously learn the optimal policy efficiently within the RL framework?}
\end{itemize}

\subsection{Our work and contributions}
We investigate the above-mentioned open question in a setting governed by diffusion-type dynamics over a finite time horizon, with an unbounded state space and a continuous action space. 
The motivating control model is a controlled diffusion in continuous time. However, a sample-based RL algorithm interacts with the environment through observations, actions, and rewards collected at discrete decision times. Therefore, the implementable RL problem must be formulated on a time-discretized system. In this paper, we formulate and analyze this implementable problem as the discrete-time MDP induced by an Euler--Maruyama discretization of the underlying controlled stochastic differential equation.
To facilitate learning, we consider a discrete-time Markov decision process (MDP) with Gaussian increments, serving as an approximation of continuous-time diffusion processes. 
More explicitly, the one-step transition takes the form
\[
X_{h+1}-X_h=\mu_h(X_h,A_h)\Delta+\sigma_h(X_h,A_h)B_h\sqrt{\Delta},
\]
so the transition kernel is Gaussian only conditionally on the current state--action pair, with both its mean and covariance depending on $(X_h,A_h)$. Thus, the model is not a fixed Gaussian-increment MDP; rather, it is the discrete-time RL formulation of a controlled diffusion with unknown, state--action-dependent drift and volatility.
Crucially, we allow the expected reward to exhibit polynomial growth, going beyond the standard bounded reward assumptions, which enables our framework to capture a broader class of real-world applications.

To address the challenges of unbounded state space, we localize the state space by restricting the learning to a bounded ball, whose radius is carefully chosen to control the ultimate regret. The learning algorithm operates in an episodic setting. The framework is end-to-end in the following concrete sense: the algorithm does not first estimate a continuous-time diffusion model and then solve a separate control problem offline. Instead, from the same observed samples $(X_h^k,A_h^k,r_h^k,X_{h+1}^k)$, it updates the adaptive state--action partition, the local reward, drift, and volatility estimators, the optimistic $Q$- and value-function estimates, and consequently the policy used in future episodes. Throughout the learning process, we maintain representative estimators of both the drift and volatility within each partition of the joint state-action space. These partitions are refined adaptively: when the estimated bias exceeds the statistical confidence of the representative estimators, the partition is subdivided. Using the estimated drift and volatility, we construct a Q-function and select actions based on the upper confidence bound of this function. Mathematically, we show that the proposed algorithm achieves a regret of order $\tilde{\mathcal{O}}({H}K^{1-\frac{p^{2}-(m+1)^{2}(z_{\max,c}+2)-(m+1)(2d_{\mathcal{S}}+2m+4)}{p(p+m+1)(z_{\max,c}+2)+p(2d_{\mathcal{S}}+2m+4)} })$, with $H$ the horizon of the problem, $K$ the number of episodes, $p$ the highest bounded moments for the initial state distribution, $m+1$ the order of reward polynomial growth, $d_{\mathcal{S}}$ the dimension of state space and $z_{\max,c}$ the worst-case zooming dimension over the entire horizon. {Here, the zooming dimension quantifies problem benignness, with $z_{\max,c}$ often much smaller than the joint state-action space dimension $d_{\mathcal{A}}+d_{\mathcal{S}}$ for benign instances \citep{kleinberg2019bandits}.}
The idea of adaptive partitioning is largely inspired by \citep{sinclair2023adaptive}, which considers a markedly different setting—namely, an MDP with a bounded state space and bounded rewards. Our contribution is therefore not merely to apply their adaptive partitioning idea to another MDP: the unbounded diffusion-type setting requires localization of the state process, estimation of state--action-dependent drift and volatility rather than a generic transition kernel, covariance concentration under Lipschitz volatility, polynomial-growth value estimates, and concentration arguments for unbounded martingale terms. Nevertheless, as $p$ tends to infinity, our regret  order asymptotically approaches $\tilde{\mathcal{O}}({H}K^{\frac{z_{\max,c}+1}{z_{\max,c}+2}})$, which is consistent with the order established in \citep{sinclair2023adaptive} in terms of the episodes number $K$, despite the substantial differences in both the problem setting and the underlying technical analysis.

From a technical perspective, a key challenge lies in defining an appropriate notion of zooming dimension which affects the algorithm design and hyperparameter set-up. Unlike the classical zooming dimension defined for bounded state-action spaces, our setting requires a new formulation suited to unbounded state spaces, one that can be meaningfully linked to the regret analysis (see Definition~\ref{def:zooming dimension} and the proof of Lemma~\ref{lemma:theorem F.3 conclusion}). Furthermore, as we aim to analyze regret in diffusion-type settings, our approach differs from that of \citep{sinclair2023adaptive}, which characterizes the concentration of Markov transition kernels. Instead, we fully leverage the structure of the dynamics and derive concentration inequalities for the drift and volatility terms (see the proof of Theorem~\ref{thm:transition kernel wasserstein local lipschitz all together}). In particular, deriving concentration inequalities for covariance matrices under only Lipschitz regularity of the volatility is challenging. To address this, we introduce and carefully analyze two intermediate terms (see more details in Appendix \ref{app:concentration}).  Moreover, to accommodate practical applications, we allow general reward functions with polynomial growth. This introduces additional challenges in estimator construction when the domain is unbounded (see \eqref{eq:Q,V estimation for h=H}–\eqref{V-estimates II} and the proof of Theorem~\ref{thm:barV local lipschitz property}). Finally, our regret analysis must also accommodate martingale difference terms that are unbounded, requiring concentration tools more sophisticated than the standard Azuma–Hoeffding inequality (see the proof of Theorem~\ref{thm:first stage regret decomposition}).

\subsection{Closely related literature} 

\paragraph{Uniform partition and adaptive partition.} Uniform partitioning or discretization is a straightforward nonparametric approach for continuous-state problems \citep{bayraktar2023approximate,kara2023q}. However, these methods may suffer from the curse of dimensionality: fine grids are computationally intensive, whereas coarse grids produce inaccurate results and numerical instability \citep{zhang2025state}, limiting their effectiveness in higher dimensions. For example, value iteration has a per-iteration complexity of $\mathcal{O}(|\mathcal{S}|^2|\mathcal{A}|)$, and policy iteration requires $\mathcal{O}(|\mathcal{S}|^3 + |\mathcal{S}|^2|\mathcal{A}|)$ per iteration \citep{puterman2014markov}, with $|\mathcal{S}|$ and $|\mathcal{A}|$ denoting the size of discretized state and action spaces respectively. Moreover, uniform schemes are often suboptimal due to heterogeneous state visit frequencies—leading to wasted resolution on rarely visited states and insufficient resolution where it matters most. The challenge intensifies in {\it unbounded state spaces}, such as those arising in diffusion processes with applications in finance, physics, and engineering. In these settings, discretization typically requires domain truncation, which introduces bias, while extending grids to the full space is computationally prohibitive. Scalable and principled methods for such domains remain largely {\it unresolved}.

Adaptive partition in RL addresses the inefficiency of uniform grids by refining the state-action spaces {\it only where needed}. Early methods, such as U-Tree \citep{mcCallum1996utree} and variable-resolution discretization \citep{munos2002variable}, focused on adaptively partitioning the state space based on visitation frequency or value approximation error. Subsequent works incorporated function approximation and confidence bounds to guide refinement more systematically \citep{strehl2006pac, munos2008finite, ortner2014regret}. While some algorithms extend adaptivity to continuous action spaces under smoothness assumptions \citep{pazis2013pac, dong2019provably}, jointly handling continuous, high-dimensional state-action spaces, especially under complex dynamics, remains a major challenge. A notable recent exception is \citep{sinclair2023adaptive}, which proposes an adaptive partitioning method for MDPs with bounded, continuous state-action spaces.

\paragraph{Zooming algorithms.}
The use of zooming algorithms for adaptive partitioning was initially developed in the contextual multi-armed bandits (MAB) literature, particularly for problems with Lipschitz structure. \citep{kleinberg2008multi} introduced a zooming algorithm for adaptive exploration and defined the {zooming dimension} to quantify the complexity of such problems. Building on this, \citep{slivkins2011contextual} extended the approach to contextual bandits, proposing a zooming algorithm for adaptive partitioning of the context-action space and analyzing its regret. 

These ideas were later generalized to RL by \citep{sinclair2023adaptive}, who studied adaptive partition in finite-horizon RL with {\it bounded} state-action spaces, assuming a bounded reward function. They proposed both model-free and model-based algorithms and provided unified regret bounds. The model-free variant achieves a regret of order \(\tilde{\mathcal{O}}\Big(H^{\frac{5}{2}}K^{\frac{{z'_{\max,c}}+1}{{z'_{\max,c}}+2}}\Big)\), while the model-based variant achieves \(\tilde{\mathcal{O}}\Big(H^{\frac{5}{2}}K^{\frac{{z'_{\max,c}}+max\{d_{\mathcal{S}},2\}-1}{{z'_{\max,c}}+min\{d_{\mathcal{S}}, 2\}}}\Big)\),  
where \(z'_{\max,c}\) denotes the worst-case zooming dimension under bounded state assumptions. These algorithms inspire the design of our framework, though our setting departs from theirs in several important ways.  More recently, \citep{kar2024adaptive} proposed adaptive partitioning algorithms for non-episodic RL with infinite time horizons, under an ergodicity assumption. Their model-based algorithm attains a regret bound of order \(\tilde{\mathcal{O}}\left(T^{1 - \frac{1}{2d_{\mathcal{S}} + z + 3}}\right)\), where \(T\) denotes the total number of decision steps, \(d_{\mathcal{S}}\) is the dimension of the state space, and \(z\) is the zooming dimension tailored to their setting.

\paragraph{Discrete-time approximation of controlled diffusions.} Our MDP is an Euler--Maruyama discretization of a controlled stochastic differential equation, which naturally connects our formulation with the literature on discrete-time approximations of continuous-time diffusion control problems. The convergence theory for time discretizations of SDEs is classical \citep{kloeden1992numerical}, and the weak-convergence framework of \citep{kushner2001numerical} provides a general methodology for approximating controlled diffusions by discrete-time controlled Markov chains. Related work studies probabilistic interpretations and convergence rates of such schemes \citep{tan2014discrete}, partial-observation extensions \citep{li2024discrete}, and learning-based discretization methods for controlled diffusions \citep{kara2023qlearning,pradhan2025discrete,bayraktar2026reinforcement}. Our focus is complementary: for a fixed discretization level $\Delta$, we study finite-sample regret of an adaptive, end-to-end RL algorithm for the induced discrete-time MDP.

\paragraph{Continuous-time RL under diffusion processes.} As our study concerns diffusion-type dynamics in discrete time, it naturally relates to the literature on RL with system dynamics governed by continuous-time diffusion processes. Recent contributions in this area, such as \citep{wang2020reinforcement,jia2023q,dai2025data,jia2022policy,huang2025sublinear,han2023choquet}, provide elegant mathematical frameworks and demonstrate substantial algorithmic progress. However, aspects such as sample complexity or regret guarantees at the implementation level—particularly in terms of the number of observations collected from the environment—are typically not the primary focus of these works. Moreover, theoretical treatments of unbounded state–action spaces and of implementable sampling schemes for general (non-Gaussian) policies over such spaces remain limited. Addressing these issues constitutes one of the main focuses of our work.

\vspace{10pt}
This paper is organized as follows. Section \ref{sec:set-up} introduces the mathematical formulation of the problem, and Section \ref{sec:algorithm_design} presents the design of our algorithm. We then turn to the technical developments: Section \ref{sec:concentration} establishes concentration inequalities for the estimators used in the algorithm, while Section \ref{sec:regret_analysis} provides the regret analysis. Finally, Section \ref{sec:experiments} evaluates the algorithm’s performance through some numerical experiments.

\section{Mathematical set-up}\label{sec:set-up}
{To facilitate learning and implementation, we consider a discrete-time Markov decision process (MDP) with Gaussian increments fully characterized by  $(\mathbb{R}^{d_{\mathcal{S}}},\mathcal{A}, H, \mu,\sigma)$, serving as an approximation of continuous-time diffusion processes.} Here $H$ is the number of timestamps indexed  in each episode, with $[H] = 
\{1,2,\cdots,H\}$. In addition, $\mathbb{R}^{d_{\mathcal{S}}}$ denotes the state space with dimension $d_{\mathcal{S}}\in \mathbb{N}_+$, equipped with Euclidean metric $\mathcal{D}_\mathcal{S}$. $\mathcal{A}$ is the action/control space equipped with Euclidean metric $\mathcal{D}_\mathcal{A}$. For analytical convenience, we assume that $\mathcal{A}$ is a closed hypercube in $\mathbb{R}^{d_{\mathcal{A}}}$  whose center is $0$ and ${\rm diam}(\mathcal{A})=2\bar{a}>0$.  For the joint state-action space $\mathbb{R}^{d_{\mathcal{S}}}\times \mathcal{A}$, we define the metric 
$\mathcal{D}((x,a),(x^{\prime},a^{\prime}))=\sqrt{(\mathcal{D}_{\mathcal{S}}(x,x^{\prime}))^{2}+(\mathcal{D}_{\mathcal{A}}(a,a^{\prime}))^{2}}$ for $(x,a),(x^{\prime},a^{\prime})\in \mathbb{R}^{d_{\mathcal{S}}}\times \mathcal{A}$.  To ease the notation, we denote by $\|.\|$  the $\ell_2$ norm, unless specified otherwise.

The state transition are governed by a collection of drift and volatility terms ${\mu}:=\{\mu_h(x,a)\}_{h\in[H-1]}$ and $\sigma:=\{\sigma_h(x,a)\}_{h\in[H-1]}$, with $\mu_h: \mathbb{R}^{d_{\mathcal{S}}}\times \mathcal{A}\mapsto \mathbb{R}^{d_{\mathcal{S}}}$ and $\sigma_h: \mathbb{R}^{d_{\mathcal{S}}}\times \mathcal{A}\mapsto \mathbb{R}^{d_{\mathcal{S}}\times d_{\mathcal{S}}}$. Mathematically, for $h\in [H-1]$, the state process follows:
\begin{eqnarray}\label{eq:state_process}
    X_{h+1}-X_{h}&=&\mu_h(X_{h},A_{h})\Delta+\sigma_h(X_{h},A_{h})B_{h}\sqrt{\Delta},\quad \textrm{with}\,\, X_{1}=\xi.
\end{eqnarray}
Here $\Delta>0$ is the time-increment between two consecutive time stamps, $B_{h}$ are i.i.d. samples from the multi-variate Gaussian distribution $\mathcal{N}(0,I_{d_{\mathcal{S}}})$ and $\xi$ is independently sampled from an initial distribution $\Xi$. Note that \eqref{eq:state_process} can be viewed as a controlled diffusion process discretized in time. 
We further denote the transition kernel of the dynamics as $T_{h}(\cdot|x,a)\in \mathcal{P}(\mathbb{R}^{d_{\mathcal{S}}})$ conditioned on $X_h = x, A_h = a$. Clearly, for non-degenerate $\sigma_h(x,a)$, we have $T_{h}(\cdot|x,a)=\NN\Big(\mu_{h}(x,a)\Delta,\Sigma_{h}(x,a)\Delta\Big)$, where $\Sigma_{h}(x,a)=\sigma_h(x,a)\sigma_h^{\top}(x,a)$.

At timestamp $h$, given state $X_h=x$ and after taking an action $A_h=a$, the agent receives an instantaneous stochastic reward $r_h(x,a)$, which is drawn from a distribution $R_h:\mathbb{R}^{d_{\mathcal{S}}}\times \mathcal{A} \mapsto {\mathcal{P}(\mathbb{R})}$. We let $R = \{R_h\}_{h\in[H]}$ denote the collection of reward distributions and let $\bar{R}_h(x,a) = \mathbb{E}_{r_h\sim R_h(x,a)}[r_h]$ be the mean-reward at timestamp $h$ under the state-action pair $(x,a)$.

The agent interacts with the environment $(\mathbb{R}^{d_{\mathcal{S}}},\mathcal{A}, H, \mu,\sigma, R)$ by taking actions according to a {\it (randomized) control policy}  $\pi$. Such a policy is  specified by a collection of  distributions $\pi = \{\pi_h\}_{h \in [H]}$, where each timestamp-$h$ component $\pi_h:\mathbb{R}^{d_{\mathcal{S}}}\mapsto \mathcal{P}(\mathcal{A})$ maps a given state $x\in \mathbb{R}^{d_{\mathcal{S}}}$
to a distribution over the action space $\mathcal{A}$. In the control literature, this is also referred to as a mixed control strategy \citep{yong1999stochastic}.

\subsection{Value function, Bellman equations and evaluation criterion}

\paragraph{Bellman equation for generic policy.} 
For any policy $\pi$,  we define the policy value function under a given policy $\pi$ as
\begin{eqnarray*}
    V_h^\pi (x) := \mathbb{E} \Bigg[ \sum_{h'=h}^H r_{h'}
    \Bigg|X_h=x\Bigg] \mbox{ subject to } r_{h'} \sim R_{h'}(X_{h'},A_{h'}^\pi) \mbox{ and } A^{\pi}_{h'} \sim \pi_{h'}(X_{h'}).
\end{eqnarray*}
Similarly, we define the state-action value function (or Q-function) $Q_h^\pi:\mathbb{R}^{d_{\mathcal{S}}}\times \mathcal{A}
\mapsto\mathbb{R}$ as
\begin{eqnarray*}
    Q_h^\pi(x,a) := \bar{R}_h(x,a) + \mathbb{E}\left[ \sum_{h'=h+1}^H r_{h'}\,\Bigg|\,X_{h+1}\sim T_h(\cdot|x,a)\right], 
\end{eqnarray*}
subject to $ r_{h'}\sim R_{h'}(X_{h'},A_{h'}^\pi)$ and $A^{\pi}_{h'} \sim \pi_{h'}(X_{h'})$. Intuitively, $ Q_h^\pi(x,a)$ is the value of taking action $a$ in state $x$ at timestamp $h$ and playing according to policy $\pi$ thereafter.

For a generic randomized policy $\pi=\{\pi_h\}_{h\in [H]}$, the associated action-value function $Q^\pi$ and value function $V^\pi$ satisfy the Bellman equations \citep{puterman2014markov}. Specifically, for any $x \in \mathbb{R}^{d_{\mathcal{S}}}$ and $\mathcal{A}$,
\begin{eqnarray}
     V_{h}^{\pi}(x) &=&\mathbb{E}_{a\sim \pi_h(x)} \Big[Q_{h}^{\pi}(x,a)\Big], \nonumber\\
        Q_{h}^{\pi}(x,a) & =&\bar{R}_{h}(x,a)+\mathbb{E}_{X_{h+1}\sim T_h(\cdot|x,a), a'\sim\pi_{h+1}(X_{h+1})}\Big[Q_{h+1}^{\pi}(X_{h+1},a')\Big], \label{eq:bellman}
\end{eqnarray}
with terminal condition $V_{H+1}^{\pi}(x)=0$ and $Q_{H+1}^{\pi}(x,a)=0$.  As a consequence, we have for $h\in[H]$ and $x\in \mathbb{R}^{d_{\mathcal{S}}}$,
\begin{eqnarray*}
       V_{h}^{\pi}(x) & = \mathbb{E}_{a\sim \pi_{h}(x)}\Big[\bar{R}_{h}(x,a)\Big]+\mathbb{E}_{X_{h+1}\sim T_h(\cdot|x,a), a\sim \pi_h(x)}\Big[V_{h+1}^{\pi}(X_{h+1}) \Big]. 
\end{eqnarray*}

\paragraph{Bellman equation for optimal policy.} 
The optimal value function is defined as:
\begin{eqnarray}\label{eq:optimal_value_def}
    V_h^*(x) = \sup_{\pi} V_h^{\pi}(x).
\end{eqnarray}
The corresponding Bellman equation for the optimal value function is defined as:
\begin{eqnarray}\label{eq:bellman V star}
 V_h^*(x) = \sup_{a\in \mathcal{A}} \Big\{ \bar{R}_h(x,a) + \mathbb{E}_{X'\sim T_h(\cdot|x,a)}\Big[V_{h+1}^*(X')\Big] \Big\},   
\end{eqnarray}
with terminal condition $V^*_{H+1}(x)=0$. We write the value function as
\begin{eqnarray*}
    V_h^*(x) = \sup_{a\in \mathcal{A}} Q_h^*(x,a)
\end{eqnarray*}
where the $Q^*_h$ function is defined to be
\begin{eqnarray}\label{eq:bellman Q star}
     Q^*_h(x,a) = \bar{R}_h(x,a)+ \mathbb{E}_{X'\sim T_h(\cdot|x,a)}\Big[V_{h+1}^*(X')\Big].
\end{eqnarray}
There is also a Bellman equation for the $Q^*$-function given by
\begin{eqnarray*}
    Q^*_h(x,a) = \bar{R}_h(x,a) + \mathbb{E}_{X'\sim T_h(\cdot|x,a)}\Big[\sup_{a'\in \mathcal{A}}Q_{h+1}^*(X',a')\Big].
\end{eqnarray*}

\paragraph{Objective and evaluation criterion.}
It is well known in the literature that for the MDP problem \eqref{eq:optimal_value_def} with a closed and bounded action space, there always exists an optimal policy that is deterministic \citep{puterman2014markov}.  Specifically, $\pi^* = \{\pi_h^*\}_{h\in [H]}$, where each $\pi_h^*(x) = \delta_{a^*_h(x)}(\cdot)$ is a Dirac measure concentrated on some action $a_h^*(x)\in \mathcal{A}$. In this case, when no ambiguity arises, we simply write $\pi_h^*(x) = a^*_h(x)$ and refer to $\{a^*_h(x)\}_{h\in[H]}$ as the {\it optimal deterministic policy} (which may not be unique). Throughout the remainder of the paper, the term optimal policy will always refer to the optimal deterministic policy.

The goal is to design an algorithm that generates a sequence of randomized policies through interaction with the environment. The objective is that, as the episodes progress, the output policies improve in the sense that their corresponding value functions approach the optimal value function. To quantify the performance of such an algorithm, we introduce the notion of regret, defined as follows.
\begin{definition}
For an algorithm  deploying a sequence of  policies $\{\pi_{k}\}_{k\in[K]}$ with a given sequences of initial states $\{X_1^k\}_{k\in[K]}$, define the regret as
\begin{eqnarray*}
    {\rm Regret}(K):=\sum_{k=1}^K \Big(V_1^*(X_1^k)-V_1^{\pi_{k}}(X_1^k)\Big).
\end{eqnarray*}
\end{definition}

\subsection{Outstanding assumptions}
\label{sec:assumptions}

In this subsection, we list the outstanding assumptions used throughout the paper. Specifially,  we assume that $\mu_h$, $\sigma_h$, and $\bar{R}_h$ satisfy (local) Lipschitz continuity, and that the distribution $R_h(x,a)$ exhibits sub-Gaussian tail decay, which are standard assumptions in the control and RL literature (see \citep{yong1999stochastic} and \citep{bubeck2011x} for example).

\begin{assumption}[Regularity of the dynamics]\label{ass:lipschitz}
Assume there exists constants $\ell_{\mu},\ell_{\sigma}>0$,  $m\in \mathbb{N}$ and $L_{0}> 0$ such that for all $h\in [H-1]$, $x_1,x_2 \in \mathbb{R}^{d_{\mathcal{S}}}$, and $a_1,a_2\in \mathcal{A}$, it holds that:
    \begin{eqnarray*}
        \|\mu_h(x_1,a_1)-\mu_h(x_2,a_2)\|&\leq&\ell_{\mu}\Big(\|x_1-x_2\|+\|a_1-a_2\|\Big),\nonumber\\
        \|\sigma_h(x_1,a_1)-\sigma_h(x_1,a_2)\|&\leq& \ell_{\sigma}\Big(\|x_1-x_2\|+\|a_1-a_2\|\Big),\nonumber\\
          \max_{h\in[H-1]}\{\|\mu_{h}(0,0)\|,\|\sigma_{h}(0,0)\|\}&\leq& L_{0}. 
    \end{eqnarray*} 

In addition, assume the following elliptic condition, i.e., there exists a  constant $\lambda>0$ such that $\forall x \in \mathbb{R}^{d_{\mathcal{S}}}, a \in \mathcal{A},$ and $  h \in [H-1]$:
\begin{eqnarray}\label{ass-eqn:volatility}
    \sigma_{h}(x,a)\sigma_{h}(x,a)^{\top}\succ \lambda I_{d_{\mathcal{S}}}.
\end{eqnarray}
\end{assumption}

\begin{assumption}[Regularity of the reward]\label{ass:expected reward local lipschitz}
Assume the expected reward is local Lipschitz, namely,  there exists  constants $\ell_{r}>0$, $m\in \mathbb{N}$ and $L_{0}> 0$ such that for all $h\in [H]$, $ x_1,x_2 \in \mathbb{R}^{d_{\mathcal{S}}}$, and $ a_1,a_2\in \mathcal{A}$, it holds that:
    \begin{eqnarray*}
    |\bar{R}_{h}(x_1,a_1)-\bar{R}_{h}(x_2,a_2)|&\leq&
{\ell_{r}\Big(\|x_1\|^m+\|x_2\|^m{+1}\Big)\,\Big(\|x_1-x_2\|+\|a_1-a_2\|\Big)},\nonumber\\
     \max_{h\in[H]}|\bar{R}_{h}(0,0)|&\leq&L_{0}.\label{ass-eqn:expected reward local lipschitz}
     \end{eqnarray*}
 In addition, assume that the reward distribution has sub-Gaussian tail decay, i.e., there exists a known constant $\theta>0$ such that $\forall x \in \mathbb{R}^{d_{\mathcal{S}}}, a \in \mathcal{A}, \lambda_{1} \in \mathbb{R}$, and $h \in [H]$:
 \begin{eqnarray}\label{ass-eqn:subGaussian reward}
    \mathbb{E}_{_{r_h\sim R_h(x,a)}}\Big[\exp\Big(\lambda_{1}(r_{h}-\bar{R}_{h}(x,a))\Big)\Big]\leq e^{\frac{\theta \lambda_{1}^2}{2}}.
\end{eqnarray}
\end{assumption}

\begin{assumption}[Regularity of the initial distribution]\label{ass:initial} 
Assume that there exists $p\in \mathbb{N}$ with $p^{2}>(m+1)^{2}(d_{\mathcal{S}}+d_{\mathcal{A}}+2)+(m+1)(2d_{\mathcal{S}}+2m+4)$, such that the initial state $X_{1}=\xi$ of the diffusion process in \eqref{eq:state_process}  satisfies: 
\begin{eqnarray*}
    \mathbb{E}_{\xi\sim \Xi}[\|\xi\|^{p}]<+\infty,
\end{eqnarray*}
\end{assumption}
The assumption that $p^{2}>(m+1)^{2}(d_{\mathcal{S}}+d_{\mathcal{A}}+2)+(m+1)(2d_{\mathcal{S}}+2m+4)$ ensures that the initial distribution is well behaved. This requirement is not restrictive; for example, Gaussian and, more generally, sub-Gaussian distributions satisfy it. This condition is useful for the regret analysis.

\subsection{Properties of the dynamics and value functions}
\label{sec:properties}
Under the assumptions outlined in Section \ref{sec:assumptions}, we establish several useful properties of the dynamics and the associated value functions, which will play a central role in the subsequent analysis.

\begin{proposition}\label{thm:Mp estimation}
Given Assumptions \ref{ass:lipschitz}, \ref{ass:expected reward local lipschitz} and \ref{ass:initial},  there exists a constant $M$ such that 
\begin{eqnarray*}
\mathbb{E}\left[\sup_{h\in[H]}\|X_{h}\|^{p}\right]\leq M\Big(1+\mathbb{E}_{\xi\sim \Xi}[\|\xi\|^{p}]\Big),
\end{eqnarray*}
where $M$ depends only on $H,\ell_{\mu},\ell_{\sigma},p,\bar{a},L_{0}$ and $\Delta$. 
\end{proposition}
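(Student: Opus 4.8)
The plan is to obtain the moment bound by a standard Gronwall-type argument applied to the discrete dynamics \eqref{eq:state_process}, but carried out at the level of the running maximum $\sup_{h'\le h}\|X_{h'}\|^p$ rather than $\|X_h\|^p$ alone, so that the final supremum over $h\in[H]$ can be controlled. First I would rewrite $X_{h+1}=\xi+\sum_{h'=1}^{h}\big(\mu_{h'}(X_{h'},A_{h'})\Delta+\sigma_{h'}(X_{h'},A_{h'})B_{h'}\sqrt\Delta\big)$ and, using the triangle inequality together with the elementary inequality $(\,\sum_{i=1}^{n}u_i)^p\le n^{p-1}\sum_{i=1}^n u_i^p$, bound $\sup_{h'\le h+1}\|X_{h'}\|^p$ by a constant times $\|\xi\|^p$ plus a constant times $H^{p-1}\sum_{h'=1}^h\big(\|\mu_{h'}(X_{h'},A_{h'})\|^p\Delta^p+\sup_{h''\le h}\|\sum_{\ell\le h''}\sigma_\ell(X_\ell,A_\ell)B_\ell\sqrt\Delta\|^p\big)$. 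The drift term is handled by Assumption~\ref{ass:lipschitz}: the Lipschitz bound plus $\|\mu_{h'}(0,0)\|\le L_0$ and $\mathrm{diam}(\mathcal A)=2\bar a$ give the linear growth estimate $\|\mu_{h'}(X_{h'},A_{h'})\|\le L_0+\ell_\mu(\|X_{h'}\|+\bar a)$, hence $\|\mu_{h'}(X_{h'},A_{h'})\|^p\le C(1+\|X_{h'}\|^p)$ with $C=C(\ell_\mu,\bar a,L_0,p)$; the same kind of estimate holds for $\|\sigma_{h'}(X_{h'},A_{h'})\|$.

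The martingale part is the only place requiring care. Conditioning on the filtration generated by $(X_1,\dots,X_{h'},A_1,\dots,A_{h'})$, the partial sums $N_{h''}:=\sum_{\ell\le h''}\sigma_\ell(X_\ell,A_\ell)B_\ell\sqrt\Delta$ form a martingale (the $B_\ell$ are i.i.d. standard Gaussian and independent of the past), so I would apply the Burkholder–Davis–Gundy inequality (or, more elementarily, Doob's $L^p$ maximal inequality coordinatewise followed by a discrete BDG bound on the quadratic variation) to get
\[
\mathbb{E}\Big[\sup_{h''\le h}\|N_{h''}\|^p\Big]\le C_p\,\Delta^{p/2}\,\mathbb{E}\Big[\Big(\sum_{\ell\le h}\|\sigma_\ell(X_\ell,A_\ell)\|^2\|B_\ell\|^2\Big)^{p/2}\Big].
\]
Using $(\sum_{\ell\le h} v_\ell)^{p/2}\le h^{p/2-1}\sum_{\ell\le h} v_\ell^{p/2}$, the independence of $B_\ell$ from $(X_\ell,A_\ell)$, and $\mathbb{E}\|B_\ell\|^p<\infty$ (a fixed constant depending on $d_{\mathcal S}$ and $p$, since $p$ is an integer and the Gaussian has all moments), this is bounded by $C_p H^{p/2}\Delta^{p/2}\sum_{\ell\le h}\mathbb{E}\|\sigma_\ell(X_\ell,A_\ell)\|^p\le C' H^{p/2}\Delta^{p/2}\sum_{\ell\le h}\big(1+\mathbb{E}\|X_\ell\|^p\big)$.

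Putting these together yields, for every $h$,
\[
\mathbb{E}\Big[\sup_{h'\le h+1}\|X_{h'}\|^p\Big]\le A_1\big(1+\mathbb{E}_{\xi\sim\Xi}[\|\xi\|^p]\big)+A_2\sum_{h'=1}^{h}\mathbb{E}\Big[\sup_{h''\le h'}\|X_{h''}\|^p\Big],
\]
with $A_1,A_2$ depending only on $H,\ell_\mu,\ell_\sigma,p,\bar a,L_0,\Delta$. One first checks the quantities $\mathbb{E}[\sup_{h'\le h}\|X_{h'}\|^p]$ are finite for each $h$ (a trivial induction, since each step adds a Gaussian increment with a Lipschitz—hence finite-moment—coefficient and $\xi$ has a finite $p$-th moment by Assumption~\ref{ass:initial}), which justifies manipulating them. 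Then the discrete Gronwall inequality gives $\mathbb{E}[\sup_{h\in[H]}\|X_h\|^p]\le A_1(1+A_2)^{H}\big(1+\mathbb{E}_{\xi\sim\Xi}[\|\xi\|^p]\big)$, so $M:=A_1(1+A_2)^H$ depends only on the stated parameters. I expect the main obstacle to be purely bookkeeping: tracking the $H$- and $\Delta$-dependence of the constants through the BDG step and the repeated use of the $(\sum)^p\le n^{p-1}\sum$ inequality, and making sure the argument does not secretly use the control policy (it does not—the linear-growth bounds on $\mu,\sigma$ are uniform over $a\in\mathcal A$, which is why $\bar a$ but not the policy enters $M$).
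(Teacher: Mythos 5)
Your argument is correct, but it takes a genuinely different (and heavier) route than the paper. The paper's proof is entirely elementary: from $X_{h+1}=X_h+\mu_h(X_h,A_h)\Delta+\sigma_h(X_h,A_h)B_h\sqrt{\Delta}$, the triangle inequality, the linear-growth bounds $\|\mu_h(x,a)\|\vee\|\sigma_h(x,a)\|\le L_0+\max\{\ell_\mu,\ell_\sigma\}(\|x\|+\bar a)$, and the independence of $B_h$ from $(X_h,A_h)$ give the one-step moment propagation $\mathbb{E}\big[\|X_{h+1}\|^p\big]\le c\big(1+\mathbb{E}[\|X_h\|^p]\big)$, which is iterated over the $H$ steps; the supremum is then handled trivially via $\mathbb{E}\big[\sup_{h\in[H]}\|X_h\|^p\big]\le\sum_{h\in[H]}\mathbb{E}[\|X_h\|^p]$, so no maximal inequality is needed at all. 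You instead run the classical SDE-style argument (sum representation, Burkholder--Davis--Gundy/Doob for the martingale part, discrete Gronwall at the level of the running maximum), which is valid here: the partial sums $N_{h''}$ are indeed a vector-valued martingale, the independence step $\mathbb{E}[\|\sigma_\ell(X_\ell,A_\ell)\|^p\|B_\ell\|^p]=\mathbb{E}[\|\sigma_\ell(X_\ell,A_\ell)\|^p]\,\mathbb{E}[\|B_\ell\|^p]$ is legitimate, and your preliminary finiteness check justifies the Gronwall manipulation. What your approach buys is sharper, more structured dependence on $\Delta$ and $H$ (constants of the form $A_1(1+A_2)^H$ with the martingale contribution scaling like $\Delta^{p/2}$), and it would survive passage to a continuous-time limit where ``sup $\le$ sum'' is unavailable; what the paper's approach buys is brevity, since with finitely many timestamps the crude bound suffices for the stated result. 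One shared cosmetic caveat: both proofs implicitly involve $\mathbb{E}[\|B\|^p]$, so the constant also depends on $d_{\mathcal{S}}$, a dependence the statement (and the paper's own proof) suppresses; this is not a defect specific to your argument.
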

The proof of Proposition \ref{thm:Mp estimation} is deferred to Appendix \ref{app:proof-2-2}. Proposition \ref{thm:Mp estimation} immediately implies the following result.

\begin{corollary}\label{Theorem: R-estimate}
Assume Assumptions \ref{ass:lipschitz},\ref{ass:expected reward local lipschitz} and \ref{ass:initial} hold.  For any given  $\rho>0$, there exists a constant $M_{p}$ independent of $\rho$ such that
\begin{eqnarray*}
\mathbb{P}\left(\sup_{h\in[H]}\|X_{h}\|\geq \rho\right) \leq \frac{M_{p}}{\rho^{p}}. 
\end{eqnarray*}
\end{corollary}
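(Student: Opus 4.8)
The plan is to deduce Corollary \ref{Theorem: R-estimate} directly from Proposition \ref{thm:Mp estimation} by a standard Markov-inequality argument applied to the random variable $\sup_{h\in[H]}\|X_h\|^p$. Concretely, the event $\{\sup_{h\in[H]}\|X_h\|\geq \rho\}$ coincides with $\{\sup_{h\in[H]}\|X_h\|^p \geq \rho^p\}$ since $t\mapsto t^p$ is increasing on $[0,\infty)$ and $p\in\mathbb{N}$. Applying Markov's inequality to the nonnegative random variable $Y:=\sup_{h\in[H]}\|X_h\|^p$ gives
\begin{eqnarray*}
\mathbb{P}\Big(\sup_{h\in[H]}\|X_h\|\geq\rho\Big) = \mathbb{P}\big(Y\geq\rho^p\big)\leq \frac{\mathbb{E}[Y]}{\rho^p}.
\end{eqnarray*}

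Next I would invoke Proposition \ref{thm:Mp estimation}, which bounds $\mathbb{E}[Y]=\mathbb{E}\big[\sup_{h\in[H]}\|X_h\|^p\big]\leq M\big(1+\mathbb{E}_{\xi\sim\Xi}[\|\xi\|^p]\big)$. Under Assumption \ref{ass:initial} the quantity $\mathbb{E}_{\xi\sim\Xi}[\|\xi\|^p]$ is finite, so the product $M_p:=M\big(1+\mathbb{E}_{\xi\sim\Xi}[\|\xi\|^p]\big)$ is a finite constant that depends only on $H,\ell_\mu,\ell_\sigma,p,\bar a,L_0,\Delta$ and the initial distribution $\Xi$, but crucially not on $\rho$. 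Substituting this bound yields $\mathbb{P}\big(\sup_{h\in[H]}\|X_h\|\geq\rho\big)\leq M_p/\rho^p$, which is exactly the claimed inequality.

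There is essentially no obstacle here: the corollary is a one-line consequence of Proposition \ref{thm:Mp estimation}, and the only points worth spelling out are (i) the equivalence of the two events, which needs $p\geq 1$ (guaranteed since $p\in\mathbb{N}$ and the inequality in Assumption \ref{ass:initial} forces $p\geq 1$), and (ii) the observation that the constant $M_p$ is $\rho$-independent, which is the content the corollary wishes to emphasize for later use in the regret analysis (e.g.\ when choosing the truncation radius of the localized state ball). If one wanted to be slightly more careful about the case $\rho^p=0$, that cannot occur since $\rho>0$, so division is always legitimate. I would therefore present the proof in two or three lines as above.
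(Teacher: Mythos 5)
Your proposal is correct and matches the paper's own argument: the paper treats the corollary as an immediate consequence of Proposition \ref{thm:Mp estimation} via Markov's inequality applied to $\sup_{h\in[H]}\|X_h\|^p$, with $M_p = M\big(1+\mathbb{E}_{\xi\sim\Xi}[\|\xi\|^p]\big)$, exactly as you wrote. No issues.
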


Corollary \ref{Theorem: R-estimate} suggests that,  with probability at least $1-\frac{M_{p}}{\rho^{p}}$, the entire state trajectory collected in one episode is within the radius $\rho$.

Next, we establish the local Lipschitz continuity of the optimal value function and a growth condition for the value function under any generic policy $\pi$. Both results are essential for algorithm design and regret analysis.

\begin{proposition}[Local Lipschitz property of the value function]\label{thm:value function local Lipschitz}
Suppose Assumptions \ref{ass:lipschitz} and \ref{ass:expected reward local lipschitz} hold. Then for each  $h \in [H]$, it holds that
\begin{eqnarray}\label{eq:value function local Lipschitz}
   | V_{h}^{*}(x_1)- V_{h}^{*}(x_2)|\leq \overline{C}_{h}\Big(1 + \|x_1\|^m +\|x_2\|^m\Big)\|x_1-x_2\|,
\end{eqnarray}
 with $\overline{C}_{h}:=\overline{C}_{h}(\overline{C}_{h+1},L_{0},\ell_{\mu},\ell_{\sigma},\ell_{r},\Delta,m)$.
 \end{proposition}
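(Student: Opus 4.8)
The plan is to prove \eqref{eq:value function local Lipschitz} by backward induction on $h$, starting from the terminal condition $V_{H+1}^*\equiv 0$ and using the Bellman equation \eqref{eq:bellman V star}. Fix $h$ and assume the bound holds for $h+1$ with constant $\overline{C}_{h+1}$. Given $x_1,x_2\in\mathbb{R}^{d_{\mathcal S}}$, I would use the standard trick for handling a supremum of differences: pick a near-optimal action $a$ for $x_1$ (or, since $\mathcal A$ is compact and we may invoke measurable selection, an exact maximizer $a_1^*$), so that
\[
V_h^*(x_1)-V_h^*(x_2)\le \bar R_h(x_1,a_1^*)-\bar R_h(x_2,a_1^*) + \mathbb{E}_{X'\sim T_h(\cdot|x_1,a_1^*)}\big[V_{h+1}^*(X')\big] - \mathbb{E}_{X'\sim T_h(\cdot|x_2,a_1^*)}\big[V_{h+1}^*(X')\big],
\]
and symmetrically for the reverse difference, so it suffices to bound the right-hand side uniformly over $a\in\mathcal A$. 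The reward term is immediately controlled by Assumption~\ref{ass:expected reward local lipschitz}, contributing $\ell_r(\|x_1\|^m+\|x_2\|^m+1)\|x_1-x_2\|$.

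The core of the argument is the transition term. I would realize both $X'$-expectations on a common probability space via the reparametrization coming from \eqref{eq:state_process}: if $B\sim\mathcal N(0,I_{d_{\mathcal S}})$, then $X'_i := x_i + \mu_h(x_i,a)\Delta + \sigma_h(x_i,a)B\sqrt{\Delta}$ has law $T_h(\cdot|x_i,a)$. Then
\[
\mathbb{E}\big[V_{h+1}^*(X'_1)-V_{h+1}^*(X'_2)\big] \le \mathbb{E}\Big[\overline{C}_{h+1}\big(1+\|X'_1\|^m+\|X'_2\|^m\big)\,\|X'_1-X'_2\|\Big]
\]
by the induction hypothesis. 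Now $\|X'_1-X'_2\|\le \|x_1-x_2\| + \Delta\|\mu_h(x_1,a)-\mu_h(x_2,a)\| + \sqrt{\Delta}\,\|(\sigma_h(x_1,a)-\sigma_h(x_2,a))B\| \le \big(1+\ell_\mu\Delta\big)\|x_1-x_2\| + \sqrt{\Delta}\,\ell_\sigma\|x_1-x_2\|\,\|B\|$, using the Lipschitz bounds in Assumption~\ref{ass:lipschitz}; this is an affine-in-$\|B\|$ bound times $\|x_1-x_2\|$. For the weight factor I would bound $\|X'_i\|^m$ by a constant (depending on $L_0,\ell_\mu,\ell_\sigma,\bar a,\Delta,m$) times $(1+\|x_i\|^m + \|B\|^m)$, again via the linear growth of $\mu_h,\sigma_h$ consequent to Assumption~\ref{ass:lipschitz} together with the elementary inequality $(u+v+w)^m \lesssim_m u^m+v^m+w^m$. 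Multiplying the two bounds, the $\|x_1-x_2\|$ factor comes out cleanly, and what remains inside the expectation is $\big(1+\|x_1\|^m+\|x_2\|^m\big)$ times a polynomial in $\|B\|$ of degree at most $m+1$; since $B$ is Gaussian, $\mathbb{E}[\|B\|^{m+1}]$ and all lower moments are finite constants depending only on $d_{\mathcal S}$ and $m$. Collecting terms yields \eqref{eq:value function local Lipschitz} with $\overline{C}_h$ an explicit function of $\overline{C}_{h+1},L_0,\ell_\mu,\ell_\sigma,\ell_r,\Delta,m$, as claimed.

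The main obstacle is bookkeeping rather than conceptual: keeping the weight $1+\|x_1\|^m+\|x_2\|^m$ exactly in the stated form (not, say, degree $m$ in $\|x_1\|$ times extra powers) while absorbing all the cross terms and Gaussian moments into the constant. Concretely, the delicate point is that the induction hypothesis produces a weight $\|X'_i\|^m$ that must be re-expressed in terms of $\|x_i\|^m$ without inflating the polynomial degree in $x$; this works precisely because $\mu_h,\sigma_h$ grow at most linearly, so $\|X'_i\|\lesssim 1+\|x_i\|+\|B\|$ and the degree-$m$ homogeneity is preserved. One should also be slightly careful that the optimal action $a_1^*(x_1)$ depends on $x_1$, but since all intermediate bounds are uniform over $a\in\mathcal A$ this dependence is harmless. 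The finiteness of Gaussian moments is what makes the whole scheme go through with $H$-independent per-step constants, and the recursion then propagates the constant across the $H$ steps.
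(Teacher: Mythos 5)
Your proposal is correct and follows essentially the same route as the paper's proof: backward induction, splitting off the reward difference via Assumption~\ref{ass:expected reward local lipschitz}, coupling the two next-state distributions through a common Gaussian increment $B$, applying the induction hypothesis, and then bounding $\|X'_i\|^m$ and $\|X'_1-X'_2\|$ by linear-growth/Lipschitz estimates so that finite Gaussian moments absorb everything into the constant $\overline{C}_h$. The only cosmetic difference is that you pass through a (near-)maximizing action while the paper uses the inequality $|\sup_a f(a)-\sup_a g(a)|\le\sup_a|f(a)-g(a)|$, which are interchangeable here.
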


The proof of Proposition \ref{thm:value function local Lipschitz} is deferred to Appendix \ref{app:proof-2-5}.

When the expected reward function is locally Lipschtiz with order $m$ (see Assumption \ref{ass:lipschitz}), the value function of any admissible policy $\pi$ has a polynomial growth of order $m+1$.

\begin{proposition} \label{thm: value function with any policy growth rate}
Suppose Assumptions \ref{ass:lipschitz} and \ref{ass:expected reward local lipschitz} hold. Then for all $h \in [H]$ and any policy $\pi$, we have
\begin{eqnarray}\label{eq:value function with any policy growth rate}
    |V_{h}^{\pi}(x)|\leq\widetilde{C}_{h}(\|x\|^{m+1}+1),
\end{eqnarray}
 with constant $\widetilde{C}_{h}:=\widetilde{C}_{h}(\widetilde{C}_{h+1}, L_{0},\ell_{\mu},\ell_{\sigma},\ell_{r},\bar{a},H,h,\Delta,m)$. 
\end{proposition}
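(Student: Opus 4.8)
The plan is to argue by backward induction on $h$, combining the Bellman recursion
\[
  V_h^\pi(x) = \mathbb{E}_{a\sim\pi_h(x)}\big[\bar R_h(x,a)\big] + \mathbb{E}_{X'\sim T_h(\cdot|x,a),\,a\sim\pi_h(x)}\big[V_{h+1}^\pi(X')\big]
\]
derived above with a uniform one-step moment estimate for the transition kernel. First I record that, since $\mathcal{A}$ is a hypercube centered at $0$ with $\mathrm{diam}(\mathcal{A}) = 2\bar a$, one has $\|a\| \le \bar a$ for every $a \in \mathcal{A}$; hence Assumption~\ref{ass:expected reward local lipschitz} and the triangle inequality give $|\bar R_h(x,a)| \le L_0 + \ell_r(\|x\|^m + 1)(\|x\| + \bar a) \le c_0\big(1 + \|x\|^{m+1}\big)$ for a constant $c_0 = c_0(L_0,\ell_r,\bar a,m)$ (using $m \in \mathbb{N}$, so $\|x\|^m + \|x\|^{m+1} \le 2(1+\|x\|^{m+1})$). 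In particular, since $V_H^\pi(x) = \mathbb{E}_{a\sim\pi_H(x)}[\bar R_H(x,a)]$, the base case $h=H$ holds with $\widetilde{C}_H = c_0$.

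The key auxiliary estimate is a one-step $(m+1)$-th moment bound: there is a constant $c_1 = c_1(L_0,\ell_\mu,\ell_\sigma,\bar a,\Delta,m,d_{\mathcal{S}})$ such that
\[
  \mathbb{E}_{X'\sim T_h(\cdot|x,a)}\big[\|X'\|^{m+1}\big] \le c_1\big(1 + \|x\|^{m+1}\big)
  \qquad \text{for all } h\in[H-1],\ x\in\mathbb{R}^{d_{\mathcal{S}}},\ a\in\mathcal{A}.
\]
To prove it I would write $X' = x + \mu_h(x,a)\Delta + \sigma_h(x,a)B\sqrt{\Delta}$ with $B\sim\mathcal{N}(0,I_{d_{\mathcal{S}}})$, use Assumption~\ref{ass:lipschitz} to bound $\|\mu_h(x,a)\| \le L_0 + \ell_\mu(\|x\|+\bar a)$ and $\|\sigma_h(x,a)\| \le L_0 + \ell_\sigma(\|x\|+\bar a)$, apply $\|\sigma_h(x,a)B\| \le \|\sigma_h(x,a)\|\,\|B\|$, expand $\big(\|x\| + \|\mu_h(x,a)\|\Delta + \|\sigma_h(x,a)\|\sqrt{\Delta}\,\|B\|\big)^{m+1}$ by the binomial theorem, and take expectations; each moment $\mathbb{E}[\|B\|^j]$ with $j\le m+1$ is a finite constant depending only on $m$ and $d_{\mathcal{S}}$, and since $\|\mu_h(x,a)\|,\|\sigma_h(x,a)\|$ are affine in $\|x\|$, every term of the expansion is bounded by a constant times $1+\|x\|^{m+1}$.

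For the inductive step, suppose $|V_{h+1}^\pi(y)| \le \widetilde{C}_{h+1}(1+\|y\|^{m+1})$ for all $y$. Substituting this, the reward bound, and the one-step estimate into the Bellman recursion gives
\[
  |V_h^\pi(x)| \le c_0\big(1 + \|x\|^{m+1}\big) + \widetilde{C}_{h+1}\,\mathbb{E}_{a\sim\pi_h(x)}\Big[1 + \mathbb{E}_{X'\sim T_h(\cdot|x,a)}\big[\|X'\|^{m+1}\big]\Big] \le \big(c_0 + \widetilde{C}_{h+1}(1+c_1)\big)\big(1 + \|x\|^{m+1}\big),
\]
so the claim holds at level $h$ with $\widetilde{C}_h := c_0 + \widetilde{C}_{h+1}(1+c_1)$, a constant of the stated form (its dependence on $H$ and $h$ entering through the number of times the recursion is unrolled). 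This closes the induction. Note the argument applies verbatim to randomized policies because all the intermediate bounds are uniform in $a\in\mathcal{A}$.

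The only step requiring genuine care is the one-step moment bound: one must track all cross terms in the binomial expansion and verify that the polynomial in $\|x\|$ they produce never exceeds degree $m+1$ — which is precisely what the at-most-linear growth of $\mu_h,\sigma_h$ in $\|x\|$ (Assumption~\ref{ass:lipschitz}) and the boundedness of $\mathcal{A}$ guarantee. Everything else is a routine finite-horizon unwinding of the Bellman equation.
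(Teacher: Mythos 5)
Your proof is correct and follows essentially the same route as the paper: backward induction on $h$, bounding the expected reward via Assumption \ref{ass:expected reward local lipschitz} with $\|a\|\le\bar a$, and a one-step $(m+1)$-th moment bound for the Gaussian-increment transition (the paper reuses the estimate \eqref{eq:moment estimation for the next step} from the proof of Proposition \ref{thm:Mp estimation}, which is proved exactly by your linear-growth-plus-Gaussian-moment expansion). No gaps worth noting.
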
 

The proof of Proposition \ref{thm: value function with any policy growth rate} is deferred to Appendix \ref{app:proof-2-6}.

\section{Algorithm design}\label{sec:algorithm_design}
This section provides an overview of the algorithm design and its key ingredients, with the technical details and theoretical guarantees deferred to Sections \ref{sec:concentration} and \ref{sec:regret_analysis}. We develop a value-based algorithm that maintains estimators for both the Q-function and the value function over each partition of the joint state–action space. Based on these estimators, the algorithm implements a greedy policy by selecting the action that maximizes the estimated Q-function. The adaptive partitioning of the state–action space is guided by a bias–variance trade-off, following the approach introduced by Sinclair et al. (2023), which was originally developed for bounded state–action spaces.

\paragraph{Initial state partition.}
Since the state space is unbounded, we restrict our learning and optimization to a subset of the full space, defined as
$$\mathcal{S}_1 :=\Big\{x\in \mathbb{R}^{d_{\mathcal{S}}}\,\,\Big|\,\, \|x\|\leq \rho\Big\},$$ 
where $\rho>0$ is a radius to be specified in the regret analysis (see Section~\ref{sec:regret_analysis}) such that   the state process remains within  $\mathcal{S}_1$ with high probability. For states outside this subset, we will apply some coarse estimations that do not affect the leading-order term in the regret bound.

 Due to the unbounded nature of the state space, our initial partition of the state-action space differs from that in \citep{sinclair2023adaptive}. We first partition the entire state-action space $\mathbb{R}^{d_{\mathcal{S}}}\times \mathcal{A}$ into (closed) hypercubes of fixed diameter $D>0$. \footnote{The constant $D$ can be chosen arbitrarily, provided that $\frac{2\bar{a}\sqrt{d_{\mathcal{S}}+d_{\mathcal{A}}}}{D\sqrt{d_{\mathcal{A}}}}$ is a positive integer. This ensures that the state-action space can be partitioned into those hypercubes.} 
   Denote by $\mathcal{Z}_{D}$ the collection of these hypercubes, and we construct the initial partition of our subset of state-action space by  
   \begin{eqnarray*}
    \mathcal{B}_{D}:=\left\{B \,\Big|\,B \in \mathcal{Z}_{D}, B\cap (\mathcal{S}_{1}\times \mathcal{A})\neq \emptyset\right\},
\end{eqnarray*}
with $|\mathcal{B}_{D}|<\infty$.  The adaptive partition procedure will be carried out {\it only} for $B\in \mathcal{B}_{D}$ (not $\mathcal{Z}_{D}$). For further use, define the {\it partition space} as
\begin{eqnarray}
    \bar{Z}:=\cup_{B\in \mathcal{B}_{D}} B. \label{eq:partition_space}
\end{eqnarray}
As a consequence, $\mathcal{S}_1\times\mathcal{A}\subseteq \bar{Z}$.

\smallskip
The main algorithm, especially the key mechanism behind adaptive partition, is inspired by \citep{sinclair2023adaptive}. 
In a nutshell, the proposed Adaptive Partition and Learning for Diffusions (APL-Diffusion) Algorithm (see Algorithm \ref{alg:ML}) consists of the following key steps: 
\begin{itemize}
    \item Construct the estimators $\overline{Q}_{h}^k(.), \overline{V}_h^k(.)$ for the $Q$-function and the value function;
    \item Select block $B_h^k$ according to the estimated $Q$-function;
    \item Construct the confidence level ${\rm CONF}_h^k(B_h^k)$ for each visited block $B_h^k$;
    \item If ${\rm CONF}_h^k(B_h^k)\leq {\rm diam}(B_h^k)$, split the block $B_h^k$ 
    \begin{itemize}
        \item Each side of the block is divided evenly into two parts along every dimension,

\item As a result, $B_k^h$ is split into smaller (closed) hypercubes with half the diameter of the original block.
    \end{itemize}
\end{itemize}
Note that our estimation of the Q-functions and value functions, as well as the construction of the confidence measure, differs from \citep{sinclair2023adaptive} due to the different problem setting.

\begin{algorithm}[H]
\caption{Adaptive Partition and Learning for Diffusions (APL-Diffusion)}\label{alg:ML}
\begin{algorithmic}[1]
\State {\bf Initialize:}\label{Initial Partition} 
Initialize the partition $\mathcal{P}_h^0= \mathcal{B}_{D}$ for $h
\in [H]$ and  the counting with
   $ n_h^0(B)=0$ for  $B \in\mathcal{P}_h^0$.
Also, initialize the function estimators 
$\overline{Q}_h^0,\overline{V}_h^0$ according to \eqref{eq:initial value for estimation} and $\overline{Q}_h^k(\bar{Z}^{\complement})$ according to \eqref{eq:initialize outside Q} for $k\in [K]\cup\{0\}$  

\For{each episode $k=1,2,\cdots,K$}
\For{each step $h=1,2,\cdots,H$}
\State Observe $X_h^k$ 
\State \label{block selection} Select $B_h^k$ by BLOCK SELECTION$(X_h^k)$
\State \label{projection operator II} Take action: $A_h^k$ uniformly sampled from the action set $\Gamma_{\mathcal{A}}(B_h^k)$  
\State Receive $r_{h}^k$ and transition to $X_{h+1}^k$ 
\EndFor
\For{each step $h=H,H-1,\cdots,1$}
\State UPDATE COUNTS $(B_h^k)$
\State SPLITTING $(B_h^k)$
\State UPDATE ESTIMATE $(X_h^k,A_h^k, X_{h+1}^k,r_{h}^k,B_h^k)$ 
\EndFor
\EndFor

\end{algorithmic}
\end{algorithm}

\noindent {\bf Projection operators (line \ref{projection operator II} in Algorithm \ref{alg:ML} and line \ref{projection operator I} in Algorithm \ref{alg:ML 2}).}
    For a block $B\subset \mathbb{R}^{d_{\mathcal{S}}}\times \mathcal{A}$, we denote $\Gamma_{\mathcal{S}}(B)$ and $\Gamma_{\mathcal{A}}(B)$ as the projections of $B$ into $\mathbb{R}^{d_{\mathcal{S}}}$ and $\mathcal{A}$, respectively.

The three primary components (sub-algorithms) of Algorithm \ref{alg:ML}, BLOCK SELECTION$(X_h^k)$, UPDATE ESTIMATE $(X_h^k,A_h^k, X_{h+1}^k,r_{h}^k,B_h^k)$  and SPLITTING$(B_h^k)$, are presented below.
\smallskip

\begin{algorithm}[H]
    \caption{BLOCK SELECTION$(X_h^k)$}\label{alg:ML 2}
\begin{algorithmic}[1]
    \State \label{projection operator I}
    Determine ${\rm RELEVANT}_h^k(X_h^k) = \{B\in \mathcal{P}_h^{k-1}\cup\{\bar{Z}^{\complement}\}|X_h^k\in \Gamma_{\mathcal{S}}(B)\}$ 
\State \label{Greedy selection rule} 
Greedy selection rule: select $B_h^k \in \arg \max _{B\in {\rm RELEVANT}_h^k(X_h^k)}
\overline{Q}_h^{k-1}(B)$ 
\end{algorithmic}
\end{algorithm}
For a given state $X_h^k$, Algorithm \ref{alg:ML 2} determines all $B\in \mathcal{P}_h^{k-1}\cup\{\bar{Z}^{\complement}\}$ that $X_h^k$ lies in and chooses the one that maximizes the current estimate of the Q function $\overline{Q}_h^{k-1}$.

\begin{algorithm}[H]
\caption{UPDATE COUNTS$(B_h^k)$}\label{alg:ML 6}
\begin{algorithmic}[1]
\For{$B\in \mathcal{P}_h^{k-1}$}
\State \label{counts} 
Update $n_h^k(B)$ via \eqref{eq:update counts}
\EndFor
\end{algorithmic}
\end{algorithm}

\smallskip
\noindent {\bf Counts updates (line \ref{counts} in Algorithm \ref{alg:ML 6}).}
  Note that  $n_h^k(B)$ is the number of times the block $B$ or its ancestors have been {\it visited} up to (and including) episode $k$. It is updated for the visited block $B_h^k$ if $B_h^k \in \mathcal{P}_h^{k-1}$ and  remained the same for other blocks $B\in \mathcal{P}_h^{k-1}\setminus \{B_h^k\}$:
\begin{eqnarray}
    n_h^k(B_h^k)=n_h^{k-1}(B_h^k)+1,
  \quad  n_h^k(B)=n_h^{k-1}(B). \label{eq:update counts}
\end{eqnarray}

\begin{algorithm}[H]
\caption{SPLITTING$(B_h^k)$}\label{alg:ML 4}
\begin{algorithmic}[1]
\State\label{splitting rule}  {\bf If} $B_h^k \in \mathcal{P}_h^{k-1}$, ${\rm CONF}_h^k(B_h^k)\leq {\rm {\rm diam}}(B_h^k)$ 
{\bf then}:
\State $\qquad$ Construct $\mathcal{P}(B_h^k)=\{B_1,\cdots,B_{2^{d_{\mathcal{S}}+d_{\mathcal{A}}}}\}$ as the partition of $B_h^k$ such that each $B_i$ is a (closed) hypercube with ${\rm diam}(B_i)=\frac{{\rm diam}(B)}{2}$ such that $\cup_{i=1}^{2^{d_{\mathcal{S}}+d_{\mathcal{A}}}} B_i = B_h^k$
\State $\qquad$ Update $\mathcal{P}_h^k = \mathcal{P}_h^{k-1}\cup \mathcal{P}(B_h^k)\setminus	 B_h^k$
\For{ $B_1,\cdots,B_{2^{d_{\mathcal{S}}+d_{\mathcal{A}}}}$ } 
\State Initialize $n_h^k(B_i)=n_h^k(B_h^k)$
\EndFor
\State {\bf Else} $B_h^k \in \mathcal{P}_h^{k-1}$ with ${\rm CONF}_h^k(B_h^k)> {\rm {\rm diam}}(B_h^k)$ or $B_h^k=\bar{Z}^{\complement}$
{\bf then}:
\State $\qquad$ Update $\mathcal{P}_h^k = \mathcal{P}_h^{k-1}$
\end{algorithmic}
\end{algorithm}

\smallskip
\noindent {\bf Splitting rule (line \ref{splitting rule} in Algorithm \ref{alg:ML 4}).}
 To refine the partition over episodes, in episode $k$ and step $h$, we split the visited block $B_h^k$ if 
\begin{eqnarray}\label{eq:spliting formula}
    {\rm CONF}_h^k(B_h^k)\leq {\rm diam}(B_h^k),
\end{eqnarray}
where ${\rm CONF}_h^k$ is formally defined in \eqref{eq:CONF}, which represents the confidence of a block in its estimators. In a nutshell, \eqref{eq:spliting formula} compares the {\it confidence of the estimators} for the visited block, quantified by ${\rm CONF}_h^k(B_h^k)$, and the {\it bias of the block}, which is proportional to the diameter of the block. If the bias associated with a block, in representing all the points it contains, exceeds the confidence level of its estimators, the block should be further partitioned.

\begin{algorithm}[H]
\caption{UPDATE ESTIMATE $(X_h^k,A_h^k,X_{h+1}^k,r_{h}^k,B_h^k)$}\label{alg:ML 3}
\begin{algorithmic}[1]
\For{$B\in \mathcal{P}_h^k$}
\State \label{estimators} 
Update the following quantities:
\begin{itemize}
    \item $\widehat{\mu}_h^k(B)$, $\widehat{\Sigma}_h^k(B)$ and $\bar{T}_h^k(\cdot|B)$ via \eqref{eq:drift and volatility function estimator}
    \item $\hat{R}_h^k(B)$ via \eqref{eq:reward function estimator}
\end{itemize}
\State Update $\overline{Q}_h^k$ and $\overline{V}_h^k$ via \eqref{eq:Q,V estimation for h=H}-\eqref{V-estimates II} 
\EndFor
\end{algorithmic}
\end{algorithm}

\smallskip
\noindent {\bf Estimators (line \ref{estimators} in Algorithm \ref{alg:ML 3}).}
  For $B\in \mathcal{P}_h^k$, $\hat{R}_h^k(B)$, $\widehat{\mu}_h^k(B)$, $\widehat{\Sigma}_h^k(B)$ and $\bar{T}_h^k(\cdot|\,B)$ are the estimators of $\bar{R}_{h}(x,a)$,  $\mu_{h}(x,a)$, $\Sigma_{h}(x,a)$ and $T_h(\cdot|\,x,a)$,  for the state-action pairs $(x,a)\in B$.  In addition, $\overline{Q}_h^k(B)$ is the estimate of $Q_{h}^{*}(x,a)$ for  $(x,a)\in B$ and $\overline{V}_h^k(x)$ is the estimate of $V_{h}^{*}(x)$ for $x \in \mathbb{R}^{d_{\mathcal{S}}}$. 

\smallskip
For further use, we define the APL-Diffusion policy as the sequences of policies described in line \ref{block selection} and \ref{projection operator II} in Algorithm \ref{alg:ML} and denote it by 
\begin{eqnarray}\label{our policy}
    \{\tilde{\pi}^{k}\}_{k\in [K]}.
\end{eqnarray}
\smallskip
\noindent {\bf Demonstration of the algorithm.} Given the complexity of the algorithm and sophistication of the design,
below we provide a demonstration example with visualization, which is inspired by Figure 1 from \cite{sinclair2023adaptive}.

\begin{figure}[H]
    \centering
    \begin{subfigure}[t]{0.7\textwidth}
        \centering
        \includegraphics[height=2.2in]{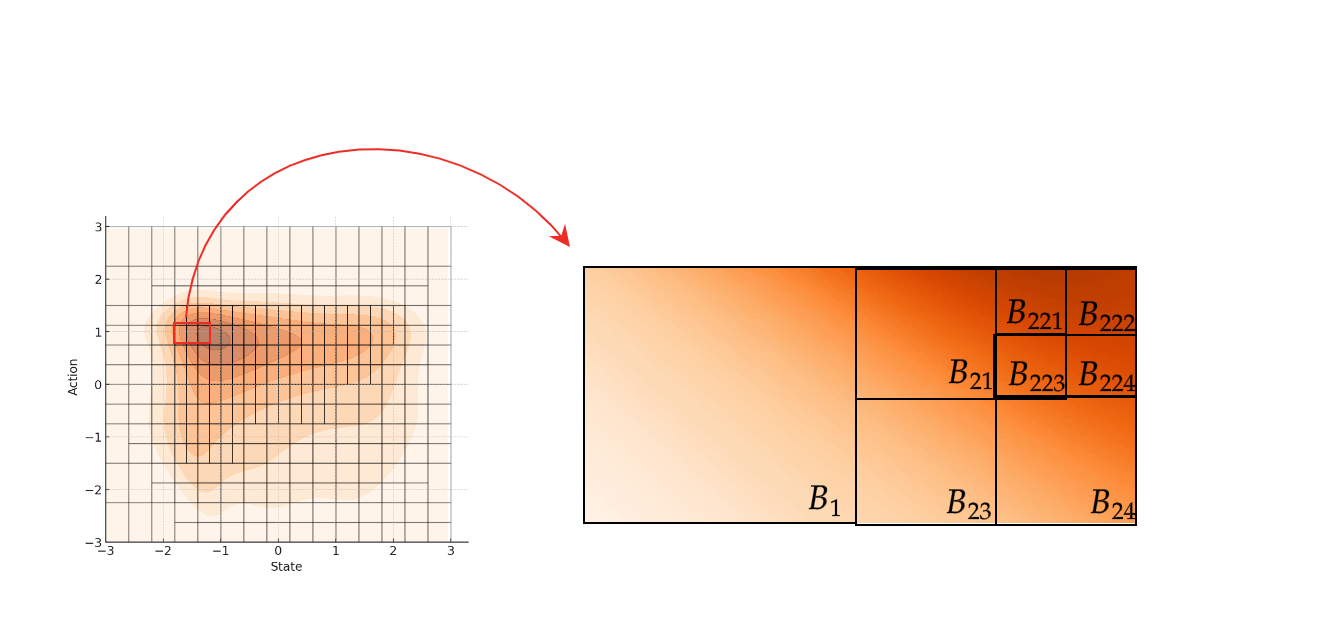}
        \caption{Illustration of the adaptive partition. The right panel zooms \\
        into the current partition $\mathcal{P}_h^{k-1}$.}
    \end{subfigure}%
    ~ 
    \begin{subfigure}[t]{0.3\textwidth}
        \centering
        \includegraphics[height=2.in]{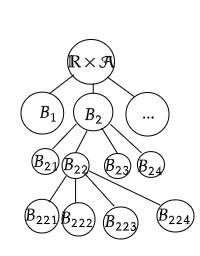}
        \caption{History partitions $\{\mathcal{P}_h^{j}\}_{j=0}^{k-1}$ .}
    \end{subfigure}
  
    \caption{Partitioning scheme for $\mathbb{R}\times\mathcal{A}=(-\infty,+\infty)\times [-3,3]$.}
    \label{fig:Illustration}
\end{figure}

In Figure \ref{fig:Illustration} (a)-left, the color indicates the true value of $Q_h^{*}$, with the darker corresponding to larger values. Note that the partition is more refined in areas which have higher $Q_h^{*}$.  
 In Figure \ref{fig:Illustration} (a)-right, we zoom in the current partition $\mathcal{P}_h^{k-1}$. In Figure \ref{fig:Illustration} (b), the history partitions $\{\mathcal{P}_h^{j}\}_{j=0}^{k-1}$ are depicted by a tree diagram.

\section{Concentration inequalities of the estimators}\label{sec:concentration}
This section is devoted to the development of concentration inequalities of the estimators associated with the transition kernel. Different from \citep{sinclair2023adaptive}, we fully utilize the property of diffusion process and construct estimators for the drift and volatility.  Note that establishing concentration inequalities for covariance matrices under merely Lipschitz conditions on the volatility is challenging. To address this, we introduce and carefully analyze two intermediate terms (see Appendix \ref{app:concentration} for more details).

Here,  we denote $k_{1}\leq ,\cdots, \leq k_{n_h^k(B)}$ the episode indices such that $B$ or its ancestors have been visited by the algorithm up to episode $k$. It is clear that $k_{n_h^k(B)}\leq k$.

 For each block $B  \in \mathcal{P}_h^k$ with $(h,k)\in [H-1]\times[K]$,  when $n^k_h(B)>0$ we construct the  mean and covariance estimators $\widehat{\mu}_{h}^{k}(B)$ and $\widehat{\Sigma}_{h}^{k}(B)$ by
   \begin{eqnarray}
        \widehat{\mu}_{h}^{k}(B)&=&\frac{\sum_{i}(X_{h+1}^{k_{i}}-X_{h}^{k_{i}})}{n_h^k(B)\Delta},\nonumber\\
    \widehat{\Sigma}_{h}^{k}(B)&=&\frac{\sum_{i}\Big((X_{h+1}^{k_{i}}-X_{h}^{k_{i}})-\widehat{\mu}_{h}^{k}(B)\Delta\Big)\Big((X_{h+1}^{k_{i}}-X_{h}^{k_{i}})-\widehat{\mu}_{h}^{k}(B)\Delta\Big)^{\top}}{n_h^k(B)\Delta}.\label{eq:drift and volatility function estimator}
    \end{eqnarray} 
    When $n^k_h(B)=0$, we simply take 
     $ \widehat{\mu}_{h}^{k}(B)=0 \mbox{ and } \widehat{\Sigma}_{h}^{k}(B)=0$.  As a result, the transition kernel can be estimated by
\begin{eqnarray*}
\bar{T}_h^k(\cdot|B):=\NN\Big(\widehat{\mu}_h^k(B)\Delta, \widehat{\Sigma}_h^k(B)\Delta\Big).
\end{eqnarray*}

Before further analysis, we introduce the following notations for convenience. First, since the analysis heavily relies on conditioning arguments, we denote the following abbreviation:
   \begin{eqnarray*}
    \overline{\mathbb{E}}\Big[\cdot\Big]:= \mathbb{E}\Big[\,\,\cdot\,\,\Big|X_{h}^{k_{1}},A_{h}^{k_{1}},...,X_{h}^{k_{n_h^k(B)}},A_{h}^{k_{n_h^k(B)}}\Big], \quad
    \overline{\mathbb{V}}\Big[\cdot\Big]:= \mathbb{\mathbb{V}}\Big[\,\,\cdot\,\,\Big|X_{h}^{k_{1}},A_{h}^{k_{1}},...,X_{h}^{k_{n_h^k(B)}},A_{h}^{k_{n_h^k(B)}}\Big].
   \end{eqnarray*}
In addition, let
\begin{eqnarray}\label{eq:L}
    L:=\max\{\ell_{\mu},\ell_{\sigma}\},
\end{eqnarray}
where $\ell_{\mu},\ell_{\sigma}$ are the Lipschitz constants defined in Assumption \ref{ass:lipschitz}.
For any $h \in [H], k\in [K]\cup\{0\}$, $B \in \mathcal{P}_h^k$, denote 
\begin{eqnarray}\label{eq:center definition}
   \mbox{$\tilde{x}(B),\tilde{a}(B)$ as centers of $\Gamma_{\mathcal{S}}(B),\Gamma_{\mathcal{A}}(B)$ respectively.}
\end{eqnarray}
Finally, denote $^{o}B$ as the block in the original partition that contains a given block $B$, i.e., $ ^{o}B$ is the unique set satisfying
\begin{eqnarray}
    \label{eq:oB definition}
    B\subset \, ^{o}B \mbox{  such that } ^{o}B \in \mathcal{B}_{D}.
\end{eqnarray}

With these notations, we have, for any $(x,a)\in B$ and $f=\mu_{h},\sigma_{h}$,
    \begin{eqnarray}
        \|f(x,a)\|&\leq& \Big\|f(\tilde{x}(^{o}B),\tilde{a}(^{o}B))\Big\|+\Big\|f(x,a)-f(\tilde{x}(^{o}B),\tilde{a}(^{o}B))\Big\|\nonumber\\
        &\leq& L_{0}+L(\|\tilde{x}(^{o}B)\|+\bar{a})+2LD:=\eta (\|\tilde{x}(^{o}B)\|),\label{eq:Lip-bound}\label{eq:eta function}
    \end{eqnarray}
    in which we defined $\eta: \mathbb{R}_+\cup\{0\}\mapsto \mathbb{R}_+$.

\subsection{Concentration inequalities for transition kernel estimators}
In this subsection, we first bound the Wasserstein distance between the true transition kernel and the estimated transition kernel. The  proof is deferred to Appendix \ref{app:high_prob_w_bound}. 
\begin{theorem}\label{thm:high_prob_w_bound} Given Assumption \ref{ass:lipschitz}, it holds with probability at least $1-2\delta$ that, for any $ (h,k) \in [H-1]\times [K]$, $B \in \mathcal{P}_h^k$ with $n_h^k(B)>0$, and any $(x,a) \in B$,  
    \begin{eqnarray}
&&\mathcal{W}_{2}\Big(\NN(\widehat{\mu}_{h}^{k}(B)\Delta,\widehat{\Sigma}_{h}^{k}(B)\Delta),\NN(\mu_{h}(x,a)\Delta,\Sigma_{h}(x,a)\Delta)\Big) \nonumber\\
    &\leq& \Delta\kappa_{\mu}(\delta,\|\tilde{x}(^{o}B)\|,n_h^k(B))+ \frac{\Delta^{\frac{3}{2}}}{\sqrt{\lambda}}\kappa_{\mu}(\delta,\|\tilde{x}(^{o}B)\|,n_h^k(B))^{2}+\frac{\sqrt{d_{\mathcal{S}}}\Delta^{\frac{1}{2}}}{\sqrt{\lambda}}\kappa_{\Sigma}(\delta,\|\tilde{x}(^{o}B)\|,n_h^k(B))\nonumber\\
    &&+
    \Big\|\overline{\mathbb{E}}[\widehat{\mu}_{h}^{k}(B)]-\mu_{h}(x,a)\Big\|\Delta+\Big\|\overline{\mathbb{E}}[\widetilde{\Sigma}_{h}^{k}(B)]-\Sigma_h(x,a)\Big\|\frac{\sqrt{\Delta}}{\sqrt{\lambda}},\label{eq:high_prob_w_bound}
   \end{eqnarray}
where  $\kappa_{\mu}:(0,1]\times (\mathbb{R}_+\cup\{0\})\times \mathbb{N}_+ \mapsto \mathbb{R}_+$ is defined as 
\begin{eqnarray*}
    \kappa_{\mu}(\delta,y,n) :=\frac{\eta(y)}{\sqrt{\Delta}}\Big(\sqrt{\frac{d_{\mathcal{S}}}{n}} + \sqrt{\frac{2\log(\frac{HK^2}{\delta})}{n}}\Big);
\end{eqnarray*}
$\kappa_{\Sigma}:(0,1]\times (\mathbb{R}_+\cup\{0\})\times \mathbb{N}_+ \mapsto \mathbb{R}_+$ is defined as 
\begin{eqnarray*}
    \kappa_{\Sigma}(\delta,y,n) :=\eta(y)^{2}\Bigg( D_1\Big( \sqrt{\frac{d_{\mathcal{S}}}{n}}+\frac{d_{\mathcal{S}}}{\sqrt{n}}\Big) +\Bigg( \sqrt{\frac{\log(\frac{D_2}{d_{\mathcal{S}}})}{D_3n}}+\frac{\log(\frac{D_2 HK^2}{\delta})}{D_3\sqrt{n}}\Bigg)\Bigg).
\end{eqnarray*}
\end{theorem}
\smallskip

With the above result, we next quantify the following difference:
\begin{eqnarray*}
    \left|\mathbb{E}_{X \sim \bar{T}_{h}^{k}(\cdot|B)}[V_{h+1}^{*}(X)]- \mathbb{E}_{Y \sim {T}_{h}(\cdot|x,a)}[V_{h+1}^{*}(Y)]\right|.
\end{eqnarray*}
To do so, we characterize the concentration inequality of the transition kernels, for which we define the following function $\mbox{\rm T-UCB}_h^k(B)$  to represent the uncertainty in the transition kernel. Specifically, for all $(h,k)\in [H]\times [K], B\in \mathcal{P}_h^k$ with $n_h^k(B)>0$, define
\begin{eqnarray}\label{eq:TUCB}
    \mbox{\rm T-UCB}_{h}^{k}(B)&:=&L_{V}(\delta, \|\tilde{x}(^{o}B)\|)\times \Bigg(\kappa_{\mu}(\delta,\|\tilde{x}(^{o}B)\|,n_h^k(B))\Delta+ \frac{\Delta^{\frac{3}{2}}}{\sqrt{\lambda}}\kappa_{\mu}(\delta,\|\tilde{x}(^{o}B)\|,n_h^k(B))^{2}\nonumber\\   &&+\frac{\sqrt{d_{\mathcal{S}}}\Delta^{\frac{1}{2}}}{\sqrt{\lambda}}\kappa_{\Sigma}(\delta,\|\tilde{x}(^{o}B)\|,n_h^k(B))\Bigg), \quad h<H, \nonumber\\
     \mbox{\rm T-UCB}_{H}^{k}(B)&:=&0, 
\end{eqnarray}
where the function $L_{V}: (0,1]\times (\mathbb{R}_+\cup\{0\}) \mapsto \mathbb{R}_+$ is defined as 
\begin{eqnarray}
   L_{V}(\delta,y)
   &:=&\sqrt{3}\,\overline{C}_{\max}\Bigg(1+{\widetilde{C}(m,d_{\mathcal{S}})}\Bigg( 2^{m}\Big((\sqrt{n}\,\,\kappa_{\mu}(\delta,y,n))^{m}+\eta(y)^m\Big)\Delta^{m} \nonumber\\
        &&\quad+3^{\frac{m}{2}}\Big((\sqrt{n}\,\,\kappa_{\mu}(\delta,y,n))^{m}\Delta^{\frac{m}{2}}+(\sqrt{n}\,\,\kappa_{\Sigma}(\delta,y,n))^{\frac{m}{2}}+\Big(\eta(y)^2+L^2D^2\Delta\Big)^{\frac{m}{2}}\Big)\Delta^{\frac{m}{2}}\nonumber\\
&&\quad+\eta(y)^m\Delta^{m}+\eta(y)^m\Delta^{\frac{m}{2}}\Bigg)\Bigg)\label{eq:Lv definition}
\end{eqnarray}
with the constants $\overline{C}_{max}$ and $\widetilde{C}(m,d_{\mathcal{S}})$ defined by
 \begin{eqnarray}
     \overline{C}_{\max}&:=&\max_{h \in [H]}\bCh,\label{eq:barCmax}\\
    \widetilde{C}(m,d_{\mathcal{S}})&:=&d_{\mathcal{S}}^{\frac{3}{4}m+1}2^{\frac{3m-1}{2}}\frac{\Gamma(m+\frac{1}{2})^{\frac{1}{2}}}{\pi^{\frac{1}{4}}}.\label{eq:C(q,d)}
\end{eqnarray}

Hence, we can bound the difference of expected value functions using the T-UCB function.
\begin{theorem}\label{thm:transition kernel wasserstein local lipschitz all together}
Assume Assumption \ref{ass:lipschitz} 
holds. With probability at least $1-2\delta$, we have that, for any $(h,k)  \in [H-1] \times [K]$, $B \in \mathcal{P}_h^k$ with $n_h^k(B)>0$, and $(x,a) \in B$:
\begin{eqnarray}
     &&\left|\mathbb{E}_{X \sim \bar{T}_{h}^{k}(\cdot|B)}[V_{h+1}^{*}(X)]- \mathbb{E}_{Y \sim {T}_{h}(\cdot|x,a)}[V_{h+1}^{*}(Y)]\right|\label{eq:transition kernel wasserstein local lipschitz all together}\\
     &\leq&  \mbox{\rm T-UCB}_{h}^{k}(B)
     +L_{V}(\delta, \|\tilde{x}(^{o}B)\|)\,\,\Big(\Big\|\overline{\mathbb{E}}[\widehat{\mu}_{h}^{k}(B)]-\mu_{h}(x,a)\Big\|\Delta+\Big\|\overline{\mathbb{E}}[\widetilde{\Sigma}_{h}^{k}(B)]-\Sigma_h(x,a)\Big\|\frac{\sqrt{\Delta}}{\sqrt{\lambda}}\Big).\nonumber
\end{eqnarray}
\end{theorem}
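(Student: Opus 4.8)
The plan is to bound the target quantity by first passing from the difference of expectations to the Wasserstein-$2$ distance between the two Gaussian transition kernels, and then invoking Theorem~\ref{thm:high_prob_w_bound}. The key observation is that, although $V_{h+1}^*$ is not globally Lipschitz, Proposition~\ref{thm:value function local Lipschitz} gives local Lipschitz continuity with the polynomial modulus $\overline{C}_{h+1}(1+\|x_1\|^m+\|x_2\|^m)$. For a coupling $(X,Y)$ of $\bar T_h^k(\cdot|B)$ and $T_h(\cdot|x,a)$, one has
\begin{eqnarray*}
\left|\mathbb{E}[V_{h+1}^*(X)]-\mathbb{E}[V_{h+1}^*(Y)]\right|
\le \mathbb{E}\Big[\overline{C}_{h+1}\big(1+\|X\|^m+\|Y\|^m\big)\|X-Y\|\Big],
\end{eqnarray*}
and then Cauchy--Schwarz (with exponents chosen so the $\|X-Y\|$ factor contributes an $L^2$ norm) decouples this into $\overline{\mathcal{W}}_2$ of the two kernels times an $L^2$-type bound on $1+\|X\|^m+\|Y\|^m$ under the two Gaussian laws. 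This is exactly where the constant $L_V(\delta,\|\tilde x(^oB)\|)$ comes from: it is designed to dominate $\sqrt 3\,\overline{C}_{\max}$ times the (conditional) $L^2$-norm of $(1+\|X\|^m+\|Y\|^m)$, bounding the $2m$-th moments of both $X\sim\NN(\widehat\mu_h^k(B)\Delta,\widehat\Sigma_h^k(B)\Delta)$ and $Y\sim\NN(\mu_h(x,a)\Delta,\Sigma_h(x,a)\Delta)$ using the Gaussian moment formula (hence the $\Gamma(m+\tfrac12)$ and $\widetilde C(m,d_{\mathcal S})$ terms), the bound $\|\mu_h(x,a)\|,\|\sigma_h(x,a)\|\le \eta(\|\tilde x(^oB)\|)$ from \eqref{eq:Lip-bound}, and the high-probability bounds $\|\widehat\mu_h^k(B)-\overline{\mathbb{E}}[\widehat\mu_h^k(B)]\|\le\kappa_\mu$ and $\|\widetilde\Sigma_h^k(B)-\overline{\mathbb{E}}[\widetilde\Sigma_h^k(B)]\|\le\kappa_\Sigma$ from Propositions~\ref{lemma:Tail Estimate for Standard Normal Distribution}--\ref{lemma:Tail Estimates for Gaussian Sample Covariance} to control the estimated drift/covariance in terms of the true ones. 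The factors $2^m$, $3^{m/2}$ and the powers of $\Delta$ in \eqref{eq:Lv definition} then track, term by term, the decomposition $\widehat\mu_h^k(B)=\overline{\mathbb{E}}[\widehat\mu_h^k(B)]+(\widehat\mu_h^k(B)-\overline{\mathbb{E}}[\widehat\mu_h^k(B)])$ and analogously for the covariance.

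Concretely, the steps are: (i) on the event of probability at least $1-2\delta$ from Theorem~\ref{thm:high_prob_w_bound}, fix $(h,k)$, $B$ with $n_h^k(B)>0$, and $(x,a)\in B$; (ii) write the LHS as $|\mathbb{E}_X V_{h+1}^*(X)-\mathbb{E}_Y V_{h+1}^*(Y)|$ and apply the local-Lipschitz bound under the optimal $\overline{\mathcal{W}}_2$-coupling, then Cauchy--Schwarz, to get $\le L_V(\delta,\|\tilde x(^oB)\|)\cdot\overline{\mathcal{W}}_2\big(\bar T_h^k(\cdot|B),T_h(\cdot|x,a)\big)$, where $L_V$ absorbs $\sqrt 3\,\overline{C}_{\max}$ and the $L^2$-moment bounds just described; (iii) substitute the bound on $\overline{\mathcal{W}}_2$ from \eqref{eq:high_prob_w_bound}; (iv) recognize the first three resulting terms (the $\kappa_\mu\Delta$, $\tfrac{\Delta^{3/2}}{\sqrt\lambda}\kappa_\mu^2$, $\tfrac{\sqrt{d_{\mathcal S}}\sqrt\Delta}{\sqrt\lambda}\kappa_\Sigma$ terms, each times $L_V$) as precisely $\mbox{\rm T-UCB}_h^k(B)$ by definition \eqref{eq:TUCB}, and the last two as the bias terms $L_V\big(\|\overline{\mathbb{E}}[\widehat\mu_h^k(B)]-\mu_h(x,a)\|\Delta+\|\overline{\mathbb{E}}[\widetilde\Sigma_h^k(B)]-\Sigma_h(x,a)\|\tfrac{\sqrt\Delta}{\sqrt\lambda}\big)$; the $h=H$ case is trivial since $V_{H+1}^*\equiv 0$ and $\mbox{\rm T-UCB}_H^k\equiv 0$.

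The main obstacle is step (ii): making the passage from "local Lipschitz with polynomial modulus" to "Lipschitz constant $L_V$ times $\overline{\mathcal{W}}_2$" fully rigorous. One must (a) justify that the optimal $\overline{\mathcal{W}}_2$-coupling of two nondegenerate Gaussians can be used (it is the linear/Gelbrich coupling, and is explicit), (b) handle the cross term from Cauchy--Schwarz so that $\|X-Y\|$ enters only through $\big(\mathbb{E}\|X-Y\|^2\big)^{1/2}=\overline{\mathcal{W}}_2$ while $1+\|X\|^m+\|Y\|^m$ enters through its conditional $L^2$-norm — this requires the moment exponent bookkeeping $(a+b)^m\le 2^m(a^m+b^m)$ etc. that produces the $2^m$, $3^{m/2}$ factors, and (c) verify that bounding $\|\widehat\mu_h^k(B)\|\le\|\overline{\mathbb{E}}[\widehat\mu_h^k(B)]\|+\sqrt{n}\,\kappa_\mu(\delta,\cdot,n)/\sqrt{n}$-type estimates (and similarly $\|\widehat\Sigma_h^k(B)\|$) indeed yields the combination appearing inside $L_V$, using that $\|\overline{\mathbb{E}}[\widehat\mu_h^k(B)]\|\le\eta(\|\tilde x(^oB)\|)$ since $\overline{\mathbb{E}}[\widehat\mu_h^k(B)]$ is an average of true drifts $\mu_h(X_h^{k_i},A_h^{k_i})$ at points in $B$. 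Once $L_V$ is confirmed to be a valid uniform bound on the conditional $L^2$-norm of $\overline{C}_{h+1}(1+\|X\|^m+\|Y\|^m)$ on the good event, the rest is assembling the pieces and matching notation with \eqref{eq:TUCB}.
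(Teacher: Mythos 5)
Your proposal matches the paper's proof essentially step for step: the paper first proves a generic lemma (Lemma \ref{thm:wasserstein local lipschitz}) converting the polynomial local-Lipschitz modulus of $V_{h+1}^*$ into a factor $L_U$ times $\overline{\mathcal{W}}_2$ via Cauchy--Schwarz under the optimal coupling, then bounds $\|\widehat{\mu}_h^k(B)\|^m$ and $\|\widehat{\Sigma}_h^k(B)\|^{m/2}$ on the same $1-2\delta$ concentration event (using $\eta$, $\kappa_\mu$, $\kappa_\Sigma$ and power-mean inequalities) to show $L_U\le L_V$, and finally substitutes the $\overline{\mathcal{W}}_2$ bound of Theorem \ref{thm:high_prob_w_bound} and identifies the $\mbox{\rm T-UCB}$ and bias terms exactly as in your steps (ii)--(iv). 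The proposal is correct and takes the same route.
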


The proof of Theorem \ref{thm:transition kernel wasserstein local lipschitz all together} is deferred to Appendix \ref{app:proof-4-8}.

\subsection{Concentration on reward estimators and properties of adaptive partition}

We construct the estimator of the reward for any $(h, k)\in [H]\times [K]$ and $B\in \mathcal{P}_h^k$:
\begin{eqnarray}\label{eq:reward function estimator}
    \widehat{R}_{h}^{k} (B)&=&\frac{\sum_{i=1}^{n_h^k(B)}r_{h}^{k_{i}}}{n_h^k(B)},\quad \mbox{if}\,\,  n_h^k(B)>0,\nonumber\\
 \widehat{R}_{h}^{k} (B)&=&0,\quad \mbox{if} \,\, n_h^k(B)=0,
\end{eqnarray}
where $r_{h}^{k_{i}}$ are the corresponding instantaneous rewards received in episode $k_i$ at step $h$.

We then characterize the concentration inequality for reward estimation, introducing $\mbox{\rm R-UCB}_{h}^{k}(B)$  to quantify the associated uncertainty. Specifically, for all $(h,k) \in [H]\times[K]$ and all $B \in \mathcal{P}_h^k$, we define $\mbox{\rm R-UCB}_{h}^{k}(B)$  as follows when $n_h^k(B)>0$: 
\begin{eqnarray}\label{eq:RUCB definition}
    \mbox{\rm R-UCB}_{h}^{k}(B):=\sqrt{\frac{\log(\frac{2HK^{2}}{\delta})\,\theta}{n_h^k(B)}},
\end{eqnarray}
with $\theta$ defined in \eqref{ass-eqn:subGaussian reward}.

\begin{theorem}\label{thm:reward high prob bound}
Under Assumption \ref{ass:expected reward local lipschitz}, It holds with probability at least $1-\delta$ that, for any  $(h,k) \in [H]\times [K], B\in \mathcal{P}_h^k$ with $n_h^k(B)>0$, and any $(x,a) \in B$, 
\begin{eqnarray}\label{eq:reward high prob bound}
    |\widehat{R}_{h}^{k}(B)-\bar{R}_h(x,a)| \leq \mbox{\rm R-UCB}_{h}^{k}(B)+\left|\frac{\sum_{i=1}^{n_h^k(B)}\bar{R}_{h}(X_{h}^{k_{i}},A_{h}^{k_{i}})}{n_h^k(B)}-\bar{R}_h(x,a)\right|.
\end{eqnarray}
\end{theorem}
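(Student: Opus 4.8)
\textbf{Proof proposal for Theorem~\ref{thm:reward high prob bound}.}

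The plan is to decompose the estimation error into a statistical (concentration) part and a bias part, exactly as suggested by the right-hand side of \eqref{eq:reward high prob bound}. Write
\begin{eqnarray*}
\widehat{R}_{h}^{k}(B)-\bar{R}_h(x,a)
&=& \underbrace{\frac{\sum_{i=1}^{n_h^k(B)}\big(r_{h}^{k_{i}}-\bar{R}_{h}(X_{h}^{k_{i}},A_{h}^{k_{i}})\big)}{n_h^k(B)}}_{\text{noise term}}
\;+\; \underbrace{\Bigg(\frac{\sum_{i=1}^{n_h^k(B)}\bar{R}_{h}(X_{h}^{k_{i}},A_{h}^{k_{i}})}{n_h^k(B)}-\bar{R}_h(x,a)\Bigg)}_{\text{bias term}}.
\end{eqnarray*}
The bias term is already the second quantity on the RHS of \eqref{eq:reward high prob bound}, so by the triangle inequality it suffices to bound the noise term by $\mbox{\rm R-UCB}_{h}^{k}(B)$ with probability at least $1-\delta$ uniformly over all $(h,k,B)$ with $n_h^k(B)>0$.

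First I would set up the filtration carefully so that the summands form a martingale difference sequence. Conditioning on $X_{h}^{k_i},A_{h}^{k_i}$ (the relevant past up to the point where the reward $r_h^{k_i}$ is drawn), Assumption~\ref{ass:expected reward local lipschitz}, specifically the sub-Gaussian bound \eqref{ass-eqn:subGaussian reward}, gives that $r_{h}^{k_i}-\bar{R}_{h}(X_{h}^{k_i},A_{h}^{k_i})$ is conditionally $\theta$-sub-Gaussian with conditional mean zero. Then I would apply a (sub-Gaussian) Azuma--Hoeffding / Freedman-type martingale concentration inequality to get that, for a \emph{fixed} index set of $n$ samples,
\begin{eqnarray*}
\PP\Bigg(\bigg|\frac{1}{n}\sum_{i=1}^{n}\big(r_{h}^{k_i}-\bar{R}_{h}(X_{h}^{k_i},A_{h}^{k_i})\big)\bigg|> \sqrt{\tfrac{2\theta\log(2/\delta')}{n}}\Bigg)\le \delta'.
\end{eqnarray*}
The subtlety is that $n_h^k(B)$ is a random, data-dependent stopping-type quantity and the block $B$ itself is produced adaptively by the splitting rule, so one cannot simply plug in. I would handle this by a union bound over all possible values: there are at most $H$ choices of $h$, at most $K$ episodes (hence at most $K$ possible values of the count $n_h^k(B)$), and at most $K$ distinct blocks that can ever be active at a given $(h)$ across all episodes (each episode refines at most one block per step, so the total number of blocks ever created is $O(K)$ per level $h$). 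Taking $\delta'=\delta/(HK^2)$ and performing the union bound over these $\le HK^2$ events yields the uniform statement with the constant $\theta\log(2HK^2/\delta)$ appearing inside the square root, which is precisely $\mbox{\rm R-UCB}_{h}^{k}(B)^2 \cdot n_h^k(B)$ from \eqref{eq:RUCB definition}.

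The main obstacle I anticipate is making the union bound rigorous in the presence of adaptivity: the identity of the block $B$, the number of visits $n_h^k(B)$, and the particular episode indices $k_1,\dots,k_{n_h^k(B)}$ are all selected by the algorithm based on past data, so I must argue that a ``stopped'' martingale argument (or a doubling/peeling over the count $n$ combined with an anytime martingale bound) controls all of these simultaneously. The cleanest route is to fix $h$ and a candidate block $B$ (there are only finitely many, $O(K)$, that the algorithm can ever instantiate), define the martingale indexed by the \emph{ordered visits} to $B$ or its ancestors, note that $n_h^k(B)$ is a stopping time with respect to this visit-indexed filtration, invoke an anytime (uniform-in-$n$) sub-Gaussian maximal inequality — which costs only the extra $\log$ factor already absorbed into $\mbox{\rm R-UCB}$ — and then union-bound over $h$ and the $O(K)$ candidate blocks. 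Everything else (the triangle inequality, the conditional sub-Gaussianity coming directly from the assumption) is routine.
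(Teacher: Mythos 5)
Your proposal is correct and uses the same core argument as the paper: split $\widehat{R}_h^k(B)-\bar{R}_h(x,a)$ into the centered noise average plus the bias term, use the conditional sub-Gaussianity from \eqref{ass-eqn:subGaussian reward} to get a Hoeffding/Chernoff bound at level $\delta/(HK^2)$ per event, and finish with a union bound and the triangle inequality. The only substantive difference is how the union bound over adaptively created blocks is organized. The paper sidesteps counting blocks altogether by observing that at each $(h,k)$ only the selected block $B_h^k$ has its estimator updated, so it suffices to control the deviation for the triples $(h,k,n_h^k(B_h^k))$, i.e.\ at most $HK^2$ events; your route instead unions over $h$ and all blocks ever instantiated at level $h$ together with an anytime (uniform-in-$n$) maximal inequality. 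That also works, but your event count is slightly loose: the number of blocks ever created per level is $|\mathcal{B}_D|+O(K\,2^{d_{\mathcal{S}}+d_{\mathcal{A}}})$, and combined with the $K$ possible count values your own enumeration gives more than the claimed $HK^2$ events, so the per-event budget would need a dimension-dependent adjustment (harmless for the final order, but the paper's bookkeeping is cleaner). Finally, note a constant-level mismatch: the two-sided sub-Gaussian tail at level $\delta/(HK^2)$ requires a deviation of $\sqrt{2\theta\log(2HK^2/\delta)/n}$, which is $\sqrt{2}$ times the $\mbox{\rm R-UCB}_h^k(B)$ defined in \eqref{eq:RUCB definition}; your bound carries the correct factor, so your claim that it ``is precisely'' $\mbox{\rm R-UCB}$ is off by $\sqrt{2}$ (this is a quibble with the paper's constant as much as with your statement, and does not affect the argument's structure).
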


The proof of Theorem \ref{thm:reward high prob bound} is deferred to Appendix \ref{app:reward high prob bound 1}.

To upper bound $\mbox{\rm T-UCB}_{h}^{k}(B)+\mbox{\rm R-UCB}_{h}^{k}(B)$, we construct the confidence of a block by 
\begin{eqnarray}\label{eq:CONF}
    {\rm CONF}_{h}^{k}(B)=\frac{g_{1}(\delta, \|\tilde{x}(^{o}B)\|)}{\sqrt{n_h^k(B)}},
\end{eqnarray}
for all $ (h,k) \in [H]\times [K], B \in \mathcal{P}_h^k $ with $n_h^k(B)>0$. Here
$g_{1}: (0,1]\times (\mathbb{R}_+\cup\{0\})\mapsto \mathbb{R}_+$ is defined as 
\begin{equation}\label{eq:g1 definition}
g_{1}(\delta,y)
:=\sqrt{n}\Big(L_{V}(\delta,y)\,\,\Big(\kappa_{\mu}(\delta,y,n)\Delta+\sqrt{n}\frac{\Delta^{\frac{3}{2}}}{\sqrt{\lambda}}\kappa_{\mu}(\delta,y,n)^{2}
    +\frac{\sqrt{d_{\mathcal{S}}}\Delta^{\frac{1}{2}}}{\sqrt{\lambda}}\kappa_{\Sigma}(\delta,y,n)
    \Big)+\sqrt{\frac{\log(\frac{2HK^{2}}{\delta})\,\theta}{n}}\Big).
\end{equation}
Intuitively, \(g_1(\delta,y)\) collects the local statistical uncertainty from estimating the reward, drift, and volatility within a block, so that ${\rm CONF}_h^k(B)$ represents the sampling error of the block-level estimator. Therefore ${\rm CONF}_h^k(B)$ is designed in Algorithm \ref{alg:ML 4} to determine the splitting.

Next, we provide upper bounds for $\frac{\sum_{i=1}^{n_h^k(B)} {\rm diam}(B_{h}^{k_{i}})}{n_h^k(B)}$ and $\frac{\sum_{i=1}^{n_h^k(B)} {\rm diam}(B_{h}^{k_{i}})^{2}}{n_h^k(B)}$ with respect to ${\rm diam}(B)$.

\begin{lemma}\label{thm:quoted cumulative bound lemma}
 For all $(h,k) \in [H] \times [K]$ and $B \in \mathcal{P}_h^k$ with $n_h^k(B)>0$, we have
\begin{eqnarray}\label{eq:cumulative bound I}
    \frac{\sum_{i=1}^{n_h^k(B)} {\rm diam}(B_{h}^{k_{i}})}{n_h^k(B)} \leq 4 \,{\rm diam}(B)\,\, \mbox{ and }\,\, \frac{\sum_{i=1}^{n_h^k(B)} {\rm diam}(B_{h}^{k_{i}})^{2}}{n_h^k(B)} \leq 4 D\, {\rm diam}(B),
\end{eqnarray}
where $k_{1}\leq ,\cdots, \leq k_{n_h^k(B)}$ are the corresponding episode indices such that $B$ or its ancestors have been visited by Algorithm \ref{alg:ML}. 
\end{lemma}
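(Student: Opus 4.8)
The plan is to leverage the splitting rule to relate the diameter of a block $B$ to its ancestors' diameters and visit counts, then control the sum by a geometric series. The key observation is that when a block $B' $ with parent $B''$ exists in the partition, $B''$ was split at some episode; and by the splitting rule \eqref{eq:spliting formula}, this split happened precisely when $\mathrm{CONF}_h^k(B'') \le \mathrm{diam}(B'')$, i.e.\ when $\frac{g_1(\delta,\|\tilde{x}(^oB'')\|)}{\sqrt{n_h^k(B'')}} \le \mathrm{diam}(B'')$. This gives a lower bound on the number of visits a block must accumulate before it is split: $n \ge g_1^2/\mathrm{diam}(B'')^2$ (up to the off-by-one from the visit that triggers the split). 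Since $\mathrm{diam}(B'') = 2\,\mathrm{diam}(B')$ for a child $B'$, and all blocks in the chain of ancestors share the same original block $^oB$ (hence the same value of $g_1(\delta,\|\tilde{x}(^oB)\|)$, because $\tilde{x}(^oB)$ only depends on the original partition cell), we get a clean recursion.

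First I would set up notation: let $B$ have diameter $d = \mathrm{diam}(B)$, and note its ancestors (including $^oB$) have diameters $2d, 4d, \ldots, D$ — a dyadic chain of length $\log_2(D/d)+1$. The episode indices $k_1 \le \cdots \le k_{n_h^k(B)}$ partition naturally: those $k_i$ at which the visited block $B_h^{k_i}$ equals $B$ itself contribute diameter $d$; those at which $B_h^{k_i}$ was a strict ancestor of diameter $2^j d$ contribute $2^j d$. For each ancestor level $j \ge 1$, the number of visits recorded while the block at that level was still unsplit is bounded: the block at level $j$ is split once $\mathrm{CONF} \le \mathrm{diam}$, which by \eqref{eq:CONF} forces $n \le \frac{g_1(\delta,\|\tilde x(^oB)\|)^2}{(2^j d)^2} + 1$ visits at the moment before splitting. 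Meanwhile, $B$ itself is in $\mathcal{P}_h^k$ and has $n_h^k(B)$ total visits on its ancestral chain, and since $B$ has not yet been split we have the reverse inequality $\mathrm{CONF}_h^k(B) > \mathrm{diam}(B)$ whenever... actually more carefully: $n_h^k(B) \ge$ something is not automatic, so instead I would bound $\frac{\sum_i \mathrm{diam}(B_h^{k_i})}{n_h^k(B)}$ directly by splitting the numerator over levels and using the per-level visit cap $n^{(j)} \le \frac{g_1^2}{(2^j d)^2}+1$ against the denominator.

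The computation is then: $\sum_i \mathrm{diam}(B_h^{k_i}) = \sum_{j\ge 0} 2^j d \cdot n^{(j)}$ where $n^{(j)}$ is the number of $k_i$ at ancestor-level $j$ (level $0$ being $B$ itself). For $j \ge 1$, $n^{(j)} \le \frac{g_1^2}{4^j d^2} + 1$, so $2^j d\cdot n^{(j)} \le \frac{g_1^2}{2^j d} + 2^j d$. Summing the first term over $j\ge 1$ gives a geometric series bounded by $\frac{g_1^2}{d}$; the second term sums to at most $2D$ (geometric sum of the dyadic diameters up to $D$). For the denominator, I'd use that $B$ being unsplit means (roughly) $n_h^k(B) \ge \frac{g_1^2}{d^2}$ is NOT guaranteed — here I expect the actual argument uses that the parent of $B$ WAS split, so when $B$ was created it inherited $n \ge \frac{g_1^2}{(2d)^2}$ from the parent split, giving $n_h^k(B) \ge \frac{g_1^2}{4d^2}$. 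Dividing, $\frac{\sum_i \mathrm{diam}(B_h^{k_i})}{n_h^k(B)} \le \frac{g_1^2/d + 2D + \ldots}{g_1^2/(4d^2)} \lesssim 4d + (\text{lower order})$, and a careful bookkeeping of constants yields the stated $4\,\mathrm{diam}(B)$. The second inequality is identical with an extra factor $\mathrm{diam}(B_h^{k_i}) \le D$ pulled out before the same estimate, giving $4D\,\mathrm{diam}(B)$.

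The main obstacle I anticipate is the edge case handling of the denominator lower bound: if $B$ is an original block in $\mathcal{B}_D$ (no parent), or if $B$'s parent split at a very early episode, the inheritance argument $n_h^k(B) \ge g_1^2/(4d^2)$ needs care — one must check the "$+1$" slack in the splitting condition and possibly handle $n_h^k(B)$ small separately (e.g.\ when $d = D$ the claim $\sum \mathrm{diam} / n \le 4D$ is trivial since every term is $\le D$, wait it would be $= D \le 4D$). So the genuinely delicate part is reconciling the $+1$'s and the strict-versus-non-strict inequalities in \eqref{eq:spliting formula} across the whole ancestral chain so that the constant comes out to exactly $4$; I'd expect the proof to absorb these into the generous constant rather than optimize.
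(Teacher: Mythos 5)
Your proposal follows essentially the same route as the paper's proof: you use the splitting rule \eqref{eq:spliting formula} together with the form of ${\rm CONF}$ in \eqref{eq:CONF} (and the fact that $^{o}B$ is shared along the ancestral chain, so the relevant $g_{1}$ value is the same at every level) to cap the number of visits accumulated at each ancestor level by $(g_{1}/\mathrm{diam})^{2}$, to lower bound $n_h^k(B)$ by $(g_{1}/(2\,\mathrm{diam}(B)))^{2}$ via the parent's split, and to finish with geometric sums along the dyadic chain, pulling out a factor $D$ for the second inequality --- this is exactly the paper's argument. The one substantive slippage is the constant: as written, including the visits to $B$ itself as an additive term $d\,n^{(0)}/n_h^k(B)\leq d$ and carrying a ``$+1$'' slack at each level gives roughly $5\,\mathrm{diam}(B)+O\bigl(D\,\mathrm{diam}(B)^{2}/g_{1}^{2}\bigr)$, and the second term is not automatically lower order (nothing in the setup forces $g_{1}\gtrsim D$), so "absorbing into the constant" does not by itself recover the stated factor $4$. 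The paper closes both gaps with two small refinements: it first notes that discarding the visits to $B$ itself (which carry the minimal diameter) can only increase the average diameter, so it suffices to average over strict ancestors; and it bounds each per-level count by the difference $n_{\max}-n_{\min}=\tfrac{3}{4}\,g_{1}^{2}/\mathrm{diam}(B_{l})^{2}$ of the two thresholds rather than by $n_{\max}+1$, after which the weighted-average computation over the dyadic levels yields the factor $4$ exactly. Your observation that the case $B\in\mathcal{B}_{D}$ (no ancestors) is trivial is correct and is indeed how that edge case is handled.
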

 We defer proof to Appendix \ref{app:proof-4-10}, which relies on the ${\rm CONF}_{h}^{k}(B)$ specified in \eqref{eq:CONF}.

\subsection{Bias of the estimators}

Next, we provide upper bounds for $\Big\|\overline{\mathbb{E}}[\widehat{\mu}_{h}^{k}(B)]-\mu_{h}(x,a)\Big\|\Delta+\Big\|\overline{\mathbb{E}}[\widetilde{\Sigma}_{h}^{k}(B)]-\Sigma_h(x,a)\Big\|\frac{\sqrt{\Delta}}{\sqrt{\lambda}}$, for which we introduce $\mbox{\rm T-BIAS}(B)$ to represent the block-wise bias in estimating the transition kernel
\begin{eqnarray*}
    \mbox{\rm T-BIAS}(B) =\Bigg(8L\Delta+16L\,\,\eta(\|\tilde{x}(^{o}B)\|)\frac{\sqrt{\Delta}}{\sqrt{\lambda}} +32L^{2}D\frac{\Delta^{\frac{3}{2}}}{\sqrt{\lambda}}+128L^{2}D\frac{\Delta^{\frac{3}{2}}}{\sqrt{\lambda}}\Bigg){\rm diam}(B).
\end{eqnarray*}

\begin{theorem}\label{thm:BIAS Bound}
 With the same assumptions as in Theorem \ref{thm:high_prob_w_bound}, the following inequality holds for all $h\in [H-1],k\in [K]$, $B \in \mathcal{P}_h^k$ with $n_h^k(B)>0$, and any $(x,a) \in B$:
\begin{eqnarray}\label{eq:BIAS BOUND}
     \Big\|\overline{\mathbb{E}}[\widehat{\mu}_{h}^{k}(B)]-\mu_{h}(x,a)\Big\|\Delta+\Big\|\overline{\mathbb{E}}[\widetilde{\Sigma}_{h}^{k}(B)]-\Sigma_h(x,a)\Big\|\frac{\sqrt{\Delta}}{\sqrt{\lambda}} \leq \mbox{\rm T-BIAS}(B). 
\end{eqnarray}
\end{theorem}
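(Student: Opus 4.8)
The plan is to bound the two terms on the left-hand side of \eqref{eq:BIAS BOUND} separately, since $\overline{\mathbb{E}}[\widehat{\mu}_{h}^{k}(B)]$ and $\overline{\mathbb{E}}[\widetilde{\Sigma}_{h}^{k}(B)]$ are the (conditional) expectations of averages over the visited sample points, each lying in an ancestor block of $B$. First I would observe that, conditionally on $X_{h}^{k_{i}},A_{h}^{k_{i}}$, the increment $X_{h+1}^{k_{i}}-X_{h}^{k_{i}}$ is Gaussian with mean $\mu_{h}(X_{h}^{k_{i}},A_{h}^{k_{i}})\Delta$ and covariance $\Sigma_{h}(X_{h}^{k_{i}},A_{h}^{k_{i}})\Delta$, so
\[
\overline{\mathbb{E}}[\widehat{\mu}_{h}^{k}(B)] = \frac{1}{n_h^k(B)}\sum_{i}\mu_{h}(X_{h}^{k_{i}},A_{h}^{k_{i}}),
\qquad
\overline{\mathbb{E}}[\widetilde{\Sigma}_{h}^{k}(B)] = \frac{1}{n_h^k(B)}\sum_{i}\Sigma_{h}(X_{h}^{k_{i}},A_{h}^{k_{i}}),
\]
the second identity being precisely the reason the intermediate quantity $\widetilde{\Sigma}_{h}^{k}(B)$ (centered at $\Delta\overline{\mathbb{E}}[\widehat{\mu}]$ rather than $\Delta\widehat{\mu}$) was introduced — it makes the conditional expectation unbiased for the sample average of the true covariances. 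Then by the triangle inequality the drift bias is at most $\frac{1}{n_h^k(B)}\sum_{i}\|\mu_{h}(X_{h}^{k_{i}},A_{h}^{k_{i}})-\mu_{h}(x,a)\|$, and likewise for the covariance term.

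Next I would control each summand by local Lipschitz regularity. For the drift, Assumption \ref{ass:lipschitz} gives $\|\mu_{h}(X_{h}^{k_{i}},A_{h}^{k_{i}})-\mu_{h}(x,a)\| \le \ell_{\mu}\,\mathcal{D}((X_{h}^{k_{i}},A_{h}^{k_{i}}),(x,a))$; since both points lie in the block $B_{h}^{k_{i}}$ (an ancestor of $B$, so $B\subset B_{h}^{k_{i}}$) this is $\le L\,{\rm diam}(B_{h}^{k_{i}})$. For $\Sigma_{h}=\sigma_{h}\sigma_{h}^{\top}$, I would write the difference as $\sigma_{h}(X^{k_i}_h,A^{k_i}_h)(\sigma_{h}(X^{k_i}_h,A^{k_i}_h)-\sigma_{h}(x,a))^{\top} + (\sigma_{h}(X^{k_i}_h,A^{k_i}_h)-\sigma_{h}(x,a))\sigma_{h}(x,a)^{\top}$, then bound the operator/Frobenius norm using $\|\sigma_h\|\le \eta(\|\tilde x({}^{o}B)\|)$ from \eqref{eq:eta function} on one factor and the Lipschitz bound $\le L\,{\rm diam}(B_{h}^{k_{i}})$ on the other, giving roughly $2L\,\eta(\|\tilde x({}^{o}B)\|)\,{\rm diam}(B_{h}^{k_{i}})$ per term; there will also be a lower-order $L^2 D\,{\rm diam}(B_{h}^{k_i})$-type contribution from the fact that the relevant radius is ${\rm diam}(B_{h}^{k_i})/2$ vs. $D$ scalings and from bounding $\|\sigma_h(x,a)\|$ on the sub-block versus the original block. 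After multiplying by $\Delta$ and $\sqrt{\Delta}/\sqrt{\lambda}$ respectively and summing over $i$, everything is of the form $(\text{const})\cdot\frac{1}{n_h^k(B)}\sum_{i}{\rm diam}(B_{h}^{k_{i}})$.

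Finally I would invoke Lemma \ref{thm:quoted cumulative bound lemma}, which gives $\frac{1}{n_h^k(B)}\sum_{i}{\rm diam}(B_{h}^{k_{i}}) \le 4\,{\rm diam}(B)$ (the squared version is not needed here since all bounds are linear in ${\rm diam}(B_{h}^{k_i})$), collect the numerical constants, and match the resulting expression to the definition of $\mbox{\rm T-BIAS}(B)$. The main obstacle I expect is the covariance term: getting the constants to line up with the $16L\eta(\|\tilde x({}^{o}B)\|)\frac{\sqrt{\Delta}}{\sqrt\lambda}+32L^2 D\frac{\Delta^{3/2}}{\sqrt\lambda}+128 L^2 D\frac{\Delta^{3/2}}{\sqrt\lambda}$ structure requires care in how $\|\sigma_h\|$ is bounded (via $\eta$ of the original block vs. the refined block, where the extra $2LD$ slack in \eqref{eq:eta function} and the factor-of-$2$ from the sub-block diameter produce the separate $L^2 D$ terms), and in tracking the factor $4$ from Lemma \ref{thm:quoted cumulative bound lemma}; none of it is deep, but it is the step where an error would most easily creep in.
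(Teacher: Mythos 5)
There is a genuine gap, and it sits exactly where you said an error would most easily creep in. Your claimed identity
\[
\overline{\mathbb{E}}[\widetilde{\Sigma}_{h}^{k}(B)]=\frac{1}{n_h^k(B)}\sum_{i}\Sigma_{h}(X_{h}^{k_{i}},A_{h}^{k_{i}})
\]
is false, and the reason you give for introducing $\widetilde{\Sigma}_{h}^{k}(B)$ ("it makes the conditional expectation unbiased for the sample average of the true covariances") is not the right one. Conditionally on the visited state--action pairs, each increment $X_{h+1}^{k_i}-X_h^{k_i}$ has mean $\mu_h(X_h^{k_i},A_h^{k_i})\Delta$, which differs sample by sample from the common centering point $\Delta\,\overline{\mathbb{E}}[\widehat{\mu}_h^k(B)]$ (the \emph{average} of the drifts over the visited points). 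Hence $Q_i:=(X_{h+1}^{k_i}-X_h^{k_i})-\Delta\overline{\mathbb{E}}[\widehat{\mu}_h^k(B)]$ has a nonzero conditional mean, and $\overline{\mathbb{E}}[Q_iQ_i^{\top}]$ picks up the rank-one term $\big(\mu_h(X_h^{k_i},A_h^{k_i})-\overline{\mathbb{E}}[\widehat{\mu}_h^k(B)]\big)\big(\cdots\big)^{\top}\Delta^2$ on top of $\Sigma_h(X_h^{k_i},A_h^{k_i})\Delta$. This is precisely the content of Lemma \ref{lemma:conditional distribution of the volatility term}, and it is the starting point of the paper's proof: the bias of the covariance estimator decomposes into the Lipschitz term you do handle ($F_1$ in the paper, bounded via the $\sigma\sigma^{\top}$ splitting and $\|\sigma_h\|\le\eta(\|\tilde x({}^{o}B)\|)$, exactly as you propose) \emph{plus} a drift-dispersion term quantified by $F_2$ and $F_3$ in \eqref{eq:F0,F1,F2,F3}.

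Consequently, your accounting of the $L^2D\,\Delta^{3/2}/\sqrt{\lambda}$ pieces of $\mbox{\rm T-BIAS}(B)$ is misattributed: the $32L^{2}D\frac{\Delta^{3/2}}{\sqrt{\lambda}}$ and $128L^{2}D\frac{\Delta^{3/2}}{\sqrt{\lambda}}$ contributions do not come from any slack in $\eta$ or from sub-block versus original-block scalings; they are the bounds on $F_2$ and $F_3$, i.e.\ on the quadratic drift-mismatch term your decomposition omits. Relatedly, your claim that the squared-diameter inequality in Lemma \ref{thm:quoted cumulative bound lemma} is not needed is incorrect: bounding $F_2\le 4L^2\frac{\sum_i{\rm diam}(B_h^{k_i})^2}{n_h^k(B)}\Delta$ uses exactly $\frac{\sum_i{\rm diam}(B_h^{k_i})^2}{n_h^k(B)}\le 4D\,{\rm diam}(B)$, while $F_3$ is bounded by the square of the linear average and then by $D\,{\rm diam}(B)$. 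With Lemma \ref{lemma:conditional distribution of the volatility term} in hand and the $F_0$--$F_3$ decomposition, the rest of your plan (triangle inequality for the drift, Lipschitz bounds on visited blocks which are ancestors of $B$, and Lemma \ref{thm:quoted cumulative bound lemma} to convert averaged diameters into ${\rm diam}(B)$) matches the paper's argument; without it, you can only recover the $8L\Delta+16L\,\eta\,\sqrt{\Delta}/\sqrt{\lambda}$ portion of $\mbox{\rm T-BIAS}(B)$ and the proof does not close.
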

The proof of Theorem \ref{thm:BIAS Bound} is deferred to Appendix \ref{app:BIAS Bound}.

Then we derive upper bounds for $\Big|\frac{\sum_{i=1}^{n_h^k(B)}\bar{R}_{h}(X_{h}^{k_{i}},A_{h}^{k_{i}})}{n_h^k(B)}-\bar{R}_h(x,a)\Big|$, for which we define $\mbox{\rm R-BIAS}(B)$ to represent the block-wise bias in reward estimator
\begin{eqnarray*}
    \mbox{\rm R-BIAS}(B)=4L_m(\|\tilde{x}(^{o}B)\|){\rm diam}(B),
\end{eqnarray*}
with $L_m: \mathbb{R}_+\cup\{0\}\mapsto \mathbb{R}_+$  defined by
\begin{eqnarray*}
L_m(y):=4L\Big(1+2(y+D)^m\Big).
\end{eqnarray*}

\begin{theorem}\label{thm:RBIAS bound}
    With the same assumptions as in Theorem \ref{thm:reward high prob bound}, the following inequality holds for all $(h,k)\in [H]\times [K]$, $B \in \mathcal{P}_h^k$ with $n_{h}^{k}(B)>0$, and any $(x,a) \in B$:
    \begin{eqnarray}\label{eq:RBIAS bound}
\Bigg|\frac{\sum_{i=1}^{n_h^k(B)}\bar{R}_{h}(X_{h}^{k_{i}},A_{h}^{k_{i}})}{n_h^k(B)}-\bar{R}_h(x,a)\Bigg|\leq \mbox{\rm R-BIAS}(B). 
    \end{eqnarray}
\end{theorem}
The proof of Theorem \ref{thm:RBIAS bound} is deferred to Appendix \ref{app:RBIAS bound}.


\section{Regret analysis}\label{sec:regret_analysis}\label{sec:5}
In this section, we provide the regret analysis of the proposed adaptive partition framework.

\subsection{Construction of value estimators}

In this subsection, we construct estimators of the value functions.

To proceed, we first introduce a few notations. Define the block-wise bias consisting both the bias of transition estimator and the  bias of reward estimator:
\begin{eqnarray}\label{eq:BIAS}
    {\rm BIAS}(B):=\mbox{\rm R-BIAS}(B)+L_{V}(\delta, \|\tilde{x}(^{o}B)\|)\mbox{\rm T-BIAS}(B):= g_{2}(\delta, \|\tilde{x}(^{o}B)\|){\rm diam}(B),
\end{eqnarray}
with $g_{2}:(0,1]\times (\mathbb{R}_+\cup\{0\}) \mapsto \mathbb{R}$ defined as 
\begin{eqnarray}\label{eq:g2 definition}
   g_{2}(\delta,y):=\Bigg(4L_m(y)+L_{V}(\delta,y)\Big(8L\Delta+16L\,\eta(y)\frac{\sqrt{\Delta}}{\sqrt{\lambda}}+32L^{2}D\frac{\Delta^{\frac{3}{2}}}{\sqrt{\lambda}}+128L^{2}D\frac{\Delta^{\frac{3}{2}}}{\sqrt{\lambda}}\Big)\Bigg).
\end{eqnarray}
Here  $\eta$ is defined in \eqref{eq:eta function}, $\tilde{x}$ is defined in \eqref{eq:center definition}, $^{o}B$ is defined in \eqref{eq:oB definition}, and $L_{V}$ is defined in \eqref{eq:Lv definition}.  Intuitively, \(g_2(\delta,y)\) collects the local Lipschitz and propagation constants governing the approximation bias incurred by representing all state-action pairs in a block by a single block-level estimator, so that ${\rm BIAS}(B)$
decreases linearly with the block diameter.

In addition, for each \((h,k)\in[H]\times[K]\), define the state partition induced by
\(\mathcal P_h^k\) as
\[
\Gamma_{\mathcal S}(\mathcal P_h^k)
:=
\left\{
\Gamma_{\mathcal S}(B):
B\in \mathcal P_h^k,\ 
\nexists\, B'\in \mathcal P_h^k
\ \text{such that}\
\Gamma_{\mathcal S}(B')\subsetneq \Gamma_{\mathcal S}(B)
\right\}.
\]

For $S \in \Gamma_{\mathcal{S}}(\mathcal{P}_h^k)$, we overload the notation defined in \eqref{eq:center definition}, {i.e.,} denote $\tilde{x}(S)$ as the center of $S$.

Finally, we set the local Lipschitz constant as follows, which will be used below in the construction of value function estimators
\begin{eqnarray}\label{eq:local lipschitz constant}
    C_{h}:=\max\Big\{\overline{C}_h, 2^{m+1}\widetilde{C}_h\Big\}, \quad \forall h\in [H],
\end{eqnarray}
where $\overline{C}_h$ is defined in \eqref{eq:value function local Lipschitz} and $\widetilde{C}_h$ is defined in \eqref{eq:value function with any policy growth rate}. It is worth noting that such choice of local Lipschitz constant is necessary to guarantee the local Lipschitz property of $\overline{V}_h^k$, which is formally stated in Theorem \ref{thm:barV local lipschitz property}.

\paragraph{Design of value estimators.} 

For $k=0, h\in [H],B\in \mathcal{P}_{h}^{0}, S=\Gamma_{\mathcal{S}}(B)$, and $x\in \mathbb{R}^{d_{\mathcal{S}}}$, the function estimators are initialized with
\begin{eqnarray}\label{eq:initial value for estimation}
        \overline{Q}_h^0(B)&:=& \widetilde{C}_{h}(1+(\|\tilde{x}(^{o}B)\|+D)^{m+1}),\nonumber\\
    \widetilde{V}_h^0(S)&:=&\widetilde{C}_{h}(1+(\|\tilde{x}(S)\|+D)^{m+1}),\nonumber\\
        \overline{V}_h^0(x) &:=& \widetilde{C}_h(1+\|x\|^{m+1}).
\end{eqnarray}
In addition, for $(h,k)\in [H]\times ([K]\cup\{0\})$, we set $\overline{Q}_h^k(\bar{Z}^{\complement})$ as:
\begin{eqnarray}\label{eq:initialize outside Q}
     \overline{Q}_h^k(\bar{Z}^{\complement}):=-\widetilde{C}_{h}(1+\rho^{m+1}).
\end{eqnarray}

Now we specify the following recursive definition with respect to $k\ge1$. Specifically,  at the terminal timestamp $H$, for $B  \in \mathcal{P}_H^k$, $S \in \Gamma_{\mathcal{S}}(\mathcal{P}_H^k)$, $x \in \mathcal{S}_{1}$,  we define 
\begin{eqnarray}\label{eq:Q,V estimation for h=H}
     \overline{Q}_{H}^{k}(B) &:=&\left\{\begin{array}{lll}\widehat{R}_{H}^{k}(B)+\mbox{\rm R-UCB}_{H}^{k}(B)+\mbox{\rm R-BIAS}(B) && \mbox{ if } n_H^k(B)>0\\
     \overline{Q}_{H}^{0}(B) &&
       \mbox{ if } n_H^k(B)=0, 
       \end{array}\right. \nonumber\\ 
        \widetilde{V}_{H}^{k}(S)&:=&\min\Big\{\widetilde{V}_{H}^{k-1}(S), \max_{B \in \mathcal{P}_H^k,S \subset\Gamma_{\mathcal{S}}(B) }\overline{Q}_{H}^{k}(B)\Big\} ,\nonumber\\ 
         V_{H,k}^{\rm local}(x, S)&:=& \widetilde{V}_{H}^{k}(S)+C_{H}\Big(1+\|x\|^m+\|\tilde{x}(S)\|^m\Big)\|x-\tilde{x}(S)\|, \nonumber\\ 
        \overline{V}_{H}^{k}(x)&:=&\min_{S \in \Gamma_{\mathcal{S}}(\mathcal{P}_H^k)} V_{H,k}^{\rm local}(x, S).
\end{eqnarray}
For $x \in \mathbb{R}^{d_{\mathcal{S}}}\setminus\mathcal{S}_{1}$, we define 
\begin{eqnarray}\label{eq:Q,V estimation for h=H outside}
    \overline{V}_{H}^{k}(x):=\overline{V}_{H}^{k}\left(\frac{\rho}{\|x\|}x\right)+C_{H}(1+\|x\|^{m}+\rho^{m})\left\|\left(1-\frac{\rho}{\|x\|}\right)x\right\|. 
\end{eqnarray}
Note that this extrapolation ensures the continuity of $\overline{V}_{H}^{k}$ on the entire state space.

Similarly, we define the values for $h<H$. For $B\in \mathcal{P}_h^k$,$S \in \Gamma_{\mathcal{S}}(\mathcal{P}_h^k)$, $x \in \mathcal{S}_{1}$ we define 
\begin{eqnarray}\label{V-estimate}
    \overline{Q}_{h}^{k}(B) &:=&\left\{\begin{array}{lll}\widehat{R}_{h}^{k}(B)+\mbox{\rm R-UCB}_{h}^{k}(B)+\mathbb{E}_{X \sim \bar{T}_{h}^{k}(\cdot|B)}[\overline{V}_{h+1}^{k}(X)]\nonumber\\
    +\mbox{\rm T-UCB}_{h}^{k}(B)+{\rm BIAS}(B) &&\mbox{if } n_h^k(B)>0 \nonumber\\
     \overline{Q}_{h}^{0}(B) &&
       \mbox{if } n_h^k(B)=0,
       \end{array}\right. \nonumber\\ 
    \widetilde{V}_{h}^{k}(S)&:=&\min\Big\{\widetilde{V}_{h}^{k-1}(S), \max_{B \in \mathcal{P}_h^k,S \subset\Gamma_{\mathcal{S}}(B)}\overline{Q}_{h}^{k}(B)\Big\},\nonumber\\
     V_{h,k}^{\rm local}(x, S)&:=& \widetilde{V}_{h}^{k}(S)+C_{h}\Big(1+\|x\|^m+\|\tilde{x}(S)\|^m\Big)\|x-\tilde{x}(S)\|  , \nonumber\\
  \overline{V}_{h}^{k}(x)&:=&\min_{S \in \Gamma_{\mathcal{S}}(\mathcal{P}_h^k)}
     V_{h,k}^{\rm local}(x,S).
\end{eqnarray}

Finally, for $x \in \mathbb{R}^{d_{\mathcal{S}}}\setminus\mathcal{S}_{1}$, we define 
\begin{eqnarray}\label{V-estimates II}
    \overline{V}_{h}^{k}(x):=\overline{V}_{h}^{k}\left(\frac{\rho}{\|x\|}x\right)+C_{h}(1+\|x\|^{m}+\rho^{m})\left\|\left(1-\frac{\rho}{\|x\|}\right)x\right\|.
\end{eqnarray}

\begin{remark}[Role of $V^{\rm local}_{h,k}$]
   We design 
$V^{\rm local}_{h,k}(.,S)$ as a locally Lipschitz extension of the estimate for $S$ across the entire state space. The local Lipschitz property plays a key role in establishing concentration bounds associated with $\overline{V}_{h}^{k}$. This is formalized in Appendix \ref{app:value-estimators}.
\end{remark}

The update formulas \eqref{eq:Q,V estimation for h=H}-\eqref{V-estimates II} correspond to a value iteration step, where the true rewards and transition kernels in the Bellman equation \eqref{eq:bellman} are replaced by their respective estimators. The terms $\mbox{\rm R-UCB}_h^k(B)$, $\mbox{\rm T-UCB}_h^k(B)$, and ${\rm BIAS}(B)$ serve as bonus terms that account for uncertainty in reward estimation, uncertainty in transition kernel estimation, and partition biases, respectively.

\smallskip

Below we show that the value estimators $\overline{Q}_{h}^{k}$, $\widetilde{V}_{h}^{k}$, and $\overline{V}_{h}^{k}$ defined in \eqref{eq:Q,V estimation for h=H}-\eqref{V-estimates II} serve as upper bounds of the true value functions.

\begin{theorem} \label{thm: True Upper Bounded By Estimates}
Under Assumptions \ref{ass:lipschitz}-\ref{ass:expected reward local lipschitz}, with probability at least $1-3\delta$, it holds that for all $(h,k) \in [H]\times[K]$, 
\begin{eqnarray}\label{eq:upperbound}
     \overline{Q}_{h}^{k}(B)&\geq& Q_{h}^{*}(x,a),\,\, \mbox{for all $B \in \mathcal{P}_h^k $ and  $ (x,a) \in B$}, \nonumber\\
     \widetilde{V}_{h}^{k}(S)&\geq& V_{h}^{*}(x), \,\,\mbox{for all $S \in \Gamma_{\mathcal{S}}(\mathcal{P}_h^k)$ and $x \in S$}, \nonumber\\
     \overline{V}_{h}^{k}(x)&\geq& V_{h}^{*}(x), \,\,\mbox{for all $x \in \mathbb{R}^{d_{\mathcal{S}}}$}.    
\end{eqnarray} 
\end{theorem}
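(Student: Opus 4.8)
The plan is to prove the three inequalities in \eqref{eq:upperbound} simultaneously by backward induction on $h$ (from $H$ down to $1$) and an outer induction on the episode index $k$, working on the high-probability event where the concentration bounds of Theorems \ref{thm:reward high prob bound}, \ref{thm:transition kernel wasserstein local lipschitz all together}, and Proposition \ref{thm:RBIAS bound} all hold simultaneously (which has probability at least $1-3\delta$ by a union bound over the three events, each of probability at least $1-\delta$). The logical structure is: first establish the $\overline{Q}_h^k \ge Q_h^*$ bound on blocks using the inductive hypothesis on $\overline{V}_{h+1}^k$; then deduce $\widetilde{V}_h^k \ge V_h^*$ from the definition of $\widetilde{V}_h^k$ as a min of the previous-episode estimate and a max over blocks; finally deduce $\overline{V}_h^k(x) \ge V_h^*(x)$ for all $x$ by exploiting the local Lipschitz property of $V_h^*$ (Proposition \ref{thm:value function local Lipschitz}) together with the definition of $\overline{V}_h^k$ as the lower envelope of the locally Lipschitz extensions $V_{h,k}^{\rm local}(\cdot,S)$.

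\textbf{Base case $h=H$.} For $n_H^k(B)>0$, by Theorem \ref{thm:reward high prob bound} and Proposition \ref{thm:RBIAS bound}, $\overline{Q}_H^k(B) = \widehat R_H^k(B) + \mbox{\rm R-UCB}_H^k(B) + \mbox{\rm R-BIAS}(B) \ge \bar R_H(x,a) = Q_H^*(x,a)$ for every $(x,a)\in B$; for $n_H^k(B)=0$ the bound follows from $\overline{Q}_H^0(B) = \widetilde C_H(1+(\|\tilde x(^oB)\|+D)^{m+1}) \ge |Q_H^*(x,a)|$, using Proposition \ref{thm: value function with any policy growth rate} and the fact that every $(x,a)\in B$ satisfies $\|x\| \le \|\tilde x(^oB)\| + D$. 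For $\widetilde V_H^k$, induct on $k$: $\widetilde V_H^0(S) \ge V_H^*(x)$ by Proposition \ref{thm: value function with any policy growth rate}, and for $k\ge1$, $\max_{B: S\subset\Gamma_{\mathcal S}(B)}\overline{Q}_H^k(B) \ge \sup_{a}Q_H^*(x,a) = V_H^*(x)$ for $x\in S$ (pick the block containing $(x,a^*)$), so the min with $\widetilde V_H^{k-1}(S) \ge V_H^*(x)$ stays above $V_H^*$. For $\overline{V}_H^k(x)$ with $x\in\mathcal S_1$: for each $S$, $V_{H,k}^{\rm local}(x,S) = \widetilde V_H^k(S) + C_H(1+\|x\|^m+\|\tilde x(S)\|^m)\|x-\tilde x(S)\| \ge V_H^*(\tilde x(S)) + \overline C_H(1+\|x\|^m+\|\tilde x(S)\|^m)\|x-\tilde x(S)\| \ge V_H^*(x)$ by Proposition \ref{thm:value function local Lipschitz} and $C_H \ge \overline C_H$; take the min over $S$. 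For $x\notin\mathcal S_1$, the radial extrapolation \eqref{eq:Q,V estimation for h=H outside} plus the same local Lipschitz estimate applied along the ray from $\frac{\rho}{\|x\|}x$ to $x$ gives $\overline V_H^k(x) \ge V_H^*(x)$.

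\textbf{Inductive step.} Assume the three bounds hold at $h+1$ for the current $k$ (and that $\widetilde V_h^{k-1}\ge V_h^*$ by the $k$-induction). For $n_h^k(B)>0$ and $(x,a)\in B$, write
\begin{eqnarray*}
\overline Q_h^k(B) &=& \widehat R_h^k(B) + \mbox{\rm R-UCB}_h^k(B) + \mathbb{E}_{X\sim\bar T_h^k(\cdot|B)}[\overline V_{h+1}^k(X)] + \mbox{\rm T-UCB}_h^k(B) + {\rm BIAS}(B)\\
&\ge& \bar R_h(x,a) + \mathbb{E}_{X\sim\bar T_h^k(\cdot|B)}[V_{h+1}^*(X)] + \mbox{\rm T-UCB}_h^k(B) + L_V\,\mbox{\rm T-BIAS}(B),
\end{eqnarray*}
using $\widehat R_h^k(B)+\mbox{\rm R-UCB}_h^k(B)+\mbox{\rm R-BIAS}(B)\ge \bar R_h(x,a)$ and $\overline V_{h+1}^k\ge V_{h+1}^*$ pointwise. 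Then Theorem \ref{thm:transition kernel wasserstein local lipschitz all together} combined with Theorem \ref{thm:BIAS Bound} shows $\mathbb{E}_{X\sim\bar T_h^k(\cdot|B)}[V_{h+1}^*(X)] + \mbox{\rm T-UCB}_h^k(B) + L_V\,\mbox{\rm T-BIAS}(B) \ge \mathbb{E}_{Y\sim T_h(\cdot|x,a)}[V_{h+1}^*(Y)]$, whence $\overline Q_h^k(B)\ge \bar R_h(x,a)+\mathbb{E}_{Y\sim T_h(\cdot|x,a)}[V_{h+1}^*(Y)] = Q_h^*(x,a)$ by \eqref{eq:bellman Q star}. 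The $n_h^k(B)=0$ case, the $\widetilde V_h^k$ step, and the $\overline V_h^k$ step (both inside and outside $\mathcal S_1$) go exactly as in the base case, using Proposition \ref{thm:value function local Lipschitz} with $C_h \ge \overline C_h$ and Proposition \ref{thm: value function with any policy growth rate} with $C_h \ge 2^{m+1}\widetilde C_h$ for the initialization terms.

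\textbf{Main obstacle.} The delicate point is the $\overline V_h^k$ step: one must verify that $V_{h,k}^{\rm local}(x,S) \ge V_h^*(x)$ for \emph{every} $S$ — not merely the one containing $x$ — so that the min over $S$ still dominates $V_h^*$. This requires the Lipschitz-type slack $C_h(1+\|x\|^m+\|\tilde x(S)\|^m)\|x-\tilde x(S)\|$ added to $\widetilde V_h^k(S)$ to absorb the gap $V_h^*(x) - V_h^*(\tilde x(S))$, which is exactly what Proposition \ref{thm:value function local Lipschitz} guarantees once $C_h\ge\overline C_h$ — and crucially $\widetilde V_h^k(S)$ must be a valid upper bound for $V_h^*$ at the \emph{center} $\tilde x(S)$, which holds because $\tilde x(S)\in S$. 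A second subtlety is the outside-$\mathcal S_1$ extrapolation: one must check that applying the local Lipschitz bound along the radial segment (of length $\|(1-\rho/\|x\|)x\|$) with the exponents $\|x\|^m$ and $\rho^m$ correctly dominates $V_h^*(x) - V_h^*(\frac{\rho}{\|x\|}x)$; this is immediate from Proposition \ref{thm:value function local Lipschitz} since $\|\frac{\rho}{\|x\|}x\| = \rho \le \|x\|$. Finally, one must be careful that all randomness-dependent bounds (R-UCB, T-UCB, and the bias terms) are invoked only on blocks with $n_h^k(B)>0$, matching the hypotheses of the cited theorems, and that the $n_h^k(B)=0$ branch is handled purely deterministically via the growth bound — so the stated probability $1-3\delta$ is exactly the measure of the intersection of the three concentration events.
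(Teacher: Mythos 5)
Your proposal is correct and follows essentially the same route as the paper's proof: induction over episodes $k$ combined with backward induction over $h$, using the reward concentration plus the almost-sure R-BIAS bound for the reward part, the Wasserstein-based bound of Theorem \ref{thm:transition kernel wasserstein local lipschitz all together} together with Theorem \ref{thm:BIAS Bound} and the inductive bound $\overline{V}_{h+1}^{k}\ge V_{h+1}^{*}$ for the transition part, the block containing $(x,a_h^{*}(x))$ for the $\widetilde{V}$ step, and the local Lipschitz property of $V^{*}$ (with $C_h\ge \overline{C}_h$, and the radial extrapolation outside $\mathcal{S}_1$) for the $\overline{V}$ step. The only minor bookkeeping difference is the source of the $3\delta$: in the paper it comes from $\delta$ for the reward event plus $2\delta$ for the transition-kernel event (Theorem \ref{thm:transition kernel wasserstein local lipschitz all together} holds with probability $1-2\delta$), while the R-BIAS bound of Proposition \ref{thm:RBIAS bound} holds almost surely, so your itemization of "three events, each $1-\delta$" is slightly off even though the final probability $1-3\delta$ is the same.
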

The proof of Theorem \ref{thm: True Upper Bounded By Estimates} is deferred to Appendix \ref{app:proof-5-3}. \\

Next, we show that the estimated value functions satisfy a local Lipschitz property.

\begin{theorem}\label{thm:barV local lipschitz property}
   Under Assumptions \ref{ass:lipschitz}-\ref{ass:expected reward local lipschitz}, with probability at least $1-3\delta$, for all $(h,k) \in [H]\times [K]$, $x_1,x_2\in \mathbb{R}^{d_{\mathcal{S}}}$, 
    \begin{eqnarray*}
\Big|\overline{V}_{h}^{k}(x_1)-\overline{V}_{h}^{k}(x_2)\Big|\leq \hCh \Big(1+\|x_1\|^{m}+\|x_2\|^{m}\Big)\|x_1-x_2\|,
    \end{eqnarray*}
    where 
    \begin{eqnarray}\label{eq:C_hat}
        \hCh:=\hCh(m,C_h,\widetilde{C}_h, D),
    \end{eqnarray}
with $C_h$ defined in \eqref{eq:local lipschitz constant} and $\widetilde{C}_h$ defined in \eqref{eq:value function with any policy growth rate}. 
\end{theorem}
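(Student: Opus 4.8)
The plan is to exploit the inf‑convolution structure of the estimator: for $x\in\mathcal{S}_1$ one has $\overline{V}_{h}^{k}(x)=\min_{S\in\Gamma_{\mathcal{S}}(\mathcal{P}_h^k)}\big(\widetilde{V}_{h}^{k}(S)+C_h\,w(x,\tilde{x}(S))\big)$, where $w(x,y):=(1+\|x\|^m+\|y\|^m)\|x-y\|$, and then to propagate the resulting estimate through the radial extension \eqref{V-estimates II}. All statements below are made on the event of Theorem \ref{thm: True Upper Bounded By Estimates} (probability at least $1-3\delta$), and the case $h=H$ is treated identically with \eqref{eq:Q,V estimation for h=H} in place of \eqref{V-estimate}.

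\textbf{Step 1 (the minimizing block stays near the query point).} Fix $x\in\mathcal{S}_1$, let $S^{*}(x)$ attain the minimum and set $y^{*}=\tilde{x}(S^{*}(x))$. On one hand, feeding into the minimum a block $S_x\ni x$ (which exists since $\mathcal{S}_1\times\mathcal{A}\subseteq\bar{Z}$), and using ${\rm diam}(S_x)\le D$ together with the monotonicity $\widetilde{V}_{h}^{k}(S_x)\le\widetilde{V}_{h}^{0}(\widehat{S}_x)$ along the refinement tree (newly created blocks inheriting the parent value, $\widehat{S}_x$ the coarsest ancestor of $S_x$, whose center is within $D$ of $x$), gives $\overline{V}_{h}^{k}(x)\le U(\|x\|)$ for some polynomial $U$ of degree $m+1$ with coefficients depending only on $m,C_h,\widetilde{C}_h,D$. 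On the other hand $\overline{V}_{h}^{k}(x)=\widetilde{V}_{h}^{k}(S^{*}(x))+C_h\,w(x,y^{*})$, and Theorem \ref{thm: True Upper Bounded By Estimates} with Proposition \ref{thm: value function with any policy growth rate} give $\widetilde{V}_{h}^{k}(S^{*}(x))\ge V_{h}^{*}(z)\ge-\widetilde{C}_h\big(1+(\|y^{*}\|+D)^{m+1}\big)$ for any $z\in S^{*}(x)$. Comparing the two bounds and noting that $\|y^{*}\|\ge 2\|x\|+2D$ would force $\|x-y^{*}\|\ge\|y^{*}\|/2$ and hence $C_h\,w(x,y^{*})\ge\tfrac{C_h}{2}(1+\|y^{*}\|^m)\|y^{*}\|$, the relation $C_h\ge 2^{m+1}\widetilde{C}_h$ from \eqref{eq:local lipschitz constant} makes this lower bound eventually exceed $U(\|x\|)+\widetilde{C}_h(1+(\|y^{*}\|+D)^{m+1})$ (compare leading coefficients in $\|y^{*}\|$), which forces $\|y^{*}\|\le c_1(\|x\|+1)$ for a constant $c_1=c_1(m,C_h,\widetilde{C}_h,D)$. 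This step is the heart of the argument, and the otherwise unmotivated factor $2^{m+1}$ in the definition of $C_h$ is exactly what makes it go through.

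\textbf{Step 2 (local Lipschitz bound on $\mathcal{S}_1$).} For $x_1,x_2\in\mathcal{S}_1$ write $y^{*}=\tilde{x}(S^{*}(x_2))$; since $S^{*}(x_2)$ is feasible for $x_1$ in the minimum, $\overline{V}_{h}^{k}(x_1)-\overline{V}_{h}^{k}(x_2)\le C_h\big(w(x_1,y^{*})-w(x_2,y^{*})\big)$. I would then bound $w(x_1,y^{*})-w(x_2,y^{*})$ by splitting it as $(1+\|y^{*}\|^m)\big(\|x_1-y^{*}\|-\|x_2-y^{*}\|\big)+\big(\|x_1\|^m\|x_1-y^{*}\|-\|x_2\|^m\|x_2-y^{*}\|\big)$, controlling differences of norms by $\|x_1-x_2\|$, using $\big|\,\|x_1\|^m-\|x_2\|^m\big|\le m(\|x_1\|^{m-1}+\|x_2\|^{m-1})\|x_1-x_2\|$ and $\|x_2-y^{*}\|\le\|x_2\|+\|y^{*}\|$, inserting the bound $\|y^{*}\|\le c_1(\|x_2\|+1)$ from Step 1, and absorbing cross terms via Young's inequality; this yields $w(x_1,y^{*})-w(x_2,y^{*})\le c_5(1+\|x_1\|^m+\|x_2\|^m)\|x_1-x_2\|$ with $c_5=c_5(m,c_1)$. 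Exchanging the roles of $x_1$ and $x_2$ produces the claimed two‑sided bound on $\mathcal{S}_1$ with constant $C_h c_5$.

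\textbf{Step 3 (extension to $\mathbb{R}^{d_{\mathcal{S}}}$ and conclusion).} For $\|x\|\ge\rho$ set $\pi_\rho(x)=\tfrac{\rho}{\|x\|}x$; restricted to $\{\|x\|\ge\rho\}$ this coincides with the $1$‑Lipschitz nearest‑point projection onto the ball $\mathcal{S}_1$, $\|\pi_\rho(x)\|=\rho$, and $\|x-\pi_\rho(x)\|=\|x\|-\rho$. If $x_1,x_2\notin\mathcal{S}_1$, \eqref{V-estimates II} splits $\overline{V}_{h}^{k}(x_1)-\overline{V}_{h}^{k}(x_2)$ into $\overline{V}_{h}^{k}(\pi_\rho(x_1))-\overline{V}_{h}^{k}(\pi_\rho(x_2))$, controlled by Step 2 on $\partial\mathcal{S}_1$ using $\rho^m\le\max(\|x_1\|,\|x_2\|)^m$, plus $C_h\big(\psi(\|x_1\|)-\psi(\|x_2\|)\big)$ where $\psi(t)=(1+t^m+\rho^m)(t-\rho)$ satisfies $0<\psi'(t)\le(m+2)(1+t^m)$ for $t\ge\rho$; together with $\big|\,\|x_1\|-\|x_2\|\big|\le\|x_1-x_2\|$ this closes the case. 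The mixed case $x_1\in\mathcal{S}_1,\ x_2\notin\mathcal{S}_1$ is handled the same way after inserting $\pi_\rho(x_2)$ and using $\|x_2\|-\rho\le\|x_1-x_2\|$ and $\|x_1-\pi_\rho(x_2)\|\le 2\|x_1-x_2\|$. Taking $\hCh$ to be the maximum of the finitely many resulting constants — each a function of $m,C_h,\widetilde{C}_h,D$ only, and in particular independent of the truncation radius $\rho$ precisely because $\rho^m\le\|x\|^m$ and $\|x\|-\rho\le\|x_1-x_2\|$ outside $\mathcal{S}_1$ — yields \eqref{eq:C_hat} and completes the proof. Besides Step 1, the only delicate bookkeeping is this $\rho$‑independence in Step 3.
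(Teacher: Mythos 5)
Your proposal is correct and follows essentially the same route as the paper's own proof: the linear bound on the minimizing block's center, $\|\tilde{x}(S^{*}(x))\|\lesssim \|x\|+1$ (your Step 1, resting on the initialization/monotonicity upper bound on $\widetilde{V}_h^k$, the high-probability lower bound $\widetilde{V}_h^k\ge V_h^{*}$ combined with the polynomial growth of the value function, and the choice $C_h\ge 2^{m+1}\widetilde{C}_h$), is exactly the paper's inequality \eqref{eq:bar V linear bound II}, and your Steps 2--3 reproduce the paper's three-case analysis (both points in $\mathcal{S}_1$, mixed, both outside) via the min-comparison and the radial extension \eqref{V-estimates II}. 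The only cosmetic difference is that you obtain the linear bound through a growth-comparison/contradiction argument, whereas the paper rearranges the inequality $V_{h,k}^{\rm local}(x,\overline{S})\le V_{h,k}^{\rm local}(x,\widetilde{S})$ directly; the ingredients and conclusions coincide.
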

  Note that the initialization in \eqref{eq:initial value for estimation}, together with the subsequently constructed value estimates in \eqref{eq:Q,V estimation for h=H}–\eqref{V-estimates II}, plays a pivotal role in establishing the local Lipschitz property. The proof underscores the challenges and complexities introduced by the polynomial structure inherent to our setting, which is different from \citep{sinclair2023adaptive}. The detailed proof of Theorem \ref{thm:barV local lipschitz property} is provided in Appendix \ref{app:proof-5-1}.

\subsection{Upper bound via clipping}
In this subsection, we use the Clipping method  introduced in \cite[Section E]{sinclair2023adaptive} to obtain an upper bound for 
\begin{eqnarray}\label{eq:delta-hk}
    \Delta_{h}^{(k)}:=\overline{V}_h^{k-1}(X_h^k)-V_h^{\tilde{\pi}^{k}}(X_h^k),
\end{eqnarray}
with the terminal condition $\Delta_{H+1}^{(k)}=0$. Here $\{\tilde{\pi}^{k}\}_{k\in [K]}$ is defined in \eqref{our policy}. he upper bound of $\Delta_{h}^{(k)}$ will play an important role in controlling the final regret bound.

The clip function is defined as
\begin{eqnarray}\label{eq:clip}
       {\rm CLIP}(\nu_{1}|\nu_{2}):=\nu_{1}\mathbb{I}_{\nu_{1}\geq \nu_{2}}, \forall \nu_{1},\nu_{2} \in \mathbb{R}.
\end{eqnarray}
Intuitively, $\nu_{2}$ is used to clip $\nu_{1}$, as it takes the value of $\nu_{1}$ if and only if $\nu_{1}\geq \nu_{2}$ and its value is zero otherwise. 

Before proceeding, we introduce a few useful notations: 
\begin{eqnarray}
     {\rm \widetilde{G}ap}_h(x,a)&:=&V_h^{*}(x)-Q_h^{*}(x,a);\label{eq:gap definition} \\
     {\rm Gap}_h(B)&:=&\min_{(x,a)\in B}{\rm \widetilde{G}ap}_h(x,a);\label{eq:gapB definition}\\
     f_{h+1}^{k-1}(X_h^k, A_h^k)&:=&\mathbb{E}_{Y \sim {T}_{h}(\cdot|X_h^k,A_h^k)}[\overline{V}_{h+1}^{k-1}(Y)]-\mathbb{E}_{Y \sim {T}_{h}(\cdot|X_h^k,A_h^k)}[V_{h+1}^{*}(Y)],h<H\label{eq:f definition},
\end{eqnarray}
with terminal $f_{H+1}^{k-1}(X_H^k,A_H^k)=0$.

Also, to further ease the notation,  for $(h,k)\in [H]\times[K], B_h^k \in \mathcal{P}_h^{k-1}$ we define $G^k_h(B^k_h)$ as follows, if $n^{k-1}_h(B^k_h)=0$,
$$G_h^{k}(B_h^k):=2\frac{\widetilde{C}_h}{D}(1+(\|\tilde{x}(^{o}B_h^k)\|+D)^{m+1}){\rm diam}(B_h^k);$$
otherwise 
\begin{eqnarray}\label{eq:G_h^k definition}
    G_h^{k}(B_h^k)&:=&2\frac{\widehat{C}_{\max}}{\overline{C}_{\max}}\Big(\mbox{\rm R-UCB}_{h}^{k-1}(B_h^k)+ \left(\mbox{\rm T-UCB}_{h}^{k-1}(B_h^k)+{\rm BIAS}(B_h^k)\right)\mathbb{I}_{h<H}+\mbox{\rm R-BIAS}(B_h^k)\mathbb{I}_{h=H}\Big)\nonumber\\
    &&+C_h(1+2(\|\tilde{x}(^{o}B_h^k)\|+D)^m){\rm diam}(B_h^k),
\end{eqnarray}
where $\tilde{x}$ is defined in \eqref{eq:center definition}, $\widetilde{C}_h$  in \eqref{eq:value function with any policy growth rate}, $C_h$  in \eqref{eq:local lipschitz constant}, $\overline{C}_{\max}$  in \eqref{eq:barCmax}, $\widehat{C}_{\max}:=\max_{h\in[H]}\hCh$ with $\hCh$ defined in \eqref{eq:C_hat}. In addition, $B_h^k$ is selected according to Algorithm \ref{alg:ML 2} and $^{o}B_h^k$ is defined in \eqref{eq:oB definition}.

\begin{remark}[Role of $G_h^k(B_h^k)$] 
We remark that $G_h^k(B_h^k)$ represents the overall bonus terms and bias of the estimate w.r.t the selected block $B_h^k$. By "clipping" it with the gap term, it provides a useful upper bound for us to control the final regret; see more analysis in Lemma \ref{lemma:theorem F.3 conclusion}.
\end{remark}

\begin{theorem}\label{thm: upper bound via clipping}
Suppose Assumptions \ref{ass:lipschitz}-\ref{ass:expected reward local lipschitz} hold. With probability at least $1-3\delta$, for all $ (h,k)\in [H]\times [K]$, $B_h^k\in \mathcal{P}_h^{k-1}$, we have that:

\begin{eqnarray*}
 \Delta_{h}^{(k)}\leq {\rm CLIP}\Bigg( G_{h}^{k}(B_h^k)\,\,\Bigg|\,\,\frac{{\rm Gap}_h(B_h^k)}{H+1}\Bigg)+\Big(1+\frac{1}{H}\Big)f_{h+1}^{k-1}(X_h^k, A_h^k)+Q_h^{*}(X_h^k,A_h^k)-V_h^{\tilde{\pi}^{k}}(X_h^k).
\end{eqnarray*}
\end{theorem}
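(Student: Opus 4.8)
The plan is to combine the one-step optimism bounds and the Lipschitz-extension bound already in hand with the clipping dichotomy of \citep{sinclair2023adaptive}. Throughout I would work on the probability-$(1-3\delta)$ event on which Theorems~\ref{thm: True Upper Bounded By Estimates}, \ref{thm:barV local lipschitz property} and \ref{thm:optimism high prob upper bound } hold, and restrict to the only non-vacuous case $X_h^k\in\mathcal{S}_1$: there the greedy rule of Algorithm~\ref{alg:ML 2} forces $B_h^k\in\mathcal{P}_h^{k-1}$, and since $B_h^k$ is a hypercube and $A_h^k$ is sampled from $\Gamma_{\mathcal{A}}(B_h^k)$ we have $(X_h^k,A_h^k)\in B_h^k$ (states outside $\mathcal{S}_1$ give $B_h^k=\bar{Z}^{\complement}$, for which nothing is asserted).

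The first step is to establish the ``raw'' surplus bound
\begin{equation*}
 \overline{V}_h^{k-1}(X_h^k)\ \le\ Q_h^{*}(X_h^k,A_h^k)+G_h^k(B_h^k)+f_{h+1}^{k-1}(X_h^k,A_h^k).
\end{equation*}
I would use Proposition~\ref{thm: optimism high prob upper bound II} to pass from $\overline{V}_h^{k-1}(X_h^k)$ to $\overline{Q}_h^{k-1}(B_h^k)$ up to a $C_h(1+2(\cdots))^m\,{\rm diam}(B_h^k)$ term, and then bound $\overline{Q}_h^{k-1}(B_h^k)-Q_h^{*}(X_h^k,A_h^k)$ by Theorem~\ref{thm:optimism high prob upper bound } in the regimes $n_h^{k-1}(B_h^k)>0$ (splitting $h<H$ from $h=H$, the transition-difference term being exactly $f_{h+1}^{k-1}(X_h^k,A_h^k)$ by \eqref{eq:f definition}) and by Proposition~\ref{thm:optimism high prob upper bound,n=0} in the regime $n_h^{k-1}(B_h^k)=0$. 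The three branches of \eqref{eq:G_h^k definition} are calibrated so that in each case the collected bonus, bias and Lipschitz-residual terms sum to $G_h^k(B_h^k)$; moreover $f_{h+1}^{k-1}(X_h^k,A_h^k)\ge 0$ since $\overline{V}_{h+1}^{k-1}\ge V_{h+1}^{*}$ by Theorem~\ref{thm: True Upper Bounded By Estimates}.

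The second step is the clipping dichotomy on $G:=G_h^k(B_h^k)$ against the threshold $g:={\rm Gap}_h(B_h^k)/(H+1)$. If $G\ge g$ then ${\rm CLIP}(G\,|\,g)=G$, and the claim follows from the raw bound together with $f_{h+1}^{k-1}\ge 0$ (so $(1+\tfrac1H)f_{h+1}^{k-1}\ge f_{h+1}^{k-1}$), after subtracting $V_h^{\tilde\pi^k}(X_h^k)$. If $G<g$ then ${\rm CLIP}(G\,|\,g)=0$ and the bonus must be absorbed: combining the raw bound with optimism $\overline{V}_h^{k-1}(X_h^k)\ge V_h^{*}(X_h^k)$ and with the gap inequality $V_h^{*}(X_h^k)-Q_h^{*}(X_h^k,A_h^k)\ge{\rm Gap}_h(B_h^k)$ (valid as $(X_h^k,A_h^k)\in B_h^k$) yields ${\rm Gap}_h(B_h^k)\le G+f_{h+1}^{k-1}$; substituting $G<{\rm Gap}_h(B_h^k)/(H+1)$ gives ${\rm Gap}_h(B_h^k)/(H+1)<\tfrac1H f_{h+1}^{k-1}$, and plugging this back into the raw bound gives $\overline{V}_h^{k-1}(X_h^k)<Q_h^{*}(X_h^k,A_h^k)+(1+\tfrac1H)f_{h+1}^{k-1}(X_h^k,A_h^k)$, which is the claimed inequality with a zero clip term.

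The main obstacle is the bookkeeping in the first step: one must verify that for each of the three regimes the sum of the $C_h(\cdots){\rm diam}(B_h^k)$ residual from Proposition~\ref{thm: optimism high prob upper bound II} and the corresponding optimism estimate is bounded by exactly $Q_h^{*}(X_h^k,A_h^k)+G_h^k(B_h^k)+f_{h+1}^{k-1}(X_h^k,A_h^k)$ — in particular that the residual Lipschitz term is the one already included in \eqref{eq:G_h^k definition} — and to deal with the boundary episode $k=1$, where $\mathcal{P}_h^0=\mathcal{B}_D$ and all counts vanish so only the $n=0$ branch is invoked. By contrast, the clipping dichotomy itself is a short elementary computation once the raw bound is available.
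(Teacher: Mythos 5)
Your proposal is correct and takes essentially the same route as the paper's own proof: the same raw surplus bound $\overline{V}_h^{k-1}(X_h^k)\le Q_h^{*}(X_h^k,A_h^k)+G_h^{k}(B_h^k)+f_{h+1}^{k-1}(X_h^k,A_h^k)$ assembled from Proposition \ref{thm: optimism high prob upper bound II}, Theorem \ref{thm:optimism high prob upper bound }, Proposition \ref{thm:optimism high prob upper bound,n=0} and optimism (Theorem \ref{thm: True Upper Bounded By Estimates}), followed by the same two-case clipping dichotomy, your case $G<g$ being an algebraic rearrangement of the paper's step ${\rm Gap}_h(B_h^k)\le\frac{H+1}{H}f_{h+1}^{k-1}(X_h^k,A_h^k)$. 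The branch-by-branch bookkeeping against the definition of $G_h^k$ (including the $n_h^{k-1}(B_h^k)=0$ and $k=1$ cases) that you flag as the main obstacle is treated in the paper exactly as you outline.
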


The proof of Theorem \ref{thm: upper bound via clipping} is deferred to Appendix \ref{app:proof-5-5}.

Theorem \ref{thm: upper bound via clipping} gives a one-step upper bound on the estimation gap
\(\Delta_h^{(k)}\), but the bound still contains the continuation error
\(f_{h+1}^{k-1}(X_h^k,A_h^k)\) and the difference between the optimal
\(Q\)-function and the value of the deployed policy. The next lemma rewrites
these remaining terms in terms of the next-step gap \(\Delta_{h+1}^{(k)}\)
and martingale-difference errors, which allows the bound to be iterated
backward along the episode.

\begin{lemma}[Modification of Theorem F.1 in  \citep{sinclair2023adaptive}]\label{thm:induction} It holds that
$$\Big(1+\frac{1}{H}\Big)f_{h+1}^{k-1}(X_h^k, A_h^k)+Q_h^{*}(X_h^k,A_h^k)-V_h^{\tilde{\pi}^{k}}(X_h^k)\leq \Big(1+\frac{1}{H}\Big(\Delta_{h+1}^{(k)}+\xi_{h+1}^{k}\Big),$$
in which for $h<H$
\begin{eqnarray*}
    \xi_{h+1}^{k}&:=&\mathbb{E}_{Y \sim {T}_{h}(\cdot|X_h^k,A_h^k)}[\overline{V}_{h+1}^{k-1}(Y)]-\mathbb{E}_{Y \sim {T}_{h}(\cdot|X_h^k,A_h^k)}[V_{h+1}^{\tilde{\pi}^{k}}(Y)]-(\overline{V}_{h+1}^{k-1}(X_{h+1}^{k})-V_{h+1}^{\tilde{\pi}^{k}}(X_{h+1}^{k}))\\
    \zeta_{h+1}^k&:=&\bar{R}_{h}(X_h^k,A_h^k)-\mathbb{E}_{a\sim \pi_h^k(X_h^k)}[\bar{R}_{h}(X_h^k,a)]\\
    && +\mathbb{E}_{Y \sim {T}_{h}(\cdot|X_h^k,A_h^k)}[V_{h+1}^{\tilde{\pi}^{k}}(Y)]-  \mathbb{E}_{a\sim \pi_h^k(X_h^k), Y^{'} \sim {T}_{h}(\cdot|X_h^k,a)}[V_{h+1}^{\tilde{\pi}^{k}}(Y^{'})],
\end{eqnarray*}
$\xi_{H+1}^{k}:=0$ and $\zeta_{H+1}^{k}:=\bar{R}_{H}(X_H^k,A_H^k)-\mathbb{E}_{a\sim \pi_H^k(X_H^k)}[\bar{R}_{H}(X_H^k,a)]$. In addition, $\Delta_{h+1}^{(k)}$ is defined in \eqref{eq:delta-hk} for $h<H$, $\Delta_{H+1}^{(k)}:=0$,  and $f_{h+1}^{k-1}(X_h^k, A_h^k)$ is defined in \eqref{eq:f definition}.
\end{lemma}
Lemma \ref{thm:induction}  is adapted from Theorem F.1 of \citep{sinclair2023adaptive}  with minor modifications to suit our setting. For completeness, the proof  is deferred to Appendix \ref{app:thm induction}.

\subsection{Concentrations on the size of $J_{\rho}^{K}$ and initial value function }

In this subsection, we provide some useful concentrations before establishing the final regret bound.

We categorize the sample trajectories into two types: those 
remains within $\mathcal{S}_1$ 
for the {\it entire} episode, denoted by 
 \begin{eqnarray}\label{eq:J0}
J_{\rho}^{K}:=\left\{k \in [K]: \max_{h\in[H]}\|X_{h}^{k}\|\leq \rho \right\},
 \end{eqnarray}
and those 
exceeds $\mathcal{S}_1$.  
We also denote  
\begin{eqnarray*}
I_k:=\mathbb{I}_{\{k\in J^K_\rho\}}, \quad p_{\rho}^{k}:=\mathbb{P}\left(k\in J^K_\rho\right):=\mathbb{E} [I_k], 
\quad \mbox{and}\quad K_{0}=\sum_{k=1}^{K}I_{k} = |J_{\rho}^{K}|.
\end{eqnarray*} 
According to Corollary \ref{Theorem: R-estimate}, we know that 
\begin{eqnarray}\label{ineq:alpha_p}
    p_{\rho}^{k} \geq 1-\frac{M_{p}}{\rho^{p}}, \mbox{ and hence } \mathbb{E}[K_{0}]\geq K\left(1-\frac{M_{p}}{\rho^{p}}\right).
\end{eqnarray}
We also have the following concentration bound for $K_0$.
\begin{proposition}\label{thm:concentration on size of J0}
Suppose Assumptions \ref{ass:lipschitz}, \ref{ass:expected reward local lipschitz} and \ref{ass:initial} hold. The following holds with probability at least $1-\delta$, 
\begin{eqnarray*}
     K-K_{0} \leq \frac{KM_{p}}{\rho^{p}}+\sqrt{2K\log\left(\frac{1}{\delta}\right)}.
\end{eqnarray*}
\end{proposition}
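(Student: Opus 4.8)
The plan is to write $K-K_0=\sum_{k=1}^{K}(1-I_k)$ and control this sum with a bounded-martingale (Azuma--Hoeffding) argument, the main input being a \emph{conditional} version of the tail bound in Corollary~\ref{Theorem: R-estimate}. Introduce the filtration $\{\mathcal{F}_k\}_{k\ge0}$, where $\mathcal{F}_k$ is generated by all the randomness used through the end of episode $k$ (the initial states $X_1^1,\dots,X_1^k$ sampled from $\Xi$, the Gaussian increments $\{B_h^j\}$, the reward noise, and the internal action sampling), with $\mathcal{F}_0$ trivial. Then each $I_k=\mathbb{I}_{\{\max_{h\in[H]}\|X_h^k\|\le\rho\}}$ is $\mathcal{F}_k$-measurable, while the partitions $\{\mathcal{P}_h^{k-1}\}_h$, the estimators, and hence the APL-Diffusion policy $\tilde\pi^k$ are all $\mathcal{F}_{k-1}$-measurable.

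First I would upgrade Corollary~\ref{Theorem: R-estimate} to the conditional statement $\mathbb{E}[\,1-I_k\mid\mathcal{F}_{k-1}\,]\le M_p/\rho^{p}$ almost surely. The point is that, conditionally on $\mathcal{F}_{k-1}$, episode $k$ is generated by the dynamics \eqref{eq:state_process} under the now-fixed randomized policy $\tilde\pi^k$, whose actions lie in $\mathcal{A}$ and hence satisfy $\|A_h^k\|\le\bar a$; the proof of Proposition~\ref{thm:Mp estimation} uses only the Lipschitz-growth bounds on $\mu_h,\sigma_h$ from Assumption~\ref{ass:lipschitz} together with $\|A_h^k\|\le\bar a$, and no structural property of the policy. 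Running that argument conditionally on $\mathcal{F}_{k-1}$ therefore gives
\begin{eqnarray*}
\mathbb{E}\left[\sup_{h\in[H]}\|X_h^k\|^{p}\,\Big|\,\mathcal{F}_{k-1}\right]\le M\big(1+\mathbb{E}_{\xi\sim\Xi}[\|\xi\|^{p}]\big)=M_p,
\end{eqnarray*}
and Markov's inequality yields $\mathbb{P}(\sup_{h\in[H]}\|X_h^k\|\ge\rho\mid\mathcal{F}_{k-1})\le M_p/\rho^{p}$, i.e. $\mathbb{E}[1-I_k\mid\mathcal{F}_{k-1}]\le M_p/\rho^{p}$; summing over $k$ gives $\sum_{k=1}^{K}\mathbb{E}[1-I_k\mid\mathcal{F}_{k-1}]\le KM_p/\rho^{p}$.

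Next I would set $d_k:=(1-I_k)-\mathbb{E}[1-I_k\mid\mathcal{F}_{k-1}]$, which is a martingale difference sequence adapted to $\{\mathcal{F}_k\}$ with $|d_k|\le1$, and apply Azuma--Hoeffding: $\mathbb{P}\big(\sum_{k=1}^{K}d_k\ge t\big)\le\exp(-t^{2}/(2K))$, which equals $\delta$ for $t=\sqrt{2K\log(1/\delta)}$. On the complementary event, of probability at least $1-\delta$,
\begin{eqnarray*}
K-K_0=\sum_{k=1}^{K}\mathbb{E}[1-I_k\mid\mathcal{F}_{k-1}]+\sum_{k=1}^{K}d_k\le\frac{KM_p}{\rho^{p}}+\sqrt{2K\log\left(\frac{1}{\delta}\right)},
\end{eqnarray*}
which is the claim. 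The only step that is not purely mechanical is the conditional moment bound: one must be careful that Proposition~\ref{thm:Mp estimation} holds uniformly over the random, history-dependent randomized policies produced by the algorithm so that it can be invoked conditionally on $\mathcal{F}_{k-1}$ — which is exactly why $M$ there was stated to depend only on $H,\ell_\mu,\ell_\sigma,p,\bar a,L_0,\Delta$ and not on the policy. Everything else (Markov, Azuma--Hoeffding, the union of the two bounds) is routine.
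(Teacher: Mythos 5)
Your proposal is correct and follows essentially the same route as the paper: a conditional first-moment bound $\mathbb{E}[1-I_k\mid\mathcal{F}_{k-1}]\le M_p/\rho^{p}$, a bounded martingale-difference sequence, and Azuma--Hoeffding with $\epsilon=\sqrt{2K\log(1/\delta)}$. If anything, your explicit justification that the moment estimate of Proposition~\ref{thm:Mp estimation} holds conditionally, uniformly over the history-dependent policies $\tilde\pi^k$ (since its constant depends only on $H,\ell_\mu,\ell_\sigma,p,\bar a,L_0,\Delta$), spells out a step the paper passes over by citing the unconditional bound \eqref{ineq:alpha_p}.
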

The proof of Proposition \ref{thm:concentration on size of J0} is deferred to Appendix \ref{app:proof-5-10-prop}.

\smallskip

Next, we present a concentration result for value functions associated with state processes that exit the ball of radius $\rho$.


\begin{theorem}\label{thm:concentration on value function at initial state}
Suppose Assumptions \ref{ass:lipschitz}, \ref{ass:expected reward local lipschitz} and \ref{ass:initial} hold. For any policy $\pi$, we have the following holds with probability at least $1-\delta$,
\begin{eqnarray}\label{eq:concentration on value function at initial state}
   \sum_{k \in [K]\backslash J_{\rho}^{K}}|V_{1}^{\pi}(X_{1}^{k})|\leq K\kappa_{m+1}(\delta,\rho)+\widetilde{C}_1\Big(1+\rho^{m+1}\Big)(K-K_{0}),
\end{eqnarray}
where $\kappa_{m+1}:(0,1]\times (\mathbb{R}_+\cup\{0\}) \mapsto \mathbb{R}_+$ is defined as
\begin{eqnarray}\label{eq:kappa1 and kappa m+1}
    \kappa_{m+1}(\delta,y):=\frac{1}{\delta}\widetilde{C}_1\Big(\frac{\mathbb{E}_{\xi \sim \Xi}[\|\xi\|^{p}]}{y^p}+ \frac{\mathbb{E}_{\xi \sim \Xi}[\|\xi\|^{p}]}{y^{p-(m+1)}}\Big),
\end{eqnarray} 
and $\widetilde{C}_{1}$ is defined in \eqref{eq:value function with any policy growth rate}.
\end{theorem}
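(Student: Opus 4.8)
The plan is to combine the polynomial growth bound on the value function of an arbitrary policy (Proposition~\ref{thm: value function with any policy growth rate}) with a truncation argument and a single application of Markov's inequality to an i.i.d.\ sum. As in the rest of this section, the initial states $X_1^1,\dots,X_1^K$ are i.i.d.\ copies of $\xi\sim\Xi$. Since Proposition~\ref{thm: value function with any policy growth rate} is uniform over $\pi$, we have $|V_1^\pi(X_1^k)|\le\widetilde C_1(\|X_1^k\|^{m+1}+1)$ almost surely, even if $\pi$ is data-dependent. Summing over $k\in[K]\setminus J_\rho^K$, a set of cardinality $K-K_0$, gives
$$\sum_{k\in[K]\setminus J_\rho^K}|V_1^\pi(X_1^k)|\;\le\;\widetilde C_1\sum_{k\in[K]\setminus J_\rho^K}\big(\|X_1^k\|^{m+1}+1\big).$$

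Next I would apply the elementary pointwise inequality $\|x\|^{m+1}+1\le(1+\rho^{m+1})+\|x\|^{m+1}\mathbb{I}_{\{\|x\|>\rho\}}$, which is immediate in both cases $\|x\|\le\rho$ and $\|x\|>\rho$. Using this term by term, and observing that $\|X_1^k\|>\rho$ already forces $k\notin J_\rho^K$ (so the tail sum over $k\in[K]\setminus J_\rho^K$ coincides with the sum over all $k\in[K]$), we obtain
$$\sum_{k\in[K]\setminus J_\rho^K}|V_1^\pi(X_1^k)|\;\le\;\widetilde C_1(1+\rho^{m+1})(K-K_0)+\widetilde C_1\sum_{k=1}^K\|X_1^k\|^{m+1}\mathbb{I}_{\{\|X_1^k\|>\rho\}}.$$
Comparing with the target bound, it remains to show that $\widetilde C_1\sum_{k=1}^K\|X_1^k\|^{m+1}\mathbb{I}_{\{\|X_1^k\|>\rho\}}\le K\kappa_{m+1}(\delta,\rho)$ with probability at least $1-\delta$.

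For this last step, set $Y_k:=\|X_1^k\|^{m+1}\mathbb{I}_{\{\|X_1^k\|>\rho\}}$; these are i.i.d.\ nonnegative random variables. Assumption~\ref{ass:initial} forces $p>m+1$ (in fact $p^2>4(m+1)^2$ since $d_{\mathcal S},d_{\mathcal A}\ge1$), so on the event $\{\|\xi\|>\rho\}$ we may write $\|\xi\|^{m+1}=\|\xi\|^p\|\xi\|^{-(p-m-1)}\le\rho^{-(p-m-1)}\|\xi\|^p$, whence
$$\mathbb{E}[Y_1]=\mathbb{E}\big[\|\xi\|^{m+1}\mathbb{I}_{\{\|\xi\|>\rho\}}\big]\;\le\;\frac{\mathbb{E}_{\xi\sim\Xi}[\|\xi\|^p]}{\rho^{\,p-(m+1)}}\;\le\;\frac{\mathbb{E}_{\xi\sim\Xi}[\|\xi\|^p]}{\rho^{p}}+\frac{\mathbb{E}_{\xi\sim\Xi}[\|\xi\|^p]}{\rho^{\,p-(m+1)}}=\frac{\delta\,\kappa_{m+1}(\delta,\rho)}{\widetilde C_1}.$$
Markov's inequality applied to $\sum_{k=1}^K Y_k$, which has mean $K\,\mathbb{E}[Y_1]$, then gives $\mathbb{P}\big(\widetilde C_1\sum_{k=1}^K Y_k\ge K\kappa_{m+1}(\delta,\rho)\big)\le \widetilde C_1 K\mathbb{E}[Y_1]/(K\kappa_{m+1}(\delta,\rho))\le\delta$, and combining the displays yields \eqref{eq:concentration on value function at initial state}.

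The only step requiring genuine care is the tail-moment estimate for $\mathbb{E}[Y_1]$: one must invoke $p>m+1$ from Assumption~\ref{ass:initial}, and recognize that the $\rho^{-p}$ term appearing in the definition of $\kappa_{m+1}$ is slack for the present bound (it is kept only so that $\kappa_{m+1}$ matches the form used in the companion estimates on $K-K_0$ and the value iterates). Everything else is a direct combination of Proposition~\ref{thm: value function with any policy growth rate}, a deterministic truncation, and one application of Markov's inequality, so no martingale or conditioning machinery is needed here.
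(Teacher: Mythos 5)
Your proposal is correct and follows essentially the same route as the paper's own proof: bound $|V_1^{\pi}|$ by the polynomial growth estimate of Proposition~\ref{thm: value function with any policy growth rate}, separate a deterministic $\widetilde{C}_1(1+\rho^{m+1})(K-K_0)$ contribution from the i.i.d.\ tail sum $\sum_{k}\|X_1^k\|^{m+1}\mathbb{I}_{\{\|X_1^k\|>\rho\}}$, bound its mean via the $p$-th moment of $\Xi$, and conclude with one application of Markov's inequality. The only cosmetic differences are that the paper splits $[K]\setminus J_{\rho}^{K}$ into the index sets $J_1,J_2$ instead of using your pointwise inequality, and controls the truncated moment via H\"older's inequality rather than your direct estimate $\|\xi\|^{m+1}\le \rho^{-(p-m-1)}\|\xi\|^{p}$ on $\{\|\xi\|>\rho\}$; both yield the same $\kappa_{m+1}(\delta,\rho)$.
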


The proof of Theorem \ref{thm:concentration on value function at initial state} is deferred to Appendix \ref{app:proof-5-7}.

\subsection{Regret composition}
In this subsection, we provide the regret analysis of Algorithm \ref{alg:ML}. In Theorem \ref{thm:first stage regret decomposition}, we bound the regret by separating two types of episodes. 

\begin{theorem}\label{thm:first stage regret decomposition}

 Assume Assumptions \ref{ass:lipschitz}-\ref{ass:initial} hold. With probability at least $1-6\delta$, we have: 
\begin{eqnarray}
        {\rm Regret}(K)&\leq&e^{2}\sum_{h=1}^{H}\sum_{k \in J_{1}}{\rm CLIP}\Bigg(G_{h}^{k}(B_h^k)\Bigg|\frac{{\rm Gap}_{h}(B_h^k)}{H+1}\Bigg)+2e^{2}\sqrt{\widetilde{L}_{1}HK\Big(\Big(\frac{M_{p}K}{\delta}\Big)^{\frac{2m+2}{p}} +1\Big)\log\Big(\frac{2}{\delta}\Big)}\nonumber\\
&&+2K\kappa_{m+1}(\delta,\rho)+4\widetilde{C}_1\Big(\widetilde{L}_{3}+\rho^{m+1}+e^{2}\widetilde{L}_{2}H\Big(\frac{M_{p}K}{\delta}\Big)^{\frac{m+1}{p}} \Big)\left(\frac{M_{p}}{\rho^{p}}K+\sqrt{2K\log\Big(\frac{1}{\delta}\Big)}\right),
\end{eqnarray}
where $\widetilde{L}_{1}, \widetilde{L}_{2}$ depends only on $m, D, d_{\mathcal{S}},\widetilde{C}_{\max}, C_{\max} $ and $\widetilde{L}_{3}:=1+e^{2}\widetilde{L}_{2}H$
with 
\begin{eqnarray}\label{eq:widetilde Cmax,Cmax}
\widetilde{C}_{\max}:=\max_{h\in[H]}\widetilde{C}_{h},\quad C_{\max}:=\max_{h\in[H]}C_{h}.
\end{eqnarray}
Here $\widetilde{C}_{h}$ is defined in \eqref{eq:value function with any policy growth rate}, $C_{h}$  in \eqref{eq:local lipschitz constant}, $\widetilde{C}$  in \eqref{eq:C(q,d)}, $\eta$  in \eqref{eq:eta function}, ${\rm Gap}_h$  in \eqref{eq:gapB definition}, $G_{h}^{k}$  in \eqref{eq:G_h^k definition}, and $\kappa_{m+1}$  in \eqref{eq:kappa1 and kappa m+1}. In addition, $B_h^k$ is selected according to Algorithm \ref{alg:ML 2}.
\end{theorem}
\begin{proof}

Combine Theorem \ref{thm: upper bound via clipping}, Lemma \ref{thm:induction}, and the fact that $\Big(1+\frac{1}{H}\Big)^{H}\leq e$, with probability at least $1-3\delta$, we have:
\begin{eqnarray}\label{bound 4}
\sum_{k\in J_{\rho}^{K}}\Delta_{1}^{(k)}
      &\leq&\sum_{k\in J_{\rho}^{K}}{\rm CLIP}\Bigg(G_{1}^{k}(B_1^k)\Bigg|\frac{{\rm Gap}_{1}(B_1^k)}{H+1}\Bigg)+\Big(1+\frac{1}{H}\Big)\Big(\sum_{k\in J_{\rho}^{K}}\Delta_{2}^{(k)}+\sum_{k\in J_{\rho}^{K}}\xi_{2}^{k}\Big)+\sum_{k\in J_{\rho}^{K}}\zeta_{2}^{k}\nonumber \\
      &\leq&
      \sum_{h=1}^{H}\sum_{k\in J_{\rho}^{K}}\Big(1+\frac{1}{H}\Big)^{2(h-1)}{\rm CLIP}\Bigg(G_{h}^{k}(B_h^k)\Bigg|\frac{{\rm Gap}_{h}(B_h^k)}{H+1}\Bigg)
      \nonumber\\
      &&+\sum_{h=1}^{H}\sum_{k\in J_{\rho}^{K}}\Big(1+\frac{1}{H}\Big)^{2h}\xi_{h+1}^{k}+\sum_{h=1}^{H}\sum_{k\in J_{\rho}^{K}}\zeta_{h+1}^{k}\nonumber\\
        &\leq& e^{2}\sum_{h=1}^{H}\sum_{k\in J_{\rho}^{K}}{\rm CLIP}\Bigg(G_{h}^{k}(B_h^k)\Bigg|\frac{{\rm Gap}_{h}(B_h^k)}{H+1}\Bigg)+ 2e^{2} \sum_{h=1}^{H}\sum_{k\in J_{\rho}^{K}} \xi_{h+1}^{k}+\sum_{h=1}^{H}\sum_{k\in J_{\rho}^{K}}\zeta_{h+1}^{k} \\
        &\leq & e^{2}\sum_{h=1}^{H}\sum_{k\in J_{\rho}^{K}}{\rm CLIP}\Bigg(G_{h}^{k}(B_h^k)\Bigg|\frac{{\rm Gap}_{h}(B_h^k)}{H+1}\Bigg)+ 2e^{2} \sum_{h=1}^{H}\sum_{k\in [K]} \xi_{h+1}^{k}+2e^{2} \Big|\sum_{h=1}^{H}\sum_{k\in [K]\backslash J_{\rho}^{K}} \xi_{h+1}^{k}\Big|\nonumber\\
        &&+\sum_{h=1}^{H}\sum_{k\in [K]} \zeta_{h+1}^{k}+\Big|\sum_{h=1}^{H}\sum_{k\in [K]\backslash J_{\rho}^{K}} \zeta_{h+1}^{k}\Big|.
\end{eqnarray}

We claim the following results in \eqref{eq:xi_1}-\eqref{eq:zeta_2} hold, for which the proof is deferred to  Appendix \ref{app:proof-5-9}.

\begin{eqnarray}
&&\mathbb{P}\Bigg(\sum_{h=1}^{H}\sum_{k=1}^{K}\xi_{h+1}^{k}\leq 2e^{2}\sqrt{\widetilde{L}_{1}HK\Big(\Big(\frac{M_{p}K}{\delta}\Big)^{\frac{2m+2}{p}} +1\Big)\log\Big(\frac{2}{\delta}\Big)} \Bigg)\geq 1-5\delta, \label{eq:xi_1}\\
    &&\mathbb{P}\Big( \Big|\sum_{h=1}^{H}\sum_{k\in [K]\backslash J_{\rho}^{K}}^{K}\xi_{h+1}^{k}\Big|\leq e^{2}\widetilde{L}_{2}H\Big(\frac{KM_{p}}{\rho^{p}}+\sqrt{2K\log(\frac{1}{\delta}})\Big)\Big(\Big(\frac{M_{p}K}{\delta}\Big)^{\frac{m+1}{p}} +1\Big)\Big)\geq 1-2\delta,\label{eq:xi_2}\\  &&\mathbb{P}\Bigg(\sum_{h=1}^{H}\sum_{k=1}^{K}\zeta_{h+1}^{k}\leq 2e^{2}\sqrt{\widetilde{L}_{1}HK\Big(\Big(\frac{M_{p}K}{\delta}\Big)^{\frac{2m+2}{p}} +1\Big)\log\Big(\frac{2}{\delta}\Big)} \Bigg)\geq 1-2\delta,\label{eq:zeta_1} \\
   &&\mathbb{P}\Big( \Big|\sum_{h=1}^{H}\sum_{k\in [K]\backslash J_{\rho}^{K}}^{K}\zeta_{h+1}^{k}\Big|\leq e^{2}\widetilde{L}_{2}H\Big(\frac{KM_{p}}{\rho^{p}}+\sqrt{2K\log(\frac{1}{\delta}})\Big)\Big(\Big(\frac{M_{p}K}{\delta}\Big)^{\frac{m+1}{p}} +1\Big)\Big)\geq 1-2\delta, \label{eq:zeta_2}
\end{eqnarray}
where $\widetilde{L}_{1}, \widetilde{L}_{2}$ depends only on $m, D, d_{\mathcal{S}},\widetilde{C}_{\max}, C_{\max}$. 


\smallskip

By combining Theorems \ref{thm: upper bound via clipping}, \ref{thm:concentration on size of J0}, \ref{thm:concentration on value function at initial state} and \eqref{eq:xi_1}-\eqref{eq:zeta_2}, we get that with probability at least $1-6\delta$, it holds that 
\begin{eqnarray*}
        {\rm Regret}(K)
        &\leq&\sum_{k \in J_{\rho}^{K}}(\overline{V}_{1}^{k-1}(X_{1}^{k})-V_{1}^{\tilde{\pi}^{k}}(X_{1}^{k}))+ \sum_{k \in J \backslash J_{\rho}^{K}}(|V_{1}^{*}(X_1^k)|+|V_{1}^{\tilde{\pi}^{k}}(X_1^k)|)\nonumber\\
        &\leq& \sum_{k \in J_{\rho}^{K}}\Delta_{1}^{(k)}+2\Big(K\kappa_{m+1}(\delta,\rho)+\widetilde{C}_1\Big(1+\rho^{m+1}\Big)(K-K_{0})\Big)\nonumber\\
         &\leq& e^{2}\sum_{h=1}^{H}\sum_{k \in J_{1}}{\rm CLIP}\Bigg(G_{h}^{k}(B_h^k)\Bigg|\frac{{\rm Gap}_{h}(B_h^k)}{H+1}\Bigg)+2e^{2} \sum_{h=1}^{H}\sum_{k\in [K]} \xi_{h+1}^{k}+2e^{2} \Big|\sum_{h=1}^{H}\sum_{k\in [K]\backslash J_{\rho}^{K}} \xi_{h+1}^{k}\Big|\nonumber\\
        &&+\sum_{h=1}^{H}\sum_{k\in [K]} \zeta_{h+1}^{k}+\Big|\sum_{h=1}^{H}\sum_{k\in [K]\backslash J_{\rho}^{K}} \zeta_{h+1}^{k}\Big| \nonumber\\
&&+2K\kappa_{m+1}(\delta,\rho)+4\widetilde{C}_1\Big(1+\rho^{m+1}\Big)\left(\frac{M_{p}}{\rho^{p}}K+\sqrt{2K\log\Big(\frac{1}{\delta}\Big)}\right)\nonumber \\
        &\leq& e^{2}\sum_{h=1}^{H}\sum_{k \in J_{1}}{\rm CLIP}\Bigg(G_{h}^{k}(B_h^k)\Bigg|\frac{{\rm Gap}_{h}(B_h^k)}{H+1}\Bigg)+2e^{2}\sqrt{\widetilde{L}_{1}HK\Big(\Big(\frac{M_{p}K}{\delta}\Big)^{\frac{2m+2}{p}} +1\Big)\log\Big(\frac{2}{\delta}\Big)}\nonumber\\
&&+2K\kappa_{m+1}(\delta,\rho)+4\widetilde{C}_1\Big(\widetilde{L}_{3}+\rho^{m+1}+e^{2}\widetilde{L}_{2}H\Big(\frac{M_{p}K}{\delta}\Big)^{\frac{m+1}{p}} \Big)\left(\frac{M_{p}}{\rho^{p}}K+\sqrt{2K\log\Big(\frac{1}{\delta}\Big)}\right),\nonumber \\
\end{eqnarray*}
where $J=[K]$ and $J_{\rho}^{K}$ is defined in \eqref{eq:J0}.
The first inequality holds due to Theorem \ref{thm: True Upper Bounded By Estimates}. The second inequality holds due to Theorem \ref{thm:concentration on value function at initial state}. The third inequality holds due to \eqref{bound 4}. Finally, the last inequality holds due to \eqref{eq:xi_1}-\eqref{eq:zeta_2}.
\end{proof}

Note that unlike the bounded state space setting in \citep{sinclair2023adaptive}, the martingale difference term $\xi_{h+1}^{k}$ in our setting is unbounded.Therefore in the proof of Theorem \ref{thm:first stage regret decomposition}, we apply \eqref{eq:xi_1}-\eqref{eq:zeta_2} which is a series of more general version of the martingale concentration inequality instead of the usual Azuma-Hoeffding inequality. 


\smallskip

Before deriving the ultimate regret bound, we introduce the key concepts of near-optimal sets and zooming dimensions, which are commonly used in the contextual bandits literature to bound an algorithm's regret \citep{sinclair2023adaptive}. However, in our setting, where the state space is unbounded and the reward function is polynomial, these concepts require modification. 


\begin{definition}[Near optimal set]\label{def:near optimal set}
 The  near optimal set of $\bar{Z}$ for a given value $r$ is defined as 
\begin{eqnarray}\label{eq:def near optimal set}
      Z_{h}^{r,\rho} = \Big\{(x,a) \in \bar{Z}\,\Big|\, {\rm \widetilde{G}ap}_{h}(x,a) \leq \bar{g}(\delta,x)(H+1)r\Big\},
\end{eqnarray}
where the partition space $\overline{Z}$ is defined in \eqref{eq:partition_space} and $\bar{g}:(0,1]\times \mathbb{R}^{d_{\mathcal{S}}}\mapsto \mathbb{R}_+$ is defined as
\begin{eqnarray} \label{eq: l(x) def}
    \bar{g}(\delta, x):= 2g_{3}(\delta,\|x\|+D)+3\overline{C}_{\max}\Big(1+2(\|x\|+2D)^m\Big)+2\frac{\widetilde{C}_{\max}}{D}\Big(1+(\|x\|+D)^{m+1}\Big).
\end{eqnarray}
Also, $g_{3}:(0,1]\times (\mathbb{R}_+\cup\{0\}) \mapsto \mathbb{R}_+$ is defined as
\begin{eqnarray}\label{eq:g3 definition}
    g_{3}(\delta,y):=2\frac{\widehat{C}_{\max}}{\overline{C}_{\max}}+2\frac{\widehat{C}_{\max}}{\overline{C}_{\max}}g_{2}(\delta,y)+C_{\max}\Big(1+2(y+D)^m\Big),
\end{eqnarray} 
where $g_{2}$ is defined in \eqref{eq:g2 definition}, $\overline{C}_{\max}$ is defined in \eqref{eq:barCmax}, $\widetilde{C}_{\max}$ and $C_{\max}$ are defined in \eqref{eq:widetilde Cmax,Cmax}, and $\widehat{C}_{\max}$ is defined below \eqref{eq:G_h^k definition}.
\end{definition}  

While the quantity \(\widetilde{\mathrm{Gap}}_h(x,a)\) is commonly used to
quantify the sub-optimality of a given action, we introduce
\(\bar g(\delta,x)\) to capture the polynomial structure of the system. This quantity provides an alternative perspective, serving as a measure of the local learning difficulty faced by our algorithm.

In the following regret analysis, we demonstrate that the algorithm's regret can be bounded in terms of the size of near-optimal sets. Note that the near-optimal set typically resides on a manifold of much lower dimension than $d_{\mathcal{S}}+d_{\mathcal{A}}$. For instance, this occurs in several cases discussed in \citep{sinclair2023adaptive}.

To quantify the size of near-optimal sets, we introduce the concepts of packing, packing numbers, and zooming dimension.

\begin{definition}[$r$-packing and $r$-packing number; Definition 4.2.4 in \citep{vershynin2018high}]\,\,
\begin{itemize}
    \item For a given $r>0$ and a compact set $\mathcal{U}$, an r-packing ${\rm P}_{\mathcal{U}}^{r} \subset \mathcal{U}$ is a set such that $\|x-x^{\prime}\|>r$ for any two distinct $x,x^{\prime} \in {\rm P}_{\mathcal{U}}^{r}$. 
\item We define the $r$-packing number of $\mathcal{U}$, denoted $N_{r}(\mathcal{U})$, as the maximum cardinality among all $r$-packings of $\mathcal{U}$.
\end{itemize}
\end{definition}

\begin{definition}[Zooming dimension and maximum zooming dimension]\label{def:zooming dimension}
The  step-$h$ zooming dimension $z_{h,c}$ with a given positive constant $c$ is defined as
\begin{eqnarray*}
    z_{h,c}=\inf\left\{d>0 \,:\, \frac{N_{r}(Z_{h}^{r,\rho})}{\rho^{d_{\mathcal{S}}}}\leq c\,r^{-d}, \forall 0<r\leq D , \forall \rho> D \right\}. 
\end{eqnarray*}
The maximum zooming dimension $z_{max,c}$ is defined as 
\begin{eqnarray*}
    z_{\max,c}=\max_{h \in [H]}z_{h,c}.
\end{eqnarray*}
\end{definition}

In the above, we modify the concept of zooming dimension in \citep{sinclair2023adaptive} to adapt to the current unbounded state setting, such that the zooming dimension defined here is {\it independent} of $\rho$. This is crucial to obtain potentially improved regret bounds by utilizing the zooming dimension instead of the ambient dimension $d_{\mathcal{S}}+d_{\mathcal{A}}$  in the context of an unbounded state setting. In Appendix \ref{app:choice of c}, we show that if we take $c\geq \frac{2^{d_{\mathcal{S}}}\Gamma(\frac{d_{\mathcal{S}}+d_{\mathcal{A}}}{2}+1)\bar{a}^{d_{\mathcal{A}}}}{\Gamma(
\frac{d_{\mathcal{S}}}{2}+1)\Gamma(
\frac{d_{\mathcal{A}}}{2}+1)}$, then the zooming dimension does not exceed ambient dimension $d_{\mathcal{S}}+d_{\mathcal{A}}$.

Theorem \ref{thm:first stage regret decomposition} reduces the regret analysis to controlling the accumulated
clipped bonuses over episodes whose trajectories remain inside the localized
state region. The next step is to relate this sum to the geometry of the
near-optimal region: only blocks whose centers are sufficiently close to
optimal can survive the clipping operation, and the total contribution of
such blocks can be bounded through their packing numbers.
 The following lemma makes this geometric reduction precise by bounding the
clipped-bonus term in terms of the packing numbers of the near-optimal sets
\(\mathcal Z_h^{r,\rho}\).

\begin{lemma}[Modification of Theorem F.3 in \citep{sinclair2023adaptive}]\label{lemma:theorem F.3 conclusion}
    Assume Assumptions \ref{ass:lipschitz}-\ref{ass:expected reward local lipschitz} hold.Then for any given constant $r_{0}>0$ we have the following: 
    
    \begin{eqnarray}\label{eq:CLIP BOUND}
        &&\sum_{h}\sum_{k\in J_{\rho}^{K}}{\rm CLIP}\Bigg(G_h^k(B_h^k)\,\Bigg|\,\frac{{\rm Gap}_{h}(B_h^k)}{H+1}\Bigg) \nonumber\\
       &\leq& \sum_{h=1}^{H}\,\,\left(2g_{3}(\delta,\rho+D)Kr_{0}+g_{4}(\delta,\rho+D)\sum_{r \geq r_{0}, r\in \mathcal{R}}N_{r}(Z_{h}^{r,\rho})\frac{1}{r}\right),
    \end{eqnarray}
where $\mathcal{R}:=\{r\,\,|\,\,\exists h \in [H], k \in J_{\rho}^{K}, {\rm diam}(B_h^k)=r\}$. Here $J_{\rho}^{K}$ is defined in \eqref{eq:J0}, $\overline{C}_{\max}$  in \eqref{eq:barCmax}, ${\rm Gap}_h$  in \eqref{eq:gapB definition}, $G_{h}^{k}$  in \eqref{eq:G_h^k definition}, $g_1$ in \eqref{eq:g1 definition}, $g_{3}$ in \eqref{eq:g3 definition}, and $\bar{g}$  in \eqref{eq: l(x) def}. In addition, $g_{4}:(0,1]\times (\mathbb{R}_+\cup\{0\})\mapsto \mathbb{R}_+ $ is defined  as 
    \begin{eqnarray}\label{eq:g4 definition}
   g_{4}(\delta,y):=g_{3}(\delta,y)g_{1}(\delta,y)^2+\frac{(2\bar{a})^{d_{\mathcal{A}}}}{D^{d_{\mathcal{S}}+d_{\mathcal{A}}-2}}(d_{\mathcal{S}}+d_{\mathcal{A}})^{\frac{d_{\mathcal{S}}+d_{\mathcal{A}}}{2}}y^{d_{\mathcal{S}}}\bar{g}(\delta, y). 
    \end{eqnarray}
    \end{lemma}
This result adapts Theorem~F.3 of \citep{sinclair2023adaptive} to our
setting with an unbounded state space and polynomially growing rewards. Its proof is deferred to Appendix~\ref{app:proof-lemma-5.13}. We emphasize
that the upper bound in \eqref{eq:CLIP BOUND} holds for any choice of
\(r_0>0\); this resolution parameter will be optimized in the final regret
analysis in Theorem~\ref{thm:final high prob regret bound}.

\smallskip

Finally, we are ready to provide the regret bound.

\begin{theorem}\label{thm:final coupled for worst case analysis}
Assume Assumptions \ref{ass:lipschitz}-\ref{ass:initial} hold.  With probability at least $1-6\delta$, we have: 
\begin{eqnarray}
      {\rm Regret}(K)&\leq& e^{2} \sum_{h=1}^{H}\,\,\left(2g_{3}(\delta,\rho+D)Kr_{0}+g_{4}(\delta,\rho+D)\sum_{r \geq r_{0}, r\in \mathcal{R}}N_{r}(Z_{h}^{r,\rho})\frac{1}{r}\right)\nonumber\\
        &&+2e^{2}\sqrt{\widetilde{L}_{1}HK\Big(\Big(\frac{M_{p}K}{\delta}\Big)^{\frac{2m+2}{p}} +1\Big)\log\Big(\frac{2}{\delta}\Big)}+2K\kappa_{m+1}(\delta,\rho)\nonumber\\
&&+4\widetilde{C}_1\Big(\widetilde{L}_{3}+\rho^{m+1}+e^{2}\widetilde{L}_{2}H\Big(\frac{M_{p}K}{\delta}\Big)^{\frac{m+1}{p}} \Big)\left(\frac{M_{p}}{\rho^{p}}K+\sqrt{2K\log\Big(\frac{1}{\delta}\Big)}\right)\label{eq:final regret before order balance},
\end{eqnarray}
where $g_{3}$ is defined in \eqref{eq:g3 definition}, $g_{4}$ is defined in \eqref{eq:g4 definition},  $\widetilde{C}_1$ is defined in \eqref{eq:value function with any policy growth rate}, and $\kappa_{m+1}$ is defined in \eqref{eq:kappa1 and kappa m+1}.
\end{theorem}
The proof of Theorem~\ref{thm:final coupled for worst case analysis} is deferred to Appendix~\ref{app:final coupled for worst case analysis}. The bound separates the regret into two parts: the contribution from near-optimal regions, captured by the packing numbers \(N_r(Z_h^{r,\rho})\), and the additional errors caused by localization and unbounded martingale fluctuations. At this stage, the result should be understood as a pre-optimization bound, since its final order depends on the choice of the localization radius
\(\rho\) and the resolution threshold \(r_0\). We next optimize these parameters by balancing the competing terms in the bound.

\begin{theorem} \label{thm:final high prob regret bound}
 Take the same assumptions in Theorem \ref{thm:final coupled for worst case analysis} and set $\rho=M_{p}^{\frac{1}{p}}K^{\beta}$, $r_{0}=K^{\gamma}$. The optimal regret order on $K$ in \eqref{eq:final regret before order balance} is achieved as $1-\frac{p^{2}-(m+1)^{2}(z_{\max,c}+2)-(m+1)(2d_{\mathcal{S}}+2m+4)}{p(p+m+1)(z_{\max,c}+2)+p(2d_{\mathcal{S}}+2m+4)}$ if we
take 
\begin{eqnarray*}
    \beta=\frac{p+(m+1)(z_{\max,c}+2)}{p(p+m+1)(z_{\max,c}+2)+p(2d_{\mathcal{S}}+2m+4)},\quad \gamma=\frac{(2d_{\mathcal{S}}+2m+4)\beta_{2}-1}{z_{\max,c}+2}.
\end{eqnarray*}
 Then with probability at least $1-6\delta$, the following optimal regret bound holds that:
\begin{eqnarray}\label{eq:final regret with leading term shown}
{\rm Regret}(K) \lesssim \Gamma HK^{1-\frac{p^{2}-(m+1)^{2}(z_{\max,c}+2)-(m+1)(2d_{\mathcal{S}}+2m+4)}{p(p+m+1)(z_{\max,c}+2)+p(2d_{\mathcal{S}}+2m+4)}}{{\Bigg(\log\left(\frac{2HK^{2}}{\delta}\right)\Bigg)^{\frac{3m}{2}+2}}},
\end{eqnarray}
where $\lesssim$ omits constants that  are independent of $H,K$ and lower order terms; $\Gamma$ is a constant depends on $M_{p},\widetilde{L}_{1},C_{\max}, \overline{C}_{\max},\widetilde{C}_{1}$. 
\end{theorem}
The choices of \(\rho\) and \(r_0\) in Theorem \ref{thm:final high prob regret bound} balance the adaptive-partition error, the localization error, and the moment-dependent concentration terms. The resulting regret is sublinear in \(K\), with leading order
\[
\widetilde O\!\left(
H K^{\,1-\frac{p^2-(m+1)^2(z_{\max,c}+2)-(m+1)(2d_S+2m+4)}
{p(p+m+1)(z_{\max,c}+2)+p(2d_S+2m+4)}}
\right).
\]
This order shows explicitly how heavier-tailed initial distributions and faster reward growth slow learning, while as \(p\to\infty\) the exponent approaches \((z_{\max,c}+1)/(z_{\max,c}+2)\), recovering the bounded-state benchmark in terms of the episode number \(K\). See more discussion in Remark \ref{re:comparison}.

\begin{proof}

To find the optimal orders in \eqref{eq:final regret before order balance} with respect to $K$, we first apply the fact that $\frac{N_{r}(Z_{h}^{r,\rho})}{\rho^{d_{\mathcal{S}}}}\leq cr^{-z_{h,c}}$ and take $\rho=M_{p}^{\frac{1}{p}}K^{\beta}$, $r_{0}=K^{\gamma}$.

Then it is sufficient to solve the minimization problem of the following objective function $U(\beta,\gamma)$.

\begin{eqnarray}\label{eq:U}
    U(\beta,\gamma)&:=&\max\Big\{1+\gamma+(m+1)\beta, (2d_{\mathcal{S}}+3m+5)\beta-\gamma(z_{\max,c}+1), \\
 &&\qquad\quad \frac{1}{2}+\frac{m+1}{p}, \frac{1}{2}+(m+1)\beta, 1-(p-(m+1))\beta, 1+\frac{m+1}{p}-p\beta \Big\}.\nonumber
\end{eqnarray}

We analyze the problem under two regimes: {\bf (1)} $\beta \geq \frac{1}{p}$ and  {\bf (2)} $\beta < \frac{1}{p}$. 

\noindent \underline{Case {\bf (1)}}: In this regime, we can simplify \eqref{eq:U} as
$U(\beta,\gamma)=\max\Big\{1+\gamma+(m+1)\beta,(2d_{\mathcal{S}}+3m+5)\beta-\gamma(z_{\max,c}+1), \frac{1}{2}+(m+1)\beta\Big\}$. Clearly, over this region, the minimizer $(\beta_{1},\gamma_{1})$ satisfies the following equation:
\begin{eqnarray*}
    1+\gamma_{1}+(m+1)\beta_{1}&=&(2d_{\mathcal{S}}+3m+5)\beta_{1}-\gamma_{1}(z_{\max,c}+1),\\
    \beta_{1}&=&\frac{1}{p}.
\end{eqnarray*}
By straightforward calculations, we get $\gamma_{1}=\frac{2d_{\mathcal{S}}+2m+4-p}{p(z_{\max,c}+2)}$ and $U(\beta_{1},\gamma_{1})=1-\frac{(p-(m+1)(z_{\max,c}+4)-2d_{\mathcal{S}}-2)}{{p(z_{\max,c}+2)}}$.

\noindent \underline{Case {\bf (2)}}:
In this regime, we can simplify \eqref{eq:U} as
$U(\beta,\gamma)=\max\Big\{1+\gamma+(m+1)\beta,(2d_{\mathcal{S}}+3m+5)\beta-\gamma(z_{\max,c}+1), \frac{1}{2}+\frac{m+1}{p},1+\frac{m+1}{p}-p\beta\Big\}$. Then the minimum of $U(\cdot,\cdot)$ shall be $U(\beta_{2},\gamma_{2})$ where $(\beta_{2},\gamma_{2})$ satisfies: 
\begin{eqnarray*}
        1+\gamma_{2}+(m+1)\beta_{2}&=&(2d_{\mathcal{S}}+3m+5)\beta_{2}-\gamma_{2}(z_{\max,c}+1),\\
        1+\frac{m+1}{p}-p\beta_{2}&=&1+\gamma_{2}+(m+1)\beta_{2}.
\end{eqnarray*}
By straightforward calculations, we get 
$\beta_{2}=\frac{p+(m+1)(z_{\max,c}+2)}{p(p+m+1)(z_{\max,c}+2)+p(2d_{\mathcal{S}}+2m+4)}$, $\gamma_{2}=\frac{(2d_{\mathcal{S}}+2m+4)\beta_{2}-1}{z_{\max,c}+2}$ and $U(\beta_{2},\gamma_{2})=1-\frac{p^{2}-(m+1)^{2}(z_{\max,c}+2)-(m+1)(2d_{\mathcal{S}}+2m+4)}{p(p+m+1)(z_{\max,c}+2)+p(2d_{\mathcal{S}}+2m+4)}$. 

In addition, we can show that $U(\beta_{1},\gamma_{1})>U(\beta_{2},\gamma_{2})$. 


Therefore, the optimal leading order on $K$ is achieved at $1-\frac{p^{2}-(m+1)^{2}(z_{\max,c}+2)-(m+1)(2d_{\mathcal{S}}+2m+4)}{p(p+m+1)(z_{\max,c}+2)+p(2d_{\mathcal{S}}+2m+4)}$ if we
take $\beta=\frac{p+(m+1)(z_{\max,c}+2)}{p(p+m+1)(z_{\max,c}+2)+p(2d_{\mathcal{S}}+2m+4)} $ and $ \gamma=\frac{(2d_{\mathcal{S}}+2m+4)\beta_{2}-1}{z_{\max,c}+2}$. Combined with \eqref{eq:final regret before order balance}, we can verify that \eqref{eq:final regret with leading term shown} holds with probability at least $1-6\delta$. 
\end{proof}

\begin{remark}[Dependence on $H$]\label{rmk:dependence}
  Following the usual convention in \citep{domingues2021kernel,sinclair2023adaptive}, we suppress the dependence of the Lipschitz constants on the horizon $H$, and thus the dependence of $C_{\max}, \overline{C}_{\max}, M_p, \widetilde{C}_1, \widetilde{L}_{1}$ on $H$ in Theorem \ref{thm:final high prob regret bound}. In the bounded reward and bounded state space setting, this dependence can be removed by appropriately rescaling the system (see Lemma 2.4 in \citep{sinclair2023adaptive}). Extending such an argument to our framework with unbounded state spaces and reward functions, however, might be more  difficult.
\end{remark}

\begin{remark}[Comparison of our regret to the literature]\label{re:comparison}
Note that  $$1-\frac{p^{2}-(m+1)^{2}(z_{\max,c}+2)-(m+1)(2d_{\mathcal{S}}+2m+4)}{p(p+m+1)(z_{\max,c}+2)+p(2d_{\mathcal{S}}+2m+4)} \rightarrow\frac{z_{\max,c}+1}{z_{\max,c}+2}$$ as $p$ tends to infinity.  This suggests that if the initial distribution has all moments bounded, we recover the regret of the AdaMB algorithm proposed in \citep{sinclair2023adaptive} for bounded state space in terms of the episode number $K$.

A detailed comparison between our algorithms and those proposed in \citep{sinclair2023adaptive} is presented in Table \ref{table:regret}, where $z'_{\max,c}$  is defined in Definition 2.7 of \citep{sinclair2023adaptive}.

On one hand, our dependence on 
$H$ is linear, obtained by applying Lipschitz-type properties of the value functions. In contrast, \citep{sinclair2023adaptive} incurs a higher-order dependence on 
$H$, since their analysis relies on the fact that cumulative rewards over 
$H$ time steps are bounded by 
$H$. However, in both our work and theirs, the dependence of the Lipschitz constants on 
$H$ is masked. Consequently, the comparison in terms of the order of 
$H$ may not be fully accurate, and we therefore prefer to place less emphasis on it.
\begin{table}[h!]
\centering
\begin{tabular}{|c| c| c| c| c|} 
 \hline
\rm{AdaMB }  & \rm{AdaQL } & \rm{APL-Diffusion } & \rm{APL-Diffusion  }  \\ 
\citep{sinclair2023adaptive} &\citep{sinclair2023adaptive}   &(ours)  & (ours) $(p\mapsto \infty)$   \\ 
[0.5ex] 
 \hline
$H^{\frac{3}{2}}K^{\frac{{z'_{\max,c}}+\max\{d_{\mathcal{S}},2\}-1}{{z'_{\max,c}}+\max\{d_{\mathcal{S}},2\}}}$   & $H^{\frac{5}{2}}K^{\frac{{z'_{\max,c}}+1}{{z'_{\max,c}}+2}}$  & $HK^{1-\frac{p^{2}-(m+1)^{2}(z_{\max,c}+2)-(m+1)(2d_{\mathcal{S}}+2m+4)}{p(p+m+1)(z_{\max,c}+2)+p(2d_{\mathcal{S}}+2m+4)}}$ & $HK^{\frac{z_{\max,c}+1}{z_{\max,c}+2}}$\\ [1ex] 
 \hline
\end{tabular}
\caption{Comparison of the regret orders.}
\label{table:regret}
\end{table}
\end{remark}

\section{Numerical experiments}\label{sec:experiments}

We illustrate the performance of the APL-Diffusion Algorithm with two examples. 

\vspace{-5pt}
\subsection{A one-dimensional example}
We first illustrate the performance using a tractable one-dimensional problem. Let us take the state space as $\mathcal{S}=\mathbb{R}$ and the action space as $[0,10]$. 

 \paragraph{Set-up.} The experiment set-up is specified as follows. 
\begin{itemize}
    \item  Dynamics and reward: for $h\in [H-1]$, $\mu_h(x,a)=0.05-0.1x+0.01a$, $\sigma_h(x,a)=0.1$, $X_{1}=4$, $
R_h(x,a)\sim \NN((x-a)^2,0.01)$. 

 \item Model parameters: $H=10$, $K=2000$, $\rho=10$, $\forall h\in [H], \widetilde{C}_h=5, D=10\sqrt{2}, \Delta=1$.

 \item Initialization: For any
 $h\in [H], k\in [K]$, and $B \in\mathcal{P}_h^0$, we set
\begin{eqnarray}
    &&\mathcal{P}_h^0=\{[0,10]\times [0,10],[10,0]\times [0,10]\}, \,\,\overline{Q}_h^0(\cdot)= 1837.1, \,\,\overline{Q}_h^k(\bar{Z}^{\complement})=-505, \nonumber\\
    && \widetilde{V}_h^0(S)=1837.1,\,\, S=\Gamma_{\mathcal{S}}(B), \,\,\overline{V}_h^0(x) =5+5\|x\|^2 \mbox{ for } x\in \mathbb{R}^{d_{\mathcal{S}}}\nonumber.
\end{eqnarray}
\end{itemize}

 \paragraph{Adaptive partition and convergence.} In Figure \ref{fig:Adaptive Discretization}, our algorithm adaptively refines the partition granularity in regions where the underlying $Q_h^{*}$ values are high (with high confidence). Notably, the ground truth optimal action $a^*$ which is equal to 10 with high probability, unknown to the algorithm, falls within these finely partitioned regions, highlighting the algorithm’s effectiveness and superior performance in efficient discretization. In addition, Figure \ref{fig:Reward and Log-Log}-(a) shows that the estimated $V^{\widetilde{\pi}}$ rapidly converge to the optimal level, indicating a fast convergence rate of the algorithm.

 \paragraph{Regret order.} In Figure \ref{fig:Reward and Log-Log}-(b),we present the log-log plot of cumulative regret versus episode index, focusing on the regime where performance has stabilized. By fitting a linear regression model to the data, we estimate the regret order based on the slope of the fitted linear line. The estimated slope is 0.69 which is smaller than the worst case regret order of value $\frac{1+d_{\mathcal{S}}+d_{\mathcal{A}}}{2+d_{\mathcal{S}}+d_{\mathcal{A}}}=\frac{3}{4}$.

\begin{figure}[H]
        \centering
        \includegraphics[height=2.1in]{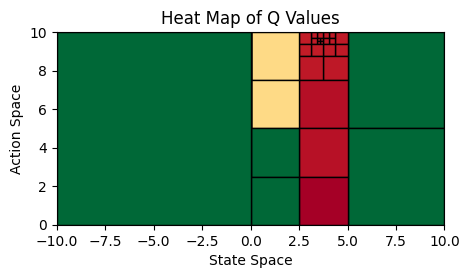}
        \caption{Demonstration of the adaptive partition from the APL-Diffusion algorithm for $\mathcal{P}_9^{2000}$.}
    \label{fig:Adaptive Discretization}
\end{figure}

\begin{figure}[H]
    \centering
    \begin{subfigure}[t]{0.4\textwidth}
        \centering
        \includegraphics[height=2.1in]{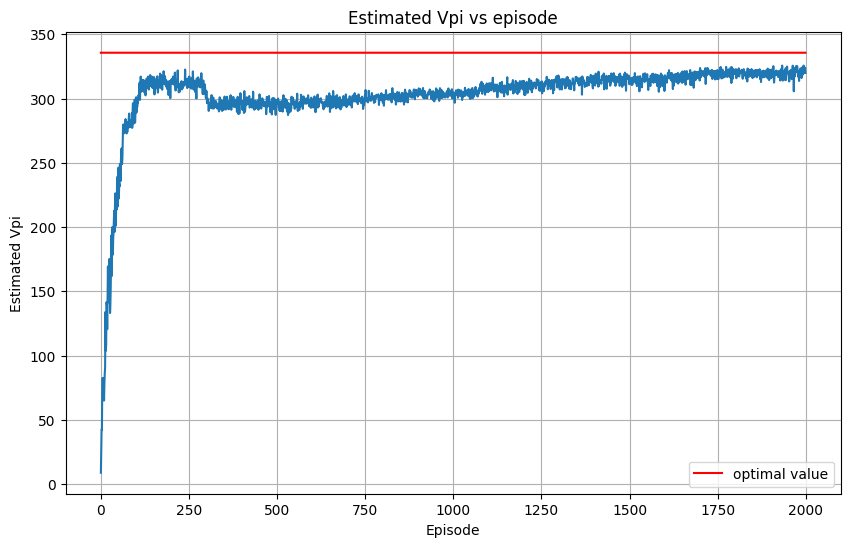}
        \caption{Estimated $V^{\widetilde{\pi}}$ (per episode) throughout training.}
    \end{subfigure}%
    ~ 
    \begin{subfigure}[t]{0.4\textwidth}
        \centering
        \includegraphics[height=2.in]{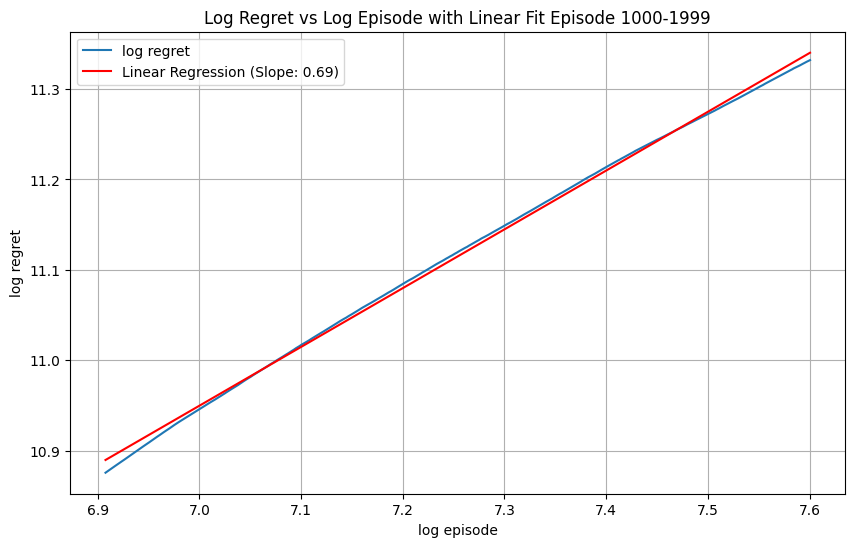}
        \caption{Estimating regret order via linear regression: log(cumulative regret) with respect to log(episode).}
    \end{subfigure}
    \caption{Algorithm performance.}
    \label{fig:Reward and Log-Log}
\end{figure}

\subsection{Mean-variance portfolio optimization}
We next evaluate the performance of the APL-Diffusion Algorithm in the context of mean-variance portfolio optimization with multiple assets. In this setting, the agent learns to determine the optimal allocation of wealth across a basket of securities, balancing expected return against portfolio variance.

We consider a market with $n$ assets. One of the assets is a risk-free asset with interest rate $r_0>0$. For $h\in [H-1]$, the price follows:
\begin{eqnarray*}
Y_{h+1}-Y_{h}=r_{0} Y_h\Delta,
\end{eqnarray*}
with initial condition $Y_{1}=y>0.$

The other five assets are stocks whose price processes follow, for $h\in [H-1]$,
\begin{eqnarray*}Z^i_{h+1}-Z^i_{h}=b^i Z_h^i\Delta+\sigma^i Z_h^iB_{h}^i\sqrt{\Delta},
\end{eqnarray*}
with initial condition $Z^i_1=z^i>0$. Here, $b^i>r_0$ is the appreciation rate  and $\sigma^i>0$ is the volatility of the stock $i$ ($i=1,\cdots,n-1$). 

Consider an investor who invests $a_h^i$ proportion of the wealth to stock $Z^i$ at time $h$, with the remaining proportion $1-\sum_{i=1}^{n-1}a_h^i$ to the risk-free asset, 
then the wealth process follows, for $h\in [H-1]$,
\begin{eqnarray*}
 X_{h+1}-X_{h}=\left(r_0X_h+\sum_{i=1}^{n-1}(b^i-r_0)X_ha_h^i\right) \Delta+\sum_{i=1}^{n-1}\sigma^iX_ha^i_hB_h^i\sqrt{\Delta}, 
\end{eqnarray*}
with initial condition $X_1=x_1>0.$
Here we restrict that $0\leq a_h^i\leq 1, \sum_{i=1}^{n-1}a_h^i\leq 1$.

The reward function is set as
\begin{eqnarray*}
    R_h(x,a)=\delta_0, \mbox{ for } h\in[H-1], \mbox{ and } R_{H}(x,a)=\delta_{(\nu-x)x}.
\end{eqnarray*} 
\begin{remark}
 It is worth emphasizing that in this experiment setting, the volatility can become arbitrarily small, and the drift and volatility coefficients may fail to be Lipschitz continuous  with respect to the action variable. These conditions fall outside the scope of Assumptions \ref{ass:lipschitz}, which are required for our theoretical regret guarantees. However, empirical results demonstrate that the APL-Diffusion Algorithm maintains strong performance despite the violation of these assumptions. This suggests that the algorithm exhibits robustness and practical effectiveness beyond the confines of the theoretical framework.
\end{remark}

\paragraph{Set-up.} We specify the parameters governing the system dynamics and reward function, along with other model configurations and initialization settings, as follows.
\begin{itemize}
\item We take $n=6$ in this example with $5$ risky assets and $1$ risk-free asset.
    \item     Dynamics and reward: 
    $r_0=0.05, b^i=0.15, \sigma^i =0.2, \nu=10, X_1=2, (i=1,\cdots,5).$
   \item Model parameters: $H=30$, $K=2000$, $\rho=10$, $\forall h\in [H], \widetilde{C}_h=1$,  $\Delta=\frac{1}{52}$. 
\item  Initialization:\footnote{ Note that in this application, the action domain is not a hypercube as assumed in the earlier section. Consequently, both the initialization and block-splitting procedures are modified accordingly. We define the initial partition as $\mathcal{P}_{h}^0=\{[0,\rho]\times \mathcal{A}, [-\rho,0]\times \mathcal{A}\}$ and initialize the estimators as  $\overline{Q}_h^0(\cdot)= \widetilde{C}_h(1+\rho^{m+1}) $ and $\widetilde{V}_h^0(.)=\widetilde{C}_h(1+\rho^{m+1})$. For $\overline{Q}_h^k(\bar{Z}^{\complement})$ and $\overline{V}_h^0$, we adopt the same values as in \eqref{eq:initial value for estimation}. When splitting a block, we divide the corresponding one-dimensional state space into two halves, and partition the five-dimensional isosceles right simplex action space into thirty-two isosceles right simplex of equal size.}
$\forall h\in [H], \forall k\in [K], B \in\mathcal{P}_h^0$,
\begin{eqnarray*}
      \mathcal{P}_{h}^0&=&\{[0,10]\times \mathcal{A}, [-10,0]\times \mathcal{A}\}, \mbox{where }  \mathcal{A}=\left\{a:a_i\geq 0, \sum_{i=1}^{5}a_i\leq 1, i=1,2,3,4,5\right\},  \\
     \overline{Q}_h^0(\cdot)&=& 101,  \,\,\overline{Q}_h^k(\bar{Z}^{\complement})=-101, \,\,
      \widetilde{V}_h^0(S)=101, S=\Gamma_{\mathcal{S}}(B), \overline{V}_h^0(x) = \|x\|^{2}+101 \mbox{ for } x\in \mathbb{R}^{d_{\mathcal{S}}}.
\end{eqnarray*}
\end{itemize}

\begin{figure}[H]
    \centering
    \begin{subfigure}[t]{0.4\textwidth}
        \centering
        \includegraphics[height=2.in]{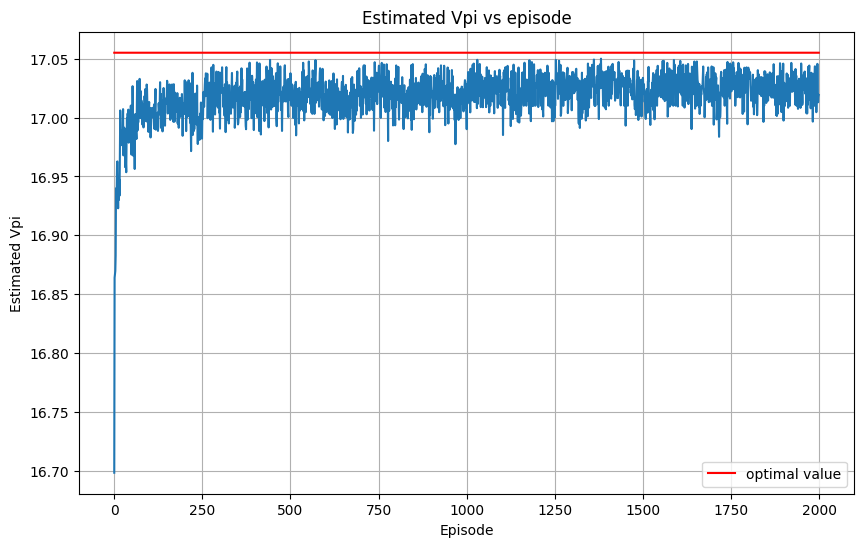}
        \caption{Estimated $V^{\widetilde{\pi}}$ (per episode) throughout training.}
    \end{subfigure}%
    ~
    \begin{subfigure}[t]{0.4\textwidth}
        \centering
        \includegraphics[height=2.in]{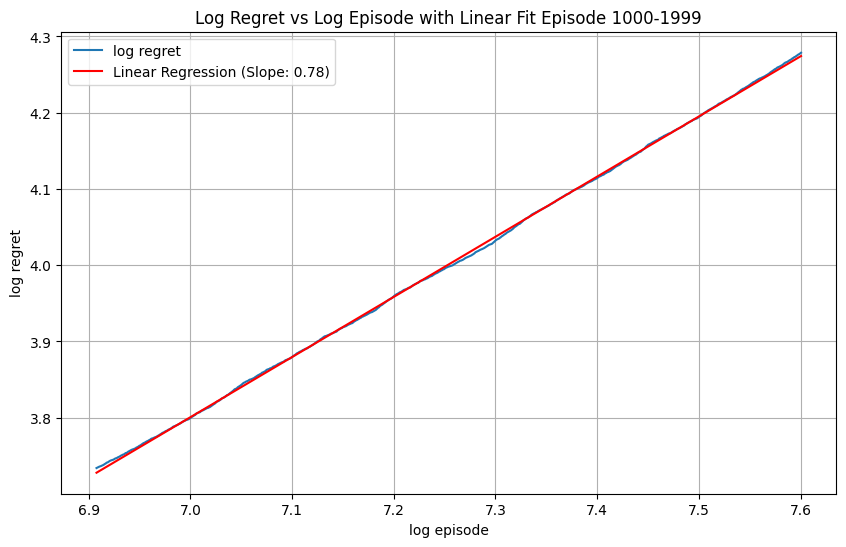}
        \caption{Estimating regret order via linear regression: log(cumulative regret) with respect to log(episode).}
    \end{subfigure}
    \caption{Algorithm performance.}
    \label{fig:1-Reward and Log-Log}
\end{figure}

\paragraph{Reward convergence and regret order.} In Figure \ref{fig:1-Reward and Log-Log}-(a), we see the rapid convergence of estimated $V^{\widetilde{\pi}}$ towards the optimal value. In Figure \ref{fig:1-Reward and Log-Log}-(b), we present the log-log plot of cumulative regret versus episode index, focusing on the regime where performance has stabilized. By fitting a linear regression model to the data, we estimate the regret order based on the slope of the fitted linear line. The estimated slope is 0.78, which is lower than the worst-case theoretical regret bound with value $\frac{1 + d_{\mathcal{S}} + d_{\mathcal{A}}}{2 + d_{\mathcal{S}} + d_{\mathcal{A}}} = \frac{7}{8}$. 
This indicates an improved empirical performance relative to the worst-case scenario guarantee.

\section{Conclusion}
This work develops a model-based learning framework for episodic control in diffusion-type systems, with unbounded state space, continuous action space, and polynomially growing reward functions. This setting has broad class of applications in finance and economics but less understood in the learning literature. The proposed algorithm incorporates a novel adaptive partitioning scheme, specifically designed to address the challenges posed by the unboundedness and variability of the underlying dynamics. The analytical framework departs significantly from existing approaches in the literature, which typically rely on boundedness assumptions and compact state spaces.  We derive regret bounds for the algorithm that recover classical rates in bounded settings and substantially extend their applicability to more general settings. Finally, we validate the effectiveness of our approach through numerical experiments, including applications to high-dimensional problems such as multi-asset mean-variance portfolio selection.

\bibliographystyle{plainnat}
\bibliography{references}

\newpage
\appendix
\begin{center}
{\huge \bf Appendix}
\end{center}
\section{Technical details in Section \ref{sec:set-up}}

\subsection{Proof of Proposition \ref{thm:Mp estimation}}\label{app:proof-2-2}

\begin{proof}
We first prove $\mathbb{E}[\|X_2\|^p]<\breve{c}_1(1+\mathbb{E}[\|X_1\|^p])$ for some constant $\breve{c}_1$. 
By the dynamics of state process, we have 
\begin{eqnarray*}
    \|X_2\|&\le& \|X_1\|+\|\mu_1(X_1, A_1)\|\Delta+\|\sigma_1(X_1, A_1)\|\|B_1\|\sqrt{\Delta}\\
    &\le& \|X_1\|+(L_{0}+\ell_{\mu}(\|X_1\|+\bar{a}))\Delta+(L_{0}+\ell_{\sigma}(\|X_1\|+\bar{a}))\|B_1\|\sqrt{\Delta}\\
    &=&(1+\ell_{\mu}\Delta+\ell_{\sigma}\|B_1\|\sqrt{\Delta})\|X_1\|+(\ell_{\mu}\bar{a}+L_{0})\Delta+(\ell_{\sigma}\bar{a}+L_{0})\|B_1\|\sqrt{\Delta}.
\end{eqnarray*}
So for any $p\ge 1$, there exists a constant $\breve{c}_3$ depending on $p$ only, such that 
\begin{eqnarray*}
    \|X_2\|^p&\le& \breve{c}_3\left((1+\ell_{\mu}\Delta+\ell_{\sigma}\|B_1\|\sqrt{\Delta})^p\|X_1\|^p+((\ell_{\mu}\bar{a}+L_{0})\Delta+(\ell_{\sigma}\bar{a}+L_{0})\|B_1\|\sqrt{\Delta})^p\right)\\
    &\le& \breve{c}_3\left(\breve{c}_3(1+\ell_{\mu}^p\Delta^p+\ell_{\sigma}^p\|B_1\|^p\Delta^{p/2} )\|X_1\|^p+\breve{c}_3((\ell_{\mu}\bar{a}+L_{0})^p\Delta^p+(\ell_{\sigma}\bar{a}+L_{0})^p\|B_1\|^p\Delta^\frac{p}{2}) \right).
\end{eqnarray*}
Together with the fact that $B_1$ is independent with $X_1$, we have 
\begin{eqnarray}\label{eq:moment estimation for the next step}
\mathbb{E}[\|X_2\|^p]&\le& 
\breve{c}_3^2\Big((1+\ell_{\mu}^p\Delta^p+\ell_{\sigma}^p\mathbb{E}[\|B_1\|^p]\Delta^{p/2} )\mathbb{E}[\|X_1\|^p] \nonumber\\&&+(\ell_{\mu}\bar{a}+L_{0})^p\Delta^p+(\ell_{\sigma}\bar{a}+L_{0})^p\mathbb{E}[\|B_1\|^p]\Delta^\frac{p}{2}\Big)\nonumber\\
&\le& \breve{c}_4(1+\mathbb{E}[\|X_1\|^p]) ,
\end{eqnarray}
for some constant $\breve{c}_4$ depending only on $p, \Delta, \ell_{\mu}, \ell_{\sigma},\bar{a},L_{0}$.

By the same argument, we have 
$\mathbb{E}[\|X_3\|^p]\le \breve{c}_4(1+\mathbb{E}[\|X_2\|^p)\le \breve{c}_5(1+\mathbb{E}[\|X_1\|^p])$
for some $\breve{c}_5$ depending only on $\breve{c}_4,$
as well as $\mathbb{E}[\|X_{h}\|^p]\le \breve{c}_{h+2}(1+\mathbb{E}[\|X_1\|^p])$ for some $\breve{c}_{h+3}$ depending only on $p, \Delta, \ell_{\mu}, \ell_{\sigma},\bar{a},L_{0}$.

Finally, 
$$\mathbb{E}\left[\sup_{h\in [H]}\|X_h\|^p\right]<\sum_{h\in[H]}\mathbb{E}[\|X_h\|^p]\le M(1+\mathbb{E}[\|X_1\|^p]),$$
where $M$ only depends on $p, \Delta, \ell_{\mu}, \ell_{\sigma},\bar{a},L_{0}$ and $H$.

\end{proof}

\subsection{Proof of Proposition \ref{thm:value function local Lipschitz}}\label{app:proof-2-5}

\begin{proof} 
In this proof, we will often use the fact that for any functions $f$ and $g$ on the same domain, we have
$$|\max_x f(x)-\max_y g(y)|\le \max_x |f(x)-g(x)|;$$ 
and for any nonnegative real numbers $a,b,c$, any integer $m>0$, 
$$(a+b+c)^m\le k_3(m)(a^m+b^m+c^m)$$ 
for some function $k_3(\cdot)$.

We prove the statement by backward induction. For the last step $h=H$, we have
    \begin{eqnarray*}
        \left|V^*_{H}(x_1)-V_{H}^{*}(x_2)\right|&=&\left|\max_{a\in A} \bar{R}_{H}(x_1,a)-\max_{b\in A}\bar{R}_{H}(x_2,b)\right|\\
        &\le&\max_{a\in A}|\bar{R}_H(x_1,a)-\bar{R}_H(x_2,a)|\\
        &\le& \ell_{r} \Big(1+\|x_1\|^m+\|x_2\|^m\Big)\,\Big(\|x_1-x_2\|\Big).
    \end{eqnarray*}
Let 
\begin{eqnarray}\label{eq:CH value}
 \overline{C}_{H}:=\ell_{r},
\end{eqnarray}
with $\ell_{r}$ defined in \eqref{ass:expected reward local lipschitz}.

Now suppose the inequality \eqref{eq:value function local Lipschitz} holds for $h=j>0$. We study the inequality for $h=j-1$. For any state $x\in\mathbb{R}^{d_{\mathcal{S}}}$ and any action $a\in \mathcal{A}$, denote by $X^{(x, a)}:=x+\mu_{j-1}(x,a)\Delta +\sigma_{j-1}(x,a)B_{j-1} \sqrt{\Delta}$. 

From \eqref{eq:bellman V star}, we know  that
$V^*_{j-1}(x)=\max_{a\in A}\{\bar{R}_{j-1}(x, a)+\mathbb{E}[V_j^*(X^{(x,a)})]\}$. Hence,
\begin{eqnarray}
    &&|V^*_{j-1}(x)-V^*_{j-1}(y)|\nonumber\\
    &\le& \max_{a\in A}|\bar{R}_{j-1}(x,a)+\mathbb{E}[V^*_j(X^{(x,a)})]-\bar{R}_{j-1}(y,a)-\mathbb{E}[V^*_{j}(X^{(y,a)})]|\nonumber\\
    &\le&\max_{a\in A}|\bar{R}_{j-1}(x,a)-\bar{R}_{j-1}(y,a)|+\max_{a\in A} \mathbb{E}[|V^*_j(X^{(x,a)})-V^*_{j}(X^{(y,a)})|]. \label{ieq:difV*} 
\end{eqnarray}

The first term in \eqref{ieq:difV*} is bounded by 
$\ell_{r}(1+\|x\|^m+\|y\|^m)\|x-y\|$, so it suffices to estimate the second term in \eqref{ieq:difV*}.

By the induction hypothesis, we have 
\begin{eqnarray}\label{ieq:difV*point}
    |V^*_j(X^{(x,a)})-V^*_{j}(X^{(y,a)})|
\le \overline{C}_j (1+\|X^{(x,a)}\|^m+\|X^{(y,a)}\|^m)\|X^{(x,a)}-X^{(y,a)}\|.
\end{eqnarray}

Note that 
\begin{eqnarray}
\|X^{(x,a)}\|^m 
    &\le& \left(\|x\|+(L_{0}+\ell_{\mu}(\|x\| +\bar{a}))\Delta+(L_{0}+\ell_{\sigma}(\|x\|+\bar{a}))\|B_{j-1}\|\sqrt{\Delta}\right)^m \nonumber\\
&\le&   k_3(m)\left(1+\ell_{\mu}\Delta)^m\|x\|^m+((L_{0}+\bar{a})\ell_{\mu}\Delta)^m+(L_{0}+\ell_{\sigma}(\|x\|+\bar{a}))^m\|B_{j-1}\|^m\right)\nonumber\\
&=& \breve{c}_1+\breve{c}_2\|B_{j-1}\|^m+(\breve{c}_3+\breve{c}_4\|B_{j-1}\|^m)\|x\|^m \label{ineq:xupperbound},
\end{eqnarray}
where $\breve{c}_i$ are all constant depending only on $m, \ell_{\mu},\ell_{\sigma},\Delta, L_{0}, \bar{a}$. Hence, we also have   
\begin{eqnarray}
    &&\|X^{(x,a)}-X^{(y,a)}\|\nonumber\\
&\le&\|x-y\|+\|\mu_{j-1}(x,a)-\mu_{j-1}(y,a)\|\Delta +\|\sigma_{j-1}(x,a)-\sigma_{j-1}(y,a)\|\sqrt{\Delta }\|B_{j-1}\|\nonumber\\
&\le&(1+\ell_{\mu}\Delta)\|x-y\|+\ell_{\sigma}\|x-y\|\sqrt{\Delta}\|B_{j-1}\|.  \label{ineq:1diff}
\end{eqnarray}

  By \eqref{ineq:xupperbound} and \eqref{ineq:1diff}, we have 
\begin{eqnarray}
 && \|X^{(x,a)}\|^m\|X^{(x,a)}-X^{(y,a)}\|  \nonumber\\
 &\le&\Big((\breve{c}_1+\breve{c}_2\|B_{j-1}\|^m)+( \breve{c}_3+\breve{c}_4\|B_{j-1}\|^m))\|x\|^m\Big) (1+\ell_{\mu}\Delta+\ell_{\sigma}\sqrt{\Delta}\|B_{j-1}\|)\|x-y\|\nonumber\\
 &=&\|x-y\|\Big(f_1(\|B_{j-1}\|)+f_2(\|B_{j-1}\|)\|x\|^m\Big),\label{ieq:product}
\end{eqnarray}
where $f_1(z)=\breve{c}_5+\breve{c}_6 z+\breve{c}_7 
z^m+\breve{c}_8 z^{m+1} $
and $f_2(z)=\breve{c}_9+\breve{c}_{10} z+\breve{c}_{11} z^m+\breve{c}_{12} z^{m+1}$, 
with $\breve{c}_i$ depends only on $\overline{C}_j, m, \bar{a}, \Delta, \ell_{\mu}, \ell_{\sigma}, L_{0}$. 

By the fact that $\mathbb{E}\big[\|B_{j-1}\|^q\big]$ is a finite constant for any integer $q$, we have 
\begin{eqnarray}\label{ineq:xmdiff}
\mathbb{E}\big[\|X^{(x,a)}\|^m\|X^{(x,a)}-X^{(y,a)}\|\big]\le \breve{c}_{13}(1+ \|x\|^m) \|x-y\|.
\end{eqnarray}
for some $\breve{c}_{13}$ only depending on $\overline{C}_j, m, \bar{a}, \Delta, \ell_{\mu}, \ell_{\sigma}, L_{0}$. 

Similarly, we have 
\begin{eqnarray}\label{ineq:ymdiff}
\mathbb{E}[\|X^{(y,a)}\|^m\|X^{(x,a)}-X^{(y,a)}\|]\le \breve{c}_{13}(1+ \|x\|^m) \|x-y\|.
\end{eqnarray}
Applying \eqref{ineq:1diff}, \eqref{ineq:xmdiff} and \eqref{ineq:ymdiff} to \eqref{ieq:difV*point}, we get 

\begin{eqnarray}\label{ineq:xymdiff}
    \mathbb{E}\left[V^*_{j-1}(X^{(x,a)})-V^*_{j-1}(X^{(y,a)})\right]\le \breve{c}_{14}(1+\|x\|^m+\|y\|^m)\|x-y\|,
\end{eqnarray}
with $\breve{c}_{14}=\overline{C}_j\Big(2\breve{c}_9+ (1+\ell_{\mu}\Delta+\ell_{\sigma}\sqrt{\Delta}\mathbb{E}[\|B_{j-1}\|])\Big)$.

Finally, let 
\begin{eqnarray}\label{eq:Ck-1 value}
    \overline{C}_{j-1}:=\ell_{r}+\breve{c}_{14},
\end{eqnarray}
and we have shown that
\begin{eqnarray}
    |V^*_{j-1}(x)-V^*_{j-1}(y)|\leq \overline{C}_{j-1}(1+\|x\|^m+\|y\|^m)\|x-y\|. 
\end{eqnarray}

\end{proof}

\subsection{Proof of Proposition \ref{thm: value function with any policy growth rate}}\label{app:proof-2-6}

\begin{proof}
We prove the statement by backward induction.

For the last step $h=H$,
\begin{eqnarray}\label{eq:value function growth H}
    |V_{H}^{\pi}(x)|&=&|\mathbb{E}_{a\sim \pi_{H}(x)}[\bar{R}_H(x,a)]|\nonumber\\
    &\leq&\mathbb{E}_{a\sim \pi_{H}(x)}[|\bar{R}_H(x,a)-\bar{R}_H(0,0)|+|\bar{R}_H(0,0)|]\nonumber\\
    &\leq&\ell_{r}(\|x\|^m+1)(\|x\|+\bar{a})+ L_{0}\nonumber\\
&\leq&\ell_{r}\|x\|^{m+1}+\ell_{r}\bar{a}\left(\frac{m}{m+1}\|x\|^{m+1}+\frac{1}{m+1}\right)+\ell_{r}\left(\frac{1}{m+1}\|x\|^{m+1}+\frac{m}{m+1}\right)+\ell_{r}\bar{a}+L_{0}\nonumber\\
    &\leq&\widetilde{C}_{H}(\|x\|^{m+1}+1),
\end{eqnarray}
where $\widetilde{C}_{H}:=\max\{\ell_{r}(1+\frac{\bar{a}m+1}{m+1}),\ell_{r}(\frac{\bar{a}+m}{m+1}+\bar{a})+L_{0}\}$. Here, the first equality holds by \eqref{eq:bellman}, the second inequality holds by Assumption \ref{ass:lipschitz}, and the third inequality holds due to the fact that $\|x\|^m\leq \frac{m}{m+1}\|x\|^{m+1}+\frac{1}{m+1}$ and $\|x\| \leq \frac{1}{m+1}\|x\|^{m+1}+\frac{m}{m+1}$ . Now suppose the inequality \eqref{eq:value function with any policy growth rate} holds for $h=j>0$. We now prove the inequality for $h=j-1$:
\begin{eqnarray}\label{eq:value function growth k-1}  
  |V_{j-1}^{\pi}(x)|&\leq& \mathbb{E}_{a\sim \pi_{j-1}(x)}\Big[|\bar{R}_{j-1}(x,a)|\Big]+\mathbb{E}_{X_{j} \sim T_{j-1}(\cdot|x,a),a\sim \pi_{j-1}(x)}\Big[\Big|V_{j}^{\pi}(X_{j})\Big| |X_{j-1}=x\Big]\nonumber\\
   &\leq& \ell_{r}(\|x\|^m+1)(\overline{a}+\|x\|)+ L_{0}+\mathbb{E}_{X_{j} \sim T_{j-1}(\cdot|x,a),a\sim \pi_{j-1}(x)}\Big[\widetilde{C}_j(\|X_j\|^{m+1}+1) |X_{j-1}=x\Big] \nonumber\\
   &\leq&  \ell_{r}\|x\|^{m+1}+\ell_{r}\bar{a}\left(\frac{m}{m+1}\|x\|^{m+1}+\frac{1}{m+1}\right)+\ell_{r}\left(\frac{1}{m+1}\|x\|^{m+1}+\frac{m}{m+1}\right)+\ell_{r}\bar{a}+L_{0}\nonumber\\
    &&+\widetilde{C}_j+ \widetilde{C}_j \breve{c}_4 (1+\|x\|^{m+1})\nonumber\\
   &\leq& \widetilde{C}_{j-1}(\|x\|^{m+1}+1),
\end{eqnarray}
where $\breve{c}_4$ depends only on $m, \Delta, \ell_{\mu}, \ell_{\sigma},\bar{a},L_{0}$, and we define $\widetilde{C}_{j-1}:=\max\{\ell_{r}(1+\frac{\bar{a}m+1}{m+1})+\widetilde{C}_j \breve{c}_4,\ell_{r}(\frac{\bar{a}+m}{m+1}+\bar{a}+\widetilde{C}_j\}$. Here, the first inequality holds due to \eqref{eq:bellman} and triangle inequality, the second inequality holds by Assumption \ref{ass:lipschitz}. In addition, the third inequality holds due to the fact that $\|x\|^m\leq \frac{m}{m+1}\|x\|^{m+1}+\frac{1}{m+1}$, $\|x\| \leq \frac{1}{m+1}\|x\|^{m+1}+\frac{m}{m+1}$ and an argument simular to \eqref{eq:moment estimation for the next step} such that
\begin{eqnarray*}
    \mathbb{E}_{X_{j} \sim T_{j-1}(\cdot|x,a),a\sim \pi_{j-1}(x)}\Big[\|X_j\|^{m+1}|X_{j-1}=x\Big] \leq  \breve{c}_4 (1+\|x\|^{m+1}).
\end{eqnarray*}
\end{proof}

\subsection{Local lipschitz property for optimal Q function}

In this subsection, we establish the local Lipschitz property of the optimal $Q$-function. This result plays an important role in the proof of Lemma \ref{lemma:theorem F.3 conclusion}. The proof follows the same general strategy as that for the Lipschitz property of the optimal value function. For completeness, we present the full argument here.

\begin{proposition}
Suppose Assumptions \ref{ass:lipschitz} and \ref{ass:expected reward local lipschitz} hold. Then for all $ h\in [H]$, it holds that 
\begin{eqnarray}\label{eq:Q local lipschitz}
    |Q_h^{*}(x_1,a_1)-Q_h^{*}(x_2,a_2)|\leq 2\overline{C}_h (1+\|x_1\|^m+\|x_2\|^m)(\|x_1-x_2\|+\|a_1-a_2\|),
\end{eqnarray}
where $\overline{C}_h$ is defined in \eqref{eq:value function local Lipschitz}.
\end{proposition}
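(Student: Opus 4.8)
The plan is to prove the local Lipschitz estimate for $Q_h^*$ by backward induction on $h$, mirroring the structure of the proof of Proposition~\ref{thm:value function local Lipschitz} but now tracking the dependence on the action variable as well. First I would split the difference $|Q_h^*(x_1,a_1)-Q_h^*(x_2,a_2)|$ into a reward part and a transition part via the Bellman equation \eqref{eq:bellman Q star}: namely bound it by $|\bar R_h(x_1,a_1)-\bar R_h(x_2,a_2)|$ plus $\big|\mathbb{E}_{X'\sim T_h(\cdot|x_1,a_1)}[V_{h+1}^*(X')] - \mathbb{E}_{X'\sim T_h(\cdot|x_2,a_2)}[V_{h+1}^*(X')]\big|$. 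The reward part is handled directly by Assumption~\ref{ass:expected reward local lipschitz}, contributing $\ell_r(1+\|x_1\|^m+\|x_2\|^m)(\|x_1-x_2\|+\|a_1-a_2\|)$.

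For the transition part, I would use the synchronous coupling: write $X^{(x_i,a_i)} = x_i + \mu_h(x_i,a_i)\Delta + \sigma_h(x_i,a_i)B\sqrt{\Delta}$ with the \emph{same} Gaussian $B$ for $i=1,2$, so that the expectation difference is dominated by $\mathbb{E}\big[|V_{h+1}^*(X^{(x_1,a_1)}) - V_{h+1}^*(X^{(x_2,a_2)})|\big]$. Then apply the (already established) local Lipschitz property \eqref{eq:value function local Lipschitz} of $V_{h+1}^*$, which turns this into $\overline C_{h+1}\,\mathbb{E}\big[(1+\|X^{(x_1,a_1)}\|^m+\|X^{(x_2,a_2)}\|^m)\|X^{(x_1,a_1)}-X^{(x_2,a_2)}\|\big]$. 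Using Assumption~\ref{ass:lipschitz}, $\|X^{(x_1,a_1)}-X^{(x_2,a_2)}\| \le (1+\ell_\mu\Delta)\|x_1-x_2\| + \ell_\mu\Delta\|a_1-a_2\| + \ell_\sigma\sqrt{\Delta}\|B\|(\|x_1-x_2\|+\|a_1-a_2\|)$, and the linear-growth bound $\|X^{(x_i,a_i)}\|^m$ is controlled by a polynomial in $\|x_i\|$, $\bar a$, $\|B\|$ (exactly as in \eqref{ineq:xupperbound}). Taking expectations, using finiteness of all moments of $\|B\|$ and Cauchy–Schwarz / Jensen to separate the product, gives a bound of the form $\breve c\,(1+\|x_1\|^m+\|x_2\|^m)(\|x_1-x_2\|+\|a_1-a_2\|)$ with $\breve c$ depending on $\overline C_{h+1}$ and the model constants.

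Combining the two parts yields $|Q_h^*(x_1,a_1)-Q_h^*(x_2,a_2)| \le (\ell_r + \breve c)(1+\|x_1\|^m+\|x_2\|^m)(\|x_1-x_2\|+\|a_1-a_2\|)$, and the key observation that closes the induction is that the constant $\ell_r + \breve c$ appearing here is precisely the recursive constant $\overline C_h$ defined in the proof of Proposition~\ref{thm:value function local Lipschitz} (see \eqref{eq:CH value} and \eqref{eq:Ck-1 value}) — indeed $\overline C_h$ was built exactly from $\ell_r$ plus the same coupling estimate. Hence the bound with constant $\overline C_h$ holds, and a fortiori the stated bound with $2\overline C_h$; the factor $2$ gives slack and is harmless. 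For the base case $h=H$, $Q_H^*(x,a)=\bar R_H(x,a)$, so Assumption~\ref{ass:expected reward local lipschitz} directly gives \eqref{eq:Q local lipschitz} with constant $\ell_r = \overline C_H \le 2\overline C_H$.

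The main obstacle is bookkeeping rather than conceptual: one must verify that the moment estimates in the transition term produce exactly (or at least are dominated by) the same constant $\overline C_h$ used for $V_h^*$, so that no blow-up in $h$ beyond what is already absorbed in $\overline C_h$ occurs. This requires care in handling the cross terms $\|X^{(x_i,a_i)}\|^m \cdot \|X^{(x_1,a_1)}-X^{(x_2,a_2)}\|$ — in particular ensuring the $\|a_1-a_2\|$ contributions are grouped cleanly with the $\|x_1-x_2\|$ contributions — but these are the same computations already carried out in Appendix~\ref{app:proof-2-5}, now with the action increment carried along, and the factor of $2$ in the statement is exactly the cushion that makes the argument go through without re-optimizing constants.
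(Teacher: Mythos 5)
Your argument is correct, but it follows a genuinely different route from the paper's. The paper proves this by inserting the intermediate pair $(x_2,a_1)$: it bounds $|Q_{h}^*(x_1,a_1)-Q_{h}^*(x_2,a_1)|+|Q_{h}^*(x_2,a_1)-Q_{h}^*(x_2,a_2)|$, reuses the already-displayed state-variation estimate \eqref{ineq:xymdiff} for the first piece, and redoes the synchronous-coupling computation with only the action varying for the second piece; that second piece produces a bound $2\breve c_{14}(1+\|x_2\|^m)\|a_1-a_2\|$, and the factor $2$ in the statement is exactly what absorbs this extra $\breve c_{14}$ relative to $\overline C_h=\ell_r+\breve c_{14}$ — in the paper's argument it is not mere slack. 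You instead couple $X^{(x_1,a_1)}$ and $X^{(x_2,a_2)}$ directly with the same Gaussian, varying state and action simultaneously; since Assumption \ref{ass:lipschitz} is joint in $(x,a)$, your difference bound $\|X^{(x_1,a_1)}-X^{(x_2,a_2)}\|\le(1+\ell_\mu\Delta+\ell_\sigma\sqrt\Delta\|B\|)(\|x_1-x_2\|+\|a_1-a_2\|)$ is formally identical to the one in Appendix \ref{app:proof-2-5} with $\|x-y\|$ replaced by the joint increment, so the same moment bookkeeping yields a constant of the form $\ell_r+\breve c$ with $\breve c$ essentially equal to $\breve c_{14}$, and the factor $2$ is then genuine cushion for you. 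Your route is arguably cleaner (one coupling, no intermediate point, and in principle the sharper constant $\overline C_h$ rather than $2\overline C_h$), while the paper's buys verbatim reuse of \eqref{ineq:xymdiff} at the cost of the triangle-inequality detour. Two small caveats: your assertion that the constant is "precisely" $\overline C_h$ should be softened to "dominated by a constant of the same form, hence by $2\overline C_h$" unless you re-verify the bookkeeping with the action increment carried through; and note that in both your write-up and the paper the "backward induction" is vacuous — the induction hypothesis on $Q^*$ is never used, since the step-$h$ bound follows directly from the Lipschitz property of $V_{h+1}^*$, so you could drop the induction framing entirely except for the base case remark.
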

\begin{proof}
We prove the statement by backward induction. For the last step $h=H$, we have
\begin{eqnarray*}
    |Q_H^{*}(x_1,a_1)-Q_H^{*}(x_2,a_2)|& =&|\bar{R}_{H}(x_1,a_1)-\bar{R}_{H}(x_2,a_2)|\nonumber\\
    &\leq&\ell_{r}(1+\|x_1\|^m+\|x_2\|^m)(\|x_1-x_2\|+\|a_1-a_2\|)\nonumber\\
    &\leq&2\overline{C}_{H}(1+\|x_1\|^m+\|x_2\|^m)(\|x_1-x_2\|+\|a_1-a_2\|),\nonumber
\end{eqnarray*}
where the first inequality holds due to Assumption \ref{ass:expected reward local lipschitz} and the second inequality holds due to the fact that $\overline{C}_{H}=\ell_{r}$ by \eqref{eq:CH value}.

Then suppose the inequality \eqref{eq:Q local lipschitz} holds for $h=j>1$. We then study the inequality for $h=j-1$.

For any state $x$ at time $j-1$ and any action $a\in \mathcal{A}$, denote by $X^{(x, a)}:=x+\mu_{j-1}(x,a)\Delta +\sigma_{j-1}(x,a)B_{j-1} \sqrt{\Delta}$ the next state.

By {{\eqref{eq:bellman Q star}}} we know 
$Q^*_{j-1}(x,a)=\bar{R}_{j-1}(x, a)+\mathbb{E}[V_j^*(X^{(x,a)})].$

Therefore
\begin{eqnarray}\label{ineq:Q decomposition}  
        &&|Q_{j-1}^*(x_1,a_1)-Q_{j-1}^*(x_2,a_2)|\nonumber\\
       &\leq& |Q_{j-1}^*(x_1,a_1)-Q_{j-1}^*(x_2,a_1)|+|Q_{j-1}^*(x_2,a_1)-Q_{j-1}^*(x_2,a_2)|\nonumber\\
       &\leq& \underbrace{|\bar{R}_{j-1}(x_1,a_1)-\bar{R}_{j-1}(x_2,a_1)|+|\bar{R}_{j-1}(x_2,a_1)-\bar{R}_{j-1}(x_2,a_2)|}_{(I)}\nonumber\\
        &&+\underbrace{\mathbb{E}[|V_j^*(X^{(x_1,a_1)})-V_j^*(X^{(x_2,a_1)})|]}_{(II)}+\underbrace{\mathbb{E}[|V_j^*(X^{(x_2,a_1)})-V_j^*(X^{(x_2,a_2)})|]}_{(III)}.       
\end{eqnarray}

For term (I), by Assumption \ref{ass:expected reward local lipschitz}, we have: 
      \begin{eqnarray}\label{ineq:rdiff}
          (I) \leq \ell_{r}(1+\|x_1\|^m+\|x_2\|^m)(\|x_1-x_2\|+\|a_1-a_2\|).
      \end{eqnarray}

For term (II), by \eqref{ineq:xymdiff}, we have:
         \begin{eqnarray}\label{ineq:x1x2diff}
             (II)\leq \breve{c}_{14}(1+\|x_1\|^m+\|x_2\|^m)\|x_1-x_2\|,
         \end{eqnarray}
where $\breve{c}_{14}$ is defined in \eqref{ineq:xymdiff}. 

Next, we handle term (III).  By Theorem \ref{eq:value function local Lipschitz},
we have: 
\begin{eqnarray}\label{ieq:diffV}
    |V^*_j(X^{(x_2,a_1)})-V^*_{j}(X^{(x_2,a_2)})|
\le \overline{C}_j (1+\|X^{(x_2,a_1)}\|^m+\|X^{(x_2,a_2)}\|^m)\|X^{(x_2,a_1)}-X^{(x_2,a_2)}\|.
\end{eqnarray}

By \eqref{ineq:xupperbound}, we have: 
\begin{eqnarray} \label{ineq:x2upperbound}
    \max\Big\{\|X^{(x_2,a_1)}\|^m, \|X^{(x_2,a_2)}\|^m\Big\}\leq \breve{c}_1+\breve{c}_2\|B_{j-1}\|^m+(\breve{c}_3+\breve{c}_4\|B_{j-1}\|^m)\|x_2\|^m, 
\end{eqnarray}
where $\breve{c}_1,\breve{c}_2, \breve{c}_3, \breve{c}_4$ are defined in \eqref{ineq:xupperbound}. 

By Assumption \eqref{ass:lipschitz}, we have:
\begin{eqnarray}
    &&\|X^{(x_2,a_1)}-X^{(x_2,a_2)}\|\nonumber\\
&\le&\|\mu_{j-1}(x_2,a_1)-\mu_{j-1}(x_2,a_2)\|\Delta +\|\sigma_{j-1}(x_2,a_1)-\sigma_{j-1}(x_2,a_2)\|\sqrt{\Delta }\|B_{j-1}\|\nonumber\\
&\le&(1+\ell_{\mu}\Delta)\|a_1-a_2\|+\ell_{\sigma}\|a_1-a_2\|\sqrt{\Delta}\|B_{j-1}\|.  \label{ineq:2diff}
\end{eqnarray}

By \eqref{ineq:x2upperbound} and \eqref{ineq:2diff}, we have
\begin{eqnarray}
 && (\|X^{(x_2,a_1)}\|^m+\|X^{(x_2,a_2)}\|^m)\|X^{(x_2,a_1)}-X^{(x_2,a_2)}\|  \nonumber\\
 &\le&2\Big((\breve{c}_1+\breve{c}_2\|B_{j-1}\|^m)+( \breve{c}_3+\breve{c}_4\|B_{j-1}\|^m))\|x_2\|^m\Big) (1+\ell_{\mu}\Delta+\ell_{\sigma}\sqrt{\Delta}\|B_{j-1}\|)\|a_1-a_2\|\nonumber\\
 &=&2\|a_1-a_2\|\Big(f_1(\|B_{j-1}\|)+f_2(\|B_{j-1}\|)\|x_2\|^m\Big),\label{ineq:product for a}
\end{eqnarray}
where $f_1(z)=\breve{c}_{5}+\breve{c}_{6} z+\breve{c}_{7} 
z^m+\breve{c}_{8} z^{m+1} $
and $f_2(z)=\breve{c}_{9}+\breve{c}_{10} z+\breve{c}_{11} z^m+\breve{c}_{12} z^{m+1}$ 
with $\breve{c}_i$ all defined in \eqref{ieq:product}.

By the fact that $\mathbb{E}\big[\|B_{j-1}\|^q\big]$ is a finite constant for any integer $q$, we have 
\begin{eqnarray}\label{ineq:expectation product for a}
\mathbb{E}[(\|X^{(x_2,a_1)}\|^m+\|X^{(x_2,a_2)}\|^m)\|X^{(x_2,a_1)}-X^{(x_2,a_2)}\|]\le 2\breve{c}_{13}(1+ \|x_2\|^m) \|a_1-a_2\|.
\end{eqnarray}
for $\breve{c}_{13}$ defined in \eqref{ineq:xmdiff}.

Combine \eqref{ineq:2diff} and \eqref{ineq:expectation product for a} in \eqref{ieq:diffV}, we get 
\begin{eqnarray}\label{ineq:a1a2diff}
    (III) \le 2\breve{c}_{14}(1+\|x_2\|^m)\|a_1-a_2\|,
\end{eqnarray}
with $\breve{c}_{14}$ defined in \eqref{ineq:xymdiff}. 

Applying \eqref{ineq:rdiff}, \eqref{ineq:x1x2diff} and \eqref{ineq:a1a2diff} to \eqref{ineq:Q decomposition}, we get: 
\begin{eqnarray*}
    |Q_{j-1}^*(x_1,a_1)-Q_{j-1}^*(x_2,a_2)|& \leq& (\ell_{r}+2\breve{c}_{14})(1+\|x_1\|^m+\|x_2\|^m)(\|x_1-x_2\|+\|a_1-a_2\|)\nonumber\\
    & \leq& 2\overline{C}_{j-1}(1+\|x_1\|^m+\|x_2\|^m)(\|x_1-x_2\|+\|a_1-a_2\|),
\end{eqnarray*} 
where the second inequality holds due to \eqref{eq:Ck-1 value}. 
\end{proof}

\section{Technical details in Section \ref{sec:concentration}}
\label{app:concentration}
Note that $X_{h+1}^{k_{1}}-X_{h}^{k_{1}},...,X_{h+1}^{k_{n_h^k(B)}}-X_{h}^{k_{n_h^k(B)}}$ are conditionally independent given $X_{h}^{k_{1}},A_{h}^{k_{1}},...,X_{h}^{k_{n_h^k(B)}}$, and $A_{h}^{k_{n_h^k(B)}}$. Hence it is straightforward to derive concentration inequality for $\widehat{\mu}_{h}^{k}(B)$. However, the  expectation and variance of the estimator  $\widehat{\Sigma}_{h}^{k}(B)$ are challenging to analyze as $X_{h+1}^{k_{1}}-X_{h}^{k_{1}}-\widehat{\mu}_{h}^{k}(B)\Delta$ ,$\cdots$,$X_{h+1}^{k_{n_h^k(B)}}-X_{h}^{k_{n_h^k(B)}}-\widehat{\mu}_{h}^{k}(B)\Delta$ are {\it dependent}.
Hence, we consider the following intermediate quantity and decomposition to proceed:
\begin{eqnarray*}
    \widetilde{\Sigma}_{h}^{k}(B):=\frac{\sum_{i}\Big((X_{h+1}^{k_{i}}-X_{h}^{k_{i}})-\Delta\overline{\mathbb{E}}[\widehat{\mu}_{h}^{k}(B)]\Big)\Big((X_{h+1}^{k_{i}}-X_{h}^{k_{i}})-\Delta\overline{\mathbb{E}}[\widehat{\mu}_{h}^{k}(B)]\Big)^{\top}}{n_h^k(B)\Delta},
\end{eqnarray*}
and
\begin{eqnarray}
    &&\|\widehat{\Sigma}_{h}^{k}(B)-\Sigma_{h}(x,a)\|_{F}\nonumber\\
   & \leq & \underbrace{\Big\|\widehat{\Sigma}_{h}^{k}(B)-\widetilde{\Sigma}_{h}^{k}(B)\Big\|_{F}}_{(I)}+\underbrace{\Big\|\widetilde{\Sigma}_{h}^{k}(B)-\overline{\mathbb{E}}[\widetilde{\Sigma}_{h}^{k}(B)]\Big\|_{F}}_{(II)}+\underbrace{\Big\|\overline{\mathbb{E}}[\widetilde{\Sigma}_{h}^{k}(B)]-\Sigma_{h}(x,a)\Big\|_{F}}_{(III)}. \label{eq:volatility decomposition}
\end{eqnarray}

We analyze (I)-(III) in the next subsection. As a heads-up,
\begin{itemize}
    \item Term (II) on the RHS is straightforward to bound as $X_{h+1}^{k_{1}}-X_{h}^{k_{1}}-\overline{\mathbb{E}}[\widehat{\mu}_{h}^{k}(B)]\Delta,\cdots,X_{h+1}^{k_{n_h^k(B)}}-X_{h}^{k_{n_h^k(B)}}-\overline{\mathbb{E}}[\widehat{\mu}_{h}^{k}(B)]\Delta$ are conditionally independent. We handle this term 
    by Lemma \ref{lemma:conditional distribution of the volatility term} and Proposition \ref{lemma:Tail Estimates for Gaussian Sample Covariance}.
\item To bound term  (I), let $P_{i}:=(X_{h+1}^{k_{i}}-X_{h}^{k_{i}})-\widehat{\mu}_{h}^{k}(B)\Delta$ and $Q_{i}:=(X_{h+1}^{k_{i}}-X_{h}^{k_{i}})-\Delta\overline{\mathbb{E}}[\widehat{\mu}_{h}^{k}(B)]$. Then we have
\begin{eqnarray}
    \|\widehat{\Sigma}_{h}^{k}(B)-\widetilde{\Sigma}_{h}^{k}(B)\|_{F}&=&\Bigg\|\frac{\sum_{i}P_{i}P_{i}^{\top}}{n_h^k(B)\Delta}-\frac{\sum_{i}Q_{i}Q_{i}^{\top}}{n_h^k(B)\Delta}\Bigg\|_{F}\nonumber\\
    &=&\Bigg\|\frac{\sum_{i}P_{i}(P_{i}^{\top}-Q_{i}^{\top})}{n_h^k(B)\Delta}+\frac{\sum_{i}(P_{i}-Q_{i})Q_{i}^{\top}}{n_h^k(B)\Delta}\Bigg\|_{F}\nonumber\\
    &=&\Bigg\|\Big(\widehat{\mu}_{h}^{k}(B)-\overline{\mathbb{E}}[\widehat{\mu}_{h}^{k}(B)]\Big)\Big(\widehat{\mu}_{h}^{k}(B)-\overline{\mathbb{E}}[\widehat{\mu}_{h}^{k}(B)]\Big)^{\top}\Delta\Bigg\|_{F}, \label{eq:volatility estimator difference}
\end{eqnarray}
which will be handled by 
Lemma \ref{lemma:conditional distribution of drift term} and Proposition \ref{lemma:Tail Estimate for Standard Normal Distribution}. 
\item As for term (III), we provide an upper bound in Theorem \ref{thm:BIAS Bound}.
\end{itemize}

\subsection{Lemma \ref{lemma:conditional distribution of drift term}}\label{app:proof-4-2}
\begin{lemma}\label{lemma:conditional distribution of drift term}
For all $ (h,k) \in {[H-1]\times [K]}$, we have:
\begin{eqnarray}\label{eq:conditional distribution of drift term}
       &&\frac{X_{h+1}^{k_{i}}-X_{h}^{k_{i}}}{\Delta}\,\Big|\,(X_{h}^{k_{i}},A_{h}^{k_{i}})\sim \NN\Bigg(\mu_{h}(X_{h}^{k_{i}},A_{h}^{k_{i}}),\frac{\Sigma_{h}(X_{h}^{k_{i}},A_{h}^{k_{i}})}{\Delta}\Bigg); \\
       &&\widehat{\mu}_{h}^{k}(B)-\frac{\sum_{i=1}^{n_h^k(B)}\mu_{h}(X_{h}^{k_{i}},A_{h}^{k_{i}})}{n_h^k(B)}\,\Big|\,(X_{h}^{k_{1}},A_{h}^{k_{1}},...,X_{h}^{k_{n_h^k(B)}},A_{h}^{k_{n_h^k(B)}}) \sim\NN\Bigg(0,\frac{\sum_{i=1}^{n_h^k(B)}\Sigma_{h}(X_{h}^{k_{i}},A_{h}^{k_{i}})}{n^{2}\Delta}\Bigg).\nonumber
\end{eqnarray}
\end{lemma}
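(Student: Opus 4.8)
The plan is to derive both distributional statements directly from the state dynamics \eqref{eq:state_process} together with the elementary fact that affine transformations of Gaussian vectors are Gaussian, and that sums of conditionally independent Gaussians are Gaussian with additive means and covariances.

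First I would establish the one-step statement. Fix $(h,k)\in[H-1]\times[K]$ and a visited episode index $k_i$. Conditioning on $(X_h^{k_i},A_h^{k_i})$, equation \eqref{eq:state_process} gives
\begin{eqnarray*}
X_{h+1}^{k_i}-X_h^{k_i}=\mu_h(X_h^{k_i},A_h^{k_i})\Delta+\sigma_h(X_h^{k_i},A_h^{k_i})B_h^{k_i}\sqrt{\Delta},
\end{eqnarray*}
where $B_h^{k_i}\sim\NN(0,I_{d_{\mathcal{S}}})$ is independent of $(X_h^{k_i},A_h^{k_i})$ (the latter being $\mathcal{F}$-measurable with respect to the history strictly before the increment $B_h^{k_i}$ is drawn). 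Dividing by $\Delta$ and applying the affine-image rule for Gaussians, the conditional law of $(X_{h+1}^{k_i}-X_h^{k_i})/\Delta$ is $\NN\big(\mu_h(X_h^{k_i},A_h^{k_i}),\,\sigma_h\sigma_h^\top(X_h^{k_i},A_h^{k_i})/\Delta\big)=\NN\big(\mu_h(X_h^{k_i},A_h^{k_i}),\,\Sigma_h(X_h^{k_i},A_h^{k_i})/\Delta\big)$, which is the first display.

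Next I would prove the averaged statement. Condition on the full tuple $(X_h^{k_1},A_h^{k_1},\dots,X_h^{k_{n_h^k(B)}},A_h^{k_{n_h^k(B)}})$; write $n:=n_h^k(B)$ for brevity. The key structural observation, already noted in the text preceding the lemma, is that the increments $X_{h+1}^{k_1}-X_h^{k_1},\dots,X_{h+1}^{k_n}-X_h^{k_n}$ are conditionally independent given this tuple, since each depends only on its own independent Gaussian noise $B_h^{k_i}$ (one must check that distinct visits $k_i$ use distinct, independent noise variables, which holds because the episodes are independent and within an episode the $B$'s are i.i.d.). By the first display, the $i$-th increment has conditional law $\NN(\mu_h(X_h^{k_i},A_h^{k_i})\Delta,\Sigma_h(X_h^{k_i},A_h^{k_i})\Delta)$. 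Hence $\widehat{\mu}_h^k(B)=\frac{1}{n\Delta}\sum_i (X_{h+1}^{k_i}-X_h^{k_i})$ is, conditionally, a sum of $n$ independent Gaussians scaled by $1/(n\Delta)$; its conditional mean is $\frac{1}{n\Delta}\sum_i\mu_h(X_h^{k_i},A_h^{k_i})\Delta=\frac{1}{n}\sum_i\mu_h(X_h^{k_i},A_h^{k_i})$ and its conditional covariance is $\frac{1}{(n\Delta)^2}\sum_i\Sigma_h(X_h^{k_i},A_h^{k_i})\Delta=\frac{1}{n^2\Delta}\sum_i\Sigma_h(X_h^{k_i},A_h^{k_i})$. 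Subtracting the (conditionally constant) quantity $\frac{1}{n}\sum_i\mu_h(X_h^{k_i},A_h^{k_i})$ re-centers to zero and leaves the covariance unchanged, giving exactly the second display.

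I do not expect a genuine obstacle here; the statement is essentially a bookkeeping consequence of Gaussianity. The one point that warrants care — and which I would state explicitly — is the measurability/independence structure underpinning the conditioning: one must argue that for each visited index $k_i$, the noise $B_h^{k_i}$ driving the increment at step $h$ of episode $k_i$ is independent of the conditioning sigma-algebra generated by all the $(X_h^{k_j},A_h^{k_j})$, so that conditioning does not distort the Gaussian law of the increments and does not destroy their mutual independence. This follows from the episodic i.i.d. structure of the $B$'s and the fact that $(X_h^{k_j},A_h^{k_j})$ is determined by noise and randomization strictly prior to $B_h^{k_j}$; once this is noted, the two displays follow from the standard closure properties of the Gaussian family.
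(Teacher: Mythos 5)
Your proposal is correct and follows essentially the same route as the paper, which simply invokes the dynamics \eqref{eq:state_process} together with the conditional independence of the increments given the conditioning tuple; your write-up just spells out the affine-Gaussian and summation steps, and the independence/measurability check, in more detail. No gaps.
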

\begin{proof}
 The first and second statements are straightforward by  the definition in \eqref{eq:state_process} and the independence among $X_{h+1}^{k_{1}}-X_{h}^{k_{1}},...,X_{h+1}^{k_{n_h^k(B)}}-X_{h}^{k_{n_h^k(B)}}$ given $X_{h}^{k_{1}},A_{h}^{k_{1}},...,X_{h}^{k_{n_h^k(B)}},A_{h}^{k_{n_h^k(B)}}$.
\end{proof}

\subsection{Lemma \ref{lemma:conditional distribution of the volatility term}}\label{app:proof-4-4}
\begin{lemma}\label{lemma:conditional distribution of the volatility term}
 The following holds for all $ (h,k) \in[H-1]\times [K]$, $B \in \mathcal{P}_h^k$ such that for $n\in \mathbb{N}_+$:
\begin{eqnarray*}
\overline{\mathbb{E}}\big[\widetilde{\Sigma}_{h}^{k}(B)\big]
    =\frac{\sum_{i=1}^{n_h^k(B)}\Big(\Sigma_{h}(X_{h}^{k_{i}},A_{h}^{k_{i}})+\big(\mu_{h}(X_{h}^{k_{i}},A_{h}^{k_{i}})-\overline{\mathbb{E}}[\widehat{\mu}_{h}^{k}(B)]\big)\big(\mu_{h}(X_{h}^{k_{i}},A_{h}^{k_{i}})-\overline{\mathbb{E}}[\widehat{\mu}_{h}^{k}(B)]\big)^{\top}\Delta\Big)}{n_h^k(B)}.
\end{eqnarray*}
\end{lemma}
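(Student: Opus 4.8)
The plan is to compute the conditional expectation term by term, exploiting that $\overline{\mathbb{E}}[\widehat\mu_h^k(B)]$ is deterministic under the conditioning. First I would recall from Lemma \ref{lemma:conditional distribution of drift term} (and from the definition of $\widehat\mu_h^k(B)$) that, conditionally on $\mathcal{F}:=\sigma(X_h^{k_1},A_h^{k_1},\dots,X_h^{k_{n_h^k(B)}},A_h^{k_{n_h^k(B)}})$, the increments $X_{h+1}^{k_i}-X_h^{k_i}$ are independent with
$$
X_{h+1}^{k_i}-X_h^{k_i}\mid\mathcal{F}\ \sim\ \NN\big(\mu_h(X_h^{k_i},A_h^{k_i})\Delta,\ \Sigma_h(X_h^{k_i},A_h^{k_i})\Delta\big),
$$
and that $\overline{\mathbb{E}}[\widehat\mu_h^k(B)]=\tfrac{1}{n_h^k(B)}\sum_{i}\mu_h(X_h^{k_i},A_h^{k_i})$ is $\mathcal{F}$-measurable, hence can be treated as a constant vector inside $\overline{\mathbb{E}}[\cdot]$.

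Next, set $Y_i:=(X_{h+1}^{k_i}-X_h^{k_i})-\Delta\,\overline{\mathbb{E}}[\widehat\mu_h^k(B)]$, so that $\widetilde\Sigma_h^k(B)=\tfrac{1}{n_h^k(B)\Delta}\sum_i Y_iY_i^\top$. Conditionally on $\mathcal{F}$, $Y_i$ is Gaussian with conditional mean $\overline{\mathbb{E}}[Y_i]=\Delta\big(\mu_h(X_h^{k_i},A_h^{k_i})-\overline{\mathbb{E}}[\widehat\mu_h^k(B)]\big)$ and conditional covariance $\overline{\mathbb{V}}[Y_i]=\Sigma_h(X_h^{k_i},A_h^{k_i})\Delta$ (the additive constant shift does not change the covariance). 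Applying the elementary identity $\overline{\mathbb{E}}[Y_iY_i^\top]=\overline{\mathbb{V}}[Y_i]+\overline{\mathbb{E}}[Y_i]\,\overline{\mathbb{E}}[Y_i]^\top$ gives
$$
\overline{\mathbb{E}}[Y_iY_i^\top]
=\Sigma_h(X_h^{k_i},A_h^{k_i})\Delta
+\Delta^2\big(\mu_h(X_h^{k_i},A_h^{k_i})-\overline{\mathbb{E}}[\widehat\mu_h^k(B)]\big)\big(\mu_h(X_h^{k_i},A_h^{k_i})-\overline{\mathbb{E}}[\widehat\mu_h^k(B)]\big)^\top.
$$

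Finally, by linearity of expectation,
$$
\overline{\mathbb{E}}[\widetilde\Sigma_h^k(B)]
=\frac{1}{n_h^k(B)\Delta}\sum_{i=1}^{n_h^k(B)}\overline{\mathbb{E}}[Y_iY_i^\top]
=\frac{\sum_{i=1}^{n_h^k(B)}\Big(\Sigma_h(X_h^{k_i},A_h^{k_i})+\big(\mu_h(X_h^{k_i},A_h^{k_i})-\overline{\mathbb{E}}[\widehat\mu_h^k(B)]\big)\big(\mu_h(X_h^{k_i},A_h^{k_i})-\overline{\mathbb{E}}[\widehat\mu_h^k(B)]\big)^\top\Delta\Big)}{n_h^k(B)},
$$
which is the claim. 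There is no real obstacle here; the only point requiring a little care is the bookkeeping that $\overline{\mathbb{E}}[\widehat\mu_h^k(B)]$ (as opposed to $\widehat\mu_h^k(B)$ itself) is $\mathcal{F}$-measurable, which is precisely why $\widetilde\Sigma_h^k(B)$ — unlike $\widehat\Sigma_h^k(B)$ — has a tractable conditional mean, and this is the whole reason the intermediate quantity $\widetilde\Sigma_h^k(B)$ was introduced in \eqref{eq:volatility decomposition}.
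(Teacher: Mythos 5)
Your proof is correct and follows essentially the same route as the paper: both rely on the conditional Gaussian distribution of the increments (Lemma \ref{lemma:conditional distribution of drift term}), the $\mathcal{F}$-measurability of $\overline{\mathbb{E}}[\widehat{\mu}_{h}^{k}(B)]$, and the identity $\overline{\mathbb{E}}[Y_iY_i^{\top}]=\overline{\mathbb{V}}[Y_i]+\overline{\mathbb{E}}[Y_i]\,\overline{\mathbb{E}}[Y_i]^{\top}$. The only difference is cosmetic: the paper records the conditional law of the normalized shifted increments and leaves the final second-moment computation implicit, whereas you write it out explicitly.
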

\begin{proof}
From Lemma \ref{lemma:conditional distribution of drift term} we know that 
\begin{eqnarray*}
\frac{X_{h+1}^{k_{i}}-X_{h}^{k_{i}}}{\Delta}\,\Big|\,(X_{h}^{k_{i}},A_{h}^{k_{i}})\sim \NN\left(\mu_{h}(X_{h}^{k_{i}},A_{h}^{k_{i}}),\frac{\Sigma_{h}(X_{h}^{k_{i}},A_{h}^{k_{i}})}{\Delta}\right).
\end{eqnarray*}
Therefore 
\begin{eqnarray*}
    \frac{X_{h+1}^{k_{i}}-X_{h}^{k_{i}}-\Delta\overline{\mathbb{E}}[\widehat{\mu}^k_{h}(B)]}{\sqrt{n_h^k(B)\Delta}}\,\Big|\,(X_{h}^{k_{1}},A_{h}^{k_{1}},...)\sim \NN\left(\frac{\big(\mu_{h}(X_{h}^{k_{i}},A_{h}^{k_{i}})-\overline{\mathbb{E}}[\widehat{\mu}_{h}^{k}(B)]\big)\sqrt{\Delta}}{\sqrt{n_h^k(B)}},\frac{\Sigma_{h}\left(X_{h}^{k_{i}},A_{h}^{k_{i}}
    \right)}{n_h^k(B)}\right),\nonumber
\end{eqnarray*}
where the expression for the conditional mean  follows  the independence and the property of Gaussian distribution.
\end{proof}

Next, we establish concentration inequalities for the estimators of the drift and volatility terms, as presented in Propositions \ref{lemma:Tail Estimate for Standard Normal Distribution} and \ref{lemma:Tail Estimates for Gaussian Sample Covariance}.

\subsection{Proof of Proposition \ref{lemma:Tail Estimate for Standard Normal Distribution}}\label{app:proof-4-3}

\begin{proposition}\label{lemma:Tail Estimate for Standard Normal Distribution}
 Suppose Assumption \ref{ass:lipschitz} holds, then we have the following result:  
\begin{eqnarray} \label{eq:Tail Estimate for Standard Normal Distribution}
\mathbb{P}
\begin{pmatrix}
  &  \left\|\widehat{\mu}_{h}^{k}(B)-\overline{\mathbb{E}}[\widehat{\mu}_{h}^{k}(B)]\right\|\leq \kappa_{\mu}\big(\delta,\|\tilde{x}(^{o}B)\|,n_h^k(B)\big),\\
&\forall h\in [H-1], k\in [K], B \in \mathcal{P}_h^k\,\,\mbox{ with}\,\, n_h^k(B)>0
\end{pmatrix}\ge 1-\delta.
\end{eqnarray}
\end{proposition}

\begin{proof}
For fixed $h,k$ and $B\in \mathcal{P}_h^k$ s.t. $n_h^k(B)>0$, according to Lemma \ref{lemma:conditional distribution of drift term}, we have for all $z \in \mathbb{R}^{d_{\mathcal{S}}}$,
\begin{eqnarray*}
\overline{\mathbb{E}}\Big[\exp\Big(z^{\top}\big(\widehat{\mu}_{h}^{k}(B)-\overline{\mathbb{E}}[\widehat{\mu}_{h}^{k}(B)]\big)\Big)\Big] 
&\leq&\exp\Big(\frac{1}{2}\|z\|^2\left\|\frac{\sum_{i=1}^{n_h^k(B)}\Sigma_{h}(X_{h}^{k_{i}},A_{h}^{k_{i}})}{n^{2}\Delta}\right\|\Big)\\
&\leq&\exp\Big(\frac{1}{2}\|z\|^2\frac{\sum_{i=1}^{n_h^k(B)}\|\sigma_{h}(X_{h}^{k_{i}},A_{h}^{k_{i}})\|^{2}}{n^{2}\Delta}\Big)\\
&\leq&\exp\Big(\frac{1}{2}\|z\|^2\frac{\eta(\|\tilde{x}(^{o}B)\|)^{2}}{n_h^k(B)\Delta}\Big).
\end{eqnarray*}

Then Theorem 1 in \citep{Hsu2011ATI} guarantees that: 
\begin{eqnarray*}
\overline{\mathbb{P}}\left(\Big\|\widehat{\mu}_{h}^{k}(B)-\overline{\mathbb{E}}[\widehat{\mu}_{h}^{k}(B)]\Big\|^{2}\geq \frac{\eta(\|\tilde{x}(^{o}B)\|)^{2}}{n_h^k(B)\Delta}\Bigg(d_{\mathcal{S}}+2d_{\mathcal{S}}\sqrt{\log\Big(\frac{HK^2}{\delta}\Big)}+2\log\Big(\frac{HK^2}{\delta}\Big)\Bigg)\right) \leq \frac{\delta}{HK^2}.
\end{eqnarray*}
Note that $d_{\mathcal{S}}+2d_{\mathcal{S}}\sqrt{\log\Big(\frac{HK^2}{\delta}\Big)}+2\log\Big(\frac{HK^2}{\delta}\Big)\leq \left(\sqrt{d_{\mathcal{S}}}+\sqrt{2\log\Big(\frac{HK^2}{\delta}\Big)}\right)^{2}$, hence we have:
\begin{eqnarray*}   \overline{\mathbb{P}}\Big(\Big\|\widehat{\mu}_{h}^{k}(B)-\overline{\mathbb{E}}[\widehat{\mu}_{h}^{k}(B)]\Big\|\geq \kappa_{\mu}(\delta,\|\tilde{x}(^{o}B)\|,n_h^k(B))\Big)\leq \frac{\delta}{HK^2}.
\end{eqnarray*}
Taking expectations on both side, we have: 
\begin{eqnarray*}
    \mathbb{P}\Big(\Big\|\widehat{\mu}_{h}^{k}(B)-\overline{\mathbb{E}}[\widehat{\mu}_{h}^{k}(B)]\Big\|\geq \kappa_{\mu}(\delta,\|\tilde{x}(^{o}B)\|,n_h^k(B))\Big)\leq \frac{\delta}{HK^2}.
\end{eqnarray*}
Then taking a union bound, we get: 
\begin{eqnarray*}
&&\mathbb{P}\Bigg(\cap_{h=1}^{H-1}\cap_{k=1}^{K}\cap_{B\in\mathcal{P}_h^k,n_h^k(B)>0}\Bigg\{\Big\|\widehat{\mu}_{h}^{k}(B)-\overline{\mathbb{E}}[\widehat{\mu}_{h}^{k}(B)]\Big\|\leq \kappa_{\mu}(\delta,\|\tilde{x}(^{o}B)\|,n_h^k(B))\Bigg\}\Bigg) \\
        &=& \mathbb{P}\Bigg(\cap_{h=1}^{H-1}\cap_{k=1}^{K}\cap_{n_h^k(B_h^k)=1,B_h^k\in \mathcal{P}_h^{k-1}}^{K}\Bigg\{\Big\|\widehat{\mu}_{h}^{k}(B_h^k)-\overline{\mathbb{E}}[\widehat{\mu}_{h}^{k}(B_h^k)]\Big\|\leq \kappa_{\mu}(\delta,\|\tilde{x}(^{o}B_h^k)\|,n_h^k(B_h^k))\Bigg\}\Bigg)\\
        &\geq & 1-\sum_{h=1}^{H-1}\sum_{k=1}^K\sum_{n_h^k(B_h^k)=1,B_h^k\in\mathcal{P}_h^k}^K \mathbb{P}\Big(\Big\|\widehat{\mu}_{h}^{k}(B_h^k)-\overline{\mathbb{E}}[\widehat{\mu}_{h}^{k}(B_h^k)]\Big\|\geq \kappa_{\mu}(\delta,\|\tilde{x}(^{o}B_h^k)\|,n_h^k(B_h^k))\Big)\\
        &\geq & 1-\delta,  
\end{eqnarray*}
where $B_h^k$ is selected according to Algorithm \ref{alg:ML 2}, and note that $\widehat{\mu}_{h}^{k}(B_h^k) $ depends on $n_h^k(B_h^k)$. The first equality holds since only the estimate for the selected block $B_h^k$ is updated for each $(h,k)$ pair. The first inequality holds since, for a countable set of events $E_1,E_2,...,$ we have $\mathbb{P}(\cap_{i}E_i)\geq 1-\sum_{i}\mathbb{P}(E_i^{\complement})$. 

\end{proof}

\subsection{Proof of Proposition \ref{lemma:Tail Estimates for Gaussian Sample Covariance}}\label{app:proof-4-5}
\begin{proposition}\label{lemma:Tail Estimates for Gaussian Sample Covariance}
 Suppose Assumption \ref{ass:lipschitz} holds. Then  there exist universal constants $D_1>0,D_2>1,D_3>0$ (independent of $\rho$) such that:
 \begin{eqnarray} \label{eq:Tail Estimates for Gaussian Sample Covariance}
\mathbb{P}
\begin{pmatrix}
\Big\|\widetilde{\Sigma}_{h}^{k}(B)-\overline{\mathbb{E}}[\widetilde{\Sigma}_{h}^{k}(B)]\Big\|\leq \kappa_{\Sigma}\big(\delta,\|\tilde{x}(^{o}B)\|,n_h^k(B)\big),\\
\forall h\in [H-1], k\in [K], B \in \mathcal{P}_h^k \,\,\mbox{ with}\,\,  n_h^k(B)>0
\end{pmatrix}\ge 1-\delta.
 \end{eqnarray}
\end{proposition}
\begin{proof}
For any $h,k \in {[H-1]\times [K]}$ and $B \in \mathcal{P}_h^k, n_h^k(B)>0$, denote $Z_{i}:=\frac{X_{h+1}^{k_{i}}-X_{h}^{k_{i}}-\Delta\overline{\mathbb{E}}[\widehat{\mu}_{h}^{k}(B)]}{\sqrt{\Delta}}$, then by Lemma \ref{lemma:conditional distribution of the volatility term}:
\begin{eqnarray*}
    \widetilde{\Sigma}_{h}^{k}(B)-\overline{\mathbb{E}}[\widetilde{\Sigma}_{h}^{k}(B)]&=&\frac{\sum_{i=1}^{n_h^k(B)}Z_{i}Z_{i}^{\top}}{n_h^k(B)}-\frac{\sum_{i=1}^{n_h^k(B)}\overline{\mathbb{E}}[Z_{i}]\overline{\mathbb{E}}[Z_{i}^{\top}]}{n_h^k(B)}-\frac{\sum_{i=1}^{n_h^k(B)}\overline{\mathbb{V}}[Z_{i}]}{n_h^k(B)}\\  &=&\frac{\sum_{i=1}^{n_h^k(B)}\Big(Z_{i}x_{i}^{\top}-\overline{\mathbb{E}}[Z_{i}Z_{i}^{\top}]\Big)}{n_h^k(B)}.
\end{eqnarray*}
Notice that $Z_{1},...,Z_{n_h^k(B)}$ are conditionally independent given $X_{h}^{k_{1}},A_{h}^{k_{1}},...,X_{h}^{k_{n_h^k(B)}},A_{h}^{k_{n_h^k(B)}}$ and they share the same sub-Gaussian variance proxy $\eta(\|\tilde{x}(^{o}B)\|)$ with $\|\Sigma_{h}(X_{h}^{k_{i}},A_{h}^{k_{i}})\|\leq \eta(\|\tilde{x}(^{o}B)\|)^{2}$.\\
Then by Theorem 6.5 in \citep{wainwright2019high}, there exist universal constants $D_{1}>0,D_{2}>1$ and $D_{3}>0$ such that:
\begin{eqnarray*}
\overline{\mathbb{P}}\left(\Big\|\widetilde{\Sigma}_{h}^{k}(B)-\overline{\mathbb{E}}[\widetilde{\Sigma}_{h}^{k}(B)]\Big\|\geq  \eta(\|\tilde{x}(^{o}B)\|)^{2}\left(D_{1}\left(\sqrt{\frac{d_{\mathcal{S}}}{n_h^k(B)}}+\frac{d_{\mathcal{S}}}{n_h^k(B)}\right)+\epsilon\right)\right)\leq D_{2}e^{-D_{3}n_h^k(B)\min\{\epsilon,\epsilon^{2}\}}. 
\end{eqnarray*}

Notice that for $a,b,c \in \mathbb{R}$, we have $\max\{a,b\}\leq a+b$ and $\frac{1}{c}\leq \sqrt{\frac{1}{c}}$ for $c\geq 1$.Therefore, we conclude that:
\begin{eqnarray*}
\overline{\mathbb{P}}\Bigg(\Big\|\widetilde{\Sigma}_{h}^{k}(B)-\overline{\mathbb{E}}[\widetilde{\Sigma}_{h}^{k}(B)]\Big\|\geq \kappa_{\Sigma}\Big(\delta,\|\tilde{x}(^{o}B)\|,n_h^k(B)\Big) \Bigg)\le \frac{\delta}{HK^2}.
\end{eqnarray*} 

Taking expectations, we have: 
\begin{eqnarray*}
\mathbb{P}\Bigg(\Big\|\widetilde{\Sigma}_{h}^{k}(B)-\overline{\mathbb{E}}[\widetilde{\Sigma}_{h}^{k}(B)]\Big\|\geq \kappa_{\Sigma}\Big(\delta,\|\tilde{x}(^{o}B)\|,n_h^k(B)\Big) \Bigg)\le \frac{\delta}{HK^2}.
\end{eqnarray*} 

Then taking a union bound, we get:
\begin{eqnarray*}
    &&\mathbb{P}\Bigg(\cap_{h=1}^{H-1}\cap_{k=1}^{K}\cap_{B\in\mathcal{P}_h^k,n_h^k(B)>0}\Big\|\widetilde{\Sigma}_{h}^{k}(B)-\overline{\mathbb{E}}[\widetilde{\Sigma}_{h}^{k}(B)]\Big\|\leq \kappa_{\Sigma}(\delta,\|\tilde{x}(^{o}B)\|,n_h^k(B))\Bigg\}\Bigg) \nonumber\\
        &=& \mathbb{P}\Bigg(\cap_{h=1}^{H-1}\cap_{k=1}^{K}\cap_{n_h^k(B_h^k)=1}^{K}\Big\|\widetilde{\Sigma}_{h}^{k}(B_h^k)-\overline{\mathbb{E}}[\widetilde{\Sigma}_{h}^{k}(B_h^k)]\Big\|\leq \kappa_{\Sigma}(\delta,\|\tilde{x}(^{o}B_h^k)\|,n_h^k(B_h^k))\Bigg\}\Bigg)\nonumber\\
        &\geq & 1-\sum_{h=1}^{H-1}\sum_{k=1}^{K}\sum_{n_h^k(B_h^k)=1}^K\mathbb{P}\Bigg(\Big\|\widetilde{\Sigma}_{h}^{k}(B_h^k)-\overline{\mathbb{E}}[\widetilde{\Sigma}_{h}^{k}(B_h^k)]\Big\|\geq \kappa_{\Sigma}(\delta,\|\tilde{x}(^{o}B_h^k)\|,n_h^k(B_h^k)) \Bigg)\nonumber\\
        &\geq & 1-\delta, 
\end{eqnarray*}
where $B_h^k$ is selected according to Algorithm \ref{alg:ML 2} and note that $\widetilde{\Sigma}_{h}^{k}(B_h^k)$ depends on $n_h^k(B_h^k)$. The first equality holds since only the estimate for the selected block $B_h^k$ is updated for each $(h,k)$ pair. The first inequality holds since, for a countable set of events $E_1,E_2,...,$ we have $\mathbb{P}(\cap_{i}E_i)\geq 1-\sum_{i}\mathbb{P}(E_i^{\complement})$. 

\end{proof}

\subsection{Proof of Theorem\ref{thm:high_prob_w_bound}}\label{app:high_prob_w_bound}
\begin{proof} 
We have
\begin{eqnarray}  &&\mathcal{W}_{2}\Big(\NN(\widehat{\mu}_{h}^{k}(B)\Delta,\widehat{\Sigma}_{h}^{k}(B)\Delta),\NN(\mu_{h}(x,a)\Delta,\Sigma_{h}(x,a)\Delta)\Big)\nonumber\\
    &=&\Bigg(\|\widehat{\mu}_{h}^{k}(B)\Delta-\mu_{h}(x,a)\Delta\|^{2}\nonumber\\
    &&+{\rm Tr}\Big(\widehat{\Sigma}_{h}^{k}(B)\Delta+\Sigma_{h}(x,a)\Delta-2((\widehat{\Sigma}_{h}^{k}(B)\Delta)^{\frac{1}{2}}(\Sigma_{h}(x,a)\Delta)(\widehat{\Sigma}_{h}^{k}(B)\Delta)^{\frac{1}{2}})^{\frac{1}{2}}\Big)\Bigg)^{\frac{1}{2}} \nonumber
    \nonumber \\
    &\leq&\sqrt{\|\widehat{\mu}_{h}^{k}(B)\Delta-\mu_{h}(x,a)\Delta\|^{2}+\|(\widehat{\Sigma}_{h}^{k}(B)\Delta)^{\frac{1}{2}}-(\Sigma_{h}(x,a)\Delta)^{\frac{1}{2}}\|_{F}^{2}} \nonumber \\
    &\leq& \|\widehat{\mu}_{h}^{k}(B)\Delta-\mu_{h}(x,a)\Delta\|+\|(\widehat{\Sigma}_{h}^{k}(B)\Delta)^{\frac{1}{2}}-(\Sigma_{h}(x,a)\Delta)^{\frac{1}{2}}\|_{F} \nonumber \\
    &\leq&  \underbrace{\|\widehat{\mu}_{h}^{k}(B)\Delta-\mu_{h}(x,a)\Delta\|}_{(I)}+ \underbrace{\frac{1}{\sqrt{\lambda}}\|\widehat{\Sigma}_{h}^{k}(B)\Delta^{\frac{1}{2}}-\Sigma_{h}(x,a)\Delta^{\frac{1}{2}}\|_{F}}_{(II)}. \label{eq:Wasserstein part 1}  
\end{eqnarray}
Here, the first equality holds by Proposition 7 in \citep{Givens1984ACO} and the first inequality holds by Theorem 1 in \citep{Bhatia2017OnTB}. The second inequality holds since  $\sqrt{a^{2}+b^{2}}\leq a+b$ for $a\geq 0, b\geq 0$; and, to get the third inequality, we apply (1.1)-(1.3) in \citep{SCHMITT1992215} and \eqref{ass-eqn:volatility}.

For term (I), we have:
\begin{eqnarray}\label{eq:drift estimator decomposition}
    \big\|\widehat{\mu}_{h}^{k}(B)\Delta-\mu_{h}(x,a)\Delta\big\|\leq \big\|\widehat{\mu}_{h}^{k}(B)\Delta-\overline{\mathbb{E}}[\widehat{\mu}_{h}^{k}(B)]\Delta\big\|+\big\|\overline{\mathbb{E}}[\widehat{\mu}_{h}^{k}(B)]\Delta-\mu_{h}(x,a)\Delta\big\|.
\end{eqnarray}

For term (II), we have: 
    \begin{eqnarray}
        &&\frac{1}{\sqrt{\lambda}}\|\widehat{\Sigma}_{h}^{k}(B)\Delta^{\frac{1}{2}}-\Sigma_{h}(x,a)\Delta^{\frac{1}{2}}\|_{F}\nonumber \\
    &\leq&\frac{1}{\sqrt{\lambda}}(\|\widehat{\Sigma}_{h}^{k}(B)\Delta^{\frac{1}{2}}-\widetilde{\Sigma}_{h}^{k}(B)\Delta^{\frac{1}{2}}\|_{F}+\|\widetilde{\Sigma}_{h}^{k}(B)\Delta^{\frac{1}{2}}-\overline{\mathbb{E}}[\widetilde{\Sigma}_{h}^{k}(B)]\Delta^{\frac{1}{2}}\|_{F}\nonumber\\
    &&+\|\overline{\mathbb{E}}[\widetilde{\Sigma}_{h}^{k}(B)]\Delta^{\frac{1}{2}}-\Sigma_{h}(x,a)\Delta^{\frac{1}{2}}\|_{F} )\nonumber\\
    &\leq& \frac{1}{\sqrt{\lambda}}\Big(\Big\|\Big(\widehat{\mu}_{h}^{k}(B)-\overline{\mathbb{E}}[\widehat{\mu}_{h}^{k}(B)]\Big)\Big\|^{2}\Delta^{\frac{3}{2}}+\sqrt{d_{\mathcal{S}}}\|\widetilde{\Sigma}_{h}^{k}(B)\Delta^{\frac{1}{2}}-\overline{\mathbb{E}}[\widetilde{\Sigma}_{h}^{k}(B)]\Delta^{\frac{1}{2}}\|\nonumber\\
   &&+\|\overline{\mathbb{E}}[\widetilde{\Sigma}_{h}^{k}(B)]\Delta^{\frac{1}{2}}-\Sigma_{h}(x,a)\Delta^{\frac{1}{2}}\|_{F} \Big).\label{eq:volatility estimator decomposition}
    \end{eqnarray}
Here, we apply \eqref{eq:volatility decomposition} to get the first inequality and \eqref{eq:volatility estimator difference} to get the second inequality. 

Note that by Propositions \ref{lemma:Tail Estimate for Standard Normal Distribution} and \ref{lemma:Tail Estimates for Gaussian Sample Covariance}, we have \eqref{eq:Tail Estimate for Standard Normal Distribution} and \eqref{eq:Tail Estimates for Gaussian Sample Covariance} hold. Combine \eqref{eq:Tail Estimate for Standard Normal Distribution}, \eqref{eq:Tail Estimates for Gaussian Sample Covariance}, \eqref{eq:Wasserstein part 1}, \eqref{eq:drift estimator decomposition} and \eqref{eq:volatility estimator decomposition}, we verify that it holds with probability at least $1-2\delta$ that, for any $ (h,k) \times [H-1]\times [K]$, $B \in \mathcal{P}_h^k$ with $n_h^k(B)>0$, and any $(x,a) \in B$, 
\begin{eqnarray*}
&&\mathcal{W}_{2}\Big(\NN(\widehat{\mu}_{h}^{k}(B)\Delta,\widehat{\Sigma}_{h}^{k}(B)\Delta),\NN(\mu_{h}(x,a)\Delta,\Sigma_{h}(x,a)\Delta)\Big) \nonumber\\
    &\leq& \Delta\kappa_{\mu}(\delta,\|\tilde{x}(^{o}B)\|,n_h^k(B))+ \frac{\Delta^{\frac{3}{2}}}{\sqrt{\lambda}}\kappa_{\mu}(\delta,\|\tilde{x}(^{o}B)\|,n_h^k(B))^{2}+\frac{\sqrt{d_{\mathcal{S}}}\Delta^{\frac{1}{2}}}{\sqrt{\lambda}}\kappa_{\Sigma}(\delta,\|\tilde{x}(^{o}B)\|,n_h^k(B))\nonumber\\
    &&+
    \Big\|\overline{\mathbb{E}}[\widehat{\mu}_{h}^{k}(B)]-\mu_{h}(x,a)\Big\|\Delta+\Big\|\overline{\mathbb{E}}[\widetilde{\Sigma}_{h}^{k}(B)]-\Sigma_h(x,a)\Big\|\frac{\sqrt{\Delta}}{\sqrt{\lambda}}.
    \end{eqnarray*}
\end{proof}

\subsection{Proof of Theorem \ref{thm:transition kernel wasserstein local lipschitz all together}}\label{app:proof-4-8}

We first introduce two technical lemmas. 

\begin{lemma}\label{lemma:2q moment of multi normal}
    Suppose $Z\sim \NN(\mu,\Sigma)$ with $\mu \in \mathbb{R}^d$, $\Sigma \in \mathbb{R}^{d\times d}$ and $\Sigma \succeq 0 $, then $\forall q \in \mathbb{N}^{+}$:
    \begin{eqnarray}\label{eq:2-norm mulitinormal estimate}
        (\mathbb{E}_{Z\sim \NN(\mu,\Sigma)}[\|Z\|^{2q}])^{\frac{1}{2}}\leq \widetilde{C}(q,d)(\|\mu\|^{q}+\|\Sigma\|^{\frac{q}{2}}), 
    \end{eqnarray}
\end{lemma}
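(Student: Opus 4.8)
The plan is to reduce the claim to an explicit moment computation for a standard Gaussian vector. First I would use $\Sigma\succeq 0$ to write $Z\stackrel{d}{=}\mu+\Sigma^{1/2}G$ with $G\sim\NN(0,I_d)$, and then apply the triangle inequality together with the convexity of $t\mapsto t^{2q}$ to obtain $\|Z\|^{2q}\le\big(\|\mu\|+\|\Sigma^{1/2}G\|\big)^{2q}\le 2^{2q-1}\big(\|\mu\|^{2q}+\|\Sigma^{1/2}G\|^{2q}\big)$. Since $\|\Sigma^{1/2}G\|^{2}=G^\top\Sigma G\le\|\Sigma\|\,\|G\|^2$, we have $\|\Sigma^{1/2}G\|^{2q}\le\|\Sigma\|^{q}\|G\|^{2q}$, so after taking expectations the whole problem reduces to controlling $\mathbb{E}\big[\|G\|^{2q}\big]$.

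Next I would bound $\mathbb{E}\big[\|G\|^{2q}\big]$. Since $\|G\|^2\sim\chi^2_d$, one can either use the exact formula $\mathbb{E}[\|G\|^{2q}]=2^q\,\Gamma(d/2+q)/\Gamma(d/2)$, or, to get a cleaner dimension dependence, apply Jensen (a power-mean inequality) to write $\|G\|^{2q}=\big(\sum_{i=1}^d G_i^2\big)^q\le d^{q-1}\sum_{i=1}^d G_i^{2q}$ and invoke the one-dimensional Gaussian moment $\mathbb{E}[G_1^{2q}]=2^q\Gamma(q+\tfrac12)/\sqrt\pi$, yielding $\mathbb{E}[\|G\|^{2q}]\le d^{q}2^{q}\Gamma(q+\tfrac12)/\sqrt\pi$. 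Then, combining with Step~1, taking square roots, and using $\sqrt{a+b}\le\sqrt a+\sqrt b$ for $a,b\ge 0$, I would arrive at $(\mathbb{E}[\|Z\|^{2q}])^{1/2}\le 2^{q-1/2}\big(1\vee(\mathbb{E}[\|G\|^{2q}])^{1/2}\big)\big(\|\mu\|^{q}+\|\Sigma\|^{q/2}\big)$, and conclude by checking that the resulting prefactor is dominated by $\widetilde C(q,d)$.

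The main obstacle is not the probabilistic content, which is elementary, but the bookkeeping required to match the explicit constant $\widetilde{C}(q,d)=d^{\frac{3}{4}q+1}2^{\frac{3q-1}{2}}\Gamma(q+\tfrac12)^{1/2}\pi^{-1/4}$ appearing in \eqref{eq:2-norm mulitinormal estimate}. One has to keep track of the several dimension-to-norm conversions used in Section~\ref{sec:concentration} (operator norm versus Frobenius norm of $\Sigma$, and the $\sqrt{d_{\mathcal S}}$ factors that enter through $\|\cdot\|_F\le\sqrt{d_{\mathcal S}}\|\cdot\|$), and then verify that the chosen form of $\widetilde C$ indeed dominates the product of the explicit factors $2^{q-1/2}$, $d^{q/2}$, $2^{q/2}$, $\Gamma(q+\tfrac12)^{1/2}$, $\pi^{-1/4}$; there is ample slack in the powers of $d$ (the bound only needs $d^{q/2}$ but carries $d^{\frac34 q+1}$), so this step should go through cleanly once the constants are collected carefully.
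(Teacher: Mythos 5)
Your proposal is correct, and it takes a genuinely different route from the paper. The paper argues coordinatewise: it writes $\|Z\|^{2q}=(\sum_j Z_j^2)^q\le d^{q-1}\sum_j Z_j^{2q}$, uses the univariate marginals $Z_j\sim\mathcal{N}(\mu_j,\Sigma_{jj})$ together with the exact central moment $\mathbb{E}|Z_j-\mu_j|^{2q}=2^q\Gamma(q+\tfrac12)\Sigma_{jj}^q/\sqrt{\pi}$, and then converts back via $\Sigma_{jj}\le\sqrt{d}\,\|\Sigma\|$ and $|\mu_j|\le\|\mu\|$, which is exactly where the $d^{\frac34 q+1}$ in $\widetilde C(q,d)$ comes from. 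You instead whiten, $Z\stackrel{d}{=}\mu+\Sigma^{1/2}G$, separate mean and covariance at the level of the whole vector via $\|Z\|^{2q}\le 2^{2q-1}(\|\mu\|^{2q}+\|\Sigma^{1/2}G\|^{2q})$ and the quadratic-form bound $G^\top\Sigma G\le\|\Sigma\|\,\|G\|^2$, and reduce everything to $\mathbb{E}\|G\|^{2q}$ (chi-squared moments, or your power-mean bound $d^{q}2^{q}\Gamma(q+\tfrac12)/\sqrt{\pi}$). Chasing your constants gives a prefactor $d^{q/2}2^{\frac{3q-1}{2}}\Gamma(q+\tfrac12)^{1/2}\pi^{-1/4}$ (and since $\mathbb{E}\|G\|^{2q}\ge d^{q}\ge1$ the "$1\vee$" is harmless), which is dominated by $\widetilde C(q,d)=d^{\frac34 q+1}2^{\frac{3q-1}{2}}\Gamma(q+\tfrac12)^{1/2}\pi^{-1/4}$ with room to spare, so the stated inequality follows; your worry about Frobenius-vs-operator norm is also moot here, since $G^\top\Sigma G\le\|\Sigma\|_{\mathrm{op}}\|G\|^2\le\|\Sigma\|_F\|G\|^2$, so the bound holds under either reading of $\|\Sigma\|$. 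In short, your argument is a cleaner vector-level decomposition with slightly better dimension dependence, while the paper's coordinatewise computation is what produces (and is tailored to) the specific constant $\widetilde C(q,d)$ it quotes.
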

where $\widetilde{C}(q,d)$ is defined in \eqref{eq:C(q,d)}.

\begin{proof}
Denote $Z_{j}$ as the $j$th random variable of the random vector $Z$, and hence $Z_{j}\sim \NN(\mu_{j},\Sigma_{jj})$ where $\mu_{j}$ is the $j$th component of $\mu$ and $\Sigma_{jj}$ is the $(j,j)$th component of $\Sigma$. Therefore,
    \begin{eqnarray}\label{eq:2q moment of multi normal}
            (\mathbb{E}_{Z\sim \NN(\mu,\Sigma)}[\|Z\|]^{2q})^{\frac{1}{2}}&=& \Big(\mathbb{E}_{Z\sim \NN(\mu,\Sigma)}[Z_{1}^{2}+...+Z_{d}^{2}]^{q}\Big)^{\frac{1}{2}} \nonumber\\
            &\leq&\Big(d^{q-1}\sum_{j=1}^{d}\mathbb{E}_{Z_{j}\sim \NN(\mu_{j},\Sigma_{jj})}[Z_{j}]^{2q}\Big)^{\frac{1}{2}}\nonumber\\
            &\leq&d^{\frac{q-1}{2}}\sum_{j=1}^{d}\Big(\mathbb{E}_{Z_{j}\sim \NN(\mu_{j},\Sigma_{jj})}[|Z_{j}-\mu_{j}|+|\mu_{j}|]^{2q}\Big)^{\frac{1}{2}}\nonumber\\
            &\leq&d^{\frac{q-1}{2}}\sum_{j=1}^{d}\Big(2^{2q-1}(\mathbb{E}_{Z_{j}\sim \NN(\mu_{j},\Sigma_{jj})}[|Z_{j}-\mu_{j}|]^{2q}+|\mu_{j}|^{2q})\Big)^{\frac{1}{2}},
    \end{eqnarray}
  where the first inequality holds by the power-mean inequality. 
  The second inequality follows from $\sqrt{a+b}\leq \sqrt{a} +\sqrt{b}$ when $a,b>0$.  
  
According to \citep{winkelbauer2012moments}, we have
\begin{eqnarray}\label{eq:2q central moment of uni normal}
    \mathbb{E}_{Z_{j}\sim \NN(\mu_{j},\Sigma_{jj})}[|Z_{j}-\mu_{j}|]^{2q}=\frac{2^{q}\Gamma(q+\frac{1}{2})}{\sqrt{\pi}}(\Sigma_{jj})^{q}.
\end{eqnarray}
Hence, combining \eqref{eq:2q moment of multi normal} and \eqref{eq:2q central moment of uni normal}, we have:
\begin{eqnarray}\label{eq:2-norm 2q multi normal}
        (\mathbb{E}_{Z\sim \NN(\mu,\Sigma)}[\|Z\|]^{2q})^{\frac{1}{2}} &\leq& d^{\frac{q-1}{2}}\Sigma_{j=1}^{d}\Big(2^{2q-1}(\mathbb{E}_{Z_{j}\sim \NN(\mu_{j},\Sigma_{jj})}[|Z_{j}-\mu_{j}|]^{2q}+|\mu_{j}|^{2q})\Big)^{\frac{1}{2}}\nonumber\\
        &\leq& d^{\frac{q-1}{2}}\Sigma_{j=1}^{d}\Big(2^{2q-1}(\frac{2^{q}\Gamma(q+\frac{1}{2})}{\sqrt{\pi}}(\Sigma_{jj})^{q}+|\mu_{j}|^{2q})\Big)^{\frac{1}{2}}\nonumber\\
        &\leq&  d^{\frac{q-1}{2}}\Sigma_{j=1}^{d}\Big(2^{2q-1}(\frac{2^{q}\Gamma(q+\frac{1}{2})}{\sqrt{\pi}}(d^{\frac{q}{2}}\|\Sigma\|^{q}+\|\mu\|^{2q})\Big)^{\frac{1}{2}}\nonumber\\
        &\leq&\widetilde{C}(q,d)(\|\mu\|^{q}+\|\Sigma\|^{\frac{q}{2}}),
\end{eqnarray}
where third inequality holds due to $\Sigma_{jj}\leq \sqrt{d}\|\Sigma\|$ and $\mu_j\leq \|\mu\|$.
\end{proof}

\begin{lemma}\label{thm:wasserstein local lipschitz}
Suppose a function $U: \mathbb{R}^{d}\mapsto \mathbb{R}$ has the following property:
\begin{eqnarray}\label{eq:local lipschitz condition}
    |U(x_1)-U(x_2)|\leq \breve{C}(1+\|x_1\|^{m}+\|x_2\|^{m})\|x_1-x_2\|,
\end{eqnarray}
where $\breve{C}$ is a constant. Then it holds that
\begin{eqnarray}
    &&\left|\mathbb{E}_{X \sim \bar{T}_{h}^{k}(\cdot|B)}[U(X)]- \mathbb{E}_{Y \sim {T}_{h}(\cdot|x,a)}[U(Y)]\right|\nonumber\\
    &\leq& L_{U}(B,x,a)
   \mathcal{W}_{2}\Big(\NN(\widehat{\mu}_{h}^{k}(B)\Delta,\widehat{\Sigma}_{h}^{k}(B)\Delta),\NN(\mu_{h}(x,a)\Delta,\Sigma_{h}(x,a)\Delta)\Big),\label{eq:wasserstein local lipschitz}
\end{eqnarray}
where 
\begin{eqnarray*}
    L_{U}(B,x,a):&=&\sqrt{3}\breve{C}\Big(1+\widetilde{C}(m,d)(\|\widehat{\mu}_{h}^{k}(B)\|^{m}\Delta^{m}\\
    &&+\|\widehat{\Sigma}_{h}^{k}(B)\|^{\frac{m}{2}}\Delta^{\frac{m}{2}}
+\|\mu_{h}(x,a)\|^{m}\Delta^{m}+\|\Sigma_{h}(x,a)\|^{\frac{m}{2}}\Delta^{\frac{m}{2}})\Big).
\end{eqnarray*}
\end{lemma}

\begin{proof}
    \begin{eqnarray}\label{eq:value function local lipschitz tucb wasserstein I}
      &&\left|\mathbb{E}_{X \sim \bar{T}_{h}^{k}(\cdot|B)}[U(X)]- \mathbb{E}_{Y \sim {T}_{h}(\cdot|x,a)}[U(Y)]\right|\nonumber\\
     &\leq& \mathbb{E}_{X \sim \bar{T}_{h}^{k}(\cdot|B),Y \sim {T}_{h}(\cdot|x,a)}[\left|U(X)-U(Y)\right|]\nonumber\\
     &\leq&  \breve{C}\mathbb{E}_{X \sim \bar{T}_{h}^{k}(\cdot|B),Y \sim {T}_{h}(\cdot|x,a)}\Big[\Big(1+\|X\|^{m}+\|Y\|^{m}\Big)\Big(\|X-Y\|\Big)\Big]\nonumber\\
     &\leq& \breve{C}\Big(\mathbb{E}_{X \sim \bar{T}_{h}^{k}(\cdot|B),Y \sim {T}_{h}(\cdot|x,a)}[1+\|X\|^{2m}+\|Y\|^{2m}]\Big)^{\frac{1}{2}}\Big(\mathbb{E}_{X \sim \bar{T}_{h}^{k}(\cdot|B),Y \sim {T}_{h}(\cdot|x,a)}[\|X-Y\|^{2}]\Big)^{\frac{1}{2}}\nonumber\\
      &\leq & \sqrt{3}\breve{C}\Big(1+(\mathbb{E}_{X \sim \bar{T}_{h}^{k}(\cdot|B)}[\|X\|^{2m}])^{\frac{1}{2}}+(\mathbb{E}_{Y \sim {T}_{h}(\cdot|x,a)}[\|Y\|^{2m}])^{\frac{1}{2}}\Big)\Big(\mathbb{E}_{X \sim \bar{T}_{h}^{k}(\cdot|B),Y \sim {T}_{h}(\cdot|x,a)}[\|X-Y\|^{2}]\Big)^{\frac{1}{2}}\nonumber\\
      &\leq & \sqrt{3}\breve{C}\Big(1+\widetilde{C}(m,d)(\|\widehat{\mu}_{h}^{k}(B)\|^{m}\Delta^{m}+\|\widehat{\Sigma}_{h}^{k}(B)\|^{\frac{m}{2}}\Delta^{\frac{m}{2}}+\|\mu_{h}(x,a)\|^{m}\Delta^{m}+\|\Sigma_{h}(x,a)\|^{\frac{m}{2}}\Delta^{\frac{m}{2}})\Big)\nonumber\\
      &&\qquad \times \Big({E}_{X \sim \bar{T}_{h}^{k}(\cdot|B),Y \sim {T}_{h}(\cdot|x,a)}\|X-Y\|^{2}\Big)^{\frac{1}{2}},
\end{eqnarray}
where the second inequality holds due to \eqref{eq:local lipschitz condition}, the third inequality holds by H\"older's inequality and the last inequality holds due to \eqref{eq:2-norm mulitinormal estimate}.

Note that $\eqref{eq:value function local lipschitz tucb wasserstein I}$ holds for any joint distribution (coupling) of $\bar{T}_{h}^{k}(\cdot|B)$ and ${T}_{h}(\cdot|x,a)$, hence we may choose the one which can minimize $\Big(\mathbb{E}_{X \sim \bar{T}_{h}^{k}(\cdot|B),Y \sim {T}_{h}(\cdot|x,a)}[\|X-Y\|^{2}]\Big)^{\frac{1}{2}}$. Then we obtain the following:
\begin{eqnarray*}
   && \left|\mathbb{E}_{X \sim \bar{T}_{h}^{k}(\cdot|B)}[U(X)]- \mathbb{E}_{Y \sim {T}_{h}(\cdot|x,a)}[U(Y)]\right|\nonumber\\
   &\leq& L_{U}(B,x,a)\,
\mathcal{W}_{2}\Big(\NN(\widehat{\mu}_{h}^{k}(B)\Delta,\widehat{\Sigma}_{h}^{k}(B)\Delta),\NN(\mu_{h}(x,a)\Delta,\Sigma_{h}(x,a)\Delta)\Big).
\end{eqnarray*}
\end{proof}

Then with the two lemmas above, we are ready to provide the  proof for Theorem \ref{thm:transition kernel wasserstein local lipschitz all together}. 

\begin{proof}
    From \eqref{eq:value function local Lipschitz}, we know $V_{h+1}^{*}$ has local lipschitz property required to apply Theorem \ref{thm:wasserstein local lipschitz}, so \eqref{eq:wasserstein local lipschitz} holds for $U=V_{h+1}^{*}$ with $\breve{C}=\overline{C}_{h+1}$.

   For the drift term, with probability at least $1-\delta$,  it holds that,  $\forall (h,k) \in [H-1]\times[K]$ and $\forall B\in \mathcal{P}_h^k$ with $n_h^k(B)>0$,
    \begin{eqnarray}\label{eq:drift estimator norm upper bound}
            \|\widehat{\mu}_{h}^{k}(B)\|^{m}&\leq&\Big(\|\widehat{\mu}_{h}^{k}(B)-\bar{\mathbb{E}}[\widehat{\mu}_{h}^{k}(B)]\|+\|\bar{\mathbb{E}}[\widehat{\mu}_{h}^{k}(B)]\|\Big)^{m}\nonumber\\
    &\leq&2^{m}\Big(\|\widehat{\mu}_{h}^{k}(B)-\bar{\mathbb{E}}[\widehat{\mu}_{h}^{k}(B)]\|^{m}+\|\bar{\mathbb{E}}[\widehat{\mu}_{h}^{k}(B)]\|^{m}\Big)\nonumber\\
            &\leq&2^{m}\Bigg(\kappa_{\mu}(\delta,\|\tilde{x}(^{o}B)\|,n_h^k(B))^{m}+\eta(\|\tilde{x}(^{o}B)\|)^m\Bigg),     
    \end{eqnarray}    
    where the second inequality holds by power-mean inequality. 

Let $Z:=\|\overline{\mathbb{E}}[\widetilde{\Sigma}_h(B)]\|$, we have: 
\begin{eqnarray}\label{eq:volatitlity sum bound}
    Z
&\leq& \frac{\sum_{i=1}^n\|\sigma_h(X_h^{k_i},A_h^{k_i})\|^{2}}{n_h^k(B)}+\frac{\sum_{i=1}^{n_h^k(B)}\|\mu_{h}(X_{h}^{k_{i}},A_{h}^{k_{i}})-\overline{\mathbb{E}}[\widehat{\mu}_{h}^{k}(B)]\|^2}{n_h^k(B)}\Delta\nonumber\\
&\leq&\eta(\|\tilde{x}(^{o}B)\|)^{2}+L^2D^2\Delta,
\end{eqnarray}
where the last inequality holds by \eqref{eq:Lip-bound}. 

Then similar to \eqref{eq:drift estimator norm upper bound}, with probability at least $1-\delta$, it holds that $\forall (h,k) \in [H-1]\times[K]$, $\forall B\in \mathcal{P}_h^k$ with $n_h^k(B)>0$:
\begin{eqnarray}\label{eq:covariance estimator norm upper bound}
                \|\widehat{\Sigma}_{h}^{k}(B)\|^{\frac{m}{2}}&\leq &(\|\widehat{\Sigma}_{h}^{k}(B)-\widetilde{\Sigma}_h(B)\|+\|\widetilde{\Sigma}_h(B)-\overline{\mathbb{E}}[\widetilde{\Sigma}_h(B)]\|+\|\overline{\mathbb{E}}[\widetilde{\Sigma}_h(B)]\|)^{\frac{m}{2}} \nonumber \\
       &\leq& 3^{\frac{m}{2}}\Bigg(\kappa_{\mu}(\delta,\|\tilde{x}(^{o}B)\|,n_h^k(B))^{m}\Delta^{\frac{m}{2}}\nonumber\\
        &&+\kappa_{\Sigma}(\delta,\|\tilde{x}(^{o}B)\|,n_h^k(B))^{\frac{m}{2}}+\Big(\eta(\|\tilde{x}(^{o}B)\|)^{2}+L^2D^2\Delta\Big)^{\frac{m}{2}}\Bigg),
    \end{eqnarray}
where the second inequality holds due to \eqref{eq:volatility estimator difference} and Proposition \ref{lemma:Tail Estimates for Gaussian Sample Covariance}. 

    Therefore, with probability at least $1-2\delta$ , it holds that $\forall (h,k) \in [H-1]\times[K]$, $\forall B\in \mathcal{P}_h^k$ with $n_h^k(B)>0$, and $\forall (x,a)\in B$:
  \begin{eqnarray}\label{eq:transition kernel wasserstein local lipschitz I}
         &&\left|\mathbb{E}_{X \sim \bar{T}_{h}^{k}(\cdot|B)}[V_{h+1}^{*}(X)]- \mathbb{E}_{Y \sim {T}_{h}(\cdot|x,a)}[V_{h+1}^{*}(Y)]\right|\nonumber\\
         &\leq& \sqrt{3}\,\overline{C}_{h+1}\Big(1+{\widetilde{C}(m,d_{\mathcal{S}})}(\|\widehat{\mu}_{h}^{k}(B)\|^{m}\Delta^{m}+\|\widehat{\Sigma}_{h}^{k}(B)\|^{\frac{m}{2}}\Delta^{\frac{m}{2}}
        +\|\mu_{h}(x,a)\|^{m}\Delta^{m}\nonumber \\
         &&\quad+\|\Sigma_{h}(x,a)\|^{\frac{m}{2}}\Delta^{\frac{m}{2}})\Big)\nonumber \times \mathcal{W}_{2}\Big(\NN(\widehat{\mu}_{h}^{k}(B)\Delta,\widehat{\Sigma}_{h}^{k}(B)\Delta),\NN(\mu_{h}(x,a)\Delta,\Sigma_{h}(x,a)\Delta)\Big)\nonumber\\
         &\leq& \sqrt{3}\,\overline{C}_{\max}\Bigg(1+{\widetilde{C}(m,d_{\mathcal{S}})}\Bigg( 2^{m}\Big(\kappa_{\mu}(\delta,\|\tilde{x}(^{o}B)\|,n_h^k(B))^{m}+\eta(\|\tilde{x}(^{o}B)\|)^m\Big)\Delta^{m} \nonumber\\
         &&\quad+3^{\frac{m}{2}}\Big(\kappa_{\mu}(\delta,\|\tilde{x}(^{o}B)\|,n_h^k(B))^{m}\Delta^{\frac{m}{2}}+\kappa_{\Sigma}(\delta,\|\tilde{x}(^{o}B)\|,n_h^k(B))^{\frac{m}{2}}\nonumber
         \\&&\quad +\Big(\eta(\|\tilde{x}(^{o}B)\|)^2+L^2D^2\Delta\Big)^{\frac{m}{2}}\Big)\Delta^{\frac{m}{2}}\nonumber\\
&&\quad+\eta(\|\tilde{x}(^{o}B)\|)^m\Delta^{m}+\eta(\|\tilde{x}(^{o}B)\|)^m\Delta^{\frac{m}{2}}\Bigg)\Bigg)\nonumber\\
        &&\quad \times \Big(\kappa_{\mu}(\delta,\|\tilde{x}(^{o}B)\|,n_h^k(B))\Delta+ \kappa_{\mu}(\delta,\|\tilde{x}(^{o}B)\|,n_h^k(B))^{2}\frac{\Delta^{\frac{3}{2}}}{\sqrt{\lambda}} \nonumber\\
        &&\quad+\kappa_{\Sigma}(\delta,\|\tilde{x}(^{o}B)\|,n_h^k(B))\frac{\sqrt{d_{\mathcal{S}}}\Delta^{\frac{1}{2}}}{\sqrt{\lambda}}+\mbox{\rm T-BIAS}(B)\Big)\nonumber\\
        &\leq& \mbox{\rm T-UCB}_{h}^{k}(B)+ L_{V}(\delta, \|\tilde{x}(^{o}B)\|)\,\,\mbox{\rm T-BIAS}(B),
    \end{eqnarray}
    where the first inequality holds due to Lemma \ref{thm:wasserstein local lipschitz}. In addition, the second inequality holds due to \eqref{eq:drift estimator norm upper bound},  \eqref{eq:covariance estimator norm upper bound}, the power-mean inequality and \eqref{eq:high_prob_w_bound}. Finally, the third inequality holds since $\sqrt{n_h^k(B)}\geq 1$. 

\end{proof} 

\subsection{Proof of Theorem \ref{app:reward high prob bound 1}}\label{app:reward high prob bound 1}
\begin{proof}
For fixed $h,k$ and $B\in \mathcal{P}_h^k$ such that $n_h^k(B)>0$, by sub-Gaussian assumption of expected reward in \eqref{ass-eqn:subGaussian reward}, we have:
\begin{eqnarray*}
\overline{\mathbb{P}}\Bigg(\left|\frac{\sum_{i=1}^{n_h^k(B)}r_{h}^{k_{i}}}{n_h^k(B)}-\frac{\sum_{i=1}^{n_h^k(B)}\bar{R}_{h}(X_{h}^{k_{i}},A_{h}^{k_{i}})}{n_h^k(B)}\right|\geq \mbox{\rm R-UCB}_{h}^{k}(B)\Bigg)\leq \frac{\delta}{HK^{2}}.    
\end{eqnarray*}

Taking expectations we get: 
\begin{eqnarray*}
\mathbb{P}\Bigg(\left|\frac{\sum_{i=1}^{n_h^k(B)}r_{h}^{k_{i}}}{n_h^k(B)}-\frac{\sum_{i=1}^{n_h^k(B)}\bar{R}_{h}(X_{h}^{k_{i}},A_{h}^{k_{i}})}{n_h^k(B)}\right|\geq \mbox{\rm R-UCB}_{h}^{k}(B)\Bigg)\leq \frac{\delta}{HK^{2}}.    
\end{eqnarray*}

Then we have: 
\begin{eqnarray} \label{eq:union bound reward}
&&\mathbb{P}\Bigg(\cap_{h=1}^{H}\cap_{k=1}^{K}\cap_{B\in\mathcal{P}_h^k,n_h^k(B)>0}\Bigg\{\left|\frac{\sum_{i=1}^{n_h^k(B)}r_{h}^{k_{i}}}{n_h^k(B)}-\frac{\sum_{i=1}^{n_h^k(B)}\bar{R}_{h}(X_{h}^{k_{i}},A_{h}^{k_{i}})}{n_h^k(B)}\right|\leq \mbox{\rm R-UCB}_{h}^{k}(B)\Bigg\}\Bigg) \nonumber \\
        &=& \mathbb{P}\Bigg(\cap_{h=1}^{H}\cap_{k=1}^{K}\cap_{n_h^k(B_h^k)=1}^{K}\Bigg\{\left|\frac{\sum_{i=1}^{n_h^k(B_h^k)}r_{h}^{k_{i}}}{n_h^k(B_h^k)}-\frac{\sum_{i=1}^{n_h^k(B_h^k)}\bar{R}_{h}(X_{h}^{k_{i}},A_{h}^{k_{i}})}{n_h^k(B_h^k)}\right|\leq \sqrt{\frac{\log(\frac{2HK^{2}}{\delta})\,\theta}{n_h^k(B_h^k)}}\Bigg\}\Bigg)\nonumber\\
        &\geq& 1- \sum_{h=1}^{H}\sum_{k=1}^{K}\sum_{n_h^k(B_h^k)=1}^{K}\mathbb{P}\Bigg(\left|\frac{\sum_{i=1}^{n_h^k(B_h^k)}r_{h}^{k_{i}}}{n_h^k(B_h^k)}-\frac{\sum_{i=1}^{n_h^k(B_h^k)}\bar{R}_{h}(X_{h}^{k_{i}},A_{h}^{k_{i}})}{n_h^k(B_h^k)}\right|\geq \sqrt{\frac{\log(\frac{2HK^{2}}{\delta})\,\theta}{n_h^k(B_h^k)}}\Bigg)\nonumber\\
        &\geq& 1-\delta,
\end{eqnarray}
where $B_h^k$ is selected according to Algorithm \ref{alg:ML 2}. The first equality holds since {\it only} the estimate of selected block $B_h^k$ is updated for each $(h,k)$ pair. The first inequality holds since for a countable sets of events $\{E_i\}$ we have $\mathbb{P}(\cap_{i}E_i)\geq 1-\sum_{i}\mathbb{P}(E_i^{\complement})$. 

Furthermore, we have
\begin{eqnarray}
        &&|\widehat{R}_{h}^{k}(B)-\bar{R}_h(x,a)|\nonumber \\ 
        &\leq& \left|\frac{\sum_{i=1}^{n_h^k(B)}r_{h}^{k_{i}}}{n_h^k(B)}-\frac{\sum_{i=1}^{n_h^k(B)}\bar{R}_{h}(X_{h}^{k_{i}},A_{h}^{k_{i}})}{n_h^k(B)}\right|+\left|\frac{\sum_{i=1}^{n_h^k(B)}\bar{R}_{h}(X_{h}^{k_{i}},A_{h}^{k_{i}})}{n_h^k(B)}-\bar{R}_h(x,a)\right|.\label{eq:triangle reward}
\end{eqnarray} 
Combine \eqref{eq:union bound reward} and \eqref{eq:triangle reward}, we verify that the desirable result in \eqref{eq:reward high prob bound} holds.
\end{proof}

\subsection{Proof of Lemma \ref{thm:quoted cumulative bound lemma}}\label{app:proof-4-10}

\begin{proof}
For $B\in \mathcal{P}_h^k$, $h \in [H]$ and $k \in J_{\rho}^{K}$:
\begin{eqnarray}\label{eq:CONF splitting bound parent}
        {\rm CONF}_h^{k}(B)&=&\frac{g_{1}(\delta, \|\tilde{x}(^{o}B)\|)}{\sqrt{n_h^k(B)}}\nonumber\\
        &\leq&\frac{g_{1}(\delta,\|\tilde{x}(^{o}{\rm par}(B))\|)}{\sqrt{n_h^k({\rm par}(B))}}= {\rm CONF}_h^{k}({\rm par}(B))\nonumber\\
        &\leq&{\rm diam}({\rm par}(B))= 2\,\,{\rm diam}(B),
\end{eqnarray}
where ${\rm par}(B)$ is the parent block of $B$ and we  use the fact that $^{o}B =\, ^{o}{\rm par}(B)$. 

Rearranging \eqref{eq:CONF splitting bound parent}, we get:
\begin{eqnarray}\label{eq:n min}
    n_h^k(B)\geq \Big(\frac{g_{1}(\delta, \|\tilde{x}(^{o}B)\|)}{2\,\,{\rm diam}(B)}\Big)^{2}.
\end{eqnarray}

In addition, $n_h^k(B)$ must satisfy ${\rm CONF}_h^k(B)>{\rm diam}(B)$, hence
\begin{eqnarray}\label{eq: n max}
    n_h^k(B)< \Big(\frac{g_{1}(\delta, \|\tilde{x}(^{o}B)\|)}{{\rm diam}(B)}\Big)^{2}.
\end{eqnarray}

Let $l(B)$ be the total number of ancestors of $B$ in the adaptive partition and denote them as $B_{0}, B_{1},..., B_{l(B)-1}$ arranged in descending order of size. Also denote $B$ as $B_{l(B)}$ for consistency.  Then we have 
\begin{eqnarray*}
    \frac{\sum_{i=1}^{n_h^k(B)}{\rm diam}(B_{h}^{k_{i}})}{n_h^k(B)}\leq \frac{\sum_{l=0}^{l(B)-1}|\{k^{\prime}:B_h^{k^{\prime}}=B_l\}|{\rm diam}(B_l)}{\sum_{l=0}^{l(B)-1}|\{k^{\prime}:B_h^{k^{\prime}}=B_l\}|}.
\end{eqnarray*}

By \eqref{eq:n min} and \eqref{eq: n max}: 
\begin{eqnarray*}
    |\{k^{\prime}:B_h^{k^{\prime}}=B_l\}|\leq \left(\frac{g_{1}(\delta, \|\tilde{x}(^{o}B)\|)}{{\rm diam}(B)}\right)^{2}-\left(\frac{g_{1}(\delta, \|\tilde{x}(^{o}B)\|)}{2\,\,{\rm diam}(B)}\right)^{2}= \frac{3}{4}\frac{g_{1}(\delta, \|\tilde{x}(^{o}B)\|) ^{2}}{{\rm diam}(B_l)^{2}}.
\end{eqnarray*}

Note that ${\rm diam}(B_l)= 2^{-l}D$, we have: 
\begin{eqnarray*}
        \frac{\sum_{i=1}^{n_h^k(B)}{\rm diam}(B_{h}^{k_{i}})}{n_h^k(B)}&\leq&\frac{\sum_{l=0}^{l(B)-1}|\{k^{\prime}:B_h^{k^{\prime}}=B_l\}|{\rm diam}(B_l)}{\sum_{l=0}^{l(B)-1}|\{k^{\prime}:B_h^{k^{\prime}}=B_l\}|}\\
        &\leq&\frac{\sum_{l=0}^{l(B)-1}2^{-l}2^{2 l}}{\sum_{l=0}^{l(B)-1}2^{2 l}}D
        \leq 4\times 2^{-l(B)}D
        = 4\,\,{\rm diam}(B).
\end{eqnarray*}

Then since ${\rm diam}(B_{h}^{k_{i}}) \leq D$, we have:
\begin{eqnarray*}
        \frac{\sum_{i=1}^{n_h^k(B)} {\rm diam}(B_{h}^{k_{i}})^{2}}{n_h^k(B)} &\leq&\frac{\sum_{i=1}^{n_h^k(B)} {\rm diam}(B_{h}^{k_{i}})}{n_h^k(B)}D \nonumber\\
        &\leq&4 D\,\ {\rm diam}(B),  
\end{eqnarray*}
where the second inequality holds due to \eqref{eq:cumulative bound I}.
\end{proof}

\subsection{Proof of Theorem \ref{thm:BIAS Bound}}\label{app:BIAS Bound}
\begin{proof}
Recall from Lemma \ref{lemma:conditional distribution of drift term} and Lemma \ref{lemma:conditional distribution of the volatility term} that,   
\begin{eqnarray*}
         \overline{\mathbb{E}}[\widehat{\mu}_{h}^{k}(B)]&=&\frac{\sum_{i=1}^{n_h^k(B)}\mu_{h}(X_{h}^{k_{i}},A_{h}^{k_{i}})}{n_h^k(B)},\\
         \overline{\mathbb{E}}[\widetilde{\Sigma}_{h}^{k}(B)]&=&
    \frac{\sum_{i=1}^{n_h^k(B)}\Big(\Sigma_{h}(X_{h}^{k_{i}},A_{h}^{k_{i}})+\big(\mu_{h}(X_{h}^{k_{i}},A_{h}^{k_{i}})-\overline{\mathbb{E}}[\widehat{\mu}_{h}^{k}(B)]\big)\big(\mu_{h}(X_{h}^{k_{i}},A_{h}^{k_{i}})-\overline{\mathbb{E}}[\widehat{\mu}_{h}^{k}(B)]\big)^{\top}\Delta\Big)}{n_h^k(B)}.\nonumber
\end{eqnarray*}
Then we have,
    \begin{eqnarray}\label{eq: BIAS F0,1,2,3}
        &&\Big\|\overline{\mathbb{E}}[\widehat{\mu}_{h}^{k}(B)]-\mu_{h}(x,a)\Big\|\Delta+\Big\|\overline{\mathbb{E}}[\widetilde{\Sigma}_{h}^{k}(B)]-\Sigma_h(x,a)\Big\|\frac{\sqrt{\Delta}}{\sqrt{\lambda}}\nonumber\\
       &\leq&\,\, \Big\|\frac{\sum_{i=1}^{n_h^k(B)}\mu_{h}(X_{h}^{k_{i}},A_{h}^{k_{i}})}{n_h^k(B)}-\mu_{h}(x,a)\Big\|\Delta
        + \Big\|\frac{\sum_{i=1}^{n_h^k(B)}\Sigma_{h}(X_{h}^{k_{i}},A_{h}^{k_{i}})}{n_h^k(B)}-\Sigma_{h}(x,a)\Big\|\frac{\sqrt{\Delta}}{\sqrt{\lambda}}\nonumber\\
        &&\qquad + \sum_{i=1}^{n_h^k(B)}\frac{\Big\|(\mu_{h}(X_{h}^{k_{i}},A_{h}^{k_{i}})-\mu_{h}(x,a)+\mu_{h}(x,a)-\frac{\sum_{i}^{n_h^k(B)}\mu_{h}(X_{h}^{k_{i}},A_{h}^{k_{i}})}{n_h^k(B)}) \Big\|^{2}}{n_h^k(B)}\frac{\Delta^{\frac{3}{2}}}{\sqrt{\lambda}}\nonumber\\
        &\leq&F_{0}\Delta+(F_{1}+2F_{2}+2F_{3})\frac{\sqrt{\Delta}}{\sqrt{\lambda}},
    \end{eqnarray} 
in which 
\begin{eqnarray}\label{eq:F0,F1,F2,F3}
F_{0}&=&\Big\|\frac{\sum_{i=1}^{n_h^k(B)}\mu_{h}(X_{h}^{k_{i}},A_{h}^{k_{i}})}{n_h^k(B)}-\mu_{h}(x,a)\Big\|, \quad F_{1}=\Big\|\frac{\sum_{i=1}^{n_h^k(B)}\Sigma_{h}(X_{h}^{k_{i}},A_{h}^{k_{i}})}{n_h^k(B)}-\Sigma_{h}(x,a)\Big\|, \\
     F_{2}&=&\sum_{i=1}^{n_h^k(B)}\frac{\Big\|(\mu_{h}(X_{h}^{k_{i}},A_{h}^{k_{i}})-\mu_{h}(x,a))\Big\|^{2}}{n_h^k(B)}\Delta,  \quad F_{3}=\sum_{i=1}^{n_h^k(B)}\frac{\Big\|(\mu_{h}(x,a)-\frac{\sum_{i}^{n_h^k(B)}\mu_{h}(X_{h}^{k_{i}},A_{h}^{k_{i}})}{n_h^k(B)}) \Big\|^{2}}{n_h^k(B)}\Delta.\nonumber
    \end{eqnarray}

 For $F_0$ 
 in \eqref{eq:F0,F1,F2,F3},  
 since $(x,a)$ and $ (X_{h}^{k_{i}},A_{h}^{k_{i}})$ lie in $ B_{h}^{k_{i}}$, we have $\Big\|\mu_{h}(X_{h}^{k_{i}},A_{h}^{k_{i}})-\mu_{h}(x,a)\Big\|\leq L \, {\rm diam}(B_{h}^{k_{i}})$. Hence, 
\begin{eqnarray}\label{eq:F0 bound}
      F_{0}
       \leq \frac{\sum_{i=1}^{n_h^k(B)}\Big\|\mu_{h}(X_{h}^{k_{i}},A_{h}^{k_{i}})-\mu_{h}(x,a)\Big\|}{n_h^k(B)}
       \leq  \frac{2L\sum_{i=1}^{n_h^k(B)} {\rm diam}(B_{h}^{k_{i}})}{n_h^k(B)}.
\end{eqnarray}

For $F_1$, 
we have: 
\begin{eqnarray}\label{eq:F1 bound}
    F_{1}
        &\leq&   \frac{\sum_{i=1}^{n_h^k(B)}\Big\|\Sigma_{h}(X_{h}^{k_{i}},A_{h}^{k_{i}})-\Sigma_{h}(x,a)\Big\|}{n_h^k(B)} \nonumber\\
        &\leq& \frac{\sum_{i=1}^{n_h^k(B)}\Big(\Big\|\sigma_{h}(X_{h}^{k_{i}},A_{h}^{k_{i}})\Big\|+\Big\|\sigma_{h}(x,a)\Big\|\Big)\Big\|\sigma_{h}(X_{h}^{k_{i}},A_{h}^{k_{i}})-\sigma_{h}(x,a)\Big\|}{n_h^k(B)}\nonumber\\
        &\leq& \frac{\sum_{i=1}^{n_h^k(B)}2\eta(\|\tilde{x}(^{o}B)\|)\Big\|\sigma_{h}(X_{h}^{k_{i}},A_{h}^{k_{i}})-\sigma_{h}(x,a)\Big\|}{n_h^k(B)}\nonumber\\
        &\leq& 4L\,\,\eta(\|\tilde{x}(^{o}B)\|)\frac{\sum_{i=1}^{n_h^k(B)}{\rm diam}(B_h^{k_i})}{n_h^k(B)},
\end{eqnarray}
where we used \eqref{eq:Lip-bound} in getting the third inequality of \eqref{eq:F1 bound}.

For $F_{2}$ and $F_{3}$ 
we have
\begin{eqnarray}\label{eq:F2 F3 bound}
  F_{2} \leq \frac{4L^{2}\sum_{i=1}^{n_h^k(B)} {\rm diam}(B_{h}^{k_{i}})^{2}}{n_h^k(B)}, \quad F_{3}\leq \left(2L\frac{\sum_{i=1}^{n_h^k(B)} {\rm diam}(B_{h}^{k_{i}})}{n_h^k(B)}\right)^{2},
\end{eqnarray}
Combining \eqref{eq: BIAS F0,1,2,3}, \eqref{eq:F0 bound}, \eqref{eq:F1 bound}, \eqref{eq:F2 F3 bound}, \eqref{eq:cumulative bound I}, 
we get \eqref{eq:BIAS BOUND}.
\end{proof}

\subsection{Proof of Theorem \ref{thm:RBIAS bound}}\label{app:RBIAS bound}
\begin{proof}
\begin{eqnarray}
 \left|\frac{\sum_{i=1}^{n_h^k(B)}\bar{R}_{h}(X_{h}^{k_{i}},A_{h}^{k_{i}})}{n_h^k(B)}-\bar{R}_h(x,a)\right|&\leq&\frac{\sum_{i=1}^{n_h^k(B)}|\bar{R}_{h}(X_{h}^{k_{i}},A_{h}^{k_{i}})-\bar{R}_h(x,a)|}{n_h^k(B)}\nonumber\\ &\leq&\frac{\sum_{i=1}^{n_h^k(B)}L(1+\|X_h^{k_i}\|^m+\|x\|^m)(\|X_h^{k_{i}}-x\|+\|A_h^{k_{i}}-a\|)}{n_h^k(B)}\nonumber\\
        &\leq&2L\Big(1+2(\|\tilde{x}(^{o}B)\|+D)^m\Big)\sum_{i=1}^{n_h^k(B)}\frac{{\rm diam}(B_{h}^{k_{i}})}{n_h^k(B)}\nonumber\\
        &\leq& \mbox{\rm R-BIAS}(B).\nonumber
\end{eqnarray}
Here, we apply \eqref{eq:cumulative bound I} to get the last inequality.
\end{proof}

\section{Technical details in Section \ref{sec:5}}

\subsection{Proof of Theorem \ref{thm: True Upper Bounded By Estimates}}\label{app:proof-5-3}

\begin{proof} 
Recall from Theorems \ref{thm:transition kernel wasserstein local lipschitz all together} and \ref{thm:reward high prob bound}, we know that with probability at least $1-3\delta$,
\begin{eqnarray}\label{eq:simultanuous_result}
   \eqref{eq:transition kernel wasserstein local lipschitz all together}  \mbox{ and }   \eqref{eq:reward high prob bound} \mbox{ hold simultaneously}, 
\end{eqnarray}
 This fact serves as a building  block of the proof.\\
   

For $k=0$, with the initialization of $(\overline{Q}_{h}^{0}$, $\widetilde{V}_h^0)_{h\in [H]}$ in  \eqref{eq:initial value for estimation}, we know \eqref{eq:upperbound} holds.
Now assume that \eqref{eq:upperbound} holds for $k-1$ and we prove it holds for $k$.\\

\noindent \underline{For the case of $h=H$}: 
   For $B \in \mathcal{P}_H^k$ with $n_H^k(B)>0$ and for any $(x,a) \in B$, note that by \eqref{eq:reward high prob bound} and \eqref{eq:RBIAS bound} : 
    \begin{eqnarray}
          \widehat{R}_H^k(B)-\bar{R}_{H}(x,a) \geq -\mbox{\rm R-UCB}_H^k(B)-\mbox{\rm R-BIAS}(B). 
    \end{eqnarray}

    Therefore
$
          \overline{Q}_{H}^{k}(B)=\widehat{R}_{H}^{k}(B)+\mbox{\rm R-UCB}_{H}^{k}(B)+\mbox{\rm R-BIAS}(B)
          \geq Q_{H}^{*}(x,a).        
$    For $B\in \mathcal{P}_H^k$ with $n_H^k(B)=0$, by \eqref{eq:initial value for estimation}, 
                  we have $\overline{Q}_{H}^{k}(B)=\overline{Q}_{H}^{0}(B)\geq Q_{H}^{*}(x,a).$ So we proved the first inequality in \eqref{eq:upperbound}.

For any $S \in \Gamma_{\mathcal{S}}(\mathcal{P}_H^k)$ and any $x\in S$, 
we have $\widetilde{V}_{H}^{k-1}(S)\ge V^*_H(x)$ by induction. 
Furthermore, 
   \begin{eqnarray*}
           \widetilde{V}_{H}^{k}(S)=\max_{B \in \mathcal{P}_H^k, \Gamma_{\mathcal{S}}(B) \supset S}\overline{Q}_{H}^{k}(B)\geq\overline{Q}_{H}^{k}(B^{*})\geq  Q_{H}^{*}(x,a_{H}^{*}(x))=V_{H}^{*}(x),
   \end{eqnarray*}
where $B^{*} \in \mathcal{P}_H^k$ is defined such that $(x,a_{H}^{*}(x)) \in B^{*}$. Hence 
we have $\widetilde{V}_{H}^{k}(S)\ge V^*_H(x)$. 

Finally, we check $\bar V^k_H(x)\ge V_{H}^*(x)$.  For any $x \in \mathcal{S}_1$, there exits  some $S^{\prime}\in \Gamma_{\mathcal{S}}(\mathcal{P}_H^k)$ such that
   \begin{eqnarray*}
           \overline{V}_{H}^{k}(x)&=&\widetilde{V}_{h}^{k}(S^{\prime})+C_{H}(1+\|x\|^m+\|\tilde{x}(S^{\prime})\|^m)\|x-\tilde{x}(S^{\prime})\|\\
           &\geq& V_{H}^{*}(\tilde{x}(S^{\prime}))+C_{H}(1+\|x\|^m+\|\tilde{x}(S^{\prime})\|^m)\|x-\tilde{x}(S^{\prime})\|\\
           &\geq& V_{H}^{*}(x),
   \end{eqnarray*}
where the last inequality holds since $|V_{H}^{*}(\tilde{x}(S^{\prime}))-V_{H}^{*}(x)|\leq C_{H}(1+\|x\|^m+\|\tilde{x}(S^{\prime})\|^m)\|x-\tilde{x}(S^{\prime})\|$ by \eqref{eq:value function local Lipschitz}.

For $x\in \mathbb{R}^{d_{\mathcal{S}}}\setminus \mathcal{S}_{1}$, by \eqref{eq:Q,V estimation for h=H outside}, we know that 
\begin{eqnarray*}
        \overline{V}_{H}^{k}(x)&=&\overline{V}_{H}^{k}\left(\frac{\rho}{\|x\|}x
        \right)+C_{H}(1+\|x\|^{m}+\rho^{m})\left\|\left(1-\frac{\rho}{\|x\|}\right)x\right\|\\
        &\geq& V_{H}^{*}(\frac{\rho}{\|x\|}x)+C_{H}(1+\|x\|^{m}+\rho^{m})\left\|(1-\frac{\rho}{\|x\|})x\right\|\\
        &\geq& V_{H}^{*}(x),
\end{eqnarray*}
where the first inequality holds since $\frac{\rho}{\|x\|}x\in \mathcal{S}_{1}$.\\

\noindent \underline{Induction ($h+1 \mapsto h$)}: 
Assume \eqref{eq:upperbound} holds for $h+1$ and we now show it also holds for $h$.


For $B\in \mathcal{P}_h^k$ with $n_h^k(B)>0$, by \eqref{eq:reward high prob bound} and \eqref{eq:RBIAS bound}, we have 
\begin{eqnarray}\label{eq:RUCB lower bound} 
          \widehat{R}_h^k(B)-\bar{R}_{h}(x,a) \geq -\mbox{\rm R-UCB}_h^k(B)-\mbox{\rm R-BIAS}(B). 
    \end{eqnarray}

We also have 
    \begin{eqnarray}\label{eq:TUCB lower bound}
     &&   \mathbb{E}_{X \sim \bar{T}_{h}^{k}(\cdot|B)}[\overline{V}_{h+1}^k(X)]- \mathbb{E}_{X \sim {T}_{h}(\cdot|x,a)}[V_{h+1}^{*}(X)]\nonumber\\
         &\geq& \mathbb{E}_{X \sim \bar{T}_{h}^{k}(\cdot|B)}[V_{h+1}^{*}(X)]- \mathbb{E}_{X \sim {T}_{h}(\cdot|x,a)}[V_{h+1}^{*}(X)]\nonumber\\
         &\geq& -\mbox{\rm T-UCB}_{h}^{k}(B)-L_{V}(\delta, \|\tilde{x}(^{o}B)\|)\mbox{\rm T-BIAS}(B). 
    \end{eqnarray}
The first inequality holds by induction hypothesis on $h+1$ and the second inequality holds due to \eqref{eq:transition kernel wasserstein local lipschitz all together} and \eqref{eq:BIAS BOUND}. 

Combining \eqref{eq:RUCB lower bound} and \eqref{eq:TUCB lower bound},  we have
  $$        \overline{Q}_{h}^{k}(B)=\widehat{R}_{h}^{k}(B)+\mbox{\rm R-UCB}_{h}^{k}(B)+\mathbb{E}_{X \sim \bar{T}_{h}^{k}(\cdot|B)}[\overline{V}_{h+1}^{k}(X)]+\mbox{\rm T-UCB}_{h}^{k}(B)+{\rm BIAS}(B)
          \geq Q_{h}^{*}(x,a).
  $$

For $B\in \mathcal{P}_h^k$ with $n_h^k(B)=0$, by \eqref{eq:initial value for estimation}, we also have
    \begin{eqnarray}
                  \overline{Q}_{h}^{k}(B)=\overline{Q}_{h}^{0}(B)\geq Q_{h}^{*}(x,a).
    \end{eqnarray}

   For any $S \in \Gamma_{\mathcal{S}}(\mathcal{P}_h^k)$ and any $x\in S$, 
   $\widetilde{V}_{h}^{k-1}(S)\ge V^*_h(x)$ holds by induction, and 
           $$\max_{B \in \mathcal{P}_h^k,\Gamma_{\mathcal{S}}(B) \supset S}\overline{Q}_{h}^{k}(B)\geq\overline{Q}_{h}^{k}(B^{*})\geq  Q_{h}^{*}(x,a_{h}^{*}(x))=V_{h}^{*}(x),$$
where $B^{*} \in \mathcal{P}_h^k$ is the block containing $(x,a_{h}^{*}(x))$. 
Hence $\widetilde{V}_{h}^{k}(S)\ge V^*_h(x).$

  Finally, for any $x\in \mathcal{S}_{1}$,  there exists a $S^{\prime}\in \Gamma_{\mathcal{S}}(\mathcal{P}_h^k)$ such that,
   \begin{eqnarray*}
           \overline{V}_{h}^{k}(x)&=&\widetilde{V}_{h}^{k}(S^{\prime})+C_h(1+\|x\|^m+\|\tilde{x}(S^{\prime})\|^m)\|x-\tilde{x}(S^{\prime})\|\\
           &\geq&V_{h}^{*}(\tilde{x}(S^{\prime}))+C_h(1+\|x\|^m+\|\tilde{x}(S^{\prime})\|^m)\|x-\tilde{x}(S^{\prime})\|\\
           &\geq& V_{h}^{*}(x),      
   \end{eqnarray*}
where the last inequality holds since $|V_{h}^{*}(\tilde{x}(S^{\prime}))-V_{h}^{*}(x)|\leq C_h(1+\|x\|^m+\|\tilde{x}(S^{\prime})\|^m)\|x-\tilde{x}(S^{\prime})\|$ by \eqref{eq:value function local Lipschitz}.

For $x\in \mathbb{R}^{d_{\mathcal{S}}}\setminus \mathcal{S}_{1}$, we know that 
\begin{eqnarray*}
        \overline{V}_{h}^{k}(x)&=&\overline{V}_{h}^{k}\left(\frac{\rho}{\|x\|}x\right)+C_h(1+\|x\|^{m}+\rho^{m})\left\|\left(1-\frac{\rho}{\|x\|}\right)x\right\|\\
        &\geq& V_{h}^{*}(\frac{\rho}{\|x\|}x)+C_h(1+\|x\|^{m}+\rho^{m})\left\|(1-\frac{\rho}{\|x\|})x\right\|\\
        &\geq& V_{h}^{*}(x), 
\end{eqnarray*}
where the first inequality holds since $\frac{\rho}{\|x\|}x\in \mathcal{S}_{1}$.
\end{proof}

\subsection{Proof of Theorem \ref{thm:barV local lipschitz property}}\label{app:proof-5-1}

\begin{proof}
    We divide the proof subject into three cases.

\vspace{5pt}
\noindent\underline{Case {\bf (1)} $\|x_1\|\leq \rho, \|x_2\|\leq\rho$.}  
  Without lose of generality, let us assume $\|x_1\|\geq \|x_2\|$. 

For $i=1,2$, define $\overline{S}_i := \arg \min_{S \in \Gamma_{S}(\mathcal{P}_h^k)}V_{h,k}^{\rm local}(x_i,S)$, and denote $\widetilde{S}_{i}$ as the state block such that  $x_{i} \in \widetilde{S}_{i}$ and $\widetilde{S}_i \in \Gamma_{\mathcal{S}}(\mathcal{P}_h^k) $. 
{{
Then we  have
\begin{eqnarray}\label{eq:V_local_ineq}
    \overline{V}_{h}^{k} (x_i)=V_{h,k}^{\rm local}(x_i,\overline{S}_i) \leq V_{h,k}^{\rm local}(x_i,\tilde S_i).
\end{eqnarray}

\noindent By the last inequality,  we have 
\begin{eqnarray}\label{eq:bar V linear bound}
 &&\|x_i-\tilde{x}(\overline{S}_i)\| \nonumber\\
 &\leq& \frac{\widetilde{V}_h^k(\widetilde{S}_i)+C_h \Big(1+\|x_i\|^m+\|\tilde{x}(\widetilde{S}_i)\|^m\Big)\|x_i-\tilde{x}(\widetilde{S}_i)\|- \widetilde{V}_{h}^{k}(\overline{S}_i)}{C_h \Big(1+\|x_i\|^m+\|\tilde{x}(\overline{S}_i)\|^m\Big)} \nonumber\\
   &\leq& \frac{\Big|\widetilde{V}_h^k(\widetilde{S}_i)\Big|+\Big|\widetilde{V}_{h}^{k}(\overline{S}_i)\Big|+C_h \Big(1+\|x_i\|^m+\|\tilde{x}(\widetilde{S}_i)\|^m\Big)D}{C_h \Big(1+\|x_i\|^m+\|\tilde{x}(\overline{S}_i)\|^m\Big)}\nonumber\\
   &\leq& \frac{\widetilde{C}_{h}\Big(2+(\|x_i\|+2D)^{m+1}+(\|\tilde{x}(\overline{S}_i)\|+2D)^{m+1}\Big)+C_h \Big(1+\|x_i\|^m+(\|x_i\|+D)^m\Big)D}{C_h \Big(1+\|x_i\|^m+\|\tilde{x}(\overline{S}_i)\|^m\Big)}\nonumber\\
   &\leq& \frac{\widetilde{C}_{h}\Big(2+2^{m}\|x_i\|^{m+1}+2^{m}\|\tilde{x}(\overline{S}_i)\|^{m+1}+2^{2m+1}D^{m+1}\Big)}{C_h \Big(1+\|x_i\|^m+\|\tilde{x}(\overline{S}_i)\|^m\Big)}\nonumber\\
      &&\,\,+\frac{C_h \Big(1+(2^{m-1}+1)\|x_i\|^m+2^{m-1}D^m\Big)D}{C_h \Big(1+\|x_i\|^m+\|\tilde{x}(\overline{S}_i)\|^m\Big)}\nonumber\\
   &\leq& \frac{1}{2}(\|x_i\|+\|\tilde{x}(\overline{S}_i)\|)+\breve{c}_{0},
\end{eqnarray}
where $\breve{c}_{0}$  a positive constant depending on $D,m,C_h,\widetilde{C}_h$. 
The first inequality holds due to \eqref{eq:V_local_ineq} and the definition of $V_{h,k}^{\rm local}(.,.)$ in \eqref{V-estimate}. The third inequality holds with probability at least $1-3\delta$ due to \eqref{eq:initial value for estimation}, Theorem \ref{thm: True Upper Bounded By Estimates}, and the fact that $\|x-\tilde{x}(\widetilde{S}_i)\|\leq D$. The fourth inequality holds due to the power-mean inequality. The last inequality holds by the fact that $C_h\geq 2^{m+1}\widetilde{C}_h$. 

By the triangle inequalty  $\|\tilde{x}(\overline{S}_i)\|-\|x_i\|\leq  \|x_i-\tilde{x}(\overline{S}_i)\| $  and \eqref{eq:bar V linear bound}, we have 
\begin{eqnarray}\label{eq:bar V linear bound II}
    \|\tilde{x}(\overline{S}_i)\|\leq 3\|x_i\|+ 2\breve{c}_{0}.
\end{eqnarray}

Now we are ready to  bound $|\overline{V}_{h}^{k}(x_1)-\overline{V}_{h}^{k}(x_2)|$ by two terms.
\begin{eqnarray}\label{eq:barV local lipschitz}
     &&|\overline{V}_{h}^{k}(x_1)-\overline{V}_{h}^{k}(x_2)|  \\
     &=&(\overline{V}_{h}^{k}(x_1)-\overline{V}_{h}^{k}(x_2))\mathbb{I}_{\{\overline{V}_{h}^{k}(x_1)-\overline{V}_{h}^{k}(x_2)\geq 0\}}+ (\overline{V}_{h}^{k}(x_2)-\overline{V}_{h}^{k}(x_1))\mathbb{I}_{\{\overline{V}_{h}^{k}(x_1)-\overline{V}_{h}^{k}(x_2)< 0\}}\nonumber\\
     &\leq&\underbrace{\Big|V_{h,k}^{\rm local}(x_1,\overline{S}_2)- V_{h,k}^{\rm local}(x_2,\overline{S}_2)\Big|}_{(I)}+ \underbrace{\Big|V_{h,k}^{\rm local}(x_2,\overline{S}_1)- V_{H,k}^{\rm local}(x_1,\overline{S}_1)\Big|}_{(II)},\nonumber
\end{eqnarray}
 where the inequality holds due to \eqref{eq:V_local_ineq}.

For term (I), 
\begin{eqnarray}\label{eq:bar V term I bound}
        &&\Big|V_{h,k}^{\rm local}(x_1,\overline{S}_2)- V_{h,k}^{\rm local}(x_2,\overline{S}_2)\Big| \\     
   &\leq& C_h\Bigg(1+\|\tilde{x}(\overline{S}_2)\|^m\Bigg)\|x_1-x_2\|+C_h\Bigg|\|x_1\|^m\|x_1-\tilde{x}(\overline{S}_2)\|-\|x_2\|^m\|x_2-\tilde{x}(\overline{S}_2)\|\Bigg|\nonumber\\
    &\leq & C_h\Bigg(1+(3\|x_2\|+ 2\breve{c}_{0})^m\Bigg)\|x_1-x_2\|+\|x_1\|^m\|x_1-x_2\|+(4\|x_2\|+2\breve{c}_{0})\Big| \|x_1\|^m-\|x_2\|^m\Big|,\nonumber 
\end{eqnarray}
where the first inequality holds by triangle inequality and the second inequality holds due to \eqref{eq:bar V linear bound II}. 

Similarly,  for term (II):
\begin{eqnarray}\label{eq:bar V term II bound}
         &&\Big|V_{h,k}^{\rm local}(x_2,\overline{S}_1)- V_{H,k}^{\rm local}(x_1,\overline{S}_1)\Big|\\
    &\leq & C_h\Bigg(1+(3\|x_1\|+ 2\breve{c}_{0})^m\Bigg)\|x_1-x_2\|+\|x_2\|^m\|x_1-x_2\|+(4\|x_1\|+2\breve{c}_{0})\Big| \|x_1\|^m-\|x_2\|^m\Big|.\nonumber
\end{eqnarray}

It is clear that $m=0$ is a trivial case, so we only consider $m\ge 1$, with which we have
  $$          \Big|\|x_1\|^m-\|x_2\|^m\Big|\leq (m-1)\|x_1\|^{m-1}\|x_1-x_2\|, 
    \mbox{ and }    \|x_1\|^{m-1}\leq \frac{m-1}{m}\|x_1\|^m+\frac{1}{m}.
    $$

Combine \eqref{eq:barV local lipschitz}, \eqref{eq:bar V term I bound}, \eqref{eq:bar V term II bound} and the facts above, we have: 
\begin{eqnarray}\label{eq:local lip inside}
    |\overline{V}_{h}^{k}(x_1)-\overline{V}_{h}^{k}(x_2)|\leq \breve{c}^{1}_{h}(1+\|x_1\|^m+\|x_2\|^m)\|x_1-x_2\|,
\end{eqnarray}
where $\breve{c}^{1}_{h}$ depends only on $C_h,\wtCh, m, D$. 

}}

\vspace{10pt}

\noindent\underline{Case {\bf (2)} $\|x_1\|> \rho, \|x_2\|\leq\rho$.} 
 In this case,
\begin{eqnarray}\label{eq:local lipschitz inside and outside}
       \Big|\overline{V}_{h}^{k}(x_1)-\overline{V}_{h}^{k}(x_2)\Big| &=& \Big|\overline{V}_{h}^{k}\Big(\frac{\rho}{\|x_1\|}x_1\Big)-\overline{V}_{h}^{k}(x_2)+C_h(1+\|x_1\|^{m}+\rho^m)(\|x_1\|-\rho)\Big| \nonumber\\
       &\leq& \breve{c}^{3}_{h}(1+2\rho^m)\Big\|\frac{\rho}{\|x_1\|}x_1-x_2\Big\|+C_h(1+\|x_1\|^{m}+\rho^m)(\|x_1\|-\rho)\nonumber\\
&\leq&\breve{c}^{4}_{h}\Big(1+\|x_1\|^m+\|x_2\|^m\Big)\|x_1-x_2\|,
    \end{eqnarray}
 where the first inequality holds due to \eqref{eq:Q,V estimation for h=H} ,  \eqref{V-estimate} and \eqref{eq:local lip inside}; the second inequality holds due to $\|\frac{\rho}{\|x_1\|}x_1-x_2\|\leq \|x_1-x_2\|$  and $\|x_1\|-\rho\leq \|x_1-x_2\|$ ; and finally, the third inequality holds since  $\rho^m\leq \|x_1\|^m$. Note that $\breve{c}^{4}_{h}$ depends only on $C_h,\wtCh, m, D$.

\vspace{10pt}

\noindent \underline{Case {\bf (3)}  $\|x_1\|> \rho, \|x_2\|>\rho$.}   In this case,
    \begin{eqnarray} \label{eq:local lipschitz both outside}
        \Big|\overline{V}_{h}^{k}(x_1)-\overline{V}_{h}^{k}(x_2)\Big| &=& \Big|\overline{V}_{h}^{k}\Big(\frac{\rho}{\|x_1\|}x_1\Big)-\overline{V}_{h}^{k}\Big(\frac{\rho}{\|x_2\|}x_2\Big)+C_h(1+\rho^m)(\|x_1\|-\|x_2\|)\nonumber\\
        &&+C_h\|x_1\|^{m}(\|x_1\|-\rho)-C_h\|x_2\|^{m}(\|x_2\|-\rho)\Big|\nonumber\\
        &\leq& \Big|\overline{V}_{h}^{k}\Big(\frac{\rho}{\|x_1\|}x_1\Big)-\overline{V}_{h}^{k}\Big(\frac{\rho}{\|x_2\|}x_2\Big)\Big|+\Big|C_h(1+\rho^m)(\|x_1\|-\|x_2\|)\Big|\nonumber\\
        &&+\Big|C_h\|x_1\|^{m}(\|x_1\|-\rho)-C_h\|x_2\|^{m}(\|x_2\|-\rho)\Big\|\nonumber\\
        &\leq& \breve{c}^{3}_{h}(1+2\rho^m)\Big\|\frac{\rho}{\|x_1\|}x_1-\frac{\rho}{\|x_2\|}x_2\Big\|+C_h(1+\rho^m)\Big|\|x_1\|-\|x_2\|\Big|\nonumber\\
        &&+C_h\Big|\|x_1\|^{m+1}-\|x_2\|^{m+1}\Big|+\rho C_h\Big|\|x_1\|^{m}-\|x_2\|^{m}\Big|\nonumber\\
&\leq&\breve{c}^{5}_{h}\Big(1+\|x_1\|^m+\|x_2\|^m\Big)\|x_1-x_2\|,
    \end{eqnarray}
where the second inequality holds due to \eqref{eq:Q,V estimation for h=H}, \eqref{V-estimate} and \eqref{eq:local lip inside}. In addition, the third inequality holds due to the facts that $|a^m-b^m|\leq |a-b|(a+b)^{m-1}$ for $a,b\geq 0$ and $\|\frac{\rho}{\|x_1\|}x_1-\frac{\rho}{\|x_2\|}x_2\|\leq \|x_1-x_2\|$. Also,  the fourth inequality holds since $\rho^m\leq\|x_1\|^m+\|x_2\|^m$. $\breve{c}^{5}_{h}$ depends only on $C_h,\wtCh, m, D$.

Finally, let $\hCh=\max\{\breve{c}^{3}_{h},\breve{c}^{4}_{h},\breve{c}^{5}_{h}\}$, combine \eqref{eq:local lip inside}, \eqref{eq:local lipschitz inside and outside} and \eqref{eq:local lipschitz both outside}, we conclude that:
\begin{eqnarray}
    |\overline{V}_{h}^{k}(x_1)-\overline{V}_{h}^{k}(x_2)|\leq \hCh(1+\|x_1\|^m+\|x_2\|^m)\|x_1-x_2\|. 
\end{eqnarray}
\end{proof}

\subsection{Several Useful Properties of Value Estimators}\label{app:value-estimators}

Applying Lemma \ref{thm:wasserstein local lipschitz} in the same fashion as in the proof of Theorem \ref{thm:transition kernel wasserstein local lipschitz all together} yields the following corollary.
\begin{corollary}\label{thm:barV wasserstein local lipschitz all together}
    Assume the same assumptions as  in Theorem \ref{thm:transition kernel wasserstein local lipschitz all together}.   With probability at least $1-2\delta$, for any $ (h,k)\in [H-1]\times [K], B\in\mathcal{P}_h^k$ with $n_h^k(B)>0$, and any $(x,a)\in B$,  we have the following:
 \begin{eqnarray}\label{eq:barV transition} 
     &&\left|\mathbb{E}_{X \sim \bar{T}_{h}^{k}(\cdot|B)}[\overline{V}_{h+1}^k(X)]- \mathbb{E}_{Y \sim {T}_{h}(\cdot|x,a)}[\overline{V}_{h+1}^k(Y)]\right|\nonumber\\
     &\leq&\,\, \frac{\widehat{C}_{\max}}{\overline{C}_{\max}}\Big(\mbox{\rm T-UCB}_{h}^{k}(B)+L_{V}(\delta, \|\tilde{x}(^{o}B)\|)\,\,\mbox{\rm T-BIAS}(B)\Big),
\end{eqnarray}
where $\widehat{C}_{\max}$ is defined in \eqref{eq:G_h^k definition},  $\overline{C}_{\max}$   in \eqref{eq:barCmax}, $\tilde{x}$  in \eqref{eq:center definition}, $^{o}B$   in \eqref{eq:oB definition} and $L_{V}$ in \eqref{eq:Lv definition}.
\end{corollary}

We then bound the difference between the Q-estimators and the true Q-functions in the following Theorem \ref{thm:optimism high prob upper bound } and Proposition \ref{thm:optimism high prob upper bound,n=0}. 

\begin{theorem} \label{thm:optimism high prob upper bound }
Assume Assumptions \ref{ass:lipschitz}-\ref{ass:expected reward local lipschitz} hold. The following inequality holds with probability at least $1-3\delta$, for any $(h,k) \in [H]\times[K]$, $B \in \mathcal{P}_h^k$ with $n_h^k(B)>0$ , and $(x,a) \in B$, 
  \begin{eqnarray}\label{eq:optimism high prob upper bound I}
        \overline{Q}_{H}^{k}(B)-Q_{H}^{*}(x,a)        &\leq& 2\frac{\widehat{C}_{\max}}{\overline{C}_{\max}}\Big(\mbox{\rm R-UCB}_{H}^{k}(B)+\mbox{\rm R-BIAS}(B)\Big); \nonumber\\
     \overline{Q}_{h}^{k}(B)-Q_{h}^{*}(x,a)&\leq& 2\frac{\widehat{C}_{\max}}{\overline{C}_{\max}}\Big(\mbox{\rm R-UCB}_{h}^{k}(B)+ \mbox{\rm T-UCB}_{h}^{k}(B)+{\rm BIAS}(B)\Big)\nonumber\\
     &&+\mathbb{E}_{X \sim {T}_{h}(\cdot|x,a)}[\overline{V}_{h+1}^{k}(X)]-\mathbb{E}_{X \sim {T}_{h}(\cdot|x,a)}[V_{h+1}^{*}(X)] ,h<H.  
    \end{eqnarray}
\end{theorem}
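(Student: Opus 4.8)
The plan is to split $\overline{Q}_h^k(B)-Q_h^*(x,a)$ into a reward contribution and a transition contribution, bound each by concentration inequalities already in hand, and then absorb the accumulated constants into the prefactor $2\widehat{C}_{\max}/\overline{C}_{\max}$. Throughout I work on the event $\mathcal{E}$ on which the conclusions of Theorem~\ref{thm:reward high prob bound}, Theorem~\ref{thm:barV local lipschitz property} and Corollary~\ref{thm:barV wasserstein local lipschitz all together} all hold; by a union bound over the reward-concentration event (probability $\ge 1-\delta$) and the drift/volatility-concentration events of Propositions~\ref{lemma:Tail Estimate for Standard Normal Distribution} and \ref{lemma:Tail Estimates for Gaussian Sample Covariance} (probability $\ge 1-\delta$ each), $\mathcal{E}$ has probability $\ge 1-3\delta$. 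For the terminal step $h=H$ one has $Q_H^*(x,a)=\bar R_H(x,a)$ and, since $n_H^k(B)>0$, $\overline{Q}_H^k(B)=\widehat R_H^k(B)+\mbox{\rm R-UCB}_H^k(B)+\mbox{\rm R-BIAS}(B)$; combining Theorem~\ref{thm:reward high prob bound} with Proposition~\ref{thm:RBIAS bound} gives $\widehat R_H^k(B)-\bar R_H(x,a)\le \mbox{\rm R-UCB}_H^k(B)+\mbox{\rm R-BIAS}(B)$, hence $\overline{Q}_H^k(B)-Q_H^*(x,a)\le 2\big(\mbox{\rm R-UCB}_H^k(B)+\mbox{\rm R-BIAS}(B)\big)$, and I then invoke $\widehat{C}_{\max}/\overline{C}_{\max}\ge 1$ (read off from \eqref{eq:local lipschitz constant} and \eqref{eq:C_hat}, since $\widehat{C}_h\ge C_h\ge\overline{C}_h$) to replace $2$ by $2\widehat{C}_{\max}/\overline{C}_{\max}$.

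For $h<H$, using $Q_h^*(x,a)=\bar R_h(x,a)+\mathbb{E}_{Y\sim T_h(\cdot|x,a)}[V_{h+1}^*(Y)]$ and the $n_h^k(B)>0$ branch of \eqref{V-estimate}, I decompose
\begin{eqnarray*}
\overline{Q}_h^k(B)-Q_h^*(x,a)
&=&\big(\widehat R_h^k(B)-\bar R_h(x,a)\big)+\mbox{\rm R-UCB}_h^k(B)+\mbox{\rm T-UCB}_h^k(B)+{\rm BIAS}(B)\\
&&+\;\Big(\mathbb{E}_{X\sim\bar T_h^k(\cdot|B)}[\overline{V}_{h+1}^k(X)]-\mathbb{E}_{Y\sim T_h(\cdot|x,a)}[\overline{V}_{h+1}^k(Y)]\Big)\\
&&+\;\Big(\mathbb{E}_{Y\sim T_h(\cdot|x,a)}[\overline{V}_{h+1}^k(Y)]-\mathbb{E}_{Y\sim T_h(\cdot|x,a)}[V_{h+1}^*(Y)]\Big),
\end{eqnarray*}
where the last line is exactly the trailing term in the statement and is left untouched. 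The reward bracket is bounded by $\mbox{\rm R-UCB}_h^k(B)+\mbox{\rm R-BIAS}(B)$ as in the terminal step. For the penultimate bracket — the discrepancy between the estimated and the true transition kernel applied to $\overline{V}_{h+1}^k$ — I apply Corollary~\ref{thm:barV wasserstein local lipschitz all together}, which bounds its absolute value by $\tfrac{\widehat{C}_{\max}}{\overline{C}_{\max}}\big(\mbox{\rm T-UCB}_h^k(B)+L_V(\delta,\|\tilde x(^oB)\|)\,\mbox{\rm T-BIAS}(B)\big)$; this is the step that uses the local Lipschitz property of $\overline{V}_{h+1}^k$ from Theorem~\ref{thm:barV local lipschitz property}.

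Substituting the two bounds, recalling ${\rm BIAS}(B)=\mbox{\rm R-BIAS}(B)+L_V(\delta,\|\tilde x(^oB)\|)\,\mbox{\rm T-BIAS}(B)$, and grouping terms, the coefficient in front of $\mbox{\rm R-UCB}_h^k(B)$ is $2$, in front of $\mbox{\rm T-UCB}_h^k(B)$ is $1+\widehat{C}_{\max}/\overline{C}_{\max}$, in front of $\mbox{\rm R-BIAS}(B)$ is $2$, and in front of $L_V(\delta,\|\tilde x(^oB)\|)\,\mbox{\rm T-BIAS}(B)$ is $1+\widehat{C}_{\max}/\overline{C}_{\max}$. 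Since $\widehat{C}_{\max}/\overline{C}_{\max}\ge1$, each of these is at most $2\widehat{C}_{\max}/\overline{C}_{\max}$, which reassembles into $2\tfrac{\widehat{C}_{\max}}{\overline{C}_{\max}}\big(\mbox{\rm R-UCB}_h^k(B)+\mbox{\rm T-UCB}_h^k(B)+{\rm BIAS}(B)\big)$ plus the trailing term, proving the $h<H$ bound.

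The only genuinely delicate point is the bookkeeping of the good events: Corollary~\ref{thm:barV wasserstein local lipschitz all together} is stated with probability $1-2\delta$, yet its proof relies on $\overline{V}_{h+1}^k$ being locally Lipschitz, which is available only on the $1-3\delta$ event of Theorem~\ref{thm:barV local lipschitz property}. I therefore need to invoke all three concentration facts (reward, drift, volatility) on a single intersection event so that the displayed inequality and the $1-3\delta$ probability claim are mutually consistent. Everything else — the triangle-inequality split and the comparison $\widehat{C}_{\max}\ge\overline{C}_{\max}$ — is routine.
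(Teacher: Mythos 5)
Your proposal is correct and follows essentially the same route as the paper: the same Bellman-based decomposition into the reward discrepancy (bounded by Theorem \ref{thm:reward high prob bound} plus Proposition \ref{thm:RBIAS bound}), the estimated-versus-true transition term applied to $\overline{V}_{h+1}^{k}$ (bounded by Corollary \ref{thm:barV wasserstein local lipschitz all together}), and the untouched trailing term, with all coefficients absorbed into $2\widehat{C}_{\max}/\overline{C}_{\max}$ via $\widehat{C}_{\max}/\overline{C}_{\max}\geq 1$. Your explicit remark that the three concentration events must be intersected so that the local Lipschitz property of $\overline{V}_{h+1}^{k}$ is available on the same $1-3\delta$ event is exactly the (implicit) bookkeeping in the paper's proof.
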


The proof of Theorem \ref{thm:optimism high prob upper bound } is deferred to Appendix \ref{app:proof-5-4}.

\begin{proposition}
    \label{thm:optimism high prob upper bound,n=0}
Assume that Assumptions \ref{ass:lipschitz}-\ref{ass:expected reward local lipschitz} hold. For any $(h,k) \in [H]\times[K]$, $B \in \mathcal{P}_h^k$ with $n_h^k(B)=0$, $(x,a) \in B$,
 the following inequality holds:
\begin{eqnarray}\label{eq:optimism high prob upper bound,n=0}
        \overline{Q}_{h}^{k}(B)-Q_{h}^{*}(x,a)         \leq 2\frac{\widetilde{C}_h}{D}(1+(\|\tilde{x}(^{o}B)\|+D)^{m+1}){\rm diam}(B),
    \end{eqnarray}
where $\widetilde{C}_h$ is defined in \eqref{eq:value function with any policy growth rate}.
\end{proposition}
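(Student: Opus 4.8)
The plan is to exploit the fact that a zero count forces $B$ to be one of the original cells, so that $\overline{Q}_{h}^{k}(B)$ equals the explicit initialization and $\mathrm{diam}(B)=D$, and then to lower-bound $Q_h^*$ by the polynomial growth estimate of Proposition~\ref{thm: value function with any policy growth rate}.

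\textbf{Step 1: $n_h^k(B)=0$ implies $B\in\mathcal{B}_D$, hence ${}^oB=B$ and ${\rm diam}(B)=D$.} A block is created only by the SPLITTING routine (Algorithm~\ref{alg:ML 4}), which is triggered for a visited block $B_h^{k'}$ precisely when ${\rm CONF}_h^{k'}(B_h^{k'})\le{\rm diam}(B_h^{k'})$. By \eqref{eq:CONF} the quantity ${\rm CONF}$ is only defined when the count is positive, and in any case $n_h^{k'}(B_h^{k'})\ge 1$ after UPDATE COUNTS (cf.\ \eqref{eq:update counts}). Each child $B_i$ of such a split is initialized with $n_h^{k'}(B_i)=n_h^{k'}(B_h^{k'})\ge 1$, and counts are nondecreasing in $k$. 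Consequently, any block carrying a zero count was never produced by a split, i.e.\ it lies in the initial partition $\mathcal{B}_D$; in particular ${}^oB=B$ (by \eqref{eq:oB definition}) and ${\rm diam}(B)=D$. This also guarantees that $\overline{Q}_h^0(B)$ in \eqref{eq:initial value for estimation} is well-defined for such $B$.

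\textbf{Step 2: evaluate both sides.} By the UPDATE ESTIMATE rule in \eqref{eq:Q,V estimation for h=H}--\eqref{V-estimate}, when $n_h^k(B)=0$ we have $\overline{Q}_{h}^{k}(B)=\overline{Q}_{h}^{0}(B)=\widetilde{C}_{h}\bigl(1+(\|\tilde{x}({}^oB)\|+D)^{m+1}\bigr)$. For the Q-function, observe that $Q_h^*(x,a)=V_h^{\pi}(x)$ for the admissible policy $\pi$ that plays the Dirac action $a$ at step $h$ and follows an optimal policy from step $h+1$ onward; Proposition~\ref{thm: value function with any policy growth rate} then gives $|Q_h^*(x,a)|=|V_h^{\pi}(x)|\le\widetilde{C}_h\bigl(1+\|x\|^{m+1}\bigr)$, so in particular $Q_h^*(x,a)\ge-\widetilde{C}_h\bigl(1+\|x\|^{m+1}\bigr)$.

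\textbf{Step 3: combine.} Since $(x,a)\in B\subseteq{}^oB$ and $\tilde{x}({}^oB)$ is the center of $\Gamma_{\mathcal{S}}({}^oB)$, we have $\|x-\tilde{x}({}^oB)\|\le\tfrac12{\rm diam}({}^oB)=\tfrac{D}{2}\le D$, hence $\|x\|^{m+1}\le(\|\tilde{x}({}^oB)\|+D)^{m+1}$. Adding the two estimates from Step 2 and using ${\rm diam}(B)=D$ from Step 1,
\[
\overline{Q}_{h}^{k}(B)-Q_{h}^{*}(x,a)\le 2\widetilde{C}_h\bigl(1+(\|\tilde{x}({}^oB)\|+D)^{m+1}\bigr)
= 2\frac{\widetilde{C}_h}{D}\bigl(1+(\|\tilde{x}({}^oB)\|+D)^{m+1}\bigr){\rm diam}(B),
\]
which is the claimed bound. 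The only genuinely delicate point is Step~1: certifying that a zero count pins $B$ down to an original cell (so that ${\rm diam}(B)=D$ and the initialization formula applies). This is a bookkeeping statement about how UPDATE COUNTS and SPLITTING interact rather than an analytic estimate; the remainder is a one-line application of the polynomial growth bound of Proposition~\ref{thm: value function with any policy growth rate} together with the triangle inequality, and the argument is entirely deterministic (no high-probability event is needed).
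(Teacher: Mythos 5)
Your proof is correct and follows essentially the same route as the paper's: a zero count pins $B$ to the initial partition (so ${\rm diam}(B)=D$, ${}^oB=B$, and the initialization formula gives $\overline{Q}_h^k(B)$), and the true $Q$-function is lower-bounded via the polynomial growth estimate of Proposition~\ref{thm: value function with any policy growth rate}. Your Step~2, which realizes $Q_h^*(x,a)$ as $V_h^{\pi}(x)$ for the policy playing $a$ and then acting optimally, is in fact a slightly more careful justification of the lower bound than the paper's appeal to $|V_h^*(x)|$, but it is the same argument in substance.
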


The proof of Proposition \ref{thm:optimism high prob upper bound,n=0} is deferred to Appendix \ref{app:proof-5-4-1}.

\smallskip

We also have the following bounds on value function estimators evaluated at $X_h^k$.

\begin{proposition}\label{thm: optimism high prob upper bound II}
    For any $(h,k)\in [H]\times[K]$, conditioned on $X_h^k \in \mathcal{S}_1$, we have:
    \begin{eqnarray}\label{eq:optimism high prob upper bound II}
    \overline{V}_{h}^{k-1}(X_h^k)\leq \overline{Q}_h^{k-1}(B_h^k)+C_h(1+2(\|\tilde{x}(^{o}B_h^k)\|+D)^m){\rm diam}(B_h^k),
\end{eqnarray} 
where $B_h^k$ is selected according to Algorithm \ref{alg:ML 2} and $^{o}B_h^k$ is defined in \eqref{eq:oB definition}.
\end{proposition}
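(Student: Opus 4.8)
The plan is to produce a single cell $S^{\star}$ of the induced state partition that contains $X_h^k$ and lies inside $\Gamma_{\mathcal{S}}(B_h^k)$, bound $\overline{V}_h^{k-1}(X_h^k)$ by the local estimate attached to $S^{\star}$, and then use the greedy block–selection rule of Algorithm~\ref{alg:ML 2} to replace $\widetilde V_h^{k-1}(S^{\star})$ by $\overline{Q}_h^{k-1}(B_h^k)$; the leftover slack is then an elementary hypercube–geometry estimate. No high–probability event is used, which is consistent with the proposition having no probabilistic qualifier.

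First I would record two structural facts. Since $\mathcal{S}_1\times\mathcal{A}\subseteq\bar{Z}$, for $X_h^k\in\mathcal{S}_1$ there is no $a$ with $(X_h^k,a)\in\bar{Z}^{\complement}$, so $\bar{Z}^{\complement}\notin{\rm RELEVANT}_h^k(X_h^k)$; hence the selected block satisfies $B_h^k\in\mathcal{P}_h^{k-1}$ (so $^{o}B_h^k$ is well defined) and $X_h^k\in\Gamma_{\mathcal{S}}(B_h^k)$. The case $k=1$ would be dispatched separately using only the initialization \eqref{eq:initial value for estimation}: from $X_h^1\in\Gamma_{\mathcal{S}}(B_h^1)\subseteq\Gamma_{\mathcal{S}}(^{o}B_h^1)$ one gets $\|X_h^1\|\le\|\tilde{x}(^{o}B_h^1)\|+D$, hence $\overline{V}_h^0(X_h^1)=\widetilde{C}_h(1+\|X_h^1\|^{m+1})\le\widetilde{C}_h\bigl(1+(\|\tilde{x}(^{o}B_h^1)\|+D)^{m+1}\bigr)=\overline{Q}_h^0(B_h^1)$, which already gives \eqref{eq:optimism high prob upper bound II} since the additional term is nonnegative.

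For $k\ge2$ I would proceed as follows. Because each split halves every side, the state projections of blocks in $\mathcal{P}_h^{k-1}$ form a laminar family of dyadic cells, and $\Gamma_{\mathcal{S}}(\mathcal{P}_h^{k-1})$ is precisely the partition of $\Gamma_{\mathcal{S}}(\bar{Z})$ into its minimal such cells; hence there is $S^{\star}\in\Gamma_{\mathcal{S}}(\mathcal{P}_h^{k-1})$ with $X_h^k\in S^{\star}\subseteq\Gamma_{\mathcal{S}}(B_h^k)$. Since $X_h^k\in\mathcal{S}_1$, evaluating the minimum defining $\overline{V}_h^{k-1}$ in \eqref{V-estimate} at $S=S^{\star}$ gives
\[
\overline{V}_h^{k-1}(X_h^k)\ \le\ \widetilde{V}_h^{k-1}(S^{\star})+C_h\bigl(1+\|X_h^k\|^m+\|\tilde{x}(S^{\star})\|^m\bigr)\,\|X_h^k-\tilde{x}(S^{\star})\|.
\]
Next, every block $B\in\mathcal{P}_h^{k-1}$ appearing in the maximum that defines $\widetilde{V}_h^{k-1}(S^{\star})$ satisfies $S^{\star}\subseteq\Gamma_{\mathcal{S}}(B)$, hence $X_h^k\in\Gamma_{\mathcal{S}}(B)$, i.e.\ $B\in{\rm RELEVANT}_h^k(X_h^k)$; since $B_h^k$ maximizes $\overline{Q}_h^{k-1}$ over ${\rm RELEVANT}_h^k(X_h^k)$ by the greedy rule, we obtain $\widetilde{V}_h^{k-1}(S^{\star})\le\overline{Q}_h^{k-1}(B_h^k)$.

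Finally I would control the correction term by geometry. As projection is $1$-Lipschitz and $X_h^k,\tilde{x}(S^{\star})\in S^{\star}\subseteq\Gamma_{\mathcal{S}}(B_h^k)$, we have $\|X_h^k-\tilde{x}(S^{\star})\|\le{\rm diam}(S^{\star})\le{\rm diam}(B_h^k)$; and since $S^{\star}\subseteq\Gamma_{\mathcal{S}}(^{o}B_h^k)$, a hypercube of $\ell_2$-diameter at most $D$ with center $\tilde{x}(^{o}B_h^k)$, both $\|X_h^k\|$ and $\|\tilde{x}(S^{\star})\|$ are at most $\|\tilde{x}(^{o}B_h^k)\|+D$, so $1+\|X_h^k\|^m+\|\tilde{x}(S^{\star})\|^m\le 1+2(\|\tilde{x}(^{o}B_h^k)\|+D)^m$. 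Combining the three displayed inequalities yields \eqref{eq:optimism high prob upper bound II}. I expect the only genuinely delicate point to be the step identifying the index set of the maximum in the $\widetilde{V}_h^{k-1}(S^{\star})$ update as a subset of ${\rm RELEVANT}_h^k(X_h^k)$: this is exactly what forces the choice of $S^{\star}$ \emph{inside} $\Gamma_{\mathcal{S}}(B_h^k)$ and relies on the laminar/dyadic structure of the adaptive partition; everything else is a direct unwinding of \eqref{eq:Q,V estimation for h=H}–\eqref{V-estimates II} together with the greedy selection rule.
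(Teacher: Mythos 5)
Your proposal is correct and follows essentially the same route as the paper's proof: bound $\overline{V}_h^{k-1}(X_h^k)$ by the local estimate at a state cell tied to $B_h^k$, use the greedy selection rule of Algorithm~\ref{alg:ML 2} to replace the $\widetilde{V}$-term by $\overline{Q}_h^{k-1}(B_h^k)$, control the correction term by the hypercube geometry, and dispatch $k=1$ via the initialization \eqref{eq:initial value for estimation}. The only difference is that you instantiate the minimum defining $\overline{V}_h^{k-1}$ at a minimal induced cell $S^{\star}\subseteq\Gamma_{\mathcal{S}}(B_h^k)$ containing $X_h^k$ (a slightly more careful choice than the paper's direct use of $\Gamma_{\mathcal{S}}(B_h^k)$), which yields the same bound.
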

The proof of Proposition \ref{thm: optimism high prob upper bound II} is deferred to Appendix \ref{app:proof-5-5-1}. 

\subsubsection{Proof of Theorem \ref{thm:optimism high prob upper bound }} \label{app:proof-5-4}
\begin{proof}

Combine Theorem \ref{thm:reward high prob bound}, Proposition \ref{thm:RBIAS bound}, Corollary \ref{thm:barV wasserstein local lipschitz all together}, and the fact that
$\frac{\widehat{C}_{\max}}{\overline{C}_{\max}}>1$, we have the following result.  With probability at least $1-3\delta$, it holds that $\forall (h,k) \in [H]\times[K]$, $\forall B\in \mathcal{P}_h^k$ with $n_h^k(B)>0$, and $\forall (x,a)\in B$:
\begin{eqnarray}\label{eq:RUCB upper bound}
    \widehat{R}_h^k(B)-\bar{R}_{h}(x,a) \leq \frac{\widehat{C}_{\max}}{\overline{C}_{\max}}\Big( \mbox{\rm R-UCB}_h^k(B)+\mbox{\rm R-BIAS}(B)\Big);
\end{eqnarray}
\begin{eqnarray}\label{eq:TUCB upper bound}
   && \mathbb{E}_{X \sim \bar{T}_{h}^{k}(\cdot|B)}[\overline{V}_{h+1}^{k}(X)]- \mathbb{E}_{X \sim {T}_{h}(\cdot|x,a)}[\overline{V}_{h+1}^{k}(X)] \nonumber\\
    &\leq& \frac{\widehat{C}_{\max}}{\overline{C}_{\max}}\Big(\mbox{\rm T-UCB}_{h}^{k}(B)+ L_{V}(\delta, \|\tilde{x}(^{o}B)\|)\mbox{\rm T-BIAS}(B)\Big).
\end{eqnarray}

Also, we have the following decomposition:
\begin{eqnarray}\label{eq:TUCB bound decomposition}
        &&\mathbb{E}_{X \sim \bar{T}_{h}^{k}(\cdot|B)}[\overline{V}_{h+1}^{k}(X)]- \mathbb{E}_{X \sim {T}_{h}(\cdot|x,a)}[V_{h+1}^{*}(X)] \\
        &=& \mathbb{E}_{X \sim \bar{T}_{h}^{k}(\cdot|B)}[\overline{V}_{h+1}^{k}(X)]- \mathbb{E}_{X \sim {T}_{h}(\cdot|x,a)}[\overline{V}_{h+1}^{k}(X)] \nonumber
    + \mathbb{E}_{X \sim {T}_{h}(\cdot|x,a)}[\overline{V}_{h+1}^{k}(X)]- \mathbb{E}_{X \sim {T}_{h}(\cdot|x,a)}[V_{h+1}^{*}(X)].
\end{eqnarray}

Combining the results in \eqref{eq:RUCB upper bound}, \eqref{eq:TUCB upper bound} and \eqref{eq:TUCB bound decomposition}, it holds with probability at least $1-3\delta$ that, $\forall (h,k) \in [H]\times[K]$, $\forall B\in \mathcal{P}_h^k$ with $n_h^k(B)>0$, and $\forall (x,a)\in B$:
\begin{eqnarray*}
      \overline{Q}_{h}^{k}(B)-Q_{h}^{*}(x,a) &\leq& 2\frac{\widehat{C}_{\max}}{\overline{C}_{\max}}\Big(\mbox{\rm R-UCB}_{h}^{k}(B)+ \mbox{\rm T-UCB}_{h}^{k}(B)+{\rm BIAS}(B)\Big)\\
     &&+\mathbb{E}_{X \sim {T}_{h(\cdot|x,a)}}[\overline{V}_{h+1}^{k}(X)]-\mathbb{E}_{X \sim {T}_{h(\cdot|x,a)}}[V_{h+1}^{*}(X)] ,\,\,  h<H,\\  
    \overline{Q}_{H}^{k}(B)-Q_{H}^{*}(x,a) &\leq & 2\frac{\widehat{C}_{\max}}{\overline{C}_{\max}}\Big(\mbox{\rm R-UCB}_{H}^{k}(B)+\mbox{\rm R-BIAS}(B)\Big). 
\end{eqnarray*}
\end{proof}

\subsubsection{Proof of Proposition \ref{thm:optimism high prob upper bound,n=0}}\label{app:proof-5-4-1}
\begin{proof}
Since $n_h^k(B)=0$, we must have $B\in \mathcal{P}_h^0$, ${\rm diam}(B)=D$, $\overline{Q}_{h}^{k}(B)=\overline{Q}_{h}^{0}(B)$ and $^{o}B=B$. Hence
$$         \overline{Q}_{h}^{k}(B)-Q_{h}^{*}(x,a)
         \leq  \overline{Q}_{h}^{0}(B) + |V_h^{*}(x)|
        \leq 2\frac{\widetilde{C}_h}{D}(1+(\|\tilde{x}(^{o}B)\|+D)^{m+1}){\rm diam}(B),
$$
where the last inequality holds by \eqref{eq:initial value for estimation} and \eqref{eq:value function with any policy growth rate}.

\end{proof}

\subsubsection{Proof of Proposition \ref{thm: optimism high prob upper bound II}}\label{app:proof-5-5-1}
\begin{proof}
Note that conditioned on $X_h^k\in \mathcal{S}_{1}$, we have $B_h^k \in \mathcal{P}_h^{k-1}$. We then divide the proof into two cases: {\bf (1)} $k>1$ and {\bf (2)} $k=1$. 

\vspace{5pt}

\noindent\underline{Case {\bf (1)}.} For $k>1$, we have
    \begin{eqnarray}\label{eq:last property}
        \overline{V}_h^{k-1}(X_h^k)
        &\leq&\widetilde{V}_{h}^{k-1}(\Gamma_{\mathcal{S}}(B_h^k))+C_h(1+2(\|\tilde{x}(^{o}B_h^k)\|+D)^m){\rm diam}(B_h^k)\nonumber\\
        &=& \max_{B \in P_h^{k-1}: \Gamma_{\mathcal{S}}(B_h^k)\subset \Gamma_{\mathcal{S}}(B)}\overline{Q}_h^{k-1}(B)+C_h(1+2(\|\tilde{x}(^{o}B_h^k)\|+D)^m){\rm diam}(B_h^k)\nonumber\\
        &=&\overline{Q}_h^{k-1}(B_h^k)+C_h(1+2(\|\tilde{x}(^{o}B_h^k)\|+D)^m){\rm diam}(B_h^k).
\end{eqnarray}
The first inequality holds by the definition of $\overline{V}_h^{k-1}(X_h^k)$ in \eqref{eq:Q,V estimation for h=H} and \eqref{V-estimate}, and the first equality holds due to the greedy selection rule (line \ref{Greedy selection rule})  in Algorithm \ref{alg:ML}.  

\vspace{5pt}

\noindent \underline{Case {\bf (2)}.}
For $k=1$, we have
\begin{eqnarray*}
    \overline{V}_h^{0}(X_h^1)\leq \overline{Q}_h^{0}(B_h^1)+C_h(1+2(\|\tilde{x}(^{o}B_h^1)\|+D)^m){\rm diam}(B_h^1).
\end{eqnarray*}
This inequality holds due to the initial estimators we set in \eqref{eq:initial value for estimation} and the fact that $^{o}B_h^1=B_h^1$ since $B_h^1 \in \mathcal{P}_h^0$.
\end{proof}

\subsection{Auxiliary Results for Regret composition}\label{app:inside-outside-regret-composition}

\subsubsection{Proof of Theorem \ref{thm: upper bound via clipping}}\label{app:proof-5-5}

\begin{proof}

By the definition in \eqref{eq:gap definition},
\begin{eqnarray}  
   {\rm Gap}_h(B_h^k)&\leq&  {\rm \widetilde{G}ap}_h(X_h^k,A_h^k) \nonumber \\
        &\leq&\overline{V}_h^{k-1}(X_h^k)-Q_h^{*}(X_h^k,A_h^k)\nonumber\\
        &\leq&\overline{Q}_h^{k-1}(B_h^k)-Q_h^{*}(X_h^k,A_h^k)+C_h(1+2(\|\tilde{x}(^{o}B_h^k)\|+D)^m){\rm diam}(B_h^k)\nonumber\\
        &\leq&G_h^{k}(B_h^k)+f_{h+1}^{k-1}(X_h^k, A_h^k):=\phi_1 +\phi_2, \label{eq:gap_B}
\end{eqnarray}
in which the second inequality holds due to Theorem \ref{thm: True Upper Bounded By Estimates} , the third inequality holds by \eqref{eq:optimism high prob upper bound II}. and the fourth inequality holds due to \eqref{eq:optimism high prob upper bound I} and \eqref{eq:optimism high prob upper bound,n=0}. In the last line, we use the simplified notations  $\phi_1:=G_h^{k}(B_h^k)$ and $\phi_2:=f_{h+1}^{k-1}(X_h^k, A_h^k)$.  


Let $\phi:=\overline{V}_h^{k-1}(X_h^k)-Q_h^{*}(X_h^k,A_h^k)$. We claim that 
\begin{eqnarray}\label{eq:clipping inequality}
    \phi\leq {\rm CLIP}\Bigg(\phi_1\Bigg|\frac{{\rm Gap}_h(B_h^k)}{H+1}\Bigg)+\Big(1+\frac{1}{H}\Big)\phi_2.
\end{eqnarray}

When $\phi_1\geq \frac{{\rm Gap}_h(B_h^k)}{H+1}$, \eqref{eq:clipping inequality} is trivial.  
So we only need to prove the claim when 
$\phi_1< \frac{{\rm Gap}_h(B_h^k)}{H+1}$. In this case, 
\begin{eqnarray}\label{eq:phi}
        {\rm Gap}_h(B_h^k)\leq\phi_1+\phi_2
        \leq\frac{{\rm Gap}_h(B_h^k)}{H+1}+\phi_2. 
\end{eqnarray}

Rearranging terms in \eqref{eq:phi}, we have 
    ${\rm Gap}_h(B_h^k)\leq \frac{H+1}{H}\phi_2$, and hence 
    $\phi_1+\phi_2\le \frac{1}{H+1}\frac{H+1}{H}\phi_2+\phi_2=(1+\frac{1}{H})\phi_2.$ 
This implies that 
        $$\phi\leq \phi_1+\phi_2
        \leq {\rm CLIP}\Bigg(\phi_1\Bigg|\frac{{\rm Gap}_h(B_h^k)}{H+1}\Bigg)+\Big(1+\frac{1}{H}\Big)\phi_2.  $$

With the inequality \eqref{eq:clipping inequality},   we have
\begin{eqnarray}\label{eq:Delta h k bound}
        \Delta_{h}^{(k)}&=&\overline{V}_h^{k-1}(X_h^k)-Q_h^{*}(X_h^k,A_h^k)+Q_h^{*}(X_h^k,A_h^k)-V_h^{\tilde{\pi}^{k}}(X_h^k)\nonumber\\
        &\leq&{\rm CLIP}\Bigg(G_h^{k}(B_h^k)\Bigg|\frac{{\rm Gap}_h(B_h^k)}{H+1}\Bigg)+\Big(1+\frac{1}{H}\Big)f_{h+1}^{k-1}(X_h^k, A_h^k)+Q_h^{*}(X_h^k,A_h^k)-V_h^{\tilde{\pi}^{k}}(X_h^k), 
\end{eqnarray}
\end{proof}

\subsubsection{Proof of Proposition \ref{thm:concentration on size of J0}}\label{app:proof-5-10-prop}
\begin{proof}
Let $\mathcal{G}_{k}=\sigma((X_{h}^{k^{\prime}},A_{h}^{k^{\prime}},r_{h}^{k^{\prime}})_{h\in [H]},k^{\prime}\leq k)$ be the information generated up to episode $k$ with $\mathcal{G}_{0}$ being the null information. Then we have $\mathbb{E}[I_{k}|\mathcal{G}_{k-1}] \geq 1-\frac{M_{p}}{\rho^{p}}$ given \eqref{ineq:alpha_p}.

Let  
$Y_{0}=0$ and $ Y_{k}=\sum_{i=1}^{k}(I_{k}-\mathbb{E}[I_{k}|\mathcal{G}_{k-1}] )$ for $k>1$. 
Then it is clear that $\{Y_{k}\}_{k=0,1,...,K}$ is a martingale and we have $|Y_{k}-Y_{k-1}|\leq 1$. By Azuma-Hoeffding inequality, for any $\epsilon>0$  we have 
\begin{eqnarray*}
    \mathbb{P}(Y_{K}-Y_{0}\leq -\epsilon)\leq \exp \left(-\frac{\epsilon^{2}}{2K}\right).
\end{eqnarray*}
By the fact that $Y_K=K_0-K\mathbb{E}[I_k|\mathcal{G}_{k-1}]\le K_0-K\left(1-\frac{M_{p}}{\rho^{p}}\right)$, we have
\begin{equation}\label{eq:K_0_inequality}
    \mathbb{P}\left(K_{0}-K\left(1-\frac{M_{p}}{\rho^{p}}\right)\geq -\epsilon\right)
    \geq \mathbb{P}(Y_{K}-Y_{0}\geq -\epsilon) 
    \geq 1- \exp\left(-\frac{\epsilon^{2}}{2K}\right). 
\end{equation}
 Let $\delta=\exp(-\frac{\epsilon^{2}}{2K})$, then  we have \eqref{thm:concentration on size of J0} hold with probability at least $1-\delta$. 
\end{proof}

\subsubsection{Proof of Theorem \ref{thm:concentration on value function at initial state}}\label{app:proof-5-7}
\begin{proof}
Denote sets $J_{1}$ and $J_{2}$ as the following:
\begin{eqnarray*}
    J_{1}&:=&\left\{k \in [K]:\|X_{1}^{k}\|>\rho \right\},\\
    J_{2}&:=&\left\{k \in [K]:\|X_{1}^{k}\| \leq \rho, \sup_{h=2,...,H}\|X_{h}^{k}\|> \rho \right\}.
\end{eqnarray*}
Then it is clear that $J_{1}\cup J_{2}=[K]\backslash J_{\rho}^{K}$ and $J_{1}\cap J_{2}=\emptyset$. Further denote $K_i=|J_i|$ for $i=1,2$, then 
$K-K_{0}=K_{1}+K_{2}$. 
With these notation, we have
\begin{eqnarray}\label{eq:bounds on value function outside I}
        \sum_{k \in J\backslash J_{\rho}^{K}}|V_{1}^{\pi}(X_{1}^{k})|&=&\sum_{k \in J_{1}}|V_{1}^{\pi}(X_{1}^{k})| +\sum_{k \in J_{2}}|V_{1}^{\pi}(X_{1}^{k})| \nonumber\\
        &=&\sum_{k=1}^{K}|V_{1}^{\pi}(X_{1}^{k})|\mathbb{I}_{\{\|X_1^k\|>\rho\}}+\sum_{k \in J_{2}}|V_{1}^{\pi}(X_{1}^{k})| \nonumber\\
&\leq&\sum_{k=1}^{K}\widetilde{C}_1\Big(\|X_1^k\|^{m+1}+1\Big)\mathbb{I}_{\{\|X_1^k\|>\rho\}}+\widetilde{C}_1(K-K_{0})\Big(\rho^{m+1}+1\Big),
\end{eqnarray}
where the inequality holds due to Proposition \ref{thm: value function with any policy growth rate}.

Let $Y:= \sum_{k=1}^{K}\widetilde{C}_1\Big(\|X_1^k\|^{m+1}+1\Big)\mathbb{I}_{\{\|X_1^k\|>\rho\}}$, then
\begin{eqnarray}\label{eq:bound on expectation of tail probability}
        \mathbb{E}[Y]
        &\leq&K\widetilde{C}_1\Big(\mathbb{P}(\|\xi\|>\rho)+ \mathbb{E}_{\xi\sim \Xi}\big[\|\xi\|^{m+1}\mathbb{I}_{\{\|\xi\|>\rho\}}\big]\Big)\nonumber\\
        &\leq& K\widetilde{C}_1\Big(\frac{\mathbb{E}_{\xi \sim \Xi}[\|\xi\|^{p}]}{\rho^p}+\big(\mathbb{E}_{\xi \sim \Xi}[\|\xi\|^{p}]\big)^{\frac{m+1}{p}}(\mathbb{P}(\|\xi\|>\rho))^{1-\frac{m+1}{p}}\Big)\nonumber\\
        &\leq& K\widetilde{C}_1\Big(\frac{\mathbb{E}_{\xi \sim \Xi}[\|\xi\|^{p}]}{\rho^p}+ \frac{\mathbb{E}_{\xi \sim \Xi}[\|\xi\|^{p}]}{\rho^{p-(m+1)}}\Big)=\delta K \kappa_{m+1}(\delta, \rho),
\end{eqnarray}
where 
the  second inequality holds by applying H\"older's inequality. By this inequality, 
 we have 
\begin{eqnarray}\label{eq:bound on tail probability}
        \mathbb{P}\Big(Y\geq K\kappa_{m+1}(\delta,\rho)\Big)\leq \mathbb{P}\left(Y\geq \frac{\mathbb{E}[Y]}{\delta}\right) \leq \delta,
\end{eqnarray}
where the last inequality holds by Markov inequality.
Putting \eqref{eq:bounds on value function outside I} and \eqref{eq:bound on tail probability} together,
we have \eqref{eq:concentration on value function at initial state} holds with probability at least $1-\delta$.
\end{proof}

\subsection{Proof of Lemma \ref{thm:induction}}\label{app:thm induction}

\begin{proof}
\begin{eqnarray}\label{eq:Thm F1 quote}
    && \Big(1+\frac{1}{H}\Big)f_{h+1}^{k-1}(X_h^k, A_h^k)+Q_h^{*}(X_h^k,A_h^k)-V_h^{\tilde{\pi}^{k}}(X_h^k)\nonumber\\
    &=& \Big(1+\frac{1}{H}\Big)\Big(\mathbb{E}_{Y \sim {T}_{h}(\cdot|X_h^k,A_h^k)}[\overline{V}_{h+1}^{k-1}(Y)]-\mathbb{E}_{Y \sim {T}_{h}(\cdot|X_h^k,A_h^k)}[V_{h+1}^{*}(Y)]\Big)+\bar{R}_{h}(X_h^k,A_h^k)\nonumber\\
    &&+\mathbb{E}_{Y \sim {T}_{h}(\cdot|X_h^k,A_h^k)}[V_{h+1}^{*}(Y)]-\mathbb{E}_{a\sim \pi_h^k(X_h^k)}[\bar{R}_{h}(X_h^k,a)]-\mathbb{E}_{a\sim \pi_h^k(X_h^k), Y^{'} \sim {T}_{h}(\cdot|X_h^k,a)}[V_{h+1}^{\tilde{\pi}^{k}}(Y^{'})]\nonumber\\
    &\leq& \Big(1+\frac{1}{H}\Big)\Big(\mathbb{E}_{Y \sim {T}_{h}(\cdot|X_h^k,A_h^k)}[\overline{V}_{h+1}^{k-1}(Y)]-\mathbb{E}_{Y \sim {T}_{h}(\cdot|X_h^k,A_h^k)}[V_{h+1}^{*}(Y)]\Big)\nonumber\\
    &&+\Big(1+\frac{1}{H}\Big)\Big(\mathbb{E}_{Y \sim {T}_{h}(\cdot|X_h^k,A_h^k)}[V_{h+1}^{*}(Y)]-\mathbb{E}_{Y \sim {T}_{h}(\cdot|X_h^k,A_h^k)}[V_{h+1}^{\tilde{\pi}^{k}}(Y)]\Big)\nonumber\\
    &&+\bar{R}_{h}(X_h^k,A_h^k)-\mathbb{E}_{a\sim \pi_h^k(X_h^k)}[\bar{R}_{h}(X_h^k,a)]
    \nonumber\\&&+\mathbb{E}_{Y \sim {T}_{h}(\cdot|X_h^k,A_h^k)}[V_{h+1}^{\tilde{\pi}^{k}}(Y)]-  \mathbb{E}_{a\sim \pi_h^k(X_h^k), Y^{'} \sim {T}_{h}(\cdot|X_h^k,a)}[V_{h+1}^{\tilde{\pi}^{k}}(Y^{'})]\nonumber\\
    &=& \Big(1+\frac{1}{H}\Big)(\Delta_{h+1}^{(k)}+\xi_{h+1}^{k})+\zeta_{h+1}^k\nonumber,
\end{eqnarray}
where the first inequality holds due to \eqref{eq:bellman}. 
\end{proof}

\subsection{Proof of \eqref{eq:xi_1}-\eqref{eq:zeta_2}}\label{app:proof-5-9}

We first provide a high probability bound for the state process that holds simultaneously across all episodes. For convenience, we denote 
\begin{eqnarray}\label{def:sup X_h^k}
      Z:=\sup_{h\in [H],k\in [K]}\|X_{h}^{k}\|.
\end{eqnarray}
\begin{lemma}\label{lemma:high probability bound across all episodes}
Assume Assumptions \ref{ass:lipschitz}-\ref{ass:initial} hold. We have:
\begin{eqnarray*}
    \mathbb{P}\Big(Z\leq \Big(\frac{KM_{p}}{\delta}\Big)^{\frac{1}{p}}\Big)\geq 1-\delta.
\end{eqnarray*}
\end{lemma}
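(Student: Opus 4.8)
The plan is to combine the per-episode tail estimate from Corollary \ref{Theorem: R-estimate} with a union bound over the $K$ episodes, after choosing the truncation radius so that the union bound collapses to $\delta$.

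First I would recall that Corollary \ref{Theorem: R-estimate} (itself a consequence of the moment bound in Proposition \ref{thm:Mp estimation}) applies to the state trajectory of a \emph{single} episode under \emph{any} admissible policy: the constant $M_p$ there depends only on $H,\ell_\mu,\ell_\sigma,p,\bar a,L_0,\Delta$ and $\mathbb{E}_{\xi\sim\Xi}[\|\xi\|^p]$, none of which are affected by the adaptively chosen policy $\tilde\pi^k$ used in episode $k$ (the action space being bounded by $\bar a$ is all that the argument needs). Hence, for every $k\in[K]$ and every $\rho>0$,
\begin{eqnarray*}
    \mathbb{P}\left(\sup_{h\in[H]}\|X_h^k\|\ge \rho\right)\le \frac{M_p}{\rho^p}.
\end{eqnarray*}

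Next I would take $\rho=\big(\tfrac{KM_p}{\delta}\big)^{1/p}$, so that $\rho^p=\tfrac{KM_p}{\delta}$ and therefore $\tfrac{M_p}{\rho^p}=\tfrac{\delta}{K}$. A union bound over $k\in[K]$ then gives
\begin{eqnarray*}
    \mathbb{P}\left(Z\ge \rho\right)=\mathbb{P}\left(\bigcup_{k=1}^K\Big\{\sup_{h\in[H]}\|X_h^k\|\ge \rho\Big\}\right)\le \sum_{k=1}^K\frac{M_p}{\rho^p}=K\cdot\frac{\delta}{K}=\delta,
\end{eqnarray*}
where $Z$ is as defined in \eqref{def:sup X_h^k}. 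Taking complements yields $\mathbb{P}\big(Z\le (\tfrac{KM_p}{\delta})^{1/p}\big)\ge \mathbb{P}\big(Z<(\tfrac{KM_p}{\delta})^{1/p}\big)\ge 1-\delta$, which is the claim.

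This statement is essentially routine, so I do not anticipate a genuine obstacle; the only point requiring a sentence of care is the one flagged above, namely that Corollary \ref{Theorem: R-estimate} holds episode-by-episode irrespective of the (history-dependent) policy, so that the $K$ events in the union bound each carry probability at most $M_p/\rho^p$.
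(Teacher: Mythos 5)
Your proof is correct and follows exactly the paper's argument: apply the per-episode tail bound of Corollary \ref{Theorem: R-estimate} with $\rho=(KM_p/\delta)^{1/p}$ and take a union bound over the $K$ episodes, so each term contributes $\delta/K$. Your added remark that $M_p$ is policy-independent (so the bound applies to each adaptively chosen $\tilde\pi^k$) is a fair point of care that the paper leaves implicit, but it does not change the route.
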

\begin{proof}
\begin{eqnarray*}
    \mathbb{P}\Big(Z \leq \Big(\frac{KM_{p}}{K}\Big)^{\frac{1}{p}}\Big)&\geq& 1-\sum_{k=1}^{K}\mathbb{P}\Big(\sup_{h\in [H]}\|X_{h}^{k}\|\geq \Big(\frac{KM_{p}}{\delta}\Big)^{\frac{1}{p}}\Big)\\
    &\geq&1- K \frac{M_{p}}{\frac{KM_{p}}{\delta}}\\
    &=&1-\delta,
\end{eqnarray*}
where the first inequality holds by the union bound (namely, for a countable set of events $E_1,E_2,...$, we have $\mathbb{P}(\cap_{i}E_i)\geq 1-\sum_{i}\mathbb{P}(E_i^{\complement})$) and the second inequality holds due to Corollary \ref{Theorem: R-estimate}.
\end{proof}

Then we show the proof of \eqref{eq:xi_1} and \eqref{eq:xi_2}. We also claim that \eqref{eq:zeta_1} and \eqref{eq:zeta_2} may be proved in the similar way.

\subsubsection{Proof of \eqref{eq:xi_1}}
\begin{proof}
    With probability at least $1-3\delta$, it holds that, for $x\in \mathbb{R}^{d_{\mathcal{S}}}$,  $h\in [H-1]$ and $k>1$, we have:
\begin{eqnarray}\label{ineq:bar V upper bound}
        |\overline{V}_{h+1}^{k-1}(x)|& \leq & \max \{|V_{h+1,k-1}^{\rm local}(x,S^{\prime})|,|V_{h+1}^{*}(x)|\} \nonumber\\
        &\leq&\widetilde{V}_{h+1}^{0}(S^{\prime})+C_{h}\Big(1+\|x\|^m+\|\tilde{x}(S^{\prime})\|^m\Big)(\|x\|+\|\tilde{x}(S^{\prime})\|) \nonumber\\
        &\leq& \breve{c}_{1}\|x\|^{m+1}+\breve{c}_{2},\nonumber
\end{eqnarray}
where $S^{\prime}=\argmin_{S\in \Gamma_{\mathcal{S}}(\mathcal{P}_{h+1}^{k-1})} V_{h+1,k-1}^{\rm local}(x,S)$, and $\breve{c}_{1},\breve{c}_{2}$ depend only on $m,D,C_{\max},\widetilde{C}_{\max}$. The first inequality holds due to Theorem \ref{thm: True Upper Bounded By Estimates} and the third line of \eqref{V-estimate}. In addition, the second inequality holds due to \eqref{eq:initial value for estimation}, the fact that $\widetilde{V}_{h+1}^{k-1}(S^{\prime})\leq \widetilde{V}_{h+1}^{0}(S^{\prime})$ according to the second line of \eqref{V-estimate}, and the fact that $\|x-\tilde{x}(S^{\prime})\|\leq \|x\|+\|\tilde{x}(S^{\prime})\|$. Finally, the third inequality holds due to the fact that $\|\tilde{x}(S^{\prime})\|\leq \|x\|+D$.

In addition, note that \eqref{ineq:bar V upper bound} also holds for $k=1$, $x\in \mathbb{R}^{d_{\mathcal{S}}}$, and $h\in [H-1]$.

Let $\mathcal{F}_{h,k}=\sigma((X_{h}^{k^{\prime}},A_{h}^{k^{\prime}},r_{h}^{k^{\prime}}),h^{\prime}\leq h,k^{\prime}\leq k)$. We next show that we can bound $(\xi_{h+1}^k)^{2}$ and $\mathbb{E}[(\xi_{h+1}^k)^{2}|\mathcal{F}_{h,k}]$ by polynomials of $Z$.  

To proceed, we will often use the fact that for $n,q\in\mathbb{N}_{+}$:
\begin{eqnarray}\label{eq:fact 1}
    (\sum_{i=1}^{n} a_{i})^{q}\leq n^{q-1}\sum_{i=1}^{n}a_{i}^{q}
\end{eqnarray}
 and
\begin{eqnarray}\label{eq:fact 2}
    \mathbb{E}_{X \sim {T}_{h}(\cdot|X_h^k,A_h^k)}[\|X\|^{q}] &\leq&   (\mathbb{E}_{X \sim {T}_{h}(\cdot|X_h^k,A_h^k)}[\|X\|^{2q}])^{\frac{1}{2}}\nonumber\\
    &\leq & \widetilde{C}(q,d_{\mathcal{S}})(\|\mu_{h}(X_h^k,A_h^k)\|^{q}+\|\Sigma_{h}(X_h^k,A_h^k)\|^{\frac{q}{2}})\nonumber\\
    &\leq & 2\widetilde{C}(q,d_{\mathcal{S}})\eta(X_h^k)^{q},
\end{eqnarray}
where the second inequality holds due to Lemma \ref{lemma:2q moment of multi normal} and the third inequality holds due to \eqref{eq:eta function}.

Then with probability at least $1-3\delta$, the following inequality holds for $h\in [H]$ and $k\in [K]$:
\begin{eqnarray}\label{bound 1}
    (\xi_{h+1}^k)^{2}
    &\leq& \Big(\mathbb{E}_{X \sim {T}_{h}(\cdot|X_h^k,A_h^k)}[|\overline{V}_{h+1}^{k-1}(X)|]+\mathbb{E}_{X \sim {T}_{h}(\cdot|X_h^k,A_h^k)}[|V_{h+1}^{\tilde{\pi}^{k}}(X)|]\nonumber
    \\&&+|\overline{V}_{h+1}^{k-1}(X_{h+1}^{k})|+|V_{h+1}^{\tilde{\pi}^{k}}(X_{h+1}^{k})|\Big)^{2}\nonumber\\
    &\leq & 4\Big((\mathbb{E}_{X \sim {T}_{h}(\cdot|X_h^k,A_h^k)}[|\overline{V}_{h+1}^{k-1}(X)|])^{2}+(\mathbb{E}_{X \sim {T}_{h}(\cdot|X_h^k,A_h^k)}[|V_{h+1}^{\tilde{\pi}^{k}}(X)|])^{2} \nonumber\\
     &&+(\overline{V}_{h+1}^{k-1}(X_{h+1}^{k}))^2+(V_{h+1}^{\tilde{\pi}^{k}}(X_{h+1}^{k}))^{2} \Big) \nonumber\\
    &\leq & \breve{c}_{3} Z^{2m+2}+\breve{c}_{4},
\end{eqnarray}
where $\breve{c}_{3},\breve{c}_{4}$ depends only on $\widetilde{C}_{\max},C_{\max},m,D,d_{\mathcal{S}}$.  The last inequality holds due to \eqref{ineq:bar V upper bound}, \eqref{eq:value function with any policy growth rate}, \eqref{eq:fact 1}, \eqref{eq:fact 2} and the fact that $\|X_h^k\|\leq Z$ for $(h,k)\in [H]\times [K]$.

Similarly, with probability at least $1-3\delta$, the following inequality holds for $h\in [H]$ and $k\in [K]$:
\begin{eqnarray}\label{bound 2}
    &&\mathbb{E}[(\xi_{h+1}^k)^{2}|\mathcal{F}_{h,k}]\nonumber\\
    &\leq& \mathbb{E}_{Y \sim {T}_{h}(\cdot|X_h^k,A_h^k)}\Bigg[\Big(\mathbb{E}_{X \sim {T}_{h}(\cdot|X_h^k,A_h^k)}[|\overline{V}_{h+1}^{k-1}(X)|]+\mathbb{E}_{X \sim {T}_{h}(\cdot|X_h^k,A_h^k)}[|V_{h+1}^{\tilde{\pi}^{k}}(X)|]
    \nonumber\\&& \qquad \qquad \qquad \qquad +|\overline{V}_{h+1}^{k-1}(Y)|+|V_{h+1}^{\tilde{\pi}^{k}}(Y)|\Big)^{2}\Bigg]\nonumber\\
    &\leq & \breve{c}_{5} Z^{2m+2}+\breve{c}_{6},
\end{eqnarray}
where $\breve{c}_{5},\breve{c}_{6}$ depends only on $\widetilde{C}_{\max},C_{\max},m,D,d_{\mathcal{S}}$. 

Define $M_{h+1,k}:=\sum_{h^{\prime}\leq h,k^{\prime}\leq k}\xi_{h+1}^k$. It is clear that $M_{h+1,k}$ is a square integrable martingale. Then by Theorem 2.1 in \citep{bercu2008exponential}, for any $a,b>0$ we have: 
\begin{eqnarray}
    \mathbb{P}\Big(|M_{H+1,K} | \geq a, \langle M\rangle_{H+1,K}+[M]_{H+1,K} \leq b \Big)\leq 2\exp{(-\frac{a^2}{2b})},
\end{eqnarray}
where $[M]_{H+1,K}=\sum_{h\in [H],k\in[K]} (\xi_{h+1}^k)^2, \langle M\rangle_{H+1,K}=\sum_{h\in [H],k\in[K]} \mathbb{E}[(\xi_{h+1}^k)^2|\mathcal{F}_{h,k}]$. 

Therefore, we have for any $a,b,c>0$: 
\begin{eqnarray}
    &&\mathbb{P}(|M_{H+1,K}|\geq a)\nonumber\\
    &\leq &\mathbb{P}(|M_{H+1,K}|\geq a, \langle M\rangle_{H+1,K}+[M]_{H+1,K} \leq b )+  \mathbb{P}(\langle M\rangle_{H+1,K}+[M]_{H+1,K} \geq b )\nonumber \\
    &\leq& 2\exp{(-\frac{a^2}{2b})}+  \mathbb{P}(\langle M\rangle_{H+1,K}+[M]_{H+1,K} \geq b ,Z \leq c)+\mathbb{P}(Z \geq c).
\end{eqnarray}
Let $a=\sqrt{2b\log(\frac{2}{\delta})}$, $ b=2HK(\breve{c}_{3}+\breve{c}_{4}+\breve{c}_{5}+\breve{c}_{6})(c^{2m+2}+1)$,  and $c=\Big(\frac{M_{p}K}{\delta}\Big)^{\frac{1}{p}}$, we get that:
\begin{eqnarray*}
   && \mathbb{P}\Big(\langle M\rangle_{H+1,K}+[M]_{H+1,K} \geq b ,Z \leq c\Big)\leq3\delta, \textit{ and } \mathbb{P}\Big(Z \geq c\Big)\leq \delta,
\end{eqnarray*}
where the first inequality holds since with probability at least $1-3\delta$, for any $h\in [H]$ and $k \in [K]$, it holds that \eqref{bound 1} and \eqref{bound 2}. 

Hence, we conclude that
\begin{eqnarray}
    \mathbb{P}\Bigg(|M_{H+1,K}|\leq 2e^{2}\sqrt{\widetilde{L}_{1}HK\Big(\Big(\frac{M_{p}K}{\delta}\Big)^{\frac{2m+2}{p}} +1\Big)\log\Big(\frac{2}{\delta}\Big)}\,\,\Bigg)\geq 1-5\delta,
\end{eqnarray}
where $\widetilde{L}_{1}=\breve{c}_{3}+\breve{c}_{4}+\breve{c}_{5}+\breve{c}_{6}$. 
\end{proof}

\subsubsection{Proof of \eqref{eq:xi_2}}
\begin{proof}
By similar methods to show \eqref{bound 1} in the proof of \eqref{eq:xi_1}, we can show that
\begin{eqnarray*}\label{bound 3}
    |\xi_{h+1}^k|\leq \breve{c}_{1} Z^{m+1}+\breve{c}_{2},
\end{eqnarray*}
where $\breve{c}_{1},\breve{c}_{2}$ depends only on $\widetilde{C}_{\max},C_{\max},m,D,d_{\mathcal{S}}$. 

Therefore, let $\widetilde{L}_{2}=\breve{c}_{1}+\breve{c}_{2}$,  we have
\begin{eqnarray}
    &&\mathbb{P}\Big( \Big|\sum_{h=1}^{H}\sum_{k\in [K]\backslash J_{\rho}^{K}}^{K}\xi_{h+1}^{k}\Big|\leq e^{2}\widetilde{L}_{2}H\Big(\frac{KM_{p}}{\rho^{p}}+\sqrt{2K\log(\frac{1}{\delta}})\Big)\Big(\Big(\frac{M_{p}K}{\delta}\Big)^{\frac{m+1}{p}} +1\Big)\Big)\nonumber\\
    &\geq & \nonumber \mathbb{P}\Big( \sum_{h=1}^{H}\sum_{k\in [K]\backslash J_{\rho}^{K}}^{K}\Big|\xi_{h+1}^{k}\Big|\leq e^{2}\widetilde{L}_{2}H\Big(\frac{KM_{p}}{\rho^{p}}+\sqrt{2K\log(\frac{1}{\delta}})\Big)\Big(\Big(\frac{M_{p}K}{\delta}\Big)^{\frac{m+1}{p}} +1\Big)\Big)\\
    &\geq & \nonumber \mathbb{P}\Big(H(K-K_{0})(\breve{c}_{1} Z^{m+1}+\breve{c}_{2}) \leq e^{2}\widetilde{L}_{2}H\Big(\frac{KM_{p}}{\rho^{p}}+\sqrt{2K\log(\frac{1}{\delta}})\Big)\Big(\Big(\frac{M_{p}K}{\delta}\Big)^{\frac{m+1}{p}} +1\Big)\Big)\\
    &\geq & 1-2\delta,
\end{eqnarray}
where the last inequality holds due to Proposition \ref{thm:concentration on size of J0} and Lemma \ref{lemma:high probability bound across all episodes}. 
\end{proof}

\subsection{choice of $c$}\label{app:choice of c}
\begin{remark}[Choice of $c$ in Definition \ref{def:zooming dimension}]\label{remark:zooming dimension}
    In Definition \ref{def:zooming dimension}, if we take $c\geq C_{\mathcal{S},\mathcal{A}}:=\frac{2^{d_{\mathcal{S}}}\Gamma(\frac{d_{\mathcal{S}}+d_{\mathcal{A}}}{2}+1)\bar{a}^{d_{\mathcal{A}}}}{\Gamma(
\frac{d_{\mathcal{S}}}{2}+1)\Gamma(
\frac{d_{\mathcal{A}}}{2}+1)}$, then it holds that $ z_{h,c}\leq d_{\mathcal{S}}+d_{\mathcal{A}}$.

To see this, first note that
\begin{eqnarray}\label{eq:choice of c}
         N_{r}(Z_{h}^{r,\rho})&\leq&N_{r}(\bar{Z})\leq \frac{\Gamma(\frac{d_{\mathcal{S}}+d_{\mathcal{A}}}{2}+1)}{\Gamma(
\frac{d_{\mathcal{S}}}{2}+1)\Gamma(
\frac{d_{\mathcal{A}}}{2}+1)}\left(\frac{\rho+D}{r}\right)^{d_{\mathcal{S}}}\left(\frac{\bar{a}}{r}\right)^{d_{\mathcal{A}}} \nonumber\\
&\leq&C_{\mathcal{S},\mathcal{A}} \frac{\rho^{d_{\mathcal{S}}}}{r^{d_{\mathcal{S}}+d_{\mathcal{A}}}}\leq c \frac{\rho^{d_{\mathcal{S}}}}{r^{d_{\mathcal{S}}+d_{\mathcal{A}}}}.
\end{eqnarray}
Rearrange \eqref{eq:choice of c}, and we get:
\begin{eqnarray*}
    \frac{N_{r}(Z_{h}^{r,\rho})}{\rho^{d_{\mathcal{S}}}}\leq cr^{-(d_{\mathcal{S}}+d_{\mathcal{A}})}. 
\end{eqnarray*}
Hence, by Definition \ref{def:zooming dimension}, we have $z_{h,c}\leq d_{\mathcal{S}}+d_{\mathcal{A}}$.
\end{remark}

In light of Remark \ref{remark:zooming dimension}, we take $c \geq C_{\mathcal{S},\mathcal{A}}$ throughout the rest of the paper. This ensures that the zooming dimension does not exceed the ambient dimension $d_{\mathcal{S}}+d_{\mathcal{A}}$.

\subsection{Technical results modified from \citep{sinclair2023adaptive}}

\subsubsection{Proof of Lemma \ref{lemma:theorem F.3 conclusion}}\label{app:proof-lemma-5.13}

\begin{proof}
We firstly split $\sum_{h}\sum_{k\in J_{\rho}^{K}}{\rm CLIP}\Big(G_h^k(B_h^k)\,\Big|\,\frac{{\rm Gap}_{h}(B_h^k)}{H+1}\Big)$ into two terms.
\begin{eqnarray}\label{eq:decoupled_terms}
    &&\sum_{h}\sum_{k\in J_{\rho}^{K}}{\rm CLIP}\Big(G_h^k(B_h^k)\,\Big|\,\frac{{\rm Gap}_{h}(B_h^k)}{H+1}\Big)\nonumber\\
    &=&\underbrace{\sum_{h}\sum_{k\in J_{\rho}^{K}}\sum_{B_h^k:n_h^{k-1}(B_h^k)>0} {\rm CLIP}\Big(G_h^k(B_h^k)\,\Big|\,\frac{{\rm Gap}_{h}(B_h^k)}{H+1}\Big)}_{(I)}\nonumber\\
    &&+\underbrace{\sum_{h}\sum_{k\in J_{\rho}^{K}}\sum_{B_h^k:n_h^{k-1}(B_h^k)=0} {\rm CLIP}\Big(G_h^k(B_h^k)\,\Big|\,\frac{{\rm Gap}_{h}(B_h^k)}{H+1}\Big)}_{(II)}.
\end{eqnarray}

Then we handle these two terms separately.

\vspace{10pt}

\noindent \underline{Bound for Term (I)}:

For fixed $(h,k)$, if $n_h^{k-1}(B_h^k)>0$, then:
\begin{eqnarray}\label{eq:d48}
    G_{h}^{k}(B_h^k)&=&2\frac{\widehat{C}_{\max}}{\overline{C}_{\max}}\Big(\mbox{\rm R-UCB}_{h}^{k-1}(B_h^k)+ \mbox{\rm T-UCB}_{h}^{k-1}(B_h^k)+{\rm BIAS}(B_h^k)\Big)\nonumber\\
        &&+C_h(1+2(\|\tilde{x}(^{o}B_h^k)\|+D)^m){\rm diam}(B_h^k)\nonumber\\
        &\leq & 2\frac{\widehat{C}_{\max}}{\overline{C}_{\max}}\Big({\rm CONF}_{h}^{k-1}(B_h^k)+g_{2}(\delta,\|\tilde{x}(^{o}B_h^k)\|){\rm CONF}_{h}^{k-1}(B_h^k)\Big)\nonumber\\
        &&+C_h(1+2(\|\tilde{x}(^{o}B_h^k)\|+D)^m){\rm CONF}_{h}^{k-1}(B_h^k)\nonumber\\
        &=& g_{3}(\delta, \|\tilde{x}(^{o}B_h^k)\|)\,\,{\rm CONF}_{h}^{k-1}(B_h^k), 
\end{eqnarray}
where $g_2$ is defined in \eqref{eq:g2 definition} and $g_{3}$ is defined in \eqref{eq:g3 definition}. The first equality holds by the definition of $G_h^k(B_h^k)$ in \eqref{eq:G_h^k definition}. The inequality holds because 
\begin{itemize}
    \item[(a)] $\mbox{\rm R-UCB}_{h}^{k-1}(B_h^k)+ \mbox{\rm T-UCB}_{h}^{k-1}(B_h^k) \leq {\rm CONF}_{h}^{k-1}(B_h^k)$ by \eqref{eq:CONF}, 
    \item[(b)] ${\rm BIAS}(B_h^k)=g_{2}(\delta,\|\tilde{x}(^{o}B_h^k)\|)\,\,{\rm diam}(B_h^k)$ by \eqref{eq:g2 definition},  and 
    \item[(c)] ${\rm diam}(B_h^k)\leq {\rm CONF}_{h}^{k-1}(B_h^k)$ by the Splitting Rule in line \ref{splitting rule} of Algorithm \ref{alg:ML 4}.
\end{itemize}
The last equality holds by \eqref{eq:g3 definition}.

By definition of the ${\rm CLIP}(\cdot|.)$ function in \eqref{eq:clip}, \eqref{eq:d48} implies that 
\begin{eqnarray}\label{eq:clip direct bound}
    {\rm CLIP}\Big(G_h^k(B_h^k)\,\Big|\,\frac{{\rm Gap}_{h}(B_h^k)}{H+1}\Big)&\leq& {\rm CLIP}\Big(g_{3}(\delta, \|\tilde{x}(^{o}B_h^k)\|)\,\,{\rm CONF}_{h}^{k-1}(B_h^k)\,\Big|\,\frac{{\rm Gap}_{h}(B_h^k)}{H+1}\Big)\\
    &=& g_{3}(\delta, \|\tilde{x}(^{o}B_h^k)\|)\,\,{\rm CONF}_{h}^{k-1}(B_h^k) \mathbb{I}_{\left\{g_{3}(\delta, \|\tilde{x}(^{o}B_h^k)\|){\rm CONF}_h^{k-1}(B_h^k)\geq \frac{{\rm Gap}_{h}(B_h^k)}{H+1}\right\}}.\nonumber
\end{eqnarray}

Next, we find an upper bound for $\mathbb{I}_{\left\{g_{3}(\delta, \|\tilde{x}(^{o}B_h^k)\|){\rm CONF}_h^{k-1}(B_h^k)\geq \frac{{\rm Gap}_{h}(B_h^k)}{H+1}\right\}}$.

Note that for $(x_1,a_1), (x_2,a_2) \in \mathbb{R}^{d_{\mathcal{S}}}\times \mathcal{A}$, by \eqref{eq:value function local Lipschitz} and \eqref{eq:Q local lipschitz}, we have: 
\begin{eqnarray}\label{eq:GAP inside lip}
|{\rm \widetilde{G}ap}_h(x_1,a_1)-{\rm \widetilde{G}ap}_h(x_2,a_2)|\leq 3{\overline{C}_{\max}(1+\|x_1\|^m+\|x_2\|^m)}(\|x_1-x_2\|+\|a_1-a_2\|).
\end{eqnarray}

Then by definition in \eqref{eq:gap definition} and \eqref{eq:GAP inside lip}, we have:
\begin{eqnarray}\label{ineq:gap bound 1}
    {\rm \widetilde{G}ap}_{h}({\rm center}(B_h^k))
       \leq {\rm Gap}_{h}(B_h^k)+3{\overline{C}_{\max}(1+2(\|\tilde{x}(^{o}B_h^k)\|+D)^m))}{\rm diam}(B_h^k).
\end{eqnarray}

In addition, we have 
\begin{eqnarray}\label{ineq:l bound}
    &&(H+1)g_{3}(\delta, \|\tilde{x}(^{o}B_h^k)\|)\,\,{\rm CONF}_h^{k-1}(B_h^k)+3{\overline{C}_{\max}(1+2(\|\tilde{x}(^{o}B_h^k)\|+D)^m)}{\rm diam}(B_h^k)\nonumber\\
     &\leq & 2\Big((H+1)g_{3}(\delta, \|\tilde{x}(^{o}B_h^k)\|)+3\overline{C}_{\max}(1+2(\|\tilde{x}(^{o}B_h^k)\|+D)^m)\Big){\rm diam}(B_h^k)\nonumber\\
      &\leq&  \bar{g}(\delta,\tilde{x}(B_h^k))(H+1){\rm diam}(B_h^k),
\end{eqnarray}
where the first inequality holds due to \eqref{eq:CONF splitting bound parent} and the second inequality holds due to \eqref{eq: l(x) def}.

Therefore,
\begin{eqnarray}\label{eq:indicator inequality}
       && \mathbb{I}_{\left\{g_{3}(\delta, \|\tilde{x}(^{o}B_h^k)\|){\rm CONF}_h^{k-1}(B_h^k)\geq \frac{{\rm Gap}_{h}(B_h^k)}{H+1}\right\}}\nonumber\\&
       = & \mathbb{I}_{\left\{(H+1)g_{3}(\delta, \|\tilde{x}(^{o}B_h^k)\|){\rm CONF}_h^{k-1}(B_h^k)\geq {\rm Gap}_{h}(B_h^k)\right\}}\nonumber\\
        &\leq & \mathbb{I}_{\left\{ \bar{g}(\delta,\tilde{x}(B_h^k))(H+1){\rm diam}(B_h^k)\geq {\rm Gap}_{h}(B_h^k)+3{\overline{C}_{\max}(1+2(\|\tilde{x}(^{o}B_h^k)\|+D)^m)}{\rm diam}(B_h^k)\right\}}\nonumber\\
        &\leq & \mathbb{I}_{\left\{ \bar{g}(\delta,\tilde{x}(B_h^k))(H+1){\rm diam}(B_h^k)\geq {\rm \widetilde{G}ap}_{h}({\rm center}(B_h^k))\right\}}\nonumber\\
        &\leq & \mathbb{I}_{\left\{{\rm center}(B_h^k)\in Z_h^{{\rm diam}(B_h^k),\rho}\right\}},
\end{eqnarray}
where the first inequality holds by \eqref{ineq:l bound}, the second inequality holds by \eqref{ineq:gap bound 1} and the last inequality holds by \eqref{eq:def near optimal set}.

Then,
\begin{eqnarray}\label{eq: CLIP I}
     && \sum_{h}\sum_{k\in J_{\rho}^{K}}\sum_{B_h^k:n_h^{k-1}(B_h^k)>0} {\rm CLIP}\Big(G_h^k(B_h^k)\,\Big|\,\frac{{\rm Gap}_{h}(B_h^k)}{H+1}\Big)\nonumber\\ 
     &\leq&\sum_{h}\sum_{k\in J_{\rho}^{K}}\sum_{B_h^k:n_h^{k-1}(B_h^k)>0} {\rm CLIP}\Big(g_{3}(\delta, \|\tilde{x}(^{o}B_h^k)\|){\rm CONF}_h^{k-1}(B_h^k)\,\Big|\,\frac{{\rm Gap}_{h}(B_h^k)}{H+1}\Big)\nonumber\\
     &=&  \sum_{h}\sum_{k\in J_{\rho}^{K}} \sum_{B_h^k:n_h^{k-1}(B_h^k)>0} g_{3}(\delta, \|\tilde{x}(^{o}B_h^k)\|){\rm CONF}_h^{k-1}(B_h^k)  \mathbb{I}_{\{g_{3}(\delta, \|\tilde{x}(^{o}B_h^k)\|){\rm CONF}_h^{k-1}(B_h^k)\geq \frac{{\rm Gap}_{h}(B_h^k)}{H+1}\}}\nonumber\\
     &\leq& \sum_{h}\sum_{k\in J_{\rho}^{K}}\sum_{B_h^k:n_h^{k-1}(B_h^k)>0}  g_{3}(\delta, \|\tilde{x}(^{o}B_h^k)\|){\rm CONF}_h^{k-1}(B_h^k) \mathbb{I}_{\{{\rm center}(B_h^k)\in Z_h^{{\rm diam}(B_h^k),\rho,l}\}}\nonumber\\
     &=&  \sum_{h} \sum_{r\in \mathcal{R}}\sum_{B:{\rm diam}(B)=r}\sum_{k:B_h^k=B, n_h^{k-1}(B_h^k)>0} g_{3}(\delta,\|\tilde{x}(^{o}B)\|){\rm CONF}_h^{k-1}(B) \mathbb{I}_{\{{\rm center}(B)\in Z_h^{r,\rho}\}}\nonumber\\
     &=&  \sum_{h} \sum_{r\in \mathcal{R},r< r_{0}}\sum_{B:{\rm diam}(B)=r}\sum_{k:B_h^k=B, n_h^{k-1}(B_h^k)>0} g_{3}(\delta,\|\tilde{x}(^{o}B)\|){\rm CONF}_h^{k-1}(B) \mathbb{I}_{\{{\rm center}(B)\in Z_h^{r,\rho}\}} \nonumber\\
     &&+ \sum_{h} \sum_{r\in \mathcal{R}, r \geq r_{0}}\sum_{B:{\rm diam}(B)=r}\sum_{k:B_h^k=B, n_h^{k-1}(B_h^k)>0} g_{3}(\delta,\|\tilde{x}(^{o}B)\|){\rm CONF}_h^{k-1}(B) \mathbb{I}_{\{{\rm center}(B)\in Z_h^{r,\rho}\}} \nonumber\\
     &\leq& 2g_{3}(\delta,\rho+D)Kr_{0} + g_{3}(\delta,\rho+D)g_{1}(\delta, \rho+D) \times \nonumber\\
     && \sum_{h}\sum_{r\in \mathcal{R}, r \geq r_{0}}\sum_{B:{\rm diam}(B)=r}\mathbb{I}_{\{{\rm center}(B)\in Z_h^{r,\rho}\}}\sum_{k:B_h^k=B}\frac{1}{\sqrt{n_h^{k-1}(B)}},
\end{eqnarray}
where the first inequality holds by \eqref{eq:clip direct bound}, the second inequality holds by \eqref{eq:indicator inequality}, and the last inequality holds due to \eqref{eq:CONF splitting bound parent}.

To bound the second term above,
\begin{eqnarray}\label{eq:CLIP II}
    &&g_{3}(\delta,\rho+D)g_{1}(\delta, \rho+D) \sum_{h}\sum_{r\in \mathcal{R}, r \geq r_{0}}\sum_{B:{\rm diam}(B)=r}\mathbb{I}_{\{{\rm center}(B)\in Z_h^{r,\rho}\}}\sum_{k:B_h^k=B}\frac{1}{\sqrt{n_h^{k-1}(B)}}\nonumber\\
    &\leq& g_{3}(\delta,\rho+D)g_{1}(\delta, \rho+D) \times \nonumber\\
    &&\qquad \sum_{h}\sum_{r\in \mathcal{R}, r \geq r_{0}}\sum_{B:{\rm diam}(B)=r}\mathbb{I}_{\{{\rm center}(B)\in Z_h^{r,\rho}\}} \int_{x=0}^{n_{\max}(B)-n_{\min}(B)}\frac{1}{\sqrt{x+n_{\min}(B)}}\,dx\nonumber\\
    &\leq& 2g_{3}(\delta,\rho+D)g_{1}(\delta, \rho+D) \sum_{h}\sum_{r\in \mathcal{R}, r \geq r_{0}}\sum_{B:{\rm diam}(B)=r}\mathbb{I}_{\{{\rm center}(B)\in Z_h^{r,\rho}\}}\sqrt{n_{\max}(B)}\nonumber\\
    &\leq& 2g_{3}(\delta,\rho+D)g_{1}(\delta, \rho+D) \sum_{h}\sum_{r\in \mathcal{R}, r \geq r_{0}}\sum_{B:{\rm diam}(B)=r}\mathbb{I}_{\{{\rm center}(B)\in Z_h^{r,\rho}\}}\frac{g_{1}(\delta, \rho+D)}{r}\nonumber\\
    &\leq& 2g_{3}(\delta,\rho+D)g_{1}(\delta, \rho+D)^{2} \sum_{h}\sum_{r\in \mathcal{R}, r \geq r_{0}}N_{r}(Z_h^{r,\rho})\frac{1}{r},
\end{eqnarray}
where $n_{\max}(B)=(\frac{g_{1}(\delta,\|\tilde{x}(^{o}B)\|}{ {\rm diam}(B)})^{2}, n_{\min}(B)=(\frac{g_{1}(\delta,\|\tilde{x}(^{o}B)\|}{ 2{\rm diam}(B)})^{2}$. The first inequality holds due to the fact that $n_{\min}(B) \leq n_h^{k-1}(B)< n_{\max}(B)$ by \eqref{eq:n min} and \eqref{eq: n max}. The second inequality holds by the fact that $\int_{a}^{b} \frac{1}{\sqrt{y}}\,dy \leq 2\sqrt{b}$ for $b>a>0$. The fourth inequality holds due to \eqref{eq: n max} and the last inequality holds due to the fact that $\sum_{B:{\rm diam}(B)=r}\mathbb{I}_{\{{\rm center}(B)\in Z_h^{r,\rho}\}}\leq N_{r}(Z_h^{r,\rho})$. This fact holds since the distance between the centers of two blocks $B_1$ and $B_2$ with same diameter $r$ is at least $r$.  

\vspace{10pt}

\noindent\underline{Bound for Term (II)}: Next we bound Term (II) in \eqref{eq:decoupled_terms}.

For fixed $(h,k)$, if $n_h^{k-1}(B_h^k)=0$, then ${\rm diam}(B_h^k)=D$.

We now find an upper bound for $\mathbb{I}_{\left\{G_h^k(B_h^k)\geq \frac{{\rm Gap}_{h}(B_h^k)}{H+1}\right\}}$.  Note that by \eqref{eq:G_h^k definition} and \eqref{eq: l(x) def} we have: 
\begin{eqnarray}\label{eq:l bound 2}
    &&(H+1)G_h^k(B_h^k)+3{\overline{C}_{\max}(1+2(\|\tilde{x}(^{o}B_h^k)\|+D)^m)}{\rm diam}(B_h^k)\nonumber\\
      &\leq& \bar{g}(\delta,\tilde{x}(B_h^k))(H+1){\rm diam}(B_h^k).   
\end{eqnarray}
Therefore, 
\begin{eqnarray}\label{eq:indicator bound 2}
         \mathbb{I}_{\left\{G_h^k(B_h^k)\geq \frac{{\rm Gap}_{h}(B_h^k)}{H+1}\right\}}&=&  \mathbb{I}_{\left\{\bar{g}(\delta,\tilde{x}(B_h^k))(H+1){\rm diam}(B_h^k) \geq {\rm Gap}_{h}(B_h^k)+3{\overline{C}_{\max}(1+2(\|\tilde{x}(^{o}B_h^k)\|+D)^m)}{\rm diam}(B_h^k)\right\}}\nonumber\\
          &\leq&\mathbb{I}_{\left\{ \bar{g}(\delta,\tilde{x}(B_h^k))(H+1){\rm diam}(B_h^k)\geq {\rm \widetilde{G}ap}_{h}({\rm center}(B_h^k))\right\}}\nonumber\\
        &\leq&\mathbb{I}_{\left\{{\rm center}(B_h^k)\in Z_h^{{\rm diam}(B_h^k),\rho}\right\}},
\end{eqnarray}
where the first inequality holds by \eqref{eq:l bound 2}, the second inequality holds by \eqref{ineq:gap bound 1} and the third inequality holds by \eqref{def:near optimal set}.

Then 
\begin{eqnarray}\label{eq:CLIP III}
     &&\sum_{h}\sum_{k\in J_{\rho}^{K}}\sum_{B_h^k:n_h^{k-1}(B_h^k)=0}{\rm CLIP}\Big(G_h^k(B_h^k)\,\Big|\,\frac{{\rm Gap}_{h}(B_h^k)}{H+1}\Big)\nonumber\\
     &\leq&  \sum_{h}\sum_{k\in J_{\rho}^{K}}\sum_{B_h^k:n_h^{k-1}(B_h^k)=0}  \bar{g}(\delta, \tilde{x}(^{o}B_h^k)){\rm diam}(B_h^k)\mathbb{I}_{\left\{{\rm center}(B_h^k)\in Z_h^{{\rm diam}(B_h^k),\rho}\right\}}\nonumber\\
     &=&  \sum_{h} \sum_{r=D}\sum_{B:{\rm diam}(B)=r}\sum_{k:B_h^k=B, n_h^{k-1}(B_h^k)=0} \bar{g}(\delta,\tilde{x}(^{o}B)){\rm diam}(B) \mathbb{I}_{\{{\rm center}(B)\in Z_h^{r,\rho}\}}\nonumber\\
     &\leq& \bar{g}(\delta,\rho+D)D \sum_{h}\sum_{r=D}\sum_{B:{\rm diam}(B)=r}\mathbb{I}_{\{{\rm center}(B)\in Z_h^{r,\rho}\}} |\{k:B_h^k=B, n_h^{k-1}(B_h^k)=0\}|,\nonumber\\
     &\leq&(d_{\mathcal{S}}+d_{\mathcal{A}})^{\frac{d_{\mathcal{S}}+d_{\mathcal{A}}}{2}}\frac{(\rho+D)^{d_{\mathcal{S}}}(2\bar{a})^{d_{\mathcal{A}}}}{D^{d_{\mathcal{S}}+d_{\mathcal{A}}-1}}\bar{g}(\delta,\rho+D)\sum_{h}\sum_{r=D}\sum_{B:{\rm diam}(B)=r}\mathbb{I}_{\{{\rm center}(B)\in Z_h^{r,\rho}\}} ,\nonumber\\
     &\leq&(d_{\mathcal{S}}+d_{\mathcal{A}})^{\frac{d_{\mathcal{S}}+d_{\mathcal{A}}}{2}}\frac{(\rho+D)^{d_{\mathcal{S}}}(2\bar{a})^{d_{\mathcal{A}}}}{D^{d_{\mathcal{S}}+d_{\mathcal{A}}-2}}\bar{g}(\delta,\rho+D)\sum_{h}\sum_{r=D}N_{r}(Z_h^{r,\rho})\frac{1}{r}\nonumber.
\end{eqnarray}
The first inequality holds due to \eqref{eq:l bound 2} and \eqref{eq:indicator bound 2}. The third inequality holds since $|\{k:B_h^k=B, n_h^{k-1}(B_h^k)=0\}|\leq |\mathcal{P}_h^0|\leq (d_{\mathcal{S}}+d_{\mathcal{A}})^{\frac{d_{\mathcal{S}}+d_{\mathcal{A}}}{2}}\frac{(\rho+D)^{d_{\mathcal{S}}}(2\bar{a})^{d_{\mathcal{A}}}}{D^{d_{\mathcal{S}}+d_{\mathcal{A}}}}$, and the last inequality holds due to the fact that $\sum_{B:{\rm diam}(B)=r}\mathbb{I}_{\{{\rm center}(B)\in Z_h^{r,\rho}\}}\leq N_{r}(Z_h^{r,\rho})$. This fact holds since the distance between the centers of two blocks $B_1$ and $B_2$ with the same diameter $r$ is at least $r$.  

Finally, combining \eqref{eq: CLIP I}, \eqref{eq:CLIP II},  \eqref{eq:CLIP III}, and noting the definition of $g_{4}(.,.) $ in \eqref{eq:g4 definition}, we verify \eqref{eq:CLIP BOUND}.
\end{proof}

\subsection{Proof of Theorem \ref{thm:final coupled for worst case analysis}}\label{app:final coupled for worst case analysis}
\begin{proof}
{Take $\rho=M_{p}^{\frac{1}{p}}K^{\beta}$ and $r_{0}=K^{\gamma}$ in Theorem \ref{thm:first stage regret decomposition}}, we have with probability at least $1-6\delta$:
\begin{eqnarray}
        &&{\rm Regret}(K)\nonumber\\
        &\leq& e^{2}\sum_{h=1}^{H}\sum_{k \in J_{1}}{\rm CLIP}\Bigg(G_{h}^{k}(B_h^k)\Bigg|\frac{{\rm Gap}_{h}(B_h^k)}{H+1}\Bigg)+2e^{2}\sqrt{\widetilde{L}_{1}HK\Big(\Big(\frac{M_{p}K}{\delta}\Big)^{\frac{2m+2}{p}} +1\Big)\log\Big(\frac{2}{\delta}\Big)}\nonumber\\
&&+2K\kappa_{m+1}(\delta,\rho)+4\widetilde{C}_1\Big(\widetilde{L}_{3}+\rho^{m+1}+e^{2}\widetilde{L}_{2}H\Big(\frac{M_{p}K}{\delta}\Big)^{\frac{m+1}{p}} \Big)\left(\frac{M_{p}}{\rho^{p}}K+\sqrt{2K\log\Big(\frac{1}{\delta}\Big)}\right)\nonumber\\
        &\leq& e^{2} \sum_{h=1}^{H}\,\,\left(2g_{3}(\delta,\rho+D)Kr_{0}+g_{4}(\delta,\rho+D)\sum_{r \geq r_{0}, r\in \mathcal{R}}N_{r}(Z_{h}^{r,\rho})\frac{1}{r}\right)\nonumber\\
        &&+2e^{2}\sqrt{\widetilde{L}_{1}HK\Big(\Big(\frac{M_{p}K}{\delta}\Big)^{\frac{2m+2}{p}} +1\Big)\log\Big(\frac{2}{\delta}\Big)}+2K\kappa_{m+1}(\delta,\rho)\nonumber\\
&&+4\widetilde{C}_1\Big(\widetilde{L}_{3}+\rho^{m+1}+e^{2}\widetilde{L}_{2}H\Big(\frac{M_{p}K}{\delta}\Big)^{\frac{m+1}{p}} \Big)\left(\frac{M_{p}}{\rho^{p}}K+\sqrt{2K\log\Big(\frac{1}{\delta}\Big)}\right)\nonumber\\
\end{eqnarray}
where the second inequality is due to \eqref{eq:CLIP BOUND}. 
\end{proof}

\end{document}